\def\1{\bm{1}}
\def\vzero{{\bm{0}}}
\def\vtheta{{\bm{\theta}}}
\def\valpha{{\bm{\alpha}}}
\def\vPhi{{\bm{\Phi}}}
\newcommand{\ve}{\@ifnextchar\bgroup{\velong}{{\bm{e}}}}
\newcommand{\velong}[1]{{\bm{#1}}}
\def\vh{{\bm{h}}}
\def\vk{{\bm{k}}}
\def\vu{{\bm{u}}}
\def\vv{{\bm{v}}}
\def\vw{{\bm{w}}}
\def\vx{{\bm{x}}}
\def\vy{{\bm{y}}}
\def\vz{{\bm{z}}}
\def\mI{{\bm{I}}}
\DeclareMathAlphabet{\mathsfit}{\encodingdefault}{\sfdefault}{m}{sl}
\SetMathAlphabet{\mathsfit}{bold}{\encodingdefault}{\sfdefault}{bx}{n}
\newcommand{\R}{\mathbb{R}}
\newcommand{\normltwo}{L^2}
\newcommand{\poly}{\mathrm{poly}}
\newcommand{\polylog}{\mathrm{polylog}}
\newcommand{\N}{\mathbb{N}}
\newcommand{\dotp}[2]{\left<#1, #2\right>}
\newcommand{\Data}{\mathcal{D}}
\newcommand{\Loss}{\mathcal{L}}
\newcommand{\LossF}{\mathcal{F}}
\newcommand{\normtwo}[1]{\left\| #1 \right\|_2}
\newcommand{\normtwosm}[1]{\| #1 \|_2}
\newcommand{\relu}{\mathrm{ReLU}}
\newcommand{\contC}{\mathcal{C}}
\newcommand{\sphS}{\mathcal{S}}
\newcommand{\conv}{\mathrm{conv}}
\newcommand{\cpartial}{\partial^{\circ}}
\newcommand{\LSE}{\mathrm{LSE}}
\newcommand{\dimx}{d_{\mathrm{x}}}
\newcommand{\Rgez}{[0, +\infty)}
\newcommand{\myparagraph}[1]{\textbf{#1}~~}
\theoremstyle{plain}
\newtheorem{theorem}{Theorem}[section]
\newtheorem{lemma}[theorem]{Lemma}
\newtheorem{remark}[theorem]{Remark}
\newtheorem{corollary}[theorem]{Corollary}
\theoremstyle{definition}
\newtheorem{definition}[theorem]{Definition}
\title{Gradient Descent Maximizes the Margin of Homogeneous Neural Networks}
\author{Kaifeng Lyu \& Jian Li \\
	Institute for Interdisciplinary Information Sciences \\
	Tsinghua University \\
	Beijing, China \\
	\texttt{vfleaking@gmail.com,lijian83@mail.tsinghua.edu.cn}
}
\date{}
\begin{document}

\maketitle

\begin{abstract}
    \noindent In this paper, we study the implicit regularization of the gradient descent algorithm in homogeneous neural networks, including fully-connected and convolutional neural networks with ReLU or LeakyReLU activations. In particular, we study the gradient descent or gradient flow (i.e., gradient descent with infinitesimal step size) optimizing the logistic loss or cross-entropy loss of any homogeneous model (possibly non-smooth), and show that if the training loss decreases below a certain threshold, then we can define a smoothed version of the normalized margin which increases over time. We also formulate a natural constrained optimization problem related to margin maximization, and prove that both the normalized margin and its smoothed version converge to the objective value at a KKT point of the optimization problem. Our results generalize the previous results for logistic regression with one-layer or multi-layer linear networks, and provide more quantitative convergence results with weaker assumptions than previous results for homogeneous smooth neural networks. We conduct several experiments to justify our theoretical finding on MNIST and CIFAR-10 datasets. Finally, as margin is closely related to robustness, we discuss potential benefits of training longer for improving the robustness of the model.
\end{abstract}

\section{Introduction} \label{sec:intro}

A major open question in deep learning is 
why gradient descent or its variants, are biased towards solutions with good generalization performance on the test set. 
To achieve a better understanding, previous works have studied the implicit bias of gradient descent in different settings.
One simple but insightful setting is linear logistic regression on linearly separable data. 
In this setting, the model is parameterized by a weight vector $\vw$, and the class prediction for any data point $\vx$ is determined by the sign of $\vw^\top \vx$. Therefore, only the direction $\vw / \normtwosm{\vw}$ is important for making prediction. \citet{soudry2018implicit,soudry2018iclrImplicit,ji2018risk,ji2019risk,nacson2018stochastic}~investigated this problem and proved that the direction of $\vw$ converges to the direction that maximizes the $\normltwo$-margin while the norm of $\vw$ diverges to $+\infty$, if we train $\vw$ with (stochastic) gradient descent on logistic loss.
Interestingly, this convergent direction is the same as that of any \textit{regularization path}: any sequence of weight vectors $\{\vw_t\}$ such that every $\vw_t$ is a global minimum of the $\normltwo$-regularized loss $\Loss(\vw) + \frac{\lambda_t}{2} \normtwosm{\vw}^2$ with $\lambda_t \to 0$~\citep{rosset2004margin}. Indeed, the trajectory of gradient descent is also pointwise close to a regularization path~\citep{suggala2018connecting}.

The aforementioned linear logistic regression can be viewed as a single-layer neural network.
A natural and important question is to what extent gradient descent has
similiar implicit bias for modern deep neural networks. For theoretical analysis, a natural candidate is to consider \textit{homogeneous neural networks}. Here a neural network $\Phi$ is said to be (positively) homogeneous if there is a number $L > 0$ (called the {\em order}) such that the network output $\Phi(\vtheta; \vx)$, where $\vtheta$ stands for the parameter and $\vx$ stands for the input, satisfies the following:
\begin{equation} \label{eq:intro-homo}
\forall c > 0:  \Phi(c\vtheta; \vx) = c^L \Phi(\vtheta; \vx)
\text{ for all } \vtheta \text{ and }\vx.
\end{equation}
It is important to note that many neural networks are homogeneous~\citep{neyshabur2015pathsgd,du2018algorithmic}. For example, deep fully-connected neural networks or deep CNNs with ReLU or LeakyReLU activations can be made homogeneous if we remove all the bias terms, and the order $L$ is exactly equal to the number of layers.

In~\citep{wei2018margin}, it is shown that the regularization path does converge to the max-margin direction for homogeneous neural networks with cross-entropy or logistic loss. This result suggests that gradient descent or gradient flow may also converges to the max-margin direction by assuming homogeneity, and this is indeed true for some sub-classes of homogeneous neural networks.
For gradient flow, this convergent direction is proven for linear fully-connected networks~\citep{ji2018gradient}. For gradient descent on linear fully-connected and convolutional networks, \citep{gunasekar2018implicit} formulate a constrained optimization problem related to margin maximization and prove that gradient descent converges to the direction of a KKT point or even the max-margin direction, under various assumptions including the convergence of loss and gradient directions. In an independent work, \citep{nacson2019lexicographic} generalize the result in \citep{gunasekar2018implicit} to smooth homogeneous models (we will discuss this work in more details in Section~\ref{sec:related-works}).

\subsection{Main Results}

In this paper, we identify a minimal set of assumptions for proving our theoretical results for homogeneous neural networks on classification tasks. Besides homogeneity, we make two additional assumptions:
\begin{enumerate}
	\item \textbf{Exponential-type Loss Function.} 
	We require the loss function to have certain exponential tail (see Appendix~\ref{sec:margin-inc} for the details). This assumption is not restrictive as it includes the most popular classfication losses: exponential loss, logistic loss and cross-entropy loss.
	\item \textbf{Separability.} The neural network can separate the training data during training (i.e., the neural network can achieve $100\%$ training accuracy)\footnote{Note that this does NOT mean the training loss is 0.}.
\end{enumerate}

While the first assumption is natural, the second requires
some explanation. In fact, we assume that at some time $t_0$, the training loss is smaller than a threshold, and the threshold here is chosen to be so small that the training accuracy is guaranteed to be $100\%$
(e.g., for the logistic loss and cross-entropy loss, the threshold can be set to $\ln 2$). Empirically, state-of-the-art CNNs for image classification can even fit randomly labeled data easily~\citep{zhang2017rethinking}. Recent theoretical work on over-parameterized neural networks \citep{allenzhu2018convergence,zou2018stochastic} show that gradient descent can fit the training data if the width is large enough. Furthermore, in order to study the margin,
ensuring the training data can be separated is 
inevitable; otherwise, there is no positive margin between the data and decision boundary.

\myparagraph{Our Contribution.} Similar to linear models, for homogeneous models, only the direction of parameter $\vtheta$ is important for making predictions, and one can see that the margin $\gamma(\vtheta)$ scales linearly with $\normtwosm{\vtheta}^L$, when fixing the direction of $\vtheta$. To compare margins among $\vtheta$ in different directions, it makes sense to study the \textit{normalized margin}, $\bar{\gamma}(\vtheta) := \gamma(\vtheta) / \normtwosm{\vtheta}^L$.

In this paper, we focus on the training dynamics of the network after $t_0$ (recall that $t_0$ is a time that the training loss is less than the threshold). Our theoretical results can answer the following questions regarding the normalized margin.
 
First, \textit{how does the normalized margin change during training?} The answer may seem complicated since one can easily come up with examples in which $\bar{\gamma}$ increases or decreases in a short time interval. 
	However, we can show that the overall trend of the normalized margin is to increase in the following sense: there exists a \textit{smoothed} version of the normalized margin, denoted as $\tilde{\gamma}$, such that (1) $\lvert \tilde{\gamma} - \bar{\gamma} \rvert \to 0$ as $t \to \infty$; and (2) $\tilde{\gamma}$ is non-decreasing for $t > t_0$.
	
Second, \textit{how large is the normalized margin at convergence?} 
	To answer this question, 
	we formulate a natural constrained optimization problem which aims to directly maximize the margin.
	We show that every limit point of $\{\vtheta(t) / \normtwo{\vtheta(t)} : t > 0 \}$ is along the direction of a KKT point of the max-margin problem. 
	This indicates that gradient descent/gradient flow
	performs margin maximization implicitly
	in deep homogeneous networks. This result can be seen as a significant generalization of previous works~\citep{soudry2018implicit,soudry2018iclrImplicit,ji2018gradient,gunasekar2018implicit} from linear classifiers to homogeneous classifiers.
	
As by-products of the above results, we derive  tight asymptotic convergence/growth rates of 
the loss and weights. It is shown in~\citep{soudry2018implicit,soudry2018iclrImplicit,ji2018risk,ji2019risk} that the loss decreases at the rate of $O(1/t)$, the weight norm grows as $O(\log t)$ for linear logistic regression. In this work, we generalize the result by showing that the loss decreases at the rate of $O(1/(t(\log t)^{2-2/L}))$ and the weight norm grows as $O((\log t)^{1/L})$ for homogeneous neural networks with exponential loss, logistic loss, or cross-entropy loss.

\myparagraph{Experiments.\footnote{Code available: \url{https://github.com/vfleaking/max-margin}}} The main practical implication of our theoretical result is that training longer can enlarge the normalized margin. To justify this claim empiricaly, 
we train CNNs on MNIST and CIFAR-10 with SGD (see Section~\ref{sec:exper-margin}). 
Results on MNIST are presented in Figure~\ref{fig:constantrl}.
For constant step size, we can see that the normalized margin keeps increasing, but the growth rate is rather slow (because the 
gradient gets smaller and smaller). Inspired by our convergence results for gradient descent, we use a learning rate scheduling method which enlarges the learning rate according to the current training loss, then the training loss decreases exponentially faster and the normalized margin increases significantly faster as well.

For feedforward neural networks with ReLU activation, the normalized margin on a training sample is closely related to the {\em $\normltwo$-robustness} (the $\normltwo$-distance from the training sample to the decision boundary). Indeed, the former divided by a Lipschitz constant is a lower bound for the latter. For example, the normalized margin is a lower bound for the $\normltwo$-robustness on fully-connected networks with ReLU activation (see, e.g., Theorem 4 in \citep{SokolicGSR17}). This fact suggests that training longer may have potential benefits on improving the robustness of the model. In our experiments, we observe noticeable improvements of $\normltwo$-robustness on both training and test sets (see Section~\ref{sec:exper-robust}).

\begin{figure}[t]
	\subfigure[]{
		\begin{minipage}[t]{0.49\textwidth}
			\includegraphics[width=\linewidth]{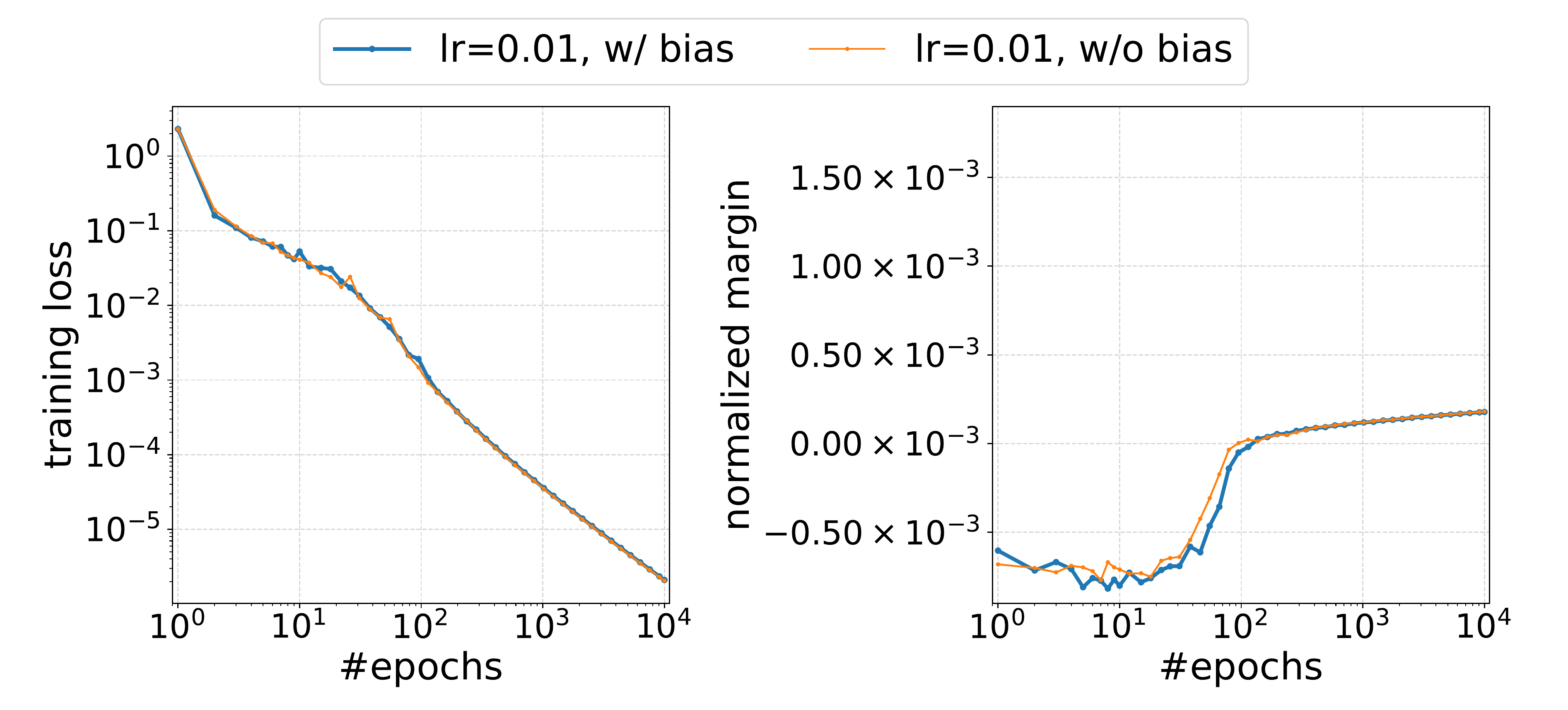}
		\end{minipage}%
	}%
	\subfigure[]{
		\begin{minipage}[t]{0.49\textwidth}
			\includegraphics[width=\linewidth]{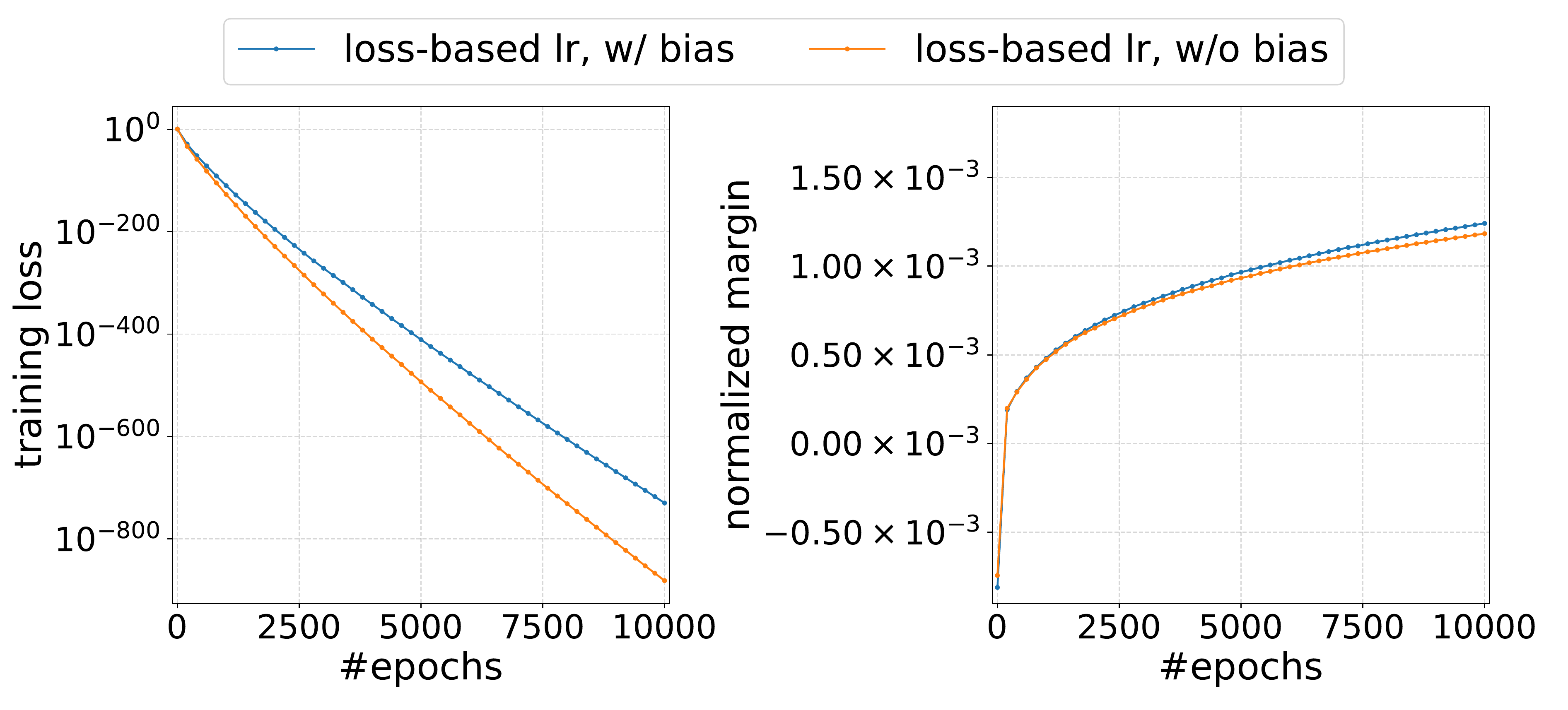}
		\end{minipage}%
	}
	\caption{(a) Training CNNs with and without bias on MNIST, using SGD with learning rate $0.01$. The training loss (left) decreases over time, and the normalized margin (right) keeps increasing after the model is fitted, but the growth rate is slow ($\approx 1.8 \times 10^{-4}$ after $10000$ epochs). (b) Training CNNs with and without bias on MNIST, using SGD with the loss-based learning rate scheduler. The  training loss (left) decreases exponentially over time ($< 10^{-800}$ after $9000$ epochs), and the normalized margin (right) increases rapidly after the model is fitted ($\approx 1.2 \times 10^{-3}$ after $10000$ epochs, $10 \times$ larger than that of SGD with learning rate $0.01$). Experimental details are in Appendix~\ref{sec:exper}.}
	\label{fig:constantrl}\label{fig:adaptiverl}
\end{figure}

\section{Related Work} \label{sec:related-works}

\myparagraph{Implicit Bias in Training Linear Classifiers.} For linear logistic regression on linearly separable data, \citet{soudry2018implicit,soudry2018iclrImplicit} showed that full-batch gradient descent converges in the direction of the max $\normltwo$-margin solution of the corresponding hard-margin Support Vector Machine (SVM). Subsequent works extended this result in several ways: \citet{nacson2018stochastic}~extended the results to the case of stochastic gradient descent; \citet{gunasekar2018characterizing} considered other optimization methods;
\citet{nacson2019convergence} considered other loss functions including those with poly-exponential tails; \citet{ji2018risk,ji2019risk} characterized the convergence of weight direction without assuming separability;  \citet{ji2019refined}~proved a tighter convergence rate for the weight direction.

Those results on linear logistic regression have been generalized to deep linear networks. \citet{ji2018gradient}~showed that the product of weights in a deep linear network with strictly decreasing loss converges in the direction of the max $\normltwo$-margin solution. \citet{gunasekar2018implicit}~showed more general results for gradient descent on linear fully-connected and convolutional networks with exponential loss, under various assumptions on the convergence of the loss and gradient direction.

Margin maximization phenomenon is also studied for boosting methods~\citep{schapire1998boosting,rudin2004dynamics,rudin2007analysis,schapire2012boosting,shalev2010equivalence,telgarsky13margins} and Normalized Perceptron~\citep{ramdas2016towards}.

\myparagraph{Implicit Bias in Training Nonlinear Classifiers.} \citet{soudry2018implicit} analyzed the case where there is only one trainable layer of a ReLU network. \citet{xu2018gradient} characterized the implicit bias for the model consisting of one single ReLU unit. Our work is closely related to a recent independent work by \citep{nacson2019lexicographic} which we discuss in details below.

\myparagraph{Comparison with \citep{nacson2019lexicographic}.} 
Very recently, \citep{nacson2019lexicographic} analyzed gradient descent for smooth homogeneous models and proved the convergence of parameter direction to a KKT point of the aforementioned max-margin problem. Compared with their work, our work adopt much weaker assumptions: (1) They assume the training loss converges to $0$, but in our work we only require that the training loss is lower than a small threshold value at some time $t_0$ (and we prove the exact convergence rate of the loss after $t_0$); (2) They assume the convergence of parameter direction\footnote{
Assuming the convergence of the parameter direction
may seem quite reasonable,
however, the problem here can be quite subtle in theory. 
In Appendix~\ref{sec:appendix-hat}, we present a smooth homogeneuous function $f$, based on the Mexican hat function~(\cite{absil2005convergence}),
such that even the direction of the parameter
does not converge along gradient flow (it moves around a cirle when $t$ increases).
},
while we prove that KKT conditions hold for all limit points of $\{\vtheta(t) / \normtwosm{\vtheta(t)} : t > 0 \}$, without requiring any convergence assumption; 
(3) They assume the convergence of the direction of losses (the direction of the vector whose entries are loss values on every data point) and Linear Independence Constraint Qualification (LICQ) for the max-margin problem, while we do not need such assumptions. 
Besides the above differences in assumptions, 
we also prove the monotonicity of the normalized margin and provide tight convergence rate for training loss. We believe both results are interesting in their own right.

Another technical difference is that their work analyzes discrete gradient descent on smooth homogeneous models (which fails to capture ReLU networks). In our work, we analyze both gradient descent on smooth homogeneous models and also gradient flow on homogeneous models which could be non-smooth.

\myparagraph{Other Works on Implicit Bias.} \citet{banburski2019theory} 
also studied the dynamics of gradient flow and among other things, provided 
mathematical insights to the implicit bias towards max margin solution for homogeneous networks.
We note that their analysis of gradient flow decomposes the dynamics to the tangent component
and radial component, which is similar to our proof of Theorem~\ref{thm:exp-loss-margin-inc}
in spirit.
\citet{wilson2017marginal,ali2018continuous,gunasekar2018characterizing} showed that for the linear least-square problem gradient-based methods converge to the unique global minimum that is closest to the initialization in $\normltwo$ distance. \citet{du2018global,jacot2018ntk,lee2019wide,arora2019exact}~showed that 
over-parameterized neural networks of sufficient width (or infinite width) behave as linear models with Neural Tangent Kernel (NTK) with proper initialization 
and gradient descent converges linearly to a global minimum near the initial point.
Other related works include~\citep{ma2019implicit,gidel2019implicit,arora2019implicit,suggala2018connecting,blanc2019implicit,neyshabur2014search,neyshabur2015pathsgd}.

\section{Preliminaries} \label{sec:prelim}

\myparagraph{Basic Notations.} For any $N \in \N$, let $[N] = \{1, \dots, N\}$. $\normtwosm{\vv}$ denotes the $\normltwo$-norm of a vector $\vv$. The default base of $\log$ is $e$. For a function $f: \R^d \to \R$, $\nabla f(\vx)$ stands for the gradient at $\vx$ if it exists. A function $f: X \to \R^d$ is $\contC^k$-smooth if $f$ is $k$ times continuously differentiable. A function $f: X \to \R$ is locally Lipschitz if for every $\vx \in X$ there exists a neighborhood $U$ of $\vx$ such that the restriction of $f$ on $U$ is Lipschitz continuous.

\myparagraph{Non-smooth Analysis.} For a locally Lipschitz function $f: X \to \R$, the Clarke's subdifferential \citep{clarke1975generalized,clarke2008nonsmooth,davis2018stochastic} at $\vx \in X$ is the convex set
\[
\cpartial f(\vx) := \conv\left\{ \lim_{k \to \infty} \nabla f(\vx_k) : \vx_k \to \vx, f~\text{is differentiable at}~\vx_k\right\}.
\]
For brevity, we say that a function $\vz: I \to \R^d$ on the interval $I$ is an \textit{arc} if $\vz$ is absolutely continuous for any compact sub-interval of $I$. For an arc $\vz$, $\vz'(t)$ (or $\frac{d\vz}{dt}(t)$) stands for the derivative at $t$ if it exists. Following the terminology in~\citep{davis2018stochastic}, we say that a locally Lipschitz function $f: \R^d \to \R$ \textit{admits a chain rule} if for any arc $\vz: \Rgez \to \R^d$, $\forall \vh \in \cpartial{f}(z(t)): (f \circ \vz)'(t) = \dotp{\vh}{\vz'(t)}$ holds for a.e. $t > 0$ (see also Appendix~\ref{sec:appendix-chain-rule}).

\myparagraph{Binary Classification.} Let $\Phi$ be a neural network, assumed to be parameterized by $\vtheta$. The output of $\Phi$ on an input $\vx \in \R^{\dimx}$ is a real number $\Phi(\vtheta; \vx)$, and the sign of $\Phi(\vtheta; \vx)$ stands for the classification result. A dataset is denoted by $\Data = \{(\vx_n, y_n) : n \in [N]\}$, where $\vx_n \in \R^{\dimx}$ stands for a data input and $y_n \in \{\pm 1\}$ stands for the corresponding label. For a loss function $\ell: \R \to \R$, we define the training loss of $\Phi$ on the dataset $\Data$ to be $\Loss(\vtheta) := \sum_{n=1}^{N} \ell(y_n \Phi(\vtheta; \vx_n))$.

\myparagraph{Gradient Descent.} We consider the process of training this neural network $\Phi$ with either gradient descent or gradient flow. For gradient descent, we assume the training loss $\Loss(\vtheta)$ is $\contC^2$-smooth and describe the gradient descnet process as $\vtheta(t+1) = \vtheta(t) - \eta(t) \nabla \Loss(\vtheta(t))$, where $\eta(t)$ is the learning rate at time $t$ and $\nabla \Loss(\vtheta(t))$ is the gradient of $\Loss$ at $\vtheta(t)$.

\myparagraph{Gradient Flow.} For gradient flow, we do not assume the differentibility but only some regularity assumptions including locally Lipschitz. Gradient flow can be seen as gradient descent with infinitesimal step size. In this model, $\vtheta$ changes continuously with time, and the trajectory of parameter $\vtheta$ during training is an arc $\vtheta: \Rgez \to \R^d, t \mapsto \vtheta(t)$ that satisfies the differential inclusion $\frac{d\vtheta(t)}{dt} \in -\cpartial \Loss(\vtheta(t))$ for a.e. $t \ge 0$. The Clarke's subdifferential $\cpartial \Loss$ is a natural generalization of the usual differential to non-differentiable functions. If $\Loss(\vtheta)$ is actually a $\contC^1$-smooth function, the above differential inclusion reduces to $\frac{d\vtheta(t)}{dt} = -\nabla \Loss(\vtheta(t))$ for all $t \ge 0$, which corresponds to the gradient flow with differential in the usual sense.

\section{Gradient Descent / Gradient Flow on Homogeneous Model} \label{sec:exp-loss}

In this section, we first state our results for 
gradient flow and gradient descent on homogeneous models with exponential loss $\ell(q) := e^{-q}$ for simplicity of presentation. Due to space limit, we defer the more general results which hold for a large family of loss functions (including logistic loss and cross-entropy loss) 
to Appendix~\ref{sec:margin-inc}, \ref{sec:gd-general} and \ref{sec:appendix-multiclass}.

\subsection{Assumptions} \label{sec:exp-loss-assumption}

\myparagraph{Gradient Flow.} For gradient flow, we assume the following:
\begin{enumerate}
    \item[\textbf{(A1).}] (Regularity). For any fixed $\vx$, $\Phi(\,\cdot\,; \vx)$ is locally Lipschitz and admits a chain rule;
    \item[\textbf{(A2).}] (Homogeneity). There exists $L > 0$ such that $\forall \alpha > 0: \Phi(\alpha\vtheta; \vx) = \alpha^{L} \Phi(\vtheta; \vx)$;
    \item[\textbf{(A3).}] (Exponential Loss). $\ell(q) = e^{-q}$;
    \item[\textbf{(A4).}] (Separability). There exists a time $t_0$ such that $\Loss(\vtheta(t_0)) < 1$.
\end{enumerate}

(A1) is a technical assumption about the regularity of the network output.
As shown in~\citep{davis2018stochastic}, 
the output of almost every neural network admits a chain rule (as long as the neural network is composed by definable pieces in an o-minimal structure, e.g., ReLU, sigmoid, LeakyReLU).

(A2) assumes the homogeneity, the main property we assume in this work. (A3), (A4) correspond to the two conditions introduced in Section~\ref{sec:intro}. The exponential loss in (A3) is main focus of this section.
(A4) is a separability assumption: the condition $\Loss(\vtheta(t_0)) < 1$ ensures that $\ell(y_n \Phi(\vtheta(t_0); \vx_n)) < 1$ for all $n \in [N]$, and thus $y_n \Phi(\vtheta(t_0); \vx_n) > 0$, meaning that $\Phi$ classifies every $\vx_n$ correctly.

\myparagraph{Gradient Descent.} For gradient descent, we assume (A2), (A3), (A4) similarly as for gradient flow, and the following two assumptions (S1) and (S5).
\begin{enumerate}
	\item[\textbf{(S1).}] (Smoothness). For any fixed $\vx$, $\Phi(\,\cdot\,; \vx)$ is $\contC^2$-smooth on $\R^{d} \setminus \{\vzero\}$.
	\item[\textbf{(S5).}] (Learning rate condition, Informal). $\eta(t) = \eta_0$ for a sufficiently small constant $\eta_0$. In fact, $\eta(t)$ is even allowed to be as large as $O(\Loss(t)^{-1} \polylog \frac{1}{\Loss(t)})$. See Appendix~\ref{sec:appendix-gd-assumption} for the details.
\end{enumerate}

(S5) is natural since deep neural networks are usually trained with constant learning rates. (S1) ensures the smoothness of $\Phi$, which is often assumed in the optimization literature in order to analyze gradient descent. While (S1) does not hold for neural networks with ReLU, it does hold for neural networks with smooth homogeneous activation such as the quadratic activation $\phi(x) := x^2$~\citep{li2018algorithmic,du2018power} or powers of ReLU $\phi(x) := \relu(x)^{\alpha}$ for $\alpha > 2$~\citep{zhong2017recovery,klusowski2018approximation,li2019better}.

\subsection{Main Theorem: Monotonicity of Normalized Margins} \label{sec:exp-loss-mono}

The {\em margin} for a single data point $(\vx_n, y_n)$ is defined to be $q_n(\vtheta) := y_n \Phi(\vtheta; \vx_n)$, and the margin for the entire dataset is defined to be $q_{\min}(\vtheta) := \min_{n \in [N]} q_n(\vtheta)$. 
By homogenity, the margin $q_{\min}(\vtheta)$ scales linearly with $\normtwosm{\vtheta}^L$ for any fixed direction since $q_{\min}(c\vtheta) = c^L q_{\min}(\vtheta)$. So we consider the {\em normalized margin} defined as below:
\begin{equation} \label{eq:bar-gamma}
\bar{\gamma}(\vtheta) := q_{\min}\left(\frac{\vtheta}{\normtwosm{\vtheta}}\right) = \frac{q_{\min}(\vtheta)}{\normtwosm{\vtheta}^{L}}.
\end{equation}
We say $f$ is an {\em $\epsilon$-additive approximation} for the normalized margin
if $\bar{\gamma}-\epsilon\leq f\leq \bar{\gamma}$,
and {\em $c$-multiplicative approximation}
if $c\bar{\gamma}\leq f\leq \bar{\gamma}$.

\myparagraph{Gradient Flow.} Our first result is on the overall trend of the normalized margin $\bar{\gamma}(\vtheta(t))$. For both gradient flow and gradient descent, we identify a smoothed version of the normalized margin, and show that it is non-decreasing during training. More specifically, we have the following theorem for gradient flow.

\begin{theorem}[Corollary of Theorem~\ref{thm:general-loss-margin-inc}]  \label{thm:exp-loss-margin-inc}
	Under assumptions (A1) - (A4), there exists an $O(\normtwosm{\vtheta}^{-L})$-additive approximation function $\tilde{\gamma}(\vtheta)$ for the normalized margin such that the following statements are true for gradient flow:
	\begin{enumerate}
		\item For a.e.~$t > t_0$, $\frac{d}{dt} \tilde{\gamma}(\vtheta(t)) \ge 0$;
		\item For a.e.~$t > t_0$, either $\frac{d}{dt} \tilde{\gamma}(\vtheta(t)) > 0$ or $\frac{d}{dt} \frac{\vtheta(t)}{\normtwosm{\vtheta(t)}} = 0$;
		\item $\Loss(\vtheta(t)) \to 0$ and $\normtwosm{\vtheta(t)} \to \infty$ as $t \to +\infty$; therefore, $\lvert \bar{\gamma}(\vtheta(t)) - \tilde{\gamma}(\vtheta(t)) \rvert \to 0$.
	\end{enumerate}
\end{theorem}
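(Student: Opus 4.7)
The plan is to take the smoothed margin to be
$\tilde{\gamma}(\vtheta) := \tilde{q}(\vtheta)/\normtwosm{\vtheta}^L$ where $\tilde{q}(\vtheta) := -\log \Loss(\vtheta) = -\log \sum_{n} e^{-q_n}$ with $q_n := y_n \Phi(\vtheta; \vx_n)$. The log-sum-exp sandwich $q_{\min} - \log N \le -\log \sum_n e^{-q_n} \le q_{\min}$ immediately yields $\bar{\gamma} - (\log N)/\normtwosm{\vtheta}^L \le \tilde{\gamma} \le \bar{\gamma}$, giving the $O(\normtwosm{\vtheta}^{-L})$-additive approximation. Also, separability ($\Loss(\vtheta(t_0)) < 1$) guarantees $q_n > 0$ for every $n$ at $t_0$, hence $\tilde{q}(t_0) > 0$ and $\tilde{\gamma}(t_0) > 0$.

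For monotonicity (Part 1), I would differentiate $\log \tilde{\gamma}$ along the flow. Setting $p_n := e^{-q_n}/\Loss$ and $\bar{q} := \sum_n p_n q_n$, the ingredients are: (i) Euler's identity generalized to Clarke subgradients of homogeneous functions, giving $\dotp{\vh}{\vtheta} = -L\Loss \bar{q}$ for every $\vh \in \cpartial \Loss(\vtheta)$, so $\frac{d}{dt}\normtwosm{\vtheta}^2 = 2L\Loss \bar{q}$; (ii) the chain-rule identity $\frac{d\Loss}{dt} = -\normtwosm{d\vtheta/dt}^2$, hence $\frac{d\tilde{q}}{dt} = \normtwosm{d\vtheta/dt}^2/\Loss$; (iii) Cauchy-Schwarz on $\vtheta$ and $d\vtheta/dt$, giving $\normtwosm{d\vtheta/dt}^2 \ge L^2\Loss^2\bar{q}^2/\normtwosm{\vtheta}^2$; and (iv) the log-sum-exp entropy identity $\bar{q} - \tilde{q} = H(p) \ge 0$. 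Chaining these,
\[
\frac{d}{dt}\log \tilde{\gamma} \;=\; \frac{1}{\tilde{q}}\cdot\frac{\normtwosm{d\vtheta/dt}^2}{\Loss} - \frac{L^2 \Loss \bar{q}}{\normtwosm{\vtheta}^2} \;\ge\; \frac{L^2 \Loss \bar{q}\, H(p)}{\tilde{q}\,\normtwosm{\vtheta}^2} \;\ge\; 0,
\]
for a.e.\ $t>t_0$, where positivity of every quantity on the right uses separability plus the fact (already established by the monotonicity itself) that $\tilde{\gamma}$ stays bounded below by $\tilde{\gamma}(t_0) > 0$.

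For Part 2, if the derivative is zero, then both the last inequality ($H(p) = 0$) and the Cauchy-Schwarz ($d\vtheta/dt$ parallel to $\vtheta$) must be tight, and a direct calculation of $\frac{d}{dt}(\vtheta/\normtwosm{\vtheta})$ when $\frac{d\vtheta}{dt} = \lambda\vtheta$ gives $0$. For Part 3, $\tilde{\gamma}$ is bounded above by the supremum of $\bar{\gamma}$ on the unit sphere (finite by continuity and compactness), so $\tilde{\gamma}$ converges. The loss $\Loss$ is non-increasing, so it converges to some $\Loss_\infty \ge 0$; I argue by contradiction that $\Loss_\infty = 0$. If $\Loss_\infty > 0$, then $\tilde{q}$ is bounded above, so $\normtwosm{\vtheta}^L \le \tilde{q}/\tilde{\gamma}(t_0)$ is bounded, while $\normtwosm{\vtheta}$ is bounded below by $\normtwosm{\vtheta(t_0)}$ (since $\normtwosm{\vtheta}^2$ is non-decreasing). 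But $\normtwosm{d\vtheta/dt} \ge L\Loss \bar{q}/\normtwosm{\vtheta}$ is then bounded below by a positive constant on the whole trajectory, contradicting $\int_{t_0}^\infty \normtwosm{d\vtheta/dt}^2\, dt = \Loss(t_0) - \Loss_\infty < \infty$. Hence $\Loss \to 0$, $\tilde{q}\to \infty$, and convergence of $\tilde{\gamma}$ to a positive limit forces $\normtwosm{\vtheta} \to \infty$; the approximation bound then yields $\lvert \bar{\gamma} - \tilde{\gamma}\rvert \to 0$.

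The main obstacle is handling the non-smoothness: the identities $\dotp{\vh}{\vtheta} = -L\Loss \bar{q}$ and $\frac{d\Loss}{dt} = -\normtwosm{d\vtheta/dt}^2$ must both be justified from the Clarke chain rule and the generalized Euler identity for locally Lipschitz homogeneous functions admitting a chain rule (guaranteed by (A1) and (A2)). Once these two non-smooth tools are set up, the rest of the argument is essentially an ODE computation driven by Cauchy-Schwarz and the entropy bound.
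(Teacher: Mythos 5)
Your proof is correct and largely parallel to the paper's, but you take a genuinely different (and complementary) route to the key inequality. The paper (Lemma 5.1 / Lemma B.2) decomposes $\frac{d\vtheta}{dt}$ into radial and tangential parts, keeps the tangential term, and drops the ``entropy'' slack by applying the lower bound $\nu = \Loss\bar{q} \ge \Loss\log\frac{1}{\Loss}$. You do the opposite: you drop the tangential term via Cauchy--Schwarz and keep the entropy term $H(p) = \bar q - \tilde q \ge 0$. In fact both sit inside a single exact identity
\[
\frac{d}{dt}\log\tilde\gamma
= \underbrace{\frac{L^2\Loss\,\bar q\,(\bar q - \tilde q)}{\tilde q\,\rho^2}}_{\text{your term}}
  + \underbrace{\frac{1}{\tilde q\,\Loss}\normtwo{(\mI-\hat\vtheta\hat\vtheta^\top)\tfrac{d\vtheta}{dt}}^2}_{\text{paper's term, up to replacing }\tilde q\text{ by }\bar q},
\]
so for Part~2 both terms must vanish, which is exactly what you exploit through the Cauchy--Schwarz equality case. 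The paper's choice of which term to keep is not arbitrary: retaining $\normtwo{\frac{d\hat\vtheta}{dt}}^2$ is what makes Lemma~\ref{lam:key-lemma-margin} directly usable later, since the integral $\int(\beta^{-2}-1)\,d\log\rho$ is what drives the directional/KKT convergence in Appendix~\ref{sec:implicit-bias}; your entropy form, while equally valid for Theorem~\ref{thm:exp-loss-margin-inc} itself, would not feed into that machinery. For Part~3 your argument is a cleaner qualitative version of the paper's Lemma~\ref{lam:main-loss-upper}: you use $\int\normtwo{\frac{d\vtheta}{dt}}^2\,dt < \infty$ together with a strictly positive lower bound on $\normtwo{\frac{d\vtheta}{dt}}$ under the hypothesis $\Loss_\infty>0$, whereas the paper integrates the explicit differential inequality to also extract a convergence rate. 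One small remark: your parenthetical hinting that positivity of the right-hand side uses ``the monotonicity itself'' is unnecessary and slightly circular as phrased; $\tilde q(t)>0$ and $\bar q(t)>0$ for $t\ge t_0$ follow directly from $\Loss(t)\le\Loss(t_0)<1$ (gradient flow is a descent method), and the rest then goes through without appealing to the conclusion.
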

More concretely, the function $\tilde{\gamma}(\vtheta)$ in Theorem~\ref{thm:exp-loss-margin-inc} is defined as
\begin{equation} \label{eq:tilde-gamma-exp}
	\tilde{\gamma}(\vtheta) := 
	\frac{\log \frac{1}{\Loss(\vtheta)}}{\normtwosm{\vtheta}^L} = \frac{-\log\left(\sum_{n=1}^{N} e^{-q_n(\vtheta)}\right)}{\normtwosm{\vtheta}^L}.
\end{equation}
Note that the only difference between $\bar{\gamma}(\vtheta)$ and $\tilde{\gamma}(\vtheta)$ is that the margin $q_{\min}(\vtheta)$ in $\bar{\gamma}(\vtheta)$ is replaced by $\log \frac{1}{\Loss(\vtheta)} = -\LSE(-q_1(\vtheta), \dots, -q_N(\vtheta))$, where $\LSE(a_1, \dots, a_N) = \log(\exp(a_1) + \dots + \exp(a_N))$ is the LogSumExp function. Approximating $q_{\min}$ with LogSumExp is a natural idea, and it also appears in previous studies on the margins of Boosting~\citep{rudin2007analysis,telgarsky13margins} and linear networks~\citep{nacson2019convergence}.
It is easy to see why $\tilde{\gamma}(\vtheta)$ is an $O(\normtwosm{\vtheta}^{-L})$-additive approximation for $\bar{\gamma}(\vtheta)$:
$e^{a_{\max}} \le \sum_{n=1}^{N} e^{a_n} \le Ne^{a_{\max}}$ holds for $a_{\max} = \max\{a_1, \dots, a_N\}$, so $a_{\max} \le \LSE(a_1, \dots, a_N) \le a_{\max} + \log N$; combining this with the definition of $\tilde{\gamma}(\vtheta)$ gives
$\bar{\gamma}(\vtheta) - \normtwosm{\vtheta}^{-L}\log N \le \tilde{\gamma}(\vtheta) \le \bar{\gamma}(\vtheta)$.

\myparagraph{Gradient Descent.} For gradient descent, Theorem~\ref{thm:exp-loss-margin-inc} holds similarly with a slightly different function $\hat{\gamma}(\vtheta)$ that approximates $\bar{\gamma}(\vtheta)$ multiplicatively rather than additively.
\begin{theorem}[Corollary of Theorem~\ref{thm:gd-margin-inc}] \label{thm:gd-exp-loss-margin-inc}
	Under assumptions (S1), (A2) - (A4), (S5), there exists an $(1 - O(1/(\log \frac{1}{\Loss})))$-multiplicative approximation function $\hat{\gamma}(\vtheta)$ for the normalized margin such that the following statements are true for gradient descent:
	\begin{enumerate}
		\item For all~$t > t_0$, $\hat{\gamma}(\vtheta(t+1)) \ge \hat{\gamma}(\vtheta(t))$;
		\item For all~$t > t_0$, either $\hat{\gamma}(\vtheta(t+1)) > \hat{\gamma}(\vtheta(t))$ or $\frac{\vtheta(t+1)}{\normtwosm{\vtheta(t+1)}} = \frac{\vtheta(t)}{\normtwosm{\vtheta(t)}}$;
		\item $\Loss(\vtheta(t)) \to 0$ and $\normtwosm{\vtheta(t)} \to \infty$ as $t \to +\infty$; therefore, $\lvert \bar{\gamma}(\vtheta(t)) - \hat{\gamma}(\vtheta(t)) \rvert \to 0$.
	\end{enumerate}
\end{theorem}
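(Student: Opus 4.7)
The plan is to mirror the gradient-flow proof of Theorem~\ref{thm:exp-loss-margin-inc} in discrete time, using the learning-rate condition (S5) to absorb the discretization error. I would take the same surrogate as in the flow setting, namely $\hat\gamma(\vtheta) := \log(1/\Loss(\vtheta))/\normtwosm{\vtheta}^L$, and reinterpret it as a multiplicative rather than additive approximation to $\bar\gamma$: since $e^{-q_{\min}} \le \Loss \le N e^{-q_{\min}}$ we have $\log(1/\Loss) \le q_{\min} \le \log(1/\Loss) + \log N$, hence $\hat\gamma/\bar\gamma = \log(1/\Loss)/q_{\min} \in [1 - \log N / \log(1/\Loss),\,1]$, which is the claimed $(1 - O(1/\log(1/\Loss)))$-approximation once $\log(1/\Loss)$ is bounded away from zero (which holds by (A4) and the descent of $\Loss$ established below).

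The heart of the argument is item~1. It is equivalent to
\begin{equation*}
\log\frac{\log(1/\Loss(\vtheta(t+1)))}{\log(1/\Loss(\vtheta(t)))} \;\ge\; L\log\frac{\normtwosm{\vtheta(t+1)}}{\normtwosm{\vtheta(t)}},
\end{equation*}
so I need a lower bound on the left and an upper bound on the right. The left bound comes from a descent lemma derived from the $\contC^2$-smoothness in (S1): the update $\vtheta(t+1) = \vtheta(t) - \eta\nabla\Loss(\vtheta(t))$ plus a Taylor expansion yields $\Loss(\vtheta(t+1)) \le \Loss(\vtheta(t)) - \eta(1 - \delta(t))\normtwosm{\nabla\Loss(\vtheta(t))}^2$ for a vanishing $\delta(t)$, giving $\log(1/\Loss(\vtheta(t+1))) - \log(1/\Loss(\vtheta(t))) \ge \eta(1 - \delta(t))\normtwosm{\nabla\Loss}^2/\Loss(\vtheta(t))$. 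The right bound comes from expanding $\normtwosm{\vtheta(t+1)}^2 = \normtwosm{\vtheta(t)}^2 - 2\eta\dotp{\nabla\Loss}{\vtheta(t)} + \eta^2\normtwosm{\nabla\Loss}^2$. The two sides are then tied together by the same key inequality as in the flow case: Euler's identity gives $-\dotp{\nabla\Loss}{\vtheta} = L\sum_n q_n e^{-q_n}$, and whenever $\Loss < 1$ all $q_n > 0$, so the elementary bound $\sum_n q_n e^{-q_n}/\Loss \ge q_{\min} \ge \log(1/\Loss)$ together with Cauchy--Schwarz yields $\normtwosm{\nabla\Loss}^2\normtwosm{\vtheta}^2 \ge L\Loss\log(1/\Loss)\cdot|\dotp{\nabla\Loss}{\vtheta}|$. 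Rearranging and invoking (S5) to dominate the $O(\eta^2)$ and $\delta(t)$ errors yields the desired inequality.

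Items~2 and~3 then follow fairly directly. Equality in item~1 forces saturation of Cauchy--Schwarz, i.e.\ $\nabla\Loss(\vtheta(t))$ parallel to $\vtheta(t)$; under the gradient descent update this preserves the direction $\vtheta/\normtwosm{\vtheta}$, giving item~2. For item~3, monotonicity plus the uniform upper bound $\hat\gamma \le \bar\gamma \le \sup_{\normtwosm{\vu}=1} q_{\min}(\vu) < \infty$ (finite by continuity on the unit sphere) implies $\hat\gamma(\vtheta(t))$ converges, and then the descent lemma forces $\Loss(\vtheta(t)) \to 0$ (otherwise a persistent loss decrease would blow up the right-hand side of the key inequality and contradict convergence of $\hat\gamma$). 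The fact that $\normtwosm{\vtheta} \to \infty$ then follows from $\hat\gamma$ being bounded above combined with $\log(1/\Loss) \to \infty$, and $|\hat\gamma - \bar\gamma| \to 0$ is immediate from the multiplicative-approximation bound.

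The main obstacle is the descent lemma under (S1) and (S5). Unlike in the flow case, $\Loss$ is not globally smooth: its Hessian grows polynomially in $\normtwosm{\vtheta}$ (through homogeneity) and grows with $1/\Loss$, because small loss means large $q_n$ and the second derivatives of $\Loss$ pick up factors of $q_n^2 e^{-q_n}$ that do not directly cancel with $\Loss = \sum_n e^{-q_n}$. The learning-rate prescription in (S5), roughly $\eta(t) = O(\Loss(t)^{-1}\polylog(1/\Loss(t)))$, is calibrated precisely so that the second-order Taylor remainder is a vanishing fraction of the first-order descent. Pushing this through carefully enough to preserve \emph{strict} monotonicity of $\hat\gamma$ (rather than monotonicity only up to a small slack) is where the main technical work lies.
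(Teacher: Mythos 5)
Your proposal reuses the gradient-flow surrogate $\tilde\gamma(\vtheta) = \log(1/\Loss)/\rho^L$ verbatim and hopes that (S5) makes the discretization errors negligible at every step. This is where the plan breaks down, and it is precisely the difficulty the paper flags at the start of its gradient-descent section. Tracing your own inequality: after the descent lemma and the norm expansion, one needs
\[
\frac{(1-\delta(t))\,\normtwosm{\nabla\Loss}^2}{L\,\Loss\log\frac{1}{\Loss}} \;\ge\; \frac{L\nu(t)}{\rho^2} + \frac{\eta(t)\normtwosm{\nabla\Loss}^2}{2\rho^2},
\]
where $\nu(t) = \sum_n q_n e^{-q_n}$ and $\delta(t) = \Theta(1/\log\frac{1}{\Loss})$. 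The flow argument has two sources of slack to hide errors behind: the tangential component of $\nabla\Loss$ (Cauchy--Schwarz slack), and the gap $\nu - \Loss\log\frac{1}{\Loss} \ge 0$. Both can vanish simultaneously: the tangential component is zero whenever $\nabla\Loss$ is exactly radial, and the second gap is identically zero already for $N=1$ (then $\nu = q_{\min}\Loss = \Loss\log\frac{1}{\Loss}$ exactly). In that configuration your target inequality would read $(1-\delta)L\nu/\rho^2 \ge L\nu/\rho^2 + \eta\normtwosm{\nabla\Loss}^2/(2\rho^2)$, which is false for any $\delta > 0$, and no refinement of (S5) rescues it because $\delta$ is a genuine positive error of the discrete step. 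Even when $N>1$, the relative slack from $\nu - \Loss\log\frac{1}{\Loss}$ is only $O(\log N / \log\frac{1}{\Loss})$, which is the same order as $\delta$, so you cannot conclude dominance.

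The paper's fix is to abandon $\tilde\gamma$ as the monotone quantity and construct a strictly smaller surrogate $\hat\gamma(\vtheta) := e^{\phi(\Loss)}/\rho^L$, where $\phi$ is chosen so that $-\phi'(x)$ exceeds the derivative of $\log\log\frac{1}{x}$ by an explicit factor $1 + 2(1 + \lambda(x)/L)\mu(x)$. That extra factor is exactly calibrated (see the construction in Appendix~\ref{sec:appendix-gd-assumption} and the proof of Lemma~\ref{lam:gd-key}) to absorb both the $(1-\mu)$ deficit from the descent lemma and the $(1 + \lambda\mu/L)$ overshoot from the weight-norm expansion, without leaning on the tangential component at all. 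The tangential slack is then reserved purely for establishing directional convergence, mirroring the flow proof. Your high-level items 2 and 3 would then go through essentially as you describe, but the object they apply to has to be this modified $\hat\gamma$, not $\tilde\gamma$. You correctly sense that preserving strict monotonicity is the crux; the missing idea is that one must redefine the surrogate margin so that its ``built-in'' decrease budget covers the discretization error, rather than hoping the dynamics supply the budget.
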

Due to the discreteness of gradient descent, 
the explicit formula for $\hat{\gamma}(\vtheta)$ 
is somewhat technical, and we refer the readers to Appendix~\ref{sec:appendix-gd} for full details.

\paragraph{Convergence Rates.} It is shown in Theorem~\ref{thm:exp-loss-margin-inc}, \ref{thm:gd-exp-loss-margin-inc} that $\Loss(\vtheta(t)) \to 0$ and $\normtwosm{\vtheta(t)} \to \infty$. In fact, with a more refined analysis, we can prove tight loss convergence and weight growth rates using the monotonicity of normalized margins.

\begin{theorem}[Corollary of Theorem~\ref{thm:main-loss-tail} and \ref{thm:gd-tight-rate}] \label{thm:gd-gf-exp-loss-margin-inc}
	For gradient flow under assumptions (A1) - (A4) or gradient descent under assumptions (S1), (A2) - (A4), (S5), we have the following tight bounds for training loss and weight norm:
	\[
		\Loss(\vtheta(t)) = \Theta\left(\frac{1}{T (\log T)^{2-2/L}}\right) \qquad \text{and} \qquad \normtwosm{\vtheta(t)} = \Theta\left((\log T)^{1/L}\right),
	\]
	where $T = t$ for gradient flow and $T = \sum_{\tau = t_0}^{t-1} \eta(\tau)$ for gradient descent.
\end{theorem}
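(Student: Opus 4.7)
The plan is to combine the monotonicity of the smoothed margin (Theorems~\ref{thm:exp-loss-margin-inc} and~\ref{thm:gd-exp-loss-margin-inc}) with a direct analysis of the dynamics of $\rho(t) := \normtwosm{\vtheta(t)}$, reducing the problem to a one-variable ODE (or difference inequality) that can be solved in closed form.

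\textbf{Step 1: pin down the limit of $\tilde{\gamma}$.} By Theorems~\ref{thm:exp-loss-margin-inc} and~\ref{thm:gd-exp-loss-margin-inc}, $\tilde{\gamma}(\vtheta(t))$ (or $\hat{\gamma}(\vtheta(t))$) is non-decreasing after $t_0$, and by continuity/homogeneity $\bar{\gamma}$ is bounded above on the unit sphere by some finite $M$. Since the approximation error vanishes as $\rho \to \infty$, both $\bar{\gamma}$ and its smoothed version converge to a common limit $\gamma_\infty \in (0, M]$, with strict positivity coming from assumption (A4). Consequently $u(t) := \log(1/\Loss(\vtheta(t))) = \tilde{\gamma}(\vtheta(t))\, \rho(t)^L = \Theta(\rho(t)^L)$ for all large enough $t$.

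\textbf{Step 2: reduce the dynamics to a scalar ODE.} Applying Euler's homogeneity identity $\langle \vtheta, \nabla q_n(\vtheta)\rangle = L q_n(\vtheta)$ (interpreted through the Clarke subdifferential and the chain rule in (A1)), the gradient flow satisfies
\begin{equation*}
\tfrac{d}{dt}\rho^2 \;=\; -2\langle \vtheta, \nabla \Loss(\vtheta)\rangle \;=\; 2L \sum_{n=1}^N q_n e^{-q_n}.
\end{equation*}
Since $\Loss \ge e^{-q_n}$ gives $q_n \ge u$ for each $n$, and $q e^{-q}$ is decreasing for $q > 1$, one has the two-sided bound $u\Loss \le \sum_n q_n e^{-q_n} \le Nu\Loss$. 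Combining with Step~1 ($\rho^2 = \Theta(u^{2/L})$ implies $\tfrac{d}{dt}\rho^2 = \Theta(u^{2/L-1}\dot u)$) gives
\begin{equation*}
\dot u \;=\; \Theta\bigl(u^{2-2/L} e^{-u}\bigr), \qquad \text{equivalently} \qquad \tfrac{d}{dt} e^u \;=\; \Theta\bigl(u^{2-2/L}\bigr).
\end{equation*}
Setting $v = e^u = 1/\Loss$ and integrating $\int dv/(\log v)^{2-2/L}$ yields $v = \Theta(t(\log t)^{2-2/L})$ (noting $\log v \sim \log t$), hence $\Loss = \Theta(1/(t(\log t)^{2-2/L}))$ and $\rho = \Theta(u^{1/L}) = \Theta((\log t)^{1/L})$.

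\textbf{Step 3: discretization for gradient descent, and main obstacle.} For gradient descent I would mirror Step~2 with the per-step identity $\rho(t+1)^2 - \rho(t)^2 = -2\eta(t)\langle \vtheta(t), \nabla \Loss(\vtheta(t))\rangle + \eta(t)^2 \normtwosm{\nabla \Loss(\vtheta(t))}^2$, treating $T = \sum_{\tau=t_0}^{t-1}\eta(\tau)$ as effective time; the smoothness (S1) and the learning-rate condition (S5) are exactly what is needed to ensure the $\eta^2$ second-order term is negligible compared with the first-order drift, so the discrete difference inequality mimics the ODE of Step~2 up to constants. The main obstacle is twofold: first, justifying Euler's identity and the manipulation of $\tfrac{d}{dt}\rho^2$ rigorously for non-smooth homogeneous networks (which requires the Clarke chain rule from (A1) and careful a.e.\ arguments); second, in the discrete setting, converting the monotonicity of $\hat{\gamma}$ and the learning-rate bound into a uniform control on the discretization error strong enough to yield matching upper and lower bounds, rather than only one-sided $O(\cdot)$ rates.
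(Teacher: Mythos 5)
Step 1 is sound and matches the paper's reduction $\rho^L = \Theta(\log(1/\Loss))$. Step 2, however, has a genuine gap at the point where you write ``$\rho^2 = \Theta(u^{2/L})$ implies $\frac{d}{dt}\rho^2 = \Theta(u^{2/L-1}\dot u)$.'' You cannot differentiate an asymptotic equivalence; worse, only one direction of this ``derivative bound'' is actually available from what you have. Using the exact identity $\rho^2 = (u/\tilde\gamma)^{2/L}$ and $\dot{\tilde\gamma}\ge 0$, one does get $\frac{d}{dt}\rho^2 = \frac{2\rho^2}{Lu}\bigl(\dot u - u\,\dot{\tilde\gamma}/\tilde\gamma\bigr) \le O(u^{2/L-1}\dot u)$, and combined with $\frac{d}{dt}\rho^2 = 2L\nu = \Theta(u\Loss)$ this yields $\dot u = \Omega(u^{2-2/L}e^{-u})$, i.e., the \emph{upper} bound on $\Loss$. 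But the reverse inequality $\frac{d}{dt}\rho^2 \ge \Omega(u^{2/L-1}\dot u)$ --- which is what you would need to conclude $\dot u = O(u^{2-2/L}e^{-u})$ and hence the \emph{lower} bound on $\Loss$ --- would require a pointwise bound $u\,\dot{\tilde\gamma}/\tilde\gamma \le (1-c)\dot u$, i.e.\ $\frac{d\log\tilde\gamma}{d\log u}\le 1-c$. The convergence of $\tilde\gamma$ only controls the time-integral of $\dot{\tilde\gamma}$, not this pointwise ratio, so this step is unjustified. Equivalently: $\frac{d}{dt}\rho^2 = 2\langle\vtheta,\frac{d\vtheta}{dt}\rangle$ sees only the radial component of $\frac{d\vtheta}{dt}$, while $\dot u = \|\frac{d\vtheta}{dt}\|^2/\Loss$ sees the full norm, and nothing in your argument rules out the tangential part being large.

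The paper closes exactly this gap with a separate upper bound on the full gradient norm: since each $q_n$ is locally Lipschitz, $B_1 := \sup\{\|\vh\| : \vtheta\in\sphS^{d-1},\ \vh\in\cpartial q_n\}$ is finite, and homogeneity gives $\|\vh_n\|\le B_1\rho^{L-1}$, hence $\|\frac{d\vtheta}{dt}\| = \|\sum_n e^{-q_n}\vh_n\| \le B_1\rho^{L-1}\Loss$ and therefore $\dot u = \|\frac{d\vtheta}{dt}\|^2/\Loss \le B_1^2\rho^{2L-2}\Loss = O(u^{2-2/L}\Loss)$. Without some such input (compactness of the unit sphere plus local Lipschitzness of $q_n$), the two-sided $\Theta$ rate cannot be obtained; Euler's identity alone is intrinsically a statement about the radial projection. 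For Step 3 you correctly name the obstacles in the discrete case, but resolving them also needs the second-order analogue of this gradient/Hessian bound (the paper's Lemma~\ref{lam:gd-grad-hess-bound}), which your sketch does not supply.
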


\subsection{Main Theorem: Convergence to KKT Points} \label{sec:main-thm-kkt}

For gradient flow, $\tilde{\gamma}$ is upper-bounded by $\tilde{\gamma} \le \bar{\gamma} \le \sup\{ q_n(\vtheta) : \normtwosm{\vtheta} = 1\}$. Combining this with Theorem~\ref{thm:exp-loss-margin-inc} and the monotone convergence theorem, it is not hard to see that $\lim_{t \to +\infty} \bar{\gamma}(\vtheta(t))$ and $\lim_{t \to +\infty} \tilde{\gamma}(\vtheta(t))$ exist and equal to the same value. Using a similar argument, we can draw the same conclusion for gradient descent.

To understand the implicit regularization effect, a natural question arises: what optimality property does the limit of normalized margin have? To this end, we identify a natural constrained optimization problem related to margin maximization, and prove that $\vtheta(t)$ directionally converges to
its KKT points, as shown below. We note that we can extend this result to the finite time case, and show that gradient flow or gradient descent passes through an approximate KKT point after a certain amount of time. See Theorem~\ref{thm:main-pass-approx-kkt} in Appendix~\ref{sec:margin-inc} and Theorem~\ref{thm:gd-main-pass-approx-kkt} in Appendix~\ref{sec:appendix-gd} for the details.
We will briefly review the definition of KKT points and approximate KKT points
for a constraint optimization problem in Appendix~\ref{sec:appendix-kkt}. 
\begin{theorem}[Corollary of Theorem~\ref{thm:main-limit-dir-converge} and \ref{thm:gd-exp-loss-kkt}] \label{thm:exp-loss-kkt}
	For gradient flow under assumptions (A1) - (A4) or gradient descent under assumptions (S1), (A2) - (A4), (S5), any limit point $\bar{\vtheta}$ of $\left\{ \frac{\vtheta(t)}{\normtwosm{\vtheta(t)}} : t \ge 0 \right\}$ is along the direction of a KKT point of the following constrained optimization problem (P):
	\[
	\min         \quad \frac{1}{2}\normtwosm{\vtheta}^2 
	\quad \quad \quad
	\text{s.t.}  \quad q_n(\vtheta) \ge 1 \qquad \forall n \in [N]
	\]
	That is, for any limit point $\bar{\vtheta}$, there exists a scaling factor $\alpha > 0$ such that $\alpha\bar{\vtheta}$ satisfies Karush-Kuhn-Tucker (KKT) conditions of (P).
\end{theorem}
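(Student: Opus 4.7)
The plan is to extract the KKT conditions of (P) from an asymptotic alignment property that follows from the monotonicity of the smoothed margin. The key algebraic observation is Euler's identity for homogeneous functions: assumption (A2) gives $\dotp{\nabla q_n(\vtheta)}{\vtheta} = L q_n(\vtheta)$, and so $-\nabla \Loss(\vtheta) = \sum_n e^{-q_n(\vtheta)} \nabla q_n(\vtheta)$ already has the $\sum_n \lambda_n \nabla q_n(\vtheta)$ form required for KKT stationarity, with prospective multipliers $\lambda_n = e^{-q_n(\vtheta)}$. It therefore suffices to show (i) that in the limit this sum points along $\vtheta$, and (ii) that a suitable normalisation of the multipliers produces complementary slackness; a final rescaling of $\bar\vtheta$ by a homogeneity factor takes care of primal feasibility.

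For step (i), I would differentiate $\log\tilde\gamma = \log\log(1/\Loss) - L\log\normtwosm{\vtheta}$ along gradient flow to obtain
\[
\frac{d\log\tilde\gamma}{dt} = \frac{\normtwosm{\nabla\Loss}^2/\Loss}{\log(1/\Loss)} - \frac{L^2 \sum_n e^{-q_n} q_n}{\normtwosm{\vtheta}^2}.
\]
Monotonicity (Theorem~\ref{thm:exp-loss-margin-inc}) says this is nonnegative, which follows from a Cauchy--Schwarz step $\normtwosm{\nabla\Loss}^2 \normtwosm{\vtheta}^2 \ge L^2 \left(\sum_n e^{-q_n} q_n\right)^2$ together with the entropy identity $\sum_n e^{-q_n}q_n = \Loss\log(1/\Loss) + \Loss\,H(e^{-q_n}/\Loss)$, where $H$ denotes Shannon entropy. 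Since $\log\tilde\gamma$ is monotone and bounded above (by the supremum of $q_n$ over the unit sphere), its total variation is finite; hence the Cauchy--Schwarz defect must vanish in an integral sense, forcing $-\nabla\Loss$ and $\vtheta$ to become collinear asymptotically. For step (ii), I would pick $\alpha > 0$ with $\min_n q_n(\alpha\bar\vtheta)=1$ and renormalise $\lambda_n = e^{-q_n(\vtheta)}$ by a homogeneity-adjusted constant: since inactive constraints (those $n$ with $q_n$ strictly above $\min_m q_m$) contribute exponentially less than the active ones, their rescaled multipliers vanish in the limit, giving complementary slackness; nonnegativity is automatic, and the alignment from (i) delivers stationarity at $\alpha\bar\vtheta$.

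The hardest part will be upgrading the integral-sense alignment to a genuine alignment at every limit point, since by the example in Appendix~\ref{sec:appendix-hat} the direction $\vtheta(t)/\normtwosm{\vtheta(t)}$ need not converge. The remedy is to argue subsequentially: along any $t_k\to\infty$ with $\vtheta(t_k)/\normtwosm{\vtheta(t_k)}\to\bar\vtheta$, one uses the sharp growth/convergence rates of Theorem~\ref{thm:gd-gf-exp-loss-margin-inc} to exhibit nearby times at which the Cauchy--Schwarz gap is actually small, and then invokes continuity of $\nabla q_n$ (or upper semicontinuity of $\cpartial q_n$ in the nonsmooth case) to transfer the KKT stationarity back to $\bar\vtheta$ itself. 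Two further complications must be handled in parallel: the nonsmooth gradient-flow case requires replacing $\nabla$ by $\cpartial$ throughout, extending Euler's identity to locally Lipschitz homogeneous functions, and using the chain rule supplied by (A1); and the gradient descent case requires bounding the discrete per-step error using smoothness (S1) and the learning-rate condition (S5), so that the continuous-time alignment argument survives discretisation.
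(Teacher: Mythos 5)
Your proposal follows the paper's proof strategy closely: both derive the monotonicity of $\log\tilde\gamma$ via a Cauchy--Schwarz defect (the paper packages this as the radial/tangential decomposition, and your entropy identity $\sum_n e^{-q_n}q_n = \Loss\log(1/\Loss) + \Loss\,H(e^{-q_n}/\Loss)$ is an equivalent way to get the paper's lower bound $\nu \ge \Loss\log(1/\Loss)$), both use boundedness of $\log\tilde\gamma$ to deduce that the defect is integrable, and both extract near-aligned times close to a subsequential limit of $\hat\vtheta(t)$.

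Two steps you leave implicit are exactly the ones the paper spends effort on and that make the argument actually close. First, the ``integral sense'' in which the Cauchy--Schwarz defect vanishes must be taken in the measure $d\log\rho$, not $dt$ (Lemma~\ref{lam:beta2-int-bound}), and when you jump from a time $s_m$ with $\hat\vtheta(s_m)\approx\bar\vtheta$ to a nearby time $t_m$ where the defect is small, you need a bound of the form $\normtwo{d\hat\vtheta/d\log\rho}=O(1)$ (Lemma~\ref{lam:dir-int-bound}) to conclude that $\hat\vtheta(t_m)$ is still close to $\bar\vtheta$; your invocation of ``growth/convergence rates'' gestures at this, but the quantity that is actually controlled is the angular speed relative to $\log\rho$, not the rate of $\rho$ itself. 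Second, to pass from approximately aligned points to a genuine KKT point at $\bar\vtheta$, the paper verifies MFCQ for (P) (Lemma~\ref{lam:margin-mfcq}) and then invokes a theorem of \citet{dutta2013approximate} about limits of approximate KKT points. Your more direct limit-passage via continuity of $\nabla q_n$ can bypass that black box, but one must still argue boundedness of the renormalized multipliers before extracting a convergent subsequence; here this follows harmlessly from Euler's identity $\dotp{\vtheta}{\nabla q_n}=Lq_n$ together with $q_n\ge q_{\min}$, so the omission is benign, but it should be said. Overall this is the paper's route modulo these explicit details.
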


Minimizing (P) over its feasible region is equivalent to maximizing the normalized margin over all possible directions. The proof is as follows. Note that we only need to consider all feasible points $\vtheta$ with $q_{\min}(\vtheta) > 0$. For a fixed $\vtheta$, $\alpha \vtheta$ is a feasible point of (P) iff $\alpha \ge q_{\min}(\vtheta)^{-1/L}$. Thus, the minimum objective value over all feasible points of (P) in the direction of $\vtheta$ is $\frac{1}{2}\normtwosm{\vtheta/q_{\min}(\vtheta)^{1/L}}^2 = \frac{1}{2} \bar{\gamma}(\vtheta)^{-2/L}$. Taking minimum over all possible directions, we can conclude that if the maximum normalized margin is $\bar{\gamma}_*$, then the minimum objective of (P) is $\frac{1}{2}\bar{\gamma}_*^{-2/L}$.

It can be proved that (P) satisfies the Mangasarian-Fromovitz Constraint Qualification (MFCQ) (See Lemma~\ref{lam:margin-mfcq}). Thus, KKT conditions are first-order necessary conditions for global optimality. For linear models, KKT conditions are also sufficient for ensuring global optimality; however, for deep homogeneous networks, $q_n(\vtheta)$ can be highly non-convex. Indeed, as gradient descent is a first-order optimization method, if we do not make further assumptions on $q_n(\vtheta)$, then it is easy to construct examples that gradient descent does not lead to a normalized margin that is globally optimal. Thus, proving the convergence to KKT points is perhaps the best we can hope for in our setting, and it is an interesting future work to prove stronger convergence results with further natural assumptions.

Moreover, we can prove the following corollary, which characterizes the optimality of the normalized margin using SVM with Neural Tangent Kernel (NTK, introduced in~\citep{jacot2018ntk}) 
defined at limit points. The proof is deferred to Appendix~\ref{sec:appendix-cor-kernel-svm-pf}.
\begin{corollary}[Corollary of Theorem~\ref{thm:exp-loss-kkt}] \label{cor:kernel-svm}
	Assume (S1). Then for gradient flow under assumptions (A2) - (A4) or gradient descent under assumptions (A2) - (A4), (S5), any limit point $\bar{\vtheta}$ of $\{ \vtheta(t)/\normtwosm{\vtheta(t)} : t \ge 0 \}$ is along the max-margin direction for the hard-margin SVM with kernel $K_{\bar{\vtheta}}(\vx, \vx') = \dotp{\nabla \Phi_{\vx}(\bar{\vtheta})}{\nabla \Phi_{\vx'}(\bar{\vtheta})}$, where $\Phi_{\vx}(\vtheta) := \Phi(\vtheta; \vx)$. That is, for some $\alpha > 0$, $\alpha\bar{\vtheta}$ is the optimal solution for the following constrained optimization problem:
	\[
	\min         \quad \frac{1}{2}\normtwosm{\vtheta}^2 
	\quad \quad \quad
	\text{s.t.}  \quad y_n\dotp{\vtheta}{\nabla \Phi_{\vx_n}(\bar{\vtheta})} \ge 1 \qquad \forall n \in [N]
	\]
	If we assume (A1) instead of (S1) for gradient flow, then there exists a mapping $\vh(\vx) \in \cpartial \Phi_{\vx}(\bar{\vtheta})$ such that the same conclusion holds for $K_{\bar{\vtheta}}(\vx, \vx') = \dotp{\vh(\vx)}{\vh(\vx')}$.
\end{corollary}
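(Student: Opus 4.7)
The plan is to take the KKT conditions for the original margin problem (P) that are guaranteed by Theorem~\ref{thm:exp-loss-kkt}, and to \emph{translate} them (via homogeneity and Euler's identity) into the KKT conditions for the kernel-SVM problem. Since the kernel-SVM problem has a strongly convex quadratic objective with linear constraints, its KKT conditions are sufficient for global optimality, which will close the argument.

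Concretely, I would first invoke Theorem~\ref{thm:exp-loss-kkt} on the limit point $\bar{\vtheta}$ to obtain a scaling $\alpha > 0$ and multipliers $\lambda_n \ge 0$ such that
\[
\alpha \bar{\vtheta} \;=\; \sum_{n=1}^N \lambda_n\, y_n\, \nabla \Phi_{\vx_n}(\alpha \bar{\vtheta}), \qquad q_n(\alpha \bar{\vtheta}) \ge 1, \qquad \lambda_n\bigl(q_n(\alpha \bar{\vtheta}) - 1\bigr) = 0.
\]
Next, I would use two standard facts about $L$-homogeneous $\contC^1$ maps: the gradient $\nabla \Phi_{\vx}$ is $(L-1)$-homogeneous, so $\nabla \Phi_{\vx_n}(\alpha \bar{\vtheta}) = \alpha^{L-1} \nabla \Phi_{\vx_n}(\bar{\vtheta})$; and Euler's identity $\dotp{\bar{\vtheta}}{\nabla \Phi_{\vx_n}(\bar{\vtheta})} = L\, \Phi_{\vx_n}(\bar{\vtheta})$, which gives $y_n \dotp{\bar{\vtheta}}{\nabla \Phi_{\vx_n}(\bar{\vtheta})} = L\, q_n(\bar{\vtheta})$.

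Plugging these into the three relations above and setting the candidate scaling $\alpha' := \alpha^L / L$ and multipliers $\mu_n := \alpha^{2L-2}\lambda_n/L$, direct substitution gives
\[
\alpha' \bar{\vtheta} = \sum_{n=1}^N \mu_n\, y_n\, \nabla \Phi_{\vx_n}(\bar{\vtheta}), \qquad y_n\dotp{\alpha' \bar{\vtheta}}{\nabla \Phi_{\vx_n}(\bar{\vtheta})} = \alpha^L q_n(\bar{\vtheta}) \ge 1,
\]
and complementary slackness $\mu_n\bigl(y_n\dotp{\alpha' \bar{\vtheta}}{\nabla \Phi_{\vx_n}(\bar{\vtheta})} - 1\bigr) = 0$ follows from $\lambda_n(q_n(\alpha \bar{\vtheta}) - 1) = 0$ together with $q_n(\alpha \bar{\vtheta}) = \alpha^L q_n(\bar{\vtheta})$. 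Together with $\mu_n \ge 0$, these are precisely the KKT conditions for the kernel-SVM problem at $\alpha' \bar{\vtheta}$, and convexity of that problem upgrades KKT to global optimality, so $\alpha'\bar{\vtheta}$ is the max-margin direction for the SVM with kernel $K_{\bar{\vtheta}}$.

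For the non-smooth case (only (A1) assumed), the same template should work, but with $\nabla \Phi_{\vx_n}(\bar{\vtheta})$ replaced by a specific Clarke subgradient element. More precisely, Theorem~\ref{thm:exp-loss-kkt} in the non-smooth setting yields some $\vh_n \in \cpartial \Phi_{\vx_n}(\alpha\bar{\vtheta})$ realizing the stationarity condition; the anticipated obstacle, and the main technical step to check, is the $(L-1)$-homogeneity of the Clarke subdifferential, i.e.\ $\cpartial \Phi_{\vx}(\alpha \vtheta) = \alpha^{L-1}\cpartial \Phi_{\vx}(\vtheta)$ for $\alpha > 0$, which should follow by applying the chain rule of (A1) to the scaling map $\vtheta \mapsto \alpha \vtheta$. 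Once that is established, one sets $\vh(\vx_n) := \alpha^{-(L-1)} \vh_n \in \cpartial \Phi_{\vx_n}(\bar{\vtheta})$, Euler's identity generalizes to $\dotp{\bar{\vtheta}}{\vh(\vx_n)} = L\Phi_{\vx_n}(\bar{\vtheta})$ (again by the chain rule applied along the ray $s \mapsto s\bar{\vtheta}$), and the same algebraic rescaling concludes the proof.
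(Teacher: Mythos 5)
Your proposal is correct and follows essentially the same route as the paper: pull back the KKT conditions from (P) at the KKT point $\alpha\bar{\vtheta}$ to the kernel-SVM problem via the $(L-1)$-homogeneity of $\nabla\Phi_{\vx}$ and Euler's identity $\dotp{\vtheta}{\nabla\Phi_{\vx}(\vtheta)} = L\Phi_{\vx}(\vtheta)$, then invoke convexity of the SVM problem to upgrade KKT to global optimality (this is exactly Lemma~\ref{lam:kkt-analysis-svm} and its proof; the paper also records the Clarke-subdifferential version of gradient homogeneity in Theorem~\ref{thm:homo}(a), which is what you correctly flag as the key step for the non-smooth case). The only minor difference is bookkeeping: the paper states (Q) with subgradients taken at the KKT point $\vtheta_*$ and rescales $\vtheta_*$ by $1/L$, leaving the passage from $\vtheta_*$ to $\bar{\vtheta}$ implicit, whereas you carry the rescaling to $\bar{\vtheta}$ explicitly, which is arguably cleaner but not a different argument.
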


\subsection{Other Main Results} \label{sec:other-main-res}

The above results can be extended to other settings as shown below.

\myparagraph{Other Binary Classification Loss.} The results on exponential loss can be generalized to a much broader class of binary classification loss. The class includes the logistic loss which is 
one of the most popular loss functions, $\ell(q) = \log(1+e^{-q})$. 
The function class also includes other losses with exponential tail, e.g., $\ell(q) = e^{-q^3}, \ell(q) = \log(1+e^{-q^3})$. For all those loss functions, we can use its inverse function $\ell^{-1}$ to define the smoothed
normalized margin as follows
\[
\tilde{\gamma}(\vtheta) := \frac{\ell^{-1}(\Loss(\vtheta))}{\normtwosm{\vtheta}^L}.
\]
Then all our results for gradient flow continue to hold (Appendix~\ref{sec:margin-inc}). Using a similar modification, we can also extend it to gradient descent (Appendix~\ref{sec:gd-general}).

\myparagraph{Cross-entropy Loss.} In multi-class classification, we can define $q_n$ to be the difference between the classification score for the true label and the maximum score for the other labels, then the margin $q_{\min} := \min_{n \in [N]} q_n$ and the normalized margin $\bar{\gamma}(\vtheta) := \frac{q_{\min}(\vtheta)}{\normtwosm{\vtheta}^L}$ can be similarly defined as before.
In Appendix~\ref{sec:appendix-multiclass}, we define the smoothed normalized margin for cross-entropy loss to be the same as that for logistic loss (See Remark~\ref{remark:loss-example}). Then we show that Theorem~\ref{thm:exp-loss-margin-inc} and Theorem~\ref{thm:exp-loss-kkt} still hold (but with a slightly different definition of (P)) for gradient flow, and we also extend the results to gradient descent.

\myparagraph{Multi-homogeneous Models.} Some neural networks indeed possess a stronger property than homogeneity, which we call \textit{multi-homogeneity}. For example, the output of a CNN (without bias terms) is $1$-homogeneous with respect to the weights of each layer. In general, we say that a neural network $\Phi(\vtheta; \vx)$ with $\vtheta = (\vw_1, \dots, \vw_m)$ is $(k_1, \dots, k_m)$-homogeneous if for any $\vx$ and any $c_1, \dots, c_m > 0$, we have $\Phi(c_1\vw_1, \dots, c_m\vw_m; \vx) = \prod_{i=1}^{m} c_i^{k_i} \cdot \Phi(\vw_1, \dots, \vw_m; \vx)$.
In the previous example, an $L$-layer CNN with layer weights $\vtheta = (\vw_1, \dots, \vw_L)$ is $(1, \dots, 1)$-homogeneous.

One can easily see that that $(k_1, \dots, k_m)$-homogeneity implies $L$-homogeneity, where $L = \sum_{i=1}^{m} k_i$, so our previous analysis for homogeneous models still applies to multi-homogeneous models. But
it would be better to define the normalized margin for multi-homogeneous model as
\begin{equation} \label{eq:multi-homo-normalized-margin}
\bar{\gamma}(\vw_1, \dots, \vw_m) := q_{\min}(\frac{\vw_1}{\normtwosm{\vw_1}}, \dots, \frac{\vw_m}{\normtwosm{\vw_m}}) = \frac{q_{\min}}{\prod_{i=1}^{m} \normtwosm{\vw_i}^{k_i}}.
\end{equation}
In this case, the smoothed approximation of $\bar{\gamma}$ for general binary classification loss (under some conditions) can be similarly defined for gradient flow:
\begin{equation} \label{eq:multi-homo-smoothed-normalized-margin}
\tilde{\gamma}(\vw_1, \dots, \vw_m) := \frac{\ell^{-1}(\Loss)}{\prod_{i=1}^{m} \normtwosm{\vw_i}^{k_i}},
\end{equation}
It can be shown that $\tilde{\gamma}$ is also non-decreasing during training when the loss is small enough (Appendix~\ref{sec:appendix-multihomo}). In the case of cross-entropy loss, we can still define $\tilde{\gamma}$ by \eqref{eq:multi-homo-smoothed-normalized-margin} while $\ell(\cdot)$ is set to the logistic loss in the formula.

\section{Proof Sketch: Gradient Flow on Homogeneous Model with Exponential Loss} \label{sec:exp-loss-proof-sketch}

In this section, we present a proof sketch in the case of gradient flow on homogeneous model with exponential loss to illustrate our proof ideas. Due to space limit, the proof for the main theorems on gradient flow and gradient descent in Section~\ref{sec:exp-loss} are deferred to Appendix~\ref{sec:margin-inc} and \ref{sec:appendix-gd} respectively.

For convenience, we introduce a few more notations for a $L$-homogeneous neural network $\Phi(\vtheta; \vx)$. Let $\sphS^{d-1} = \{ \vtheta \in \R^d : \normtwo{\vtheta} = 1\}$ be the set of $\normltwo$-normalized parameters. Define $\rho := \normtwosm{\vtheta}$ and $\hat{\vtheta} := \frac{\vtheta}{\normtwosm{\vtheta}} \in \sphS^{d-1}$ to be the length and direction of $\vtheta$. For both gradient descent and gradient flow, $\vtheta$ is a function of time $t$. For convenience, we also view the functions of $\vtheta$, including $\Loss(\vtheta), q_n(\vtheta), q_{\min}(\vtheta)$, as functions of $t$. So we can write $\Loss(t) := \Loss(\vtheta(t)), q_n(t) := q_n(\vtheta(t)), q_{\min}(t) := q_{\min}(\vtheta(t))$.

Lemma~\ref{lam:key-lemma-margin} below is the key lemma in our proof. It decomposes the growth of the smoothed normalized margin into the ratio of two quantities related to the radial and tangential velocity components of $\vtheta$ respectively. 
We will give a proof sketch for this later in this section.
We believe that this lemma is of independent interest.
\begin{lemma}[Corollary of Lemma~\ref{lam:key-lemma-margin-general}] \label{lam:key-lemma-margin}
	For a.e.~$t > t_0$,
	\[
	\frac{d}{dt} \log \rho > 0 \quad \text{and} \quad \frac{d}{dt} \log \tilde{\gamma} \ge L \left( \frac{d}{dt} \log \rho \right)^{-1} \normtwo{\frac{d\hat{\vtheta}}{dt}}^2.
	\]
\end{lemma}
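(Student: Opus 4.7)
The plan is to express $\frac{d}{dt}\log\tilde\gamma$ explicitly in terms of the radial speed $\rho'$ and tangential speed $\normtwo{\hat\vtheta'}$, and then reduce the claimed bound to a one-line LogSumExp estimate. The main ingredients are Euler's identity for the $L$-homogeneous margins $q_n$, the orthogonal decomposition of $\vtheta'$ along and across the sphere, and the chain rule guaranteed by (A1).

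First I would set up via Euler. Along the flow, $\vtheta'(t) = -\vg(t)$ for some $\vg(t)\in\cpartial\Loss(\vtheta(t))$. By (A1) and the Clarke sum/chain rules applied to $\Loss = \sum_n e^{-q_n(\vtheta)}$ (the outer exponential is smooth), one may write $\vg = -\sum_n e^{-q_n}\vh_n$ with $\vh_n\in\cpartial q_n(\vtheta)$. Since each $q_n$ is $L$-positively homogeneous, Euler's identity extends to Clarke subgradients and gives $\dotp{\vtheta}{\vh_n} = L q_n$, so
\begin{equation*}
\dotp{\vtheta}{\vtheta'} \;=\; L\sum_{n=1}^{N} e^{-q_n} q_n.
\end{equation*}
After $t_0$, the assumption $\Loss<1$ forces every $q_n>0$, so this quantity is strictly positive, and since $\frac{d}{dt}\log\rho = \dotp{\vtheta}{\vtheta'}/\rho^2$ the first claim $\frac{d}{dt}\log\rho > 0$ follows.

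Next, the orthogonal decomposition $\vtheta' = \rho'\hat\vtheta + \rho\,\hat\vtheta'$ yields $\normtwo{\vtheta'}^2 = (\rho')^2 + \rho^2\normtwo{\hat\vtheta'}^2$, and another application of the chain rule gives $\frac{d\Loss}{dt} = \dotp{\vg}{\vtheta'} = -\normtwo{\vtheta'}^2$. Combining these,
\begin{equation*}
\frac{d}{dt}\log\tilde\gamma \;=\; \frac{\normtwo{\vtheta'}^2}{\Loss\log(1/\Loss)} \;-\; L\,\frac{d}{dt}\log\rho.
\end{equation*}
Substituting these expressions into the target inequality and simplifying (clearing the positive denominator $\rho\rho'$ and cancelling a common $L(\rho')^2/\rho^2$ term), the bound collapses to $\rho\rho' \ge L\,\Loss\log(1/\Loss)$; using $\rho\rho' = L\sum_n e^{-q_n}q_n$ from the previous step, it then suffices to prove the scalar inequality $\sum_n e^{-q_n}q_n \ge \Loss\log(1/\Loss)$.

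Dividing through by $\Loss$ and setting $p_n := e^{-q_n}/\Loss$, this reads $\sum_n p_n q_n \ge -\log\Loss$, which follows from two elementary bounds: $\sum_n p_n q_n \ge q_{\min}$ because $\{p_n\}$ is a probability distribution and each $q_n \ge q_{\min}$, and $-\log\Loss = -\log\sum_n e^{-q_n} \le -\log e^{-q_{\min}} = q_{\min}$. The main obstacle I anticipate is not the algebra but the non-smooth bookkeeping in the first step: one must verify that the Clarke sum/chain rules really do produce the softmax-weighted representation of $\vg$, and that Euler's identity extends cleanly to Clarke subgradients of positively homogeneous locally Lipschitz functions admitting a chain rule. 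Once these structural facts are in place (both follow from (A1) together with smoothness of the outer exponential), the rest is the short calculation above, and the ``a.e.'' qualifier in the statement is inherited directly from the ``a.e.'' in (A1)'s chain rule.
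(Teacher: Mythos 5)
Your proposal is correct and takes essentially the same approach as the paper: both use Euler's theorem for $L$-homogeneous $q_n$ (extended to Clarke subgradients), the chain-rule identity $\frac{d\Loss}{dt}=-\normtwo{\vtheta'}^2$, the radial/tangential decomposition of $\vtheta'$, and the key scalar bound $\sum_n e^{-q_n}q_n \ge \Loss\log\frac{1}{\Loss}$ derived from $q_n\ge q_{\min}\ge\log\frac{1}{\Loss}$. The paper organizes the algebra by bounding $\frac{g'}{g\Loss}$ below by $\nu^{-1}$ and then recognizing $\normtwo{\vtheta'}^2-\dotp{\hat\vtheta}{\vtheta'}^2$ as $\rho^2\normtwo{\hat\vtheta'}^2$, whereas you substitute the orthogonal decomposition first and collapse the target inequality to $\rho\rho'\ge L\Loss\log\frac{1}{\Loss}$; these are the same computation rearranged.
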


Using Lemma~\ref{lam:key-lemma-margin}, the first two claims in Theorem~\ref{thm:exp-loss-margin-inc} can be directly proved. For the third claim, we make use of the monotonicity of the margin to lower bound the gradient, and then show $\Loss \to 0$ and $\rho \to +\infty$. Recall that $\tilde{\gamma}$ is an $O(\rho^{-L})$-additive approximation for $\bar{\gamma}$. So this proves the third claim. We defer the detailed proof to Appendix~\ref{sec:appendix-margin-inc-general}.

To show Theorem~\ref{thm:exp-loss-kkt}, we first change the time measure to $\log \rho$, i.e., now we see $t$ as a function of $\log \rho$. So the second inequality in Lemma~\ref{lam:key-lemma-margin} can be rewritten as $\frac{d \log \tilde{\gamma}}{d\log \rho} \ge L \normtwosm{\frac{d\hat{\vtheta}}{d \log \rho}}^2$. Integrating on both sides and noting that $\tilde{\gamma}$ is upper-bounded, we know that there must be many instant $\log \rho$ such that $\normtwosm{\frac{d\hat{\vtheta}}{d \log \rho}}$ is small. By analyzing the landscape of training loss, we show that these points are ``approximate'' KKT points. Then we show that every convergent sub-sequence of $\{ \hat{\vtheta}(t) : t \ge 0 \}$ can be modified to be a sequence of ``approximate'' KKT points which converges to the same limit. Then we conclude the proof by applying a theorem from~\citep{dutta2013approximate} to show that the limit of this convergent sequence of ``approximate'' KKT points is a KKT point.
We defer the detailed proof to Appendix~\ref{sec:implicit-bias}.

Now we give a proof sketch for Lemma~\ref{lam:key-lemma-margin}, in which we derive the formula of $\tilde{\gamma}$ step by step.
In the proof, we obtain several clean close form formulas
for several relevant quantities, by using
the chain rule and Euler's theorem for homogeneous functions extensively.  

\begin{proof}[Proof Sketch of Lemma~\ref{lam:key-lemma-margin}]
	For ease of presentation, we ignore the regularity issues of taking derivatives in this proof sketch.
	We start from the equation 
	$\frac{d\Loss}{dt} 
	=-\dotp{\cpartial \Loss(\vtheta(t))}{\frac{d\vtheta}{dt}}
	= -\normtwo{\frac{d\vtheta}{dt}}^2$ 
	which follows from the chain rule
	(see also Lemma~\ref{lam:grad-flow-descent-lemma}). 
	Then we note that $\frac{d\vtheta}{dt}$ can be decomposed into two parts: the radial component $\vv := \hat{\vtheta}\hat{\vtheta}^{\top}\frac{d\vtheta}{dt}$ and the tangent component $\vu := (\mI - \hat{\vtheta}\hat{\vtheta}^{\top})\frac{d\vtheta}{dt}$.
	
	The radial component is easier to analyze. By the chain rule, $\normtwosm{\vv} = \hat{\vtheta}^{\top}\frac{d\vtheta}{dt} = \frac{1}{\rho} \dotp{\vtheta}{\frac{d\vtheta}{dt}} = \frac{1}{\rho} \cdot \frac{1}{2} \frac{d\rho^2}{dt}$. For $\frac{1}{2} \frac{d\rho^2}{dt}$, we have an exact formula:
	\begin{equation} \label{eq:proof-sketch-a}
	\frac{1}{2} \frac{d\rho^2}{dt} = \dotp{\vtheta}{\frac{d\vtheta}{dt}} = \dotp{\sum_{n = 1}^{N} e^{-q_n} \cpartial q_n}{\vtheta} = L\sum_{n = 1}^{N} e^{-q_n} q_n. 
	\end{equation}
	The last equality is due to $\dotp{\cpartial q_n}{\vtheta} = Lq_n$ by homogeneity of $q_n$. This 
	is sometimes called Euler's theorem for homogeneous functions (see Theorem~\ref{thm:homo}). For differentiable $q_n$, it can be easily proved by taking the derivative over $c$ on both 
	sides of $q_n(c\vtheta) = c^L q_n(\vtheta)$ and letting $c = 1$.
	
	With~\eqref{eq:proof-sketch-a}, we can lower bound $\frac{1}{2}\frac{d\rho^2}{dt}$ by
	\begin{equation} \label{eq:proof-sketch-l}
	\frac{1}{2}\frac{d\rho^2}{dt} = L\sum_{n = 1}^{N} e^{-q_n} q_n \ge L\sum_{n = 1}^{N} e^{-q_n} q_{\min} \ge L \cdot \Loss \log \frac{1}{\Loss},
	\end{equation}
	where the last inequality uses the fact that $e^{-q_{\min}} \le \Loss$. \eqref{eq:proof-sketch-l} also implies that $\frac{1}{2}\frac{d\rho^2}{dt} > 0$ for $t > t_0$ since $\Loss(t_0) < 1$ and $\Loss$ is non-increasing. As $\frac{d}{dt}\log \rho = \frac{1}{2\rho^2}\frac{d\rho^2}{dt}$, this also proves the first inequality of Lemma~\ref{lam:key-lemma-margin}.
	
	Now, we have $\normtwo{\vv}^2 = \frac{1}{\rho^2}\left(\frac{1}{2}\frac{d\rho^2}{dt}\right)^2 = \frac{1}{2} \frac{d\rho^2}{dt} \cdot \frac{d}{dt} \log \rho$ on the one hand; on the other hand, by the chain rule we have $\frac{d\hat{\vtheta}}{dt} = \frac{1}{\rho^2}(\rho \frac{d\vtheta}{dt} - \frac{d\rho}{dt}\vtheta) = \frac{1}{\rho^2}(\rho \frac{d\vtheta}{dt} - (\hat{\vtheta}^\top\frac{d\vtheta}{dt})\vtheta) = \frac{\vu}{\rho}$.
	So we have
	\[
	-\frac{d\Loss}{dt} = \normtwo{\frac{d\vtheta}{dt}}^2 = \normtwo{\vv}^2 + \normtwo{\vu}^2 = \frac{1}{2} \frac{d\rho^2}{dt} \cdot \frac{d}{dt} \log \rho + \rho^2 \normtwo{\frac{d\hat{\vtheta}}{dt}}^2
	\]
	Dividing $\frac{1}{2}\frac{d\rho^2}{dt}$ on the leftmost and rightmost sides, we have
	\[
	-\frac{d\Loss}{dt} \cdot \left(\frac{1}{2}\frac{d\rho^2}{dt}\right)^{-1} = \frac{d}{dt} \log \rho+ \left(\frac{d}{dt} \log \rho\right)^{-1} \normtwo{\frac{d\hat{\vtheta}}{dt}}^2.
	\]
	By $-\frac{d\Loss}{dt} \ge 0$ and \eqref{eq:proof-sketch-l}, the LHS is no greater than $-\frac{d\Loss}{dt} \cdot \left(L \cdot \Loss \log \frac{1}{\Loss}\right)^{-1} = \frac{1}{L} \frac{d}{dt}\log \log \frac{1}{\Loss}$. Thus we have $\frac{d}{dt} \log \log \frac{1}{\Loss} - L\frac{d}{dt} \log \rho \ge L\left(\frac{d}{dt} \log \rho\right)^{-1} \normtwo{\frac{d\hat{\vtheta}}{dt}}^2$, where the LHS is exactly $\frac{d}{dt} \log \tilde{\gamma}$.
\end{proof}

\section{Discussion and Future Directions}

In this paper, we analyze the dynamics of gradient flow/descent
of homogeneous neural networks
under a minimal set of assumptions. The main technical contribution of our work is to prove rigorously that for gradient flow/descent, the normalized margin is increasing and converges to a KKT point of a natural max-margin problem. Our results leads to some natural further questions:
\begin{itemize}
	\item Can we generalize our results for gradient descent on smooth neural networks to non-smooth ones? In the smooth case, we can lower bound the decrement of training loss by the gradient norm squared, multiplied by a factor related to learning rate. However, in the non-smooth case, no such inequality is known in the optimization literature.
	\item Can we make more structural assumptions on the neural network to prove stronger results? In this work, we use a minimal set of assumptions to show that the convergent direction of parameters is a KKT point. A potential research direction is to identify more key properties of modern neural networks and show that the normalized margin at convergence is locally or globally optimal (in terms of optimizing (P)).
	\item Can we extend our results to neural networks with bias terms? In our experiments, the normalized margin of the CNN with bias also increases during training despite that its output is non-homogeneous. It is very interesting (and technically challenging) to provide a rigorous proof for this fact.
\end{itemize}

\section*{Acknowledgments}

The research is supported in part by the National Natural Science Foundation of China Grant 61822203, 61772297, 61632016, 61761146003, and the Zhongguancun Haihua Institute for Frontier Information Technology and Turing AI Institute of Nanjing. We thank Liwei Wang for helpful suggestions on the connection between margin and robustness. We thank Sanjeev Arora, Tianle Cai, Simon Du, Jason D. Lee, Zhiyuan Li, Tengyu Ma, Ruosong Wang for helpful discussions.

\bibliography{main}
\bibliographystyle{iclr2020_conference}

\newpage

\appendix

\section{Results for General Loss} \label{sec:margin-inc}

In this section, we state our results for a broad class of binary classification loss. A major consequence of this generalization is that the logistic loss, one of the most popular loss functions, $\ell(q) = \log(1+e^{-q})$ is included. The function class also includes other losses with exponential tail, e.g., $\ell(q) = e^{-q^3}, \ell(q) = \log(1+e^{-q^3})$.

\subsection{Assumptions}

We first focus on gradient flow. We assume (A1), (A2) as we do for exponential loss. For (A3), (A4), we replace them with two weaker assumptions (B3), (B4). All the assumptions are listed below:
\begin{enumerate}
	\item[\textbf{(A1).}] (Regularity). For any fixed $\vx$, $\Phi(\,\cdot\,; \vx)$ is locally Lipschitz and admits a chain rule;
	\item[\textbf{(A2).}] (Homogeneity). There exists $L > 0$ such that $\forall \alpha > 0: \Phi(\alpha\vtheta; \vx) = \alpha^{L} \Phi(\vtheta; \vx)$;
	\item[\textbf{(B3).}] The loss function $\ell(q)$ can be expressed as $\ell(q) = e^{-f(q)}$ such that
	\begin{enumerate}[(B{3}.1).]
		\item $f: \R \to \R$ is $\contC^1$-smooth.
		\item $f'(q) > 0$ for all $q \in \R$. 
		\item There exists $b_f \ge 0$ such that $f'(q)q$ is non-decreasing for $q \in (b_f, +\infty)$, and $f'(q)q \to +\infty$ as $q \to +\infty$.
		\item Let $g: [f(b_f), +\infty) \to [b_f, +\infty)$ be the inverse function of $f$ on the domain $[b_f, +\infty)$. There exists $b_g \ge \max\{2f(b_f), f(2b_f)\}, K \ge 1$ such that $g'(x) \le K g'(\theta x)$ and $f'(y) \le K f'(\theta y)$ for all $x \in (b_g, +\infty), y \in (g(b_g), +\infty)$ and $\theta \in [1/2, 1)$. 
	\end{enumerate}
	\item[\textbf{(B4).}] (Separability). There exists a time $t_0$ such that $\Loss(t_0) < e^{-f(b_f)} = \ell(b_f)$.
\end{enumerate}
(A1) and (A2) remain unchanged. (B3) is satisfied by exponential loss $\ell(q) = e^{-q}$ (with $f(q) = q$) and logistic loss $\ell(q) = \log(1 + e^{-q})$ (with $f(q) = -\log \log(1 + e^{-q})$). (B4) are essentially the same as (A4) but (B4) uses a threshold value that depends on the loss function. Assuming (B3), it is easy to see that (B4) ensures the separability of data since $\ell(q_n) < e^{-f(b_f)}$ implies $q_n > b_f \ge 0$. For logistic loss, we can set $b_f = 0$ (see Remark~\ref{remark:log-b3}). So the corresponding threshold value in (B4) is $\ell(0) = \log 2$.

Now we discuss each of the assumptions in (B3). (B3.1) is a natural assumption on smoothness. (B3.2) requires $\ell(\cdot)$ to be monotone decreasing, which is also natural since $\ell(\cdot)$ is used for binary classification. The rest of two assumptions in (B3) characterize the properties of $\ell'(q)$ when $q$ is large enough. (B3.3) is an assumption that appears naturally from the proof. For exponential loss, $f'(q)q = q$ is always non-decreasing, so we can set $b_f = 0$. In (B3.4), the inverse function $g$ is defined. It is guaranteed by (B3.1) and (B3.2) that $g$ always exists and $g$ is also $\contC^1$-smooth. Though (B3.4) looks very complicated, it essentially says that $f'(\Theta(q)) = \Theta(f'(q)), g'(\Theta(q)) = \Theta(g'(q))$ as $q \to \infty$. (B3.4) is indeed a technical assumption that enables us to asymptotically compare the loss or the length of gradient at different data points. It is possible to base our results on weaker assumptions than (B3.4), but we use (B3.4) for simplicity since it has already been satisfied by many loss functions such as the aforementioned examples.

We summarize the corresponding $f, g$ and $b_f$ for exponential loss and logistic loss below:
\begin{remark} \label{remark:exp-b3}
	Exponential loss $\ell(q) = e^{-q}$ satisfies (B3) with
	\[
		f(q) = q \quad  \quad f'(q) = 1 \quad \quad g(q) = q \quad  \quad g'(q) = 1 \quad \quad b_f = 0  \quad  \quad \ell(b_f) = 1.
	\]
\end{remark}
\begin{remark} \label{remark:log-b3}
	Logistic loss $\ell(q) = \log(1 + e^{-q})$ satisfies (B3) with
	\begin{align*}
		&f(q) = -\log \log(1 + e^{-q}) = \Theta(q) \quad \quad & &f'(q) = \frac{e^{-q}}{(1 + e^{-q})\log(1 + e^{-q})} = \Theta(1) \\
		&g(q) = -\log(e^{e^{-q}} - 1) = \Theta(q) \quad \quad & &g'(q) = \frac{e^{-q}}{e^{e^{-q}} - 1} = \Theta(1)\\
		&b_f = 0 \quad \quad & &\ell(b_f) = \log 2.
	\end{align*}
\end{remark}

The proof for Remark~\ref{remark:exp-b3} is trivial. For Remark~\ref{remark:log-b3}, we give a proof below.
\begin{proof}[Proof for Remark~\ref{remark:log-b3}]
	By simple calculations, the formulas for $f(q), f'(q), g(q), g'(q)$ are correct. (B3.1) is trivial. $f'(q) = \frac{e^{-q}}{(1 + e^{-q})\log(1 + e^{-q})} > 0$, so (B3.2) is satisfied. For (B3.3), note that $f'(q)q = \frac{q}{(1 + e^{q})\log(1 + e^{-q})}$. The denominator is a decreasing function since
	\[
	\frac{d}{dq}\left((1 + e^{q})\log(1 + e^{-q})\right) = e^{q} \log(1 + e^{-q}) - 1 < e^{q} \cdot e^{-q} - 1 = 0.
	\]
	Thus, $f'(q)q$ is a strictly increasing function on $\R$. As $b_f$ is required to be non-negative, we set $b_f = 0$. For proving that $f'(q)q \to +\infty$ and (B4), we only need to notice that $f'(q) \sim \frac{e^{-q}}{1 \cdot e^{-q}} = 1$ and $g'(x) = 1/f'(g(x)) \sim 1$.
\end{proof}

\subsection{Smoothed Normalized Margin}

For a loss function $\ell(\cdot)$ satisfying (B3), it is easy to see from (B3.2) that its inverse function $\ell^{-1}(\cdot)$ must exist. For this kind of loss functions, we define the smoothed normalized margin as follows:
\begin{definition} \label{def:margin-general}
	For a loss function $\ell(\cdot)$ satisfying (B3), the smoothed normalized margin $\tilde{\gamma}(\vtheta)$ of $\vtheta$ is defined as
	\[
	\tilde{\gamma}(\vtheta) := \frac{\ell^{-1}(\Loss)}{\rho^L} = \frac{g(\log \frac{1}{\Loss})}{\rho^L} = \frac{g\left(-\log\left(\sum_{n=1}^{N} e^{-f(q_n(\vtheta))}\right)\right)}{\rho^L},
	\]
	where $\ell^{-1}(\cdot)$ is the inverse function of $\ell(\cdot)$ and $\rho := \normtwosm{\vtheta}$.
\end{definition}
\begin{remark} \label{remark:loss-example}
	For logistic loss $\ell(q) = \log(1 + e^{-q})$, $\tilde{\gamma}(\vtheta) = \rho^{-L}\log \frac{1}{\exp(\Loss) - 1}$; for exponential loss $\ell(q) = e^{-q}$, $\tilde{\gamma}(\vtheta) = \rho^{-L}\log \frac{1}{\Loss}$, which is the same as \eqref{eq:tilde-gamma-exp}.
\end{remark}

Now we give some insights on how well $\tilde{\gamma}(\vtheta)$ approximates $\bar{\gamma}(\vtheta)$ using a similar argument as in Section~\ref{sec:exp-loss-mono}. Using the LogSumExp function, the smoothed normalized margin $\tilde{\gamma}(\vtheta)$ can also be written as
\[
	\tilde{\gamma}(\vtheta) = \frac{g(-\LSE(-f(q_1), \dots, -f(q_N)))}{\rho^L}.
\]
$\LSE$ is a $(\log N)$-additive approximation for $\max$. So we can roughly approximate $\tilde{\gamma}(\vtheta)$ by
\[
	\tilde{\gamma}(\vtheta) \approx \frac{g(-\max\{-f(q_1), \dots, -f(q_N)\})}{\rho^L} = \frac{g(f(q_{\min}))}{\rho^L} = \bar{\gamma}(\vtheta).
\]
Note that (B3.3) is crucial to make the above approximation reasonable. Similar to exponential loss, we can show the following lemma asserting that $\tilde{\gamma}$
is a good approximation of $\bar{\gamma}$.
\begin{lemma} \label{lam:qmin-logL}
	Assuming (B3)\footnote{\label{ft:not-need}Indeed, (B3.4) is not needed for showing Lemma \ref{lam:qmin-logL} and Theorem \ref{thm:general-loss-margin-inc}.}, we have the following properties about the margin:
	\begin{enumerate}[(a)]
		\item $f(q_{\min}) - \log N \le \log \frac{1}{\Loss} \le f(q_{\min})$.
		\item If $\log \frac{1}{\Loss} > f(b_f)$, then there exists $\xi \in (f(q_{\min}) - \log N, f(q_{\min})) \cap (b_f, +\infty)$ such that
		\[
		\bar{\gamma} - \frac{g'(\xi) \log N}{\rho^L} \le \tilde{\gamma} \le \bar{\gamma}.
		\]
		\item For a seuqnce of parameters $\{\vtheta_m \in \R^d : m \in \N\}$, if $\Loss(\vtheta_m) \to 0$, then $\lvert \tilde{\gamma}(\vtheta_m) - \bar{\gamma}(\vtheta_m)\rvert \to 0$.
	\end{enumerate}
\end{lemma}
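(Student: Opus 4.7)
The plan is to establish (a) by a standard LogSumExp sandwich, deduce (b) from the mean value theorem applied to $g = f^{-1}$, and derive (c) from (b) by combining homogeneity with assumption (B3.3). The main obstacle will be the last step, since $g'$ need not be monotone in general and its magnitude cannot be controlled by a direct comparison argument.

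For (a), since $f$ is strictly increasing by (B3.2) and $q_n \ge q_{\min}$ for every $n$, each summand of $\Loss = \sum_n e^{-f(q_n)}$ satisfies $e^{-f(q_n)} \le e^{-f(q_{\min})}$, with equality at the minimizing index. Summing sandwiches $\Loss$ between $e^{-f(q_{\min})}$ and $N e^{-f(q_{\min})}$, and taking $\log(1/\cdot)$ gives the two stated inequalities. For (b), the hypothesis $\log\frac{1}{\Loss} > f(b_f)$ together with the upper half of (a) places both endpoints $\log\frac{1}{\Loss}$ and $f(q_{\min})$ in the domain $[f(b_f), +\infty)$ on which $g$ is $\contC^1$ by (B3.1)--(B3.2). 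Applying the mean value theorem to $g$ on this interval produces some $\xi$ between the endpoints satisfying
\[
q_{\min} - \ell^{-1}(\Loss) \;=\; g'(\xi) \bigl( f(q_{\min}) - \log \tfrac{1}{\Loss} \bigr),
\]
and (a) bounds the difference factor by $\log N$. Dividing by $\rho^L$ gives the desired two-sided estimate, and the location of $\xi$ in the claimed interval follows from $\xi \in (\log\frac{1}{\Loss}, f(q_{\min}))$ combined with (a) and the hypothesis.

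For (c), I apply (b) with $\vtheta = \vtheta_m$ and must show that the error $g'(\xi_m) \log N / \rho_m^L$ vanishes. The first ingredient: $\Loss(\vtheta_m) \to 0$ forces $q_n(\vtheta_m) \to \infty$ for every $n$, and then homogeneity $q_n(\vtheta) = \rho^L q_n(\hat{\vtheta})$ together with continuity of $\Phi$ on the compact sphere $\sphS^{d-1}$ yields both $\rho_m^L \to \infty$ and a uniform upper bound $q_{\min}(\vtheta_m) \le C \rho_m^L$ for some constant $C$. The second, key ingredient handles the potentially large factor $g'(\xi_m)$ via the identity
\[
g'(\xi_m) \;=\; \frac{1}{f'(g(\xi_m))} \;=\; \frac{g(\xi_m)}{f'(g(\xi_m)) \, g(\xi_m)}.
\]
Since $g(\xi_m) \ge \ell^{-1}(\Loss_m) \to \infty$, assumption (B3.3) forces $f'(g(\xi_m)) \, g(\xi_m) \to \infty$, while the numerator satisfies $g(\xi_m) \le q_{\min}(\vtheta_m) \le C \rho_m^L$. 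Dividing by $\rho_m^L$ and combining these two facts yields $g'(\xi_m)/\rho_m^L \to 0$, which is the desired conclusion. This rewrite is the crux of the proof: rather than trying to control $g'$ directly, it trades the possibly unbounded magnitude of $g'(\xi)$ against the margin bound through homogeneity, and defers the vanishing to the $f'(q)q \to \infty$ half of (B3.3).
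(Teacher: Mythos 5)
Your proof is correct and follows essentially the same route as the paper's: the LogSumExp sandwich for (a), the mean value theorem for $g$ for (b), and for (c) the same decomposition $g'(\xi_m)/\rho_m^L = \bigl(g(\xi_m)/\rho_m^L\bigr)\big/\bigl(f'(g(\xi_m))g(\xi_m)\bigr)$, bounded by the compactness of $\sphS^{d-1}$ on one side and driven to zero by (B3.3) on the other. The only cosmetic difference is that you apply the mean value theorem directly on $\bigl(\log\frac{1}{\Loss},\,f(q_{\min})\bigr)$ and then invoke (a) to bound its length, whereas the paper applies it on $\bigl(\max\{f(b_f),\,f(q_{\min})-\log N\},\,f(q_{\min})\bigr)$; both yield the same estimate.
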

\begin{proof}
	\textit{(a)} can be easily deduced from  $e^{-f(q_{\min})} \le \Loss \le N e^{-f(q_{\min})}$. Combining \textit{(a)} and the monotonicity of $g(\cdot)$, we further have $g(s) \le g(\log \frac{1}{\Loss}) \le q_{\min}$ for $s := \max\{f(b_f), f(q_{\min}) - \log N\}$. By the mean value Theorem, there exists $\xi \in (s, f(q_{\min}))$ such that $g(s) = g(f(q_{\min})) - g'(\xi) (f(q_{\min}) - s) \ge q_{\min} - g'(\xi) \log N$. Dividing $\rho^{L}$ on each side of $q_{\min} - g'(\xi) \log N \le g(\log \frac{1}{\Loss}) \le q_{\min}$ proves \textit{(b)}.
	
	Now we prove \textit{(c)}. Without loss of generality, we assume $\log \frac{1}{\Loss(\vtheta_m)} > f(b_f)$ for all $\vtheta_m$. It follows from \textit{(b)} that for every $\vtheta_m$ there exists $\xi_m \in (f(q_{\min}(\vtheta_m)) - \log N, f(q_{\min}(\vtheta_m))) \cap (b_f, +\infty)$ such that 
	\begin{equation} \label{eq:gammasandwich}
	\bar{\gamma}(\vtheta_m) - (g'(\xi_m) \log N) / \rho(\vtheta_m)^L \le \tilde{\gamma}(\vtheta_m) \le \bar{\gamma}(\vtheta_m).
	\end{equation}
	Note that $\xi_m \ge f(q_{\min}(\vtheta_m)) - \log N \ge \log \frac{1}{\Loss(\vtheta_m)} - \log N \to +\infty$. So $\frac{g(\xi_m)}{g'(\xi_m)} = f'(g(\xi_m)) g(\xi_m) \to +\infty$ by (B3.3). Also note that there exists a constant $B_0$ such that $\bar{\gamma}(\vtheta_m) \le B_0$ for all $m$ since $\bar{\gamma}$ is continuous on the unit sphere $\sphS^{d-1}$. So we have
	\[
	\frac{g'(\xi_m)}{\rho(\vtheta_m)^L} = \frac{g'(\xi_m)}{g(\xi_m)} \frac{g(\xi_m)}{\rho(\vtheta_m)^L} \le \frac{g'(\xi_m)}{g(\xi_m)} \frac{q_{\min}(\vtheta_m)}{\rho(\vtheta_m)^L} = \frac{g'(\xi_m)}{g(\xi_m)} \cdot \bar{\gamma}(\vtheta_m) \le \frac{g'(\xi_m)}{g(\xi_m)} \cdot B \to 0,
	\]
	where the first inequality follows since
	$\xi_m\leq f(q_{\min}(\vtheta_m))$.
	Together with \eqref{eq:gammasandwich}, we have $\lvert \tilde{\gamma}(\vtheta_m) - \bar{\gamma}(\vtheta_m)\rvert \to~0$.
\end{proof}
\begin{remark}
	For exponential loss, we have already shown in Section~\ref{sec:exp-loss-mono} that $\tilde{\gamma}(\vtheta)$ is an $O(\rho^{-L})$-additive approximation for $\bar{\gamma}(\vtheta)$. For logistic loss, it follows easily from $g'(q) = \Theta(1)$ and (b) of Lemma~\ref{lam:qmin-logL} that $\tilde{\gamma}(\vtheta)$ is an $O(\rho^{-L})$-additive approximation for $\bar{\gamma}(\vtheta)$ if $\Loss$ is sufficiently small.
\end{remark}

\subsection{Theorems}

Now we state our main theorems. For the monotonicity of the normalized margin, we have the following theorem. The proof is provided in Appendix~\ref{sec:appendix-margin-inc-general}.
\begin{theorem} \label{thm:general-loss-margin-inc}
	Under assumptions (A1), (A2), (B3)\footnotemark[\value{footnote}], (B4), the following statements are true for gradient flow:
	\begin{enumerate}
		\item For a.e.~$t > t_0$, $\frac{d}{dt} \tilde{\gamma}(\vtheta(t)) \ge 0$;
		\item For a.e.~$t > t_0$, either $\frac{d}{dt} \tilde{\gamma}(\vtheta(t)) > 0$ or $\frac{d}{dt} \frac{\vtheta(t)}{\normtwosm{\vtheta(t)}} = 0$;
		\item $\Loss(\vtheta(t)) \to 0$ and $\normtwosm{\vtheta(t)} \to \infty$ as $t \to +\infty$; therefore, $\lvert \bar{\gamma}(\vtheta(t)) - \tilde{\gamma}(\vtheta(t)) \rvert \to 0$.
	\end{enumerate}
\end{theorem}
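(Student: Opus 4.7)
The plan is to follow the same architecture as the exponential-loss proof sketched in Section~\ref{sec:exp-loss-proof-sketch}, replacing the loss-specific computation by one that exploits assumption (B3). The heart of the argument will again be a decomposition of $-d\Loss/dt = \norm{d\vtheta/dt}^2$ into a radial part (controlling the growth of $\rho$) and a tangential part (controlling the growth of the smoothed margin $\tilde\gamma = g(\log(1/\Loss))/\rho^L$), yielding the generalized key inequality referenced as Lemma~\ref{lam:key-lemma-margin-general}.

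First I would establish that generalized lemma. Writing $\ell(q) = e^{-f(q)}$, Clarke's chain rule combined with Euler's identity $\dotp{\cpartial q_n}{\vtheta} = L q_n$ gives
\[
    \tfrac{1}{2}\tfrac{d\rho^2}{dt} \;=\; \dotp{\vtheta}{-\cpartial\Loss} \;=\; L\sum_{n=1}^{N} f'(q_n)\,q_n\,\ell(q_n).
\]
The separability hypothesis (B4), together with the monotone decrease of $\Loss$, ensures $\ell(q_n) \le \Loss < \ell(b_f)$ for all $t \ge t_0$, so $q_n > b_f$; moreover $\ell(q_n)\le \Loss$ yields $f(q_n) \ge s := \log(1/\Loss)$, hence $q_n \ge g(s) > b_f$. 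Assumption (B3.3) then gives $f'(q_n)q_n \ge f'(g(s))g(s)$ pointwise, and summing delivers the lower bound $\tfrac{1}{2}\,d\rho^2/dt \ge L\,f'(g(s))\,g(s)\,\Loss > 0$. Repeating the radial/tangential decomposition (which depends only on homogeneity, not on the loss) gives $\norm{d\vtheta/dt}^2 = \tfrac{1}{2}(d\rho^2/dt)(d\log\rho/dt) + \rho^2\norm{d\hat\vtheta/dt}^2$; dividing through by $\tfrac{1}{2}d\rho^2/dt$, using the above lower bound, and recognizing $\tfrac{d}{dt}\log g(s) = \tfrac{1}{f'(g(s))g(s)}\tfrac{ds}{dt}$ yields
\[
    \tfrac{d\log\rho}{dt} + \bigl(\tfrac{d\log\rho}{dt}\bigr)^{-1}\!\norm{d\hat\vtheta/dt}^2 \;\le\; \tfrac{1}{L}\tfrac{d\log g(s)}{dt} \;=\; \tfrac{1}{L}\tfrac{d\log\tilde\gamma}{dt} + \tfrac{d\log\rho}{dt}.
\]
Rearranging gives $\tfrac{d\log\tilde\gamma}{dt} \ge L\bigl(\tfrac{d\log\rho}{dt}\bigr)^{-1}\norm{d\hat\vtheta/dt}^2 \ge 0$. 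Since $g(s(t_0)) > b_f \ge 0$ already forces $\tilde\gamma(t_0) > 0$, claims (1) and (2) of the theorem follow at once.

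For claim (3) I would use the upper bound $\tilde\gamma \le \bar\gamma \le \max_{\normtwo{\vtheta}=1} q_{\min}(\vtheta) =: B < \infty$ coming from continuity on the compact sphere, together with the monotonicity just proved, to conclude $\tilde\gamma(t)\to\tilde\gamma_\infty\in[\tilde\gamma(t_0),B]$. Suppose toward contradiction that $\Loss\to\ell_\infty>0$; then $s(t)$ increases monotonically to a finite $s_\infty$, so $f'(g(s))g(s)$ is bounded below by a positive constant on the compact interval $[s(t_0),s_\infty]$. The lower bound on $\tfrac{1}{2}d\rho^2/dt$ then forces $\rho(t)\to\infty$, but that makes $\tilde\gamma=g(s)/\rho^L\to 0$, contradicting $\tilde\gamma_\infty\ge\tilde\gamma(t_0)>0$. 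Hence $\Loss\to 0$, whence $s\to\infty$, $g(s)\to\infty$, and the bound $\tilde\gamma\le B$ then gives $\rho\to\infty$. The concluding assertion $|\bar\gamma-\tilde\gamma|\to 0$ is exactly Lemma~\ref{lam:qmin-logL}(c).

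The main technical obstacle is the careful handling of non-smoothness: every derivative identity above must be justified through Clarke's subdifferential and the chain-rule framework of~\citep{davis2018stochastic} so that the equalities and inequalities hold ``for a.e.\ $t > t_0$'' as required, and the decomposition $d\vtheta/dt = \vv + \vu$ must be interpreted as a pointwise statement about an appropriate measurable selection from $-\cpartial\Loss$. A secondary worry, that $q_n > b_f$ might fail to persist along the trajectory, is automatic from the monotone decrease of $\Loss$ together with the threshold in (B4). Beyond these regularity matters, the only loss-specific input is the monotonicity of $f'(q)q$ encoded in (B3.3), which is exactly what makes the comparison $f'(q_n)q_n \ge f'(g(s))g(s)$ work and why (B3.4) is not needed for this theorem.
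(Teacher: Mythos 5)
Your proof of the key inequality (the analogue of Lemma~\ref{lam:key-lemma-margin-general}) is correct and follows essentially the same path as the paper: Euler's identity gives the weight-growth formula, assumption (B3.3) together with the bound $q_n \ge g(\log\tfrac{1}{\Loss})$ yields the lower bound on the radial velocity (in the paper this is packaged as Lemma~\ref{lam:nu-lower}, with $f'(g(s))g(s) = g(s)/g'(s)$), and the radial/tangential decomposition then delivers $\frac{d}{dt}\log\tilde\gamma \ge L\bigl(\frac{d}{dt}\log\rho\bigr)^{-1}\norm{d\hat\vtheta/dt}^2$. Your observation that (B3.4) is not used here is also correct and matches the paper's own footnote.

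Where you genuinely diverge from the paper is in the proof of claim (3). The paper establishes a quantitative integral inequality (Lemma~\ref{lam:main-loss-upper}): defining $G(x) := \int_{1/\Loss(t_0)}^{x} \frac{g'(\log u)^2}{g(\log u)^{2-2/L}}\,du$, one shows $G(1/\Loss(t)) \ge L^2\tilde\gamma(t_0)^{2/L}(t-t_0)$, which forces $1/\Loss(t) \to \infty$. You instead run a qualitative contradiction argument: if $\Loss$ stayed bounded away from zero, $s = \log\tfrac{1}{\Loss}$ would be trapped in a compact interval on which $f'(g(s))g(s)$ is bounded below, so the radial velocity $\tfrac12 d\rho^2/dt$ is bounded below by a positive constant, forcing $\rho \to \infty$ and hence $\tilde\gamma = g(s)/\rho^L \to 0$, contradicting the monotonicity $\tilde\gamma(t) \ge \tilde\gamma(t_0) > 0$ just established. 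Your argument is cleaner and more elementary; what the paper's quantitative version buys is the explicit convergence/weight-growth rates of Theorem~\ref{thm:main-loss-tail}, for which the integral bound $G(1/\Loss) \gtrsim t$ is exactly the needed input. If all one wants is claim (3) of this theorem, your shortcut suffices. Your final step --- citing Lemma~\ref{lam:qmin-logL}(c) for $|\bar\gamma - \tilde\gamma| \to 0$ once $\Loss \to 0$ --- is exactly what the paper does as well.
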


For the normalized margin at convergence, we have two theorems, one for infinite-time limiting case, and the other being a finite-time quantitative result. Their proofs can be found in Appendix~\ref{sec:implicit-bias}. 
As in the exponential loss case, 
we define the constrained optimization problem (P) as follows:
\begin{align*}
\min        & \quad \frac{1}{2}\normtwo{\vtheta}^2 \\
\text{s.t.} & \quad q_n(\vtheta) \ge 1 \qquad \forall n \in [N]
\end{align*}
First, we show the directional convergence of $\vtheta(t)$ to a KKT point of (P).
\begin{theorem} \label{thm:main-limit-dir-converge}
	Consider gradient flow under assumptions (A1), (A2), (B3), (B4). For every limit point $\bar{\vtheta}$ of $\left\{ \hat{\vtheta}(t) : t \ge 0 \right\}$, $\bar{\vtheta} / q_{\min}(\bar{\vtheta})^{1/L}$ is a KKT point of (P).
\end{theorem}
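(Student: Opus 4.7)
The plan is to convert the key integral estimate from Lemma~\ref{lam:key-lemma-margin} into a statement that gradient flow eventually passes arbitrarily close to approximate KKT points of $(\mathrm{P})$, and then appeal to a closedness property of the approximate-KKT map \citep{dutta2013approximate} to upgrade any directional limit into an exact KKT point.

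First I would reparameterize time by $s := \log \rho(t)$. Since $\Loss(t_0) < e^{-f(b_f)}$ implies, by Theorem~\ref{thm:general-loss-margin-inc} and the first inequality of Lemma~\ref{lam:key-lemma-margin}, that $\rho$ is strictly increasing and tends to $+\infty$, this is a valid change of variables on $[t_0, \infty)$. Lemma~\ref{lam:key-lemma-margin} can then be recast as $\frac{d\log\tilde\gamma}{ds} \ge L \bigl\| \frac{d\hat\vtheta}{ds} \bigr\|_2^{2}$. Because $\tilde\gamma$ is bounded above (it approximates $\bar\gamma$, which is continuous on the compact unit sphere), integrating yields $\int_{s_0}^{\infty} \bigl\| \frac{d\hat\vtheta}{ds} \bigr\|_2^{2}\, ds < \infty$. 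Hence for any $\epsilon > 0$ and all sufficiently large $s$, every unit-length window $[s, s+1]$ contains a point $s^\star$ with $\bigl\|\frac{d\hat\vtheta}{ds}(s^\star)\bigr\|_2 < \epsilon$ and, by Cauchy-Schwarz, $\|\hat\vtheta(s^\star) - \hat\vtheta(s)\|_2 < \epsilon$. So given any convergent subsequence $\hat\vtheta(s_k) \to \bar\vtheta$, I can extract a nearby sequence $s'_k$ with $\hat\vtheta(s'_k) \to \bar\vtheta$ AND the tangential velocity $\bigl\|\frac{d\hat\vtheta}{ds}(s'_k)\bigr\|_2 \to 0$.

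Next I would show that the rescaled points $\tilde\vtheta(s'_k) := \vtheta(s'_k) / q_{\min}(\vtheta(s'_k))^{1/L}$ are approximate KKT points of $(\mathrm{P})$ in the sense of~\citep{dutta2013approximate}. Starting from $\frac{d\vtheta}{dt} \in -\cpartial \Loss(\vtheta) = \sum_{n} (-\ell'(q_n))\,\cpartial q_n(\vtheta)$, decompose this into radial and tangential parts $\vv$ and $\vu$. The radial part is governed by Euler's identity: $\hat\vtheta^\top \frac{d\vtheta}{dt} = \frac{L}{\rho}\sum_n (-\ell'(q_n)) q_n$, while $\|\vu\|_2 = \rho \bigl\|\frac{d\hat\vtheta}{dt}\bigr\|_2$, which in the $s$ parameterization is $\frac{d\rho}{dt}\bigl\|\frac{d\hat\vtheta}{ds}\bigr\|_2$. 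Thus at $s'_k$ one gets, after rescaling by the appropriate homogeneity factor $q_{\min}^{(L-1)/L}$, an identity of the form $\tilde\vtheta = \sum_n \lambda_n \vh'_n + \vzero_k$, with $\lambda_n \ge 0$, $\vh'_n \in \cpartial q_n(\tilde\vtheta)$ (using that $\cpartial q_n$ scales as $\alpha^{L-1}$), and stationarity error $\|\vzero_k\|_2 \to 0$. Primal and dual feasibility are automatic from the construction, and approximate complementary slackness $\sum_n \lambda_n (q_n(\tilde\vtheta) - 1) \to 0$ follows from~(B3.3) and~(B3.4): the weights $-\ell'(q_n) = f'(q_n)\ell(q_n)$ concentrate on indices with $q_n$ close to $q_{\min}$, while $q_n(\tilde\vtheta) - 1 = (q_n - q_{\min})/q_{\min}$ is small precisely there.

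Finally I would invoke the Dutta et al. closedness theorem: the set of KKT points of $(\mathrm{P})$ contains all limits of sequences of approximate KKT points at which the constraint qualification holds. Lemma~\ref{lam:margin-mfcq} says $(\mathrm{P})$ satisfies MFCQ everywhere, so the qualification holds in particular at the limit $\bar\vtheta / q_{\min}(\bar\vtheta)^{1/L}$ (note $q_{\min}(\bar\vtheta) > 0$ since $\bar\gamma > 0$ by the margin monotonicity and $\Loss(t_0) < 1$). Therefore this limit is a genuine KKT point of $(\mathrm{P})$, which is exactly the claim.

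The main obstacle I expect is step two: turning the ``small tangential velocity'' statement into a genuine approximate KKT certificate with the correct scaling and nonnegative multipliers. One must be careful that (i) the Clarke subdifferential rescales correctly under homogeneity, (ii) the coefficients $-\ell'(q_n)$, after dividing by an appropriate normalizer, produce multipliers satisfying approximate complementary slackness, which relies essentially on (B3.3)--(B3.4), and (iii) one uses the correct outer-semicontinuity properties of $\cpartial q_n$ to pass to the limit. The radial/tangential decomposition in the proof of Lemma~\ref{lam:key-lemma-margin} is the right bookkeeping for (i); the asymptotic comparability afforded by (B3.4) is what makes (ii) tractable; and (iii) should follow from the definition of Clarke subdifferential as a closed convex hull of limiting gradients.
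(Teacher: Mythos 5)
Your proposal is correct and follows essentially the same route as the paper's proof: reparameterize by $\log\rho$, use Lemma~\ref{lam:key-lemma-margin} to show the integral of the squared tangential velocity is finite, construct a nearby sequence of approximate KKT points converging to the same limit, and close with the Dutta et al.\ theorem together with MFCQ (Lemma~\ref{lam:margin-mfcq}). The only cosmetic differences are that the paper tracks $\beta(t)$ (the cosine between $\vtheta$ and $\frac{d\vtheta}{dt}$) rather than $\bigl\|\frac{d\hat\vtheta}{ds}\bigr\|$, and bounds directional drift via Lemma~\ref{lam:dir-int-bound} rather than your Cauchy--Schwarz estimate; the detailed multiplier construction and complementary-slackness bound you sketch as ``step two'' are exactly what Lemma~\ref{lam:approx-kkt-beta-rho} carries out.
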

Second, we show that after finite time, gradient flow can pass through an approximate KKT point.
\begin{theorem} \label{thm:main-pass-approx-kkt}
	Consider gradient flow under assumptions (A1), (A2), (B3), (B4). For any $\epsilon, \delta > 0$, there exists $r := \Theta(\log \delta^{-1})$ and $\Delta := \Theta(\epsilon^{-2})$ such that $\vtheta / q_{\min}(\vtheta)^{1/L}$ is an $(\epsilon, \delta)$-KKT point at some time $t_*$ satisfying $\log \normtwosm{\vtheta(t_*)} \in (r, r + \Delta)$.
\end{theorem}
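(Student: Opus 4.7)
The plan is to reparametrize time by $s := \log \rho(t)$, which is a valid monotone change of variable for $t > t_0$ because $\frac{d}{dt} \log\rho > 0$ by Lemma~\ref{lam:key-lemma-margin}. Under this reparametrization, the second inequality of Lemma~\ref{lam:key-lemma-margin} becomes $\frac{d \log \tilde\gamma}{ds} \ge L \normtwo{\frac{d\hat\vtheta}{ds}}^2$. Since $\tilde\gamma$ is non-decreasing (Theorem~\ref{thm:general-loss-margin-inc}) and uniformly bounded above (by, e.g., the maximum of $q_{\min}$ on the compact unit sphere), integrating yields $\int_{s_0}^{\infty} \normtwo{\frac{d\hat\vtheta}{ds}}^2 \, ds \le M$ for some constant $M$. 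An averaging/pigeonhole argument then gives: for any interval $(r, r+\Delta)$ with $\Delta = \Theta(\epsilon^{-2})$, there exists $s_* \in (r, r+\Delta)$ where $\normtwo{\frac{d\hat\vtheta}{ds}(s_*)}$ is of order $\epsilon$.

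At the corresponding time $t_*$, I would verify that $\tilde\vtheta := \vtheta(t_*)/q_{\min}(\vtheta(t_*))^{1/L}$ is an $(\epsilon, \delta)$-KKT point of (P). Feasibility $q_n(\tilde\vtheta) \ge 1$ is immediate by $L$-homogeneity of $q_n$. For the stationarity condition, recall from the proof sketch of Lemma~\ref{lam:key-lemma-margin} that the gradient flow direction decomposes as $\frac{d\vtheta}{dt} \in -\cpartial \Loss(\vtheta(t_*)) = \sum_n e^{-f(q_n)} f'(q_n) \cpartial q_n(\vtheta(t_*))$, with the tangential part equal to $\rho \cdot \frac{d\hat\vtheta}{dt}$. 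Normalizing the coefficients $a_n := e^{-f(q_n)} f'(q_n)$ by an appropriate factor related to $\frac{1}{2}\frac{d\rho^2}{dt}$, and then rescaling each $\vh_n \in \cpartial q_n(\vtheta(t_*))$ to $\cpartial q_n(\tilde\vtheta)$ via the homogeneity of the Clarke subdifferential (Theorem~\ref{thm:homo}), produces candidate multipliers $\lambda_n \ge 0$ with $\tilde\vtheta - \sum_n \lambda_n \vh_n$ proportional to the tangential velocity. The smallness $\normtwo{\frac{d\hat\vtheta}{ds}(s_*)} \lesssim \epsilon$ then translates to an $\epsilon$ bound on this stationarity residual once constants are tracked.

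For the approximate complementary slackness $\sum_n \lambda_n (q_n(\tilde\vtheta) - 1) \le \delta$, the multipliers $\lambda_n$ have weights proportional to $e^{-f(q_n)} f'(q_n)$, which are exponentially concentrated on indices attaining $q_{\min}$. Using assumption (B3.4) to compare $f$ and $f'$ at comparable scales and Lemma~\ref{lam:qmin-logL} to relate $f(q_{\min})$ to $\log(1/\Loss)$, one shows that non-minimal indices contribute mass decaying like $e^{-(f(q_n)-f(q_{\min}))}$, while by monotonicity of $\tilde\gamma$ (Theorem~\ref{thm:general-loss-margin-inc}) we have $\log(1/\Loss) \gtrsim \rho^L \tilde\gamma(t_0)$, so large $\log\rho$ forces any gap $q_n - q_{\min}$ to translate into exponentially tiny weight. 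Quantifying this gives a bound of the form $\sum_n \lambda_n (q_n(\tilde\vtheta) - 1) \le C' e^{-c \log\rho}$ for some constants, and solving $C' e^{-c r} = \delta$ yields $r = \Theta(\log \delta^{-1})$ as required.

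The principal obstacle lies in the non-smooth construction of Lagrange multipliers: one must choose compatible Clarke-subdifferential selections $\vh_n$ at $\vtheta(t_*)$ and transport them to $\tilde\vtheta$ while uniformly controlling their norms, and handle the measure-zero set on which the chain rule or $\frac{d\hat\vtheta}{dt}$ may fail to exist. A secondary subtlety is that the integral bound on $\normtwo{\frac{d\hat\vtheta}{ds}}^2$ only gives an average, whereas we need pointwise smallness at a specific $s_* \in (r, r+\Delta)$; one must invoke a Fubini-type argument to find such an $s_*$ that simultaneously avoids the null set of points where the decomposition in Lemma~\ref{lam:key-lemma-margin} is not valid.
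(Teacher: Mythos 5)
Your proposal follows essentially the same route as the paper: reparametrizing time by $s = \log\rho$ and using the differential inequality of Lemma~\ref{lam:key-lemma-margin} to obtain an integrable bound on the tangential velocity, then applying a pigeonhole argument (the paper's Corollary~\ref{lam:beta2-max-bound}) to locate a good $t_*$, and finally constructing approximate Lagrange multipliers from the gradient-flow decomposition (the paper's Lemma~\ref{lam:approx-kkt-beta-rho}). Note that your $\normtwo{\frac{d\hat\vtheta}{ds}}^2$ is exactly the paper's quantity $\beta^{-2}-1$ (the squared ratio of tangential to radial velocity), so the two formulations coincide; and the complementary-slackness decay you describe corresponds to the paper's $\delta = C_2/\log(1/\Loss) = C_2/f(\tilde\gamma\rho^L)$, which yields $r = \Theta(\log\delta^{-1})$ via the polynomial growth of $f$ guaranteed by (B3.4).
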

For the definitions for KKT points and approximate KKT points, we refer the readers to Appendix~\ref{sec:appendix-kkt} for more details.

With a refined analysis, we can also provide tight rates for loss convergence and weight growth. The proof is given in Appendix~\ref{sec:appendix-rate}.
\begin{theorem} \label{thm:main-loss-tail}
	Under assumptions (A1), (A2), (B3), (B4), we have the following tight rates for loss convergence and weight growth:
\[
\Loss(\vtheta(t)) = \Theta\left(\frac{g(\log t)^{2/L}}{t (\log t)^{2}}\right) \quad \text{and} \quad \normtwosm{\vtheta(t)} = \Theta(g(\log t)^{1/L}).
\]
\end{theorem}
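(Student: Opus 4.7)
The plan is to use the monotonicity and near-constancy of the smoothed normalized margin (Theorem~\ref{thm:general-loss-margin-inc}) to reduce the rate computation to a scalar ODE in $s := \log(1/\Loss)$. First observe that $\tilde{\gamma}$ is monotone non-decreasing after $t_0$ and bounded above by $\sup\{q_n(\vtheta) : \normtwosm{\vtheta} = 1\} < \infty$, so $\tilde{\gamma} \to \tilde{\gamma}_\infty \in (0, \infty)$. Unfolding the definition of $\tilde{\gamma}$ then yields the clean two-sided relation $\rho^L = \Theta(g(s))$; translating $s$ back into $t$ at the end will give the claimed bound on $\rho(t)$.

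The next step is to obtain tight two-sided bounds on $\tfrac{1}{2}\tfrac{d\rho^2}{dt}$. Using Euler's theorem and the chain rule exactly as in the derivation of equation~\eqref{eq:proof-sketch-a} (with the general-loss version $-\nabla \Loss = \sum_n f'(q_n) e^{-f(q_n)} \cpartial q_n$), one gets
\[
\tfrac{1}{2}\tfrac{d\rho^2}{dt} = L \sum_{n=1}^{N} f'(q_n)\,q_n\,e^{-f(q_n)}.
\]
The lower bound $\ge L f'(q_{\min})q_{\min}\Loss$ is immediate from (B3.3). For the upper bound, I would first show that all $q_n$ are of the same order as $g(s)$: the inequality $q_n \ge g(s)$ follows from $e^{-f(q_n)} \le \Loss = e^{-s}$, while the matching upper bound $q_n = O(\rho^L) = O(g(s))$ comes from $Lq_n = \dotp{\cpartial q_n}{\vtheta} \le \normtwosm{\cpartial q_n}\rho$ together with the uniform bound on $\normtwosm{\cpartial q_n}$ over the unit sphere (guaranteed by local Lipschitzness and $L$-homogeneity of $q_n$). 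Invoking (B3.4) to propagate constants through $f'$ then yields $\tfrac{1}{2}\tfrac{d\rho^2}{dt} = \Theta(e^{-s}\,g(s)/g'(s))$.

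I would then convert this into an ODE for $s$. Differentiating $\rho^2 = \Theta(g(s)^{2/L})$ gives $\tfrac{d\rho^2}{dt} = \Theta(g(s)^{2/L-1}g'(s)\,\dot{s})$, and rearranging produces
\[
e^{s}\,g'(s)^{2}\,g(s)^{2/L-2}\,\dot{s} = \Theta(1).
\]
Integrating from $t_0$ to $t$, the antiderivative $\int e^{s} g'(s)^2 g(s)^{2/L-2}\,ds$ is $\Theta$-equivalent to its integrand evaluated at the upper limit, since $g'(s)^2 g(s)^{2/L-2}$ is slowly varying compared with $e^s$ (polynomial growth of $g,g'$ forced by (B3.4)). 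Hence $e^{s(t)}g'(s(t))^2 g(s(t))^{2/L-2} = \Theta(t)$, so $\Loss(t) = \Theta(g'(s(t))^2 / (t\,g(s(t))^{2-2/L}))$. Using the identity $g(s) = \Theta(s\,g'(s))$, which follows from $f(q) = \Theta(f'(q)q)$ for polynomial-growth $f'$ (a consequence of (B3.3) and (B3.4)), this simplifies to $\Loss(t) = \Theta(g(s)^{2/L}/(t\,s^2))$. Inverting once more gives $s(t) = \log t + O(\log\log t)$, and a final doubling-type estimate from (B3.4) shows $g(s(t)) = \Theta(g(\log t))$, yielding the claimed bounds on $\Loss$ and $\rho$.

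The hard part will be the quantitative control of these slowly varying factors: specifically, showing that $\int e^{s} g'(s)^2 g(s)^{2/L-2}\,ds$ is $\Theta$-equivalent to its endpoint value, and that $g(\log t + O(\log\log t)) = \Theta(g(\log t))$. Both are exactly the kind of doubling-type estimates that (B3.4) is designed to supply, but extracting them with the precise exponents above, and without loss of constants, is the technical heart of the argument. The remainder is ODE bookkeeping analogous to the exponential-loss computation sketched in Section~\ref{sec:exp-loss-proof-sketch}.
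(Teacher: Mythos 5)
Your proposal follows essentially the same route as the paper's proof in Appendix~\ref{sec:appendix-rate}: the relation $\rho^L = \Theta(g(\log\frac{1}{\Loss}))$ obtained from $\tilde\gamma$ being bounded and monotone, the two-sided bound $q_n = \Theta(g(s))$, the (B3.4)-driven doubling estimates (which are exactly Lemma~\ref{lam:g-ratio} and Corollaries~\ref{cor:f-g-theta-swap}, \ref{cor:f-g-poly}), and the endpoint-dominance of the integral (the paper's Lemma~\ref{lam:big-G}, which is your ODE after the substitution $u = e^s$). The final step $g(\log t + O(\log\log t)) = \Theta(g(\log t))$ is also the paper's last step.

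One step in your outline is not valid as stated and should be patched. You write that ``differentiating $\rho^2 = \Theta(g(s)^{2/L})$ gives $\frac{d\rho^2}{dt} = \Theta(g(s)^{2/L-1}g'(s)\dot s)$,'' but a two-sided $\Theta$ relation between functions does not transfer to their derivatives; the hidden factor is $\tilde\gamma^{-2/L}$, which is $\Theta(1)$ but whose logarithmic derivative is only known to be non-negative, not $O(\dot s/s)$. The clean fix is to obtain the ODE for $s$ directly from $\dot s = \normtwo{d\vtheta/dt}^2/\Loss$, bounding $\normtwo{d\vtheta/dt}$ from below by the radial component $L\nu/\rho$ and from above by $\sum_n e^{-f(q_n)}f'(q_n)\normtwosm{\vh_n} = O\bigl(\Loss\cdot g(s)^{1-1/L}/g'(s)\bigr)$ using your already-established $f'(q_n)=\Theta(1/g'(s))$ and $\normtwosm{\vh_n}\le B_1\rho^{L-1}$. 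Both sides then give $\dot s = \Theta\bigl(e^{-s} g(s)^{2-2/L}/g'(s)^2\bigr)$, which is what you need, and is in fact what the paper does (the lower bound on $\dot s$ comes implicitly from Lemma~\ref{lam:main-loss-upper}, the upper bound from the ``Lower Bounding $\Loss$'' paragraph). With this substitution the rest of your argument goes through.
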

Applying Theorem~\ref{thm:main-loss-tail} to exponential loss and logistic loss, in which $g(x) = \Theta(x)$, we have the following corollary:
\begin{corollary}
	If $\ell(\cdot)$ is the exponential or logistic loss, then,
	\[
	\Loss(\vtheta(t)) = \Theta\left(\frac{1}{t (\log t)^{2-2/L}}\right) \quad \text{and} \quad \normtwosm{\vtheta(t)} = \Theta((\log t)^{1/L}).
	\]
\end{corollary}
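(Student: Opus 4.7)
The plan is to derive this corollary as a direct specialization of Theorem~\ref{thm:main-loss-tail}. That theorem gives, under assumptions (A1), (A2), (B3), (B4), the tight rates
\[
\Loss(\vtheta(t)) = \Theta\!\left(\frac{g(\log t)^{2/L}}{t (\log t)^{2}}\right) \qquad \text{and} \qquad \normtwosm{\vtheta(t)} = \Theta(g(\log t)^{1/L}),
\]
expressed in terms of the inverse function $g$ of $f$ from assumption (B3). So the only task is to identify $g(\log t)$ up to constant factors for the two specified loss functions, and then plug in.

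First I would verify that both losses satisfy (B3) so that Theorem~\ref{thm:main-loss-tail} applies. This has already been done in Remark~\ref{remark:exp-b3} (exponential loss, with $f(q)=q$ and $g(q)=q$) and Remark~\ref{remark:log-b3} (logistic loss, with $f(q)=-\log\log(1+e^{-q})=\Theta(q)$ and $g(q)=-\log(e^{e^{-q}}-1)=\Theta(q)$). In both cases $g(q) = \Theta(q)$, so in particular $g(\log t) = \Theta(\log t)$ as $t\to\infty$.

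Substituting into the two formulas of Theorem~\ref{thm:main-loss-tail} gives
\[
\Loss(\vtheta(t)) = \Theta\!\left(\frac{(\log t)^{2/L}}{t(\log t)^{2}}\right) = \Theta\!\left(\frac{1}{t(\log t)^{2-2/L}}\right)
\]
and
\[
\normtwosm{\vtheta(t)} = \Theta\!\left((\log t)^{1/L}\right),
\]
which are exactly the claimed rates. There is no significant obstacle here since everything essential is already contained in Theorem~\ref{thm:main-loss-tail}; the only minor step is observing the asymptotic $g(q)=\Theta(q)$ for the logistic case, which is immediate from $e^{e^{-q}}-1 = e^{-q}(1+O(e^{-q}))$ as $q\to\infty$, so that $-\log(e^{e^{-q}}-1) = q - \log(1+O(e^{-q})) = q + O(e^{-q})$.
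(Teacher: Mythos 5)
Your proof is correct and follows exactly the same route as the paper: the corollary is stated in the paper as a direct consequence of Theorem~\ref{thm:main-loss-tail} together with the observation (from Remarks~\ref{remark:exp-b3} and~\ref{remark:log-b3}) that $g(x)=\Theta(x)$ for both losses. Your extra sentence justifying $g(q)=\Theta(q)$ for the logistic case via $e^{e^{-q}}-1=e^{-q}(1+O(e^{-q}))$ is a harmless elaboration of what the paper already records in Remark~\ref{remark:log-b3}.
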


\section{Margin Monotonicity for General Loss} \label{sec:appendix-margin-inc-general}

In this section, we consider gradient flow and prove Theorem~\ref{thm:general-loss-margin-inc}. We assume (A1), (A2), (B3), (B4) as mentioned in Appendix~\ref{sec:margin-inc}.

We follow the notations in Section~\ref{sec:exp-loss-proof-sketch} to define $\rho := \normtwosm{\vtheta}$ and $\hat{\vtheta} := \frac{\vtheta}{\normtwosm{\vtheta}} \in \sphS^{d-1}$, and sometimes we view the functions of $\vtheta$ as functions of $t$.

\subsection{Proof for Proposition 1 and 2}

To prove the first two propositions, we generalize our key lemma (Lemma~\ref{lam:key-lemma-margin}) to general loss.

\begin{lemma} \label{lam:key-lemma-margin-general}
For $\tilde{\gamma}$ defined in Definition~\ref{def:margin-general}, the following holds for all $t > t_0$,
\begin{equation}
\frac{d}{dt} \log \rho > 0 \quad \text{and} \quad \frac{d}{dt} \log \tilde{\gamma} \ge L \left( \frac{d}{dt} \log \rho \right)^{-1} \normtwo{\frac{d\hat{\vtheta}}{dt}}^2.
\end{equation}
\end{lemma}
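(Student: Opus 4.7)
The plan is to mimic the proof sketch of Lemma~\ref{lam:key-lemma-margin} for exponential loss, replacing $\log(1/\Loss)$ by $g(\log(1/\Loss))$ in the smoothed margin and using assumption (B3.3) in place of the identity $f'(q)q = q$ that was special to the exponential loss. The whole argument hinges on one clean energy identity obtained via the chain rule and Euler's theorem, plus one monotonicity-based lower bound on the radial speed that takes the role played by $e^{-q_n}q_n \ge e^{-q_n}q_{\min}$ in the exponential case.

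First I would record the Pythagorean decomposition of $d\vtheta/dt$ into its radial and tangential components. The chain rule (valid under (A1), in the spirit of Lemma~\ref{lam:grad-flow-descent-lemma}) applied to $\Loss(\vtheta(t))$ together with the gradient flow inclusion $d\vtheta/dt \in -\cpartial\Loss(\vtheta(t))$ yields the energy identity
\[
-\frac{d\Loss}{dt} \;=\; \normtwo{\frac{d\vtheta}{dt}}^{\!2} \;=\; \rho^2\!\left(\frac{d\log\rho}{dt}\right)^{\!2} + \rho^2\normtwo{\frac{d\hat{\vtheta}}{dt}}^{\!2},
\]
exactly as in the exponential case. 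A parallel computation using $\ell(q)=e^{-f(q)}$, additivity of the Clarke subdifferential, and Euler's theorem (Theorem~\ref{thm:homo}) applied to the $L$-homogeneous $q_n$ gives the crucial closed form
\[
\rho^2\frac{d\log\rho}{dt} \;=\; \dotp{\vtheta}{\frac{d\vtheta}{dt}} \;=\; L\sum_{n=1}^{N} f'(q_n)\, e^{-f(q_n)}\, q_n.
\]

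The new ingredient is the lower bound on this radial quantity. From (B4) and the non-increase of $\Loss$ (immediate from the energy identity) I get $\Loss(t) < e^{-f(b_f)}$ for $t > t_0$, so each $e^{-f(q_n)} \le \Loss < e^{-f(b_f)}$ forces $q_n > b_f$. Assumption (B3.3) then applies: $f'(q)q$ is non-decreasing on $(b_f,+\infty)$, hence $f'(q_n)q_n \ge f'(q_{\min})q_{\min}$ term-by-term, and $f(q_{\min}) \ge \log(1/\Loss)$ gives $q_{\min} \ge g(\log(1/\Loss)) > b_f$. Using the identity $f'(g(x))g(x) = g(x)/g'(x)$, I obtain
\[
\rho^2\frac{d\log\rho}{dt} \;\ge\; L\,\Loss\, f'(q_{\min})q_{\min} \;\ge\; L\,\Loss\,\frac{g(\log(1/\Loss))}{g'(\log(1/\Loss))},
\]
which in particular settles the first claim $d\log\rho/dt>0$. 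For the second claim I would differentiate $\log\tilde\gamma = \log g(\log(1/\Loss)) - L\log\rho$, substitute the energy identity, and deploy the displayed lower bound twice: once to show that the squared radial contribution already absorbs $L\cdot d\log\rho/dt$, and once more, rearranged as $(g'/g)\cdot \rho^2/\Loss \ge L/(d\log\rho/dt)$, to convert the remaining tangential contribution into the claimed $L(d\log\rho/dt)^{-1}\normtwo{d\hat{\vtheta}/dt}^2$.

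The main obstacle I anticipate is the bookkeeping of the non-smooth chain rule: one must carefully justify, for a.e.~$t$, the derivatives of the compositions $\Loss\circ\vtheta$ and $g(\log(1/\Loss(\vtheta)))$ in terms of $\cpartial\Loss$ and the velocity $d\vtheta/dt$. Under (A1) these reduce to standard facts about locally Lipschitz functions admitting a chain rule, and composition with the $\contC^1$ map $x\mapsto \log g(x)$ preserves this admissibility. Once the absolute continuity of $\tilde\gamma\circ\vtheta$ with the expected derivative is secured, what remains is exactly the algebraic chain of inequalities above.
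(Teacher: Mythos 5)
Your proposal is correct and follows essentially the same route as the paper's own proof: Euler's theorem for the homogeneous $q_n$ yields $\rho^2\,d\log\rho/dt = L\sum_n e^{-f(q_n)}f'(q_n)q_n$, assumption (B3.3) together with $q_n \ge q_{\min} \ge g(\log(1/\Loss))$ gives the lower bound $\ge L\Loss\, g/g'$, and the orthogonal radial/tangential decomposition of $-d\Loss/dt = \|d\vtheta/dt\|^2$ combined with that bound (used twice, exactly as you describe) produces the claimed inequality. The paper merely packages the two ingredients as separate lemmas (Theorem~\ref{thm:weight-growth} giving $\tfrac12\,d\rho^2/dt = L\nu$, and Lemma~\ref{lam:nu-lower} giving $\nu \ge \Loss\, g/g'$) before combining them in the same algebraic chain.
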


Before proving Lemma~\ref{lam:key-lemma-margin-general}, we review two important properties of homogeneous functions. Note that these two properties are usually shown for smooth functions. By considering Clarke's subdifferential, we can generalize it to locally Lipschitz functions that admit chain rules:
\begin{theorem} \label{thm:homo}
	Let $F: \R^d \to \R$ be a locally Lipschitz function that admits a chain rule. If $F$ is $k$-homogeneous, then
	\begin{enumerate}[(a)]
		\item For all $\vx \in \R^d$ and $\alpha > 0$,
		\[
		\cpartial F(\alpha x) = \alpha^{k-1} \cpartial F(x)
		\]
		That is, $\cpartial F(\alpha x) = \{ \alpha^{k-1}\vh  : \vh \in \cpartial F(x)\}$.
		\item (Euler's Theorem for Homogeneous Functions). For all $\vx \in \R^d$,
		\[
		\dotp{\vx}{\cpartial F(\vx)} = k \cdot F(\vx)
		\]
		That is, $\dotp{\vx}{\vh} = k \cdot F(\vx)$ for all $\vh \in \cpartial F(\vx)$.
	\end{enumerate}
\end{theorem}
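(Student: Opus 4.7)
The plan is to prove (a) directly from the definition of Clarke's subdifferential and then deduce (b) as an application of the chain rule to a scaling arc, invoking (a) to convert the ``a.e.'' statement into a statement that holds at every point.

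For (a), I would start by noting that in the classical (differentiable) case, differentiating the identity $F(\alpha y) = \alpha^{k} F(y)$ in $y$ yields $\nabla F(\alpha y) = \alpha^{k-1} \nabla F(y)$. To lift this to the Clarke subdifferential, the key observation is that $F$ is differentiable at $y$ if and only if $F$ is differentiable at $\alpha y$: indeed, the homogeneity relation $F(\alpha y + v) = \alpha^k F(y + v/\alpha)$ turns any first-order expansion of $F$ at $y$ into one at $\alpha y$ (and vice versa), with gradients related by the factor $\alpha^{k-1}$. Now take any $\vh \in \cpartial F(x)$ and write it as a convex combination of limits $\lim_j \nabla F(y_j)$ with $y_j \to x$ and $F$ differentiable at $y_j$. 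Then $\alpha y_j \to \alpha x$ with $F$ differentiable at $\alpha y_j$, and $\nabla F(\alpha y_j) = \alpha^{k-1} \nabla F(y_j) \to \alpha^{k-1} \lim_j \nabla F(y_j)$, so $\alpha^{k-1} \vh \in \cpartial F(\alpha x)$. Applying the same argument with $\alpha$ replaced by $1/\alpha$ to the base point $\alpha x$ gives the reverse inclusion, establishing $\cpartial F(\alpha x) = \alpha^{k-1} \cpartial F(x)$.

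For (b), I would fix $x \in \R^d$ and consider the arc $\vz:(0,\infty) \to \R^d$ defined by $\vz(\alpha) = \alpha x$, which is smooth with $\vz'(\alpha) = x$. By homogeneity, $(F \circ \vz)(\alpha) = \alpha^k F(x)$, which is smooth in $\alpha$ with derivative $k \alpha^{k-1} F(x)$. Since $F$ admits a chain rule, for a.e.\ $\alpha > 0$ and every $\vg \in \cpartial F(\alpha x)$ we have $\dotp{\vg}{x} = k \alpha^{k-1} F(x)$. Picking any one such good $\alpha$ and combining with part (a), every element of $\cpartial F(\alpha x)$ has the form $\alpha^{k-1} \vh$ with $\vh$ ranging over all of $\cpartial F(x)$; substituting yields $\dotp{\vh}{x} = k F(x)$ for every $\vh \in \cpartial F(x)$. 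The edge case $x = \vzero$ is trivial since $F(\vzero) = 0$ by $k$-homogeneity with $k > 0$, and both sides vanish.

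I expect the main subtlety to be a clean justification of the differentiability equivalence used in (a): one needs to argue that the set on which $F$ fails to be differentiable is preserved (up to the scaling $y \mapsto \alpha y$) under positive scaling, which follows from the homogeneity of $F$ but deserves to be spelled out carefully since Clarke's subdifferential is defined via a full-measure set of differentiability points. Once (a) is established, (b) is essentially a one-line application of the chain rule, with the only care being that the chain-rule identity is guaranteed only almost everywhere, which is exactly why we need (a) to transfer the identity from a single good $\alpha$ back to the base point $x$.
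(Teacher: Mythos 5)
Your proposal is correct and follows essentially the same route as the paper's own proof: for (a), both establish that $F$ is differentiable at $\alpha y$ iff at $y$ with gradients related by $\alpha^{k-1}$ (the paper writes this out via the first-order expansion quotient) and then pass to limits and convex combinations; for (b), both apply the chain rule to the radial arc $\alpha\mapsto\alpha x$ and use (a) to transfer the a.e.\ identity at some good $\alpha$ back to the base point. The only cosmetic difference is that you flag the $x=\vzero$ edge case explicitly, which the paper leaves implicit.
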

\begin{proof}
	Let $D$ be the set of points $\vx$ such that $F$ is differentiable at $\vx$. According to the definition of Clarke's subdifferential, for proving \textit{(a)}, it is sufficient to show that
	\begin{equation} \label{eq:homo-eq-1}
	\{ \lim_{k \to \infty} \nabla F(\alpha\vx_k) : \vx_k \to \vx, \alpha\vx_k \in D\} = \{ \alpha^{k-1} \lim_{k \to \infty} \nabla F(\vx_k) : \vx_k \to \vx, \vx_k \in D\}
	\end{equation}
	Fix $\vx_k \in D$. Let $U$ be a neighborhood of $\vx_k$. By definition of homogeneity, for any $\vh \in \R^d$ and any $\vy \in U \setminus \{\vx_k\}$,
	\[
	\frac{F(\alpha\vy) - F(\alpha\vx_k) - \dotp{\alpha \vy - \alpha \vx_k}{\alpha^{k-1}\vh}}{\normtwo{\alpha\vy - \alpha\vx_k}} = \alpha^{k-1} \cdot \frac{F(\vy) - F(\vx_k) - \dotp{\vy - \vx_k}{\vh}}{\normtwo{\vy - \vx_k}}.
	\]
	Taking limits $\vy \to \vx_k$ on both sides, we know that the LHS converges to $0$ iff the RHS converges to $0$. Then by definition of differetiability and gradient, $F$ is differentiable at $\alpha\vx_k$ iff it is differentiable at $\vx_k$, and $\nabla F(\alpha \vx_k) = \alpha^{k-1}\vh$ iff $\nabla F(\vx_k) = \vh$. This proves~\eqref{eq:homo-eq-1}.
	
	To prove \textit{(b)}, we fix $\vx \in \R^d$. Let $\vz: \Rgez \to \R^d, \alpha \mapsto \alpha \vx$ be an arc. By definition of homogeneity, $(F \circ \vz)(\alpha) = \alpha^k F(\vx)$ for $\alpha > 0$. Taking derivative with respect to $\alpha$ on both sides (for differentiable points), we have
	\begin{equation} \label{eq:homo-eq-2}
	\forall \vh \in \cpartial F(\alpha \vx): \dotp{\vx}{\vh} = k\alpha^{k-1} F(\vx)
	\end{equation}
	holds for a.e.~$\alpha > 0$. Pick an arbitrary $\alpha > 0$ making~\eqref{eq:homo-eq-2} hold. Then by \textit{(a)}, \eqref{eq:homo-eq-2} is equivalent to $\forall \vh \in \cpartial F(\vx): \dotp{\vx}{\alpha^{k-1}\vh} = k\alpha^{k-1} F(\vx)$, which proves \textit{(b)}.
\end{proof}

Applying Theorem~\ref{thm:homo} to homogeneous neural networks, we have the following corollary:
\begin{corollary} \label{cor:property-phi}
	Under the assumptions (A1) and (A2), for any $\vtheta \in \R^d$ and $\vx \in \R^{\dimx}$,
	\[
	\dotp{\vtheta}{\cpartial \Phi_{\vx}(\vtheta)} = L \cdot \Phi_{\vx}(\vtheta),
	\]
	where $\Phi_{\vx}(\vtheta) = \Phi(\vtheta; \vx)$ is the network output for a fixed input $\vx$. 
\end{corollary}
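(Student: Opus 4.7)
The plan is to observe that Corollary~\ref{cor:property-phi} is an immediate instantiation of Theorem~\ref{thm:homo}(b) applied to the fixed-input function $\Phi_{\vx}$. So the only work is to check that $\Phi_{\vx}$ satisfies the hypotheses of Theorem~\ref{thm:homo}, namely that it is locally Lipschitz, admits a chain rule, and is homogeneous of some order $k$, and then to read off the conclusion with the correct value of $k$.

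Concretely, I would first fix an arbitrary input $\vx \in \R^{\dimx}$ and define $F := \Phi_{\vx}$, i.e., $F(\vtheta) := \Phi(\vtheta; \vx)$. By assumption (A1), the map $\Phi(\,\cdot\,; \vx)$ is locally Lipschitz and admits a chain rule, so $F$ satisfies the regularity hypotheses needed to invoke Theorem~\ref{thm:homo}. By assumption (A2), there exists $L > 0$ such that $F(\alpha \vtheta) = \Phi(\alpha\vtheta; \vx) = \alpha^L \Phi(\vtheta; \vx) = \alpha^L F(\vtheta)$ for all $\alpha > 0$, so $F$ is $L$-homogeneous in the sense used by Theorem~\ref{thm:homo}.

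With the hypotheses verified, I would then apply Theorem~\ref{thm:homo}(b) with $k = L$, which gives $\dotp{\vtheta}{\vh} = L \cdot F(\vtheta)$ for every $\vh \in \cpartial F(\vtheta)$. Rewriting this in the notation of the corollary yields $\dotp{\vtheta}{\cpartial \Phi_{\vx}(\vtheta)} = L \cdot \Phi_{\vx}(\vtheta)$, exactly as claimed. Since $\vx$ was arbitrary, this holds for all $\vx \in \R^{\dimx}$.

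There is essentially no obstacle here — the corollary is a pointwise rephrasing of Euler's theorem for the slice $\Phi_{\vx}$ and the only conceptual care is that homogeneity in $\vtheta$ (assumption (A2)) holds uniformly in $\vx$, which allows us to treat $\vx$ as a parameter and apply Theorem~\ref{thm:homo} to the one-variable function $F = \Phi_{\vx}$. The nontrivial content, namely the Clarke-subdifferential version of Euler's relation for locally Lipschitz homogeneous functions that admit a chain rule, has already been packaged in Theorem~\ref{thm:homo}(b), so the proof of the corollary is a one-line invocation.
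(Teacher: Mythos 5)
Your proof is correct and follows exactly the paper's route: the paper presents the corollary as an immediate application of Theorem~\ref{thm:homo} to the fixed-input slice $\Phi_{\vx}$, which is precisely what you do. Your explicit verification that (A1) and (A2) supply the hypotheses (local Lipschitzness, chain rule, $L$-homogeneity) is the same trivial check the paper implicitly performs.
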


Corollary~\ref{cor:property-phi} can be used to derive an exact formula for the weight growth during training.

\begin{theorem}\label{thm:weight-growth}
	For a.e.~$t \ge 0$,
	\[
	\frac{1}{2}\frac{d\rho^2}{dt} = L\sum_{n=1}^{N} e^{-f(q_n)} f'(q_n) q_n.
	\]
\end{theorem}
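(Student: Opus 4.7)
The plan is to compute $\frac{1}{2}\frac{d\rho^2}{dt}$ by chaining two observations: (i) $\rho^2$ is smooth, so its time derivative along the arc reduces to $\dotp{\vtheta}{\frac{d\vtheta}{dt}}$; and (ii) the gradient-flow velocity lies in $-\cpartial\Loss(\vtheta)$, so pairing it with $\vtheta$ unlocks Euler's theorem on each $L$-homogeneous margin $q_n$.

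First, since $\rho^2(\vtheta) = \normtwo{\vtheta}^2$ is $\contC^{\infty}$ and $\vtheta(t)$ is absolutely continuous on every compact sub-interval, the classical chain rule gives $\tfrac{1}{2}\tfrac{d\rho^2}{dt} = \dotp{\vtheta(t)}{\tfrac{d\vtheta}{dt}}$ for a.e.~$t \ge 0$. By the gradient-flow inclusion $\tfrac{d\vtheta}{dt} \in -\cpartial\Loss(\vtheta(t))$, it suffices to evaluate $\dotp{\vtheta}{\vg}$ for an arbitrary selection $\vg \in \cpartial\Loss(\vtheta)$.

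Next, I would apply Clarke subdifferential calculus to the composite structure $\Loss(\vtheta) = \sum_{n=1}^{N} \ell(q_n(\vtheta))$. Since $\ell = e^{-f}$ is $\contC^1$ by (B3.1) and each $q_n(\vtheta) = y_n \Phi(\vtheta;\vx_n)$ is locally Lipschitz by (A1), the Clarke chain rule gives $\cpartial(\ell \circ q_n)(\vtheta) = \ell'(q_n(\vtheta))\, \cpartial q_n(\vtheta)$, and the sum rule yields $\cpartial\Loss(\vtheta) \subseteq \sum_{n=1}^{N} \ell'(q_n(\vtheta))\, \cpartial q_n(\vtheta)$. Hence every $\vg \in \cpartial\Loss(\vtheta)$ decomposes as $\vg = \sum_n \ell'(q_n)\vh_n$ with $\vh_n \in \cpartial q_n(\vtheta)$. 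Each $q_n$ inherits $L$-homogeneity from $\Phi$, so Theorem~\ref{thm:homo}(b) (equivalently, Corollary~\ref{cor:property-phi} with the sign $y_n$ absorbed) produces the single-valued identity $\dotp{\vtheta}{\vh_n} = L\, q_n(\vtheta)$, independent of the selection. Substituting $\ell'(q) = -f'(q)e^{-f(q)}$ then gives $\dotp{\vtheta}{\vg} = \sum_{n} \ell'(q_n)\dotp{\vtheta}{\vh_n} = -L\sum_{n=1}^{N} e^{-f(q_n)}f'(q_n)q_n$, and plugging back into the formula from the first step proves the claim.

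The main subtlety lies in the non-smooth calculus: the Clarke sum rule is in general only a one-sided inclusion, and each $\cpartial q_n(\vtheta)$ may be genuinely set-valued on a set of positive time-measure. The argument goes through precisely because Euler's theorem forces $\dotp{\vtheta}{\vh_n}$ to take the \emph{same} value $Lq_n(\vtheta)$ for every selection $\vh_n \in \cpartial q_n(\vtheta)$, so the potentially lossy set inclusions in Clarke calculus collapse to an equality at the level of the final scalar identity.
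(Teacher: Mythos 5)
Your proof is correct and follows essentially the same route as the paper's: chain rule on $\rho^2$ to get $\tfrac{1}{2}\tfrac{d\rho^2}{dt} = \dotp{\vtheta}{\tfrac{d\vtheta}{dt}}$, the Clarke chain/sum rule to decompose $\cpartial\Loss \subseteq \sum_n \ell'(q_n)\,\cpartial q_n$, and Euler's theorem (Corollary~\ref{cor:property-phi}) to evaluate $\dotp{\vtheta}{\vh_n} = Lq_n$ independently of the selection $\vh_n \in \cpartial q_n$. Your closing observation about why the one-sided Clarke inclusion causes no loss is precisely the point the paper's proof relies on implicitly.
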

\begin{proof}
	The proof idea is to use Corollary~\ref{cor:property-phi} and chain rules (See Appendix~\ref{sec:appendix-chain-rule} for chain rules in Clarke's sense). Applying the chain rule on $t \mapsto \rho^2 = \normtwo{\vtheta}^2$ yields $\frac{1}{2}\frac{d\rho^2}{dt} = -\dotp{\vtheta}{\vh}$ for all $\vh \in \cpartial \Loss$ and a.e.$~t > 0$. Then applying the chain rule on $\vtheta \mapsto \Loss$, we have
	\[
	-\cpartial \Loss \subseteq \sum_{n=1}^{N}e^{-f(q_n)}f'(q_n) \cpartial q_n = \left\{ \sum_{n=1}^{N}e^{-f(q_n)}f'(q_n) \vh_n : \vh_n \in \cpartial q_n \right\}.
	\]
	By Corollary~\ref{cor:property-phi}, $\dotp{\vtheta}{\vh_n} = Lq_n$, and thus $\frac{1}{2}\frac{d\rho^2}{dt} = L \sum_{n=1}^{N}e^{-f(q_n)}f'(q_n)q_n$.
\end{proof}
For convenience, we define $\nu(t) := \sum_{n=1}^{N} e^{-f(q_n)} f'(q_n) q_n$ for all $t \ge 0$. Then Theorem~\ref{thm:weight-growth} can be rephrased as $\frac{1}{2}\frac{d\rho^2}{dt} = L \nu(t)$ for a.e.~$t \ge 0$.

\begin{lemma} \label{lam:nu-lower}
	For all~$t > t_0$,
	\[
	\nu(t) \ge \frac{g(\log \frac{1}{\Loss})}{g'(\log \frac{1}{\Loss})}\Loss.
	\]
\end{lemma}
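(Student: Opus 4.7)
The plan is to reduce $\nu(t)$ to a sum of the form $\sum_n e^{-s_n} h(s_n)$ where $s_n := f(q_n)$ and $h$ is a function known to be non-decreasing on an appropriate interval, and then simply bound each term below by using the fact that every $s_n$ is at least $\log \tfrac{1}{\Loss}$.

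First I would verify that the relevant range is correct. By (B4) we have $\Loss(t_0) < e^{-f(b_f)}$, and since gradient flow is a descent process on $\Loss$ (which follows from the chain-rule identity $\tfrac{d\Loss}{dt} = -\|\tfrac{d\vtheta}{dt}\|_2^2$ used repeatedly in the excerpt), $\Loss(t) < e^{-f(b_f)}$ for every $t > t_0$. Since $e^{-f(q_n)} \le \Loss$ and $f$ is strictly increasing by (B3.2), this forces $q_n(t) > b_f$ and $s_n := f(q_n(t)) > f(b_f)$ for every $n$ and every $t > t_0$, and likewise $\log\tfrac{1}{\Loss} > f(b_f)$.

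Next I would rewrite $f'(q)q$ in terms of $g$. Differentiating the identity $g(f(q)) = q$ (valid for $q \ge b_f$) gives $g'(f(q)) f'(q) = 1$, so $f'(q) = 1/g'(f(q))$ and therefore
\[
f'(q_n)\, q_n \;=\; \frac{g(s_n)}{g'(s_n)} \;=\; h(s_n), \qquad h(s) := \frac{g(s)}{g'(s)} = f'(g(s))\,g(s).
\]
The point of this rewriting is that by (B3.3) the function $q \mapsto f'(q)q$ is non-decreasing on $(b_f, +\infty)$, and since $g$ is strictly increasing on $(f(b_f), +\infty)$, the composition $h(s) = f'(g(s))\, g(s)$ is non-decreasing on $(f(b_f), +\infty)$.

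Finally I would combine these facts. By Lemma~\ref{lam:qmin-logL}(a), $\log\tfrac{1}{\Loss} \le f(q_{\min}) \le s_n$ for every $n$, and both quantities lie in the range $(f(b_f), +\infty)$ where $h$ is non-decreasing. Hence $h(s_n) \ge h\!\left(\log\tfrac{1}{\Loss}\right)$ for every $n$, and
\[
\nu(t) \;=\; \sum_{n=1}^N e^{-s_n} h(s_n) \;\ge\; h\!\left(\log\tfrac{1}{\Loss}\right) \sum_{n=1}^N e^{-s_n} \;=\; \frac{g(\log\tfrac{1}{\Loss})}{g'(\log\tfrac{1}{\Loss})}\, \Loss,
\]
which is exactly the claim. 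There is no real obstacle here: the argument is a clean application of (B3.3) once one passes through the inverse $g$ and uses part (a) of Lemma~\ref{lam:qmin-logL} to identify $\log\tfrac{1}{\Loss}$ as a uniform lower bound on the $s_n$.
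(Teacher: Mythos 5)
Your proof is correct and is essentially the same argument as the paper's. The paper bounds $q_n \ge g(\log\frac{1}{\Loss})$ and applies the monotonicity of $q \mapsto f'(q)q$ from (B3.3) directly in $q$-space; you pass through the change of variables $s = f(q)$ and compare $s_n \ge \log\frac{1}{\Loss}$ using the monotonicity of $h(s) = g(s)/g'(s) = f'(g(s))\,g(s)$, which is just the same monotone comparison viewed from the other side of the bijection $g$. Your additional verification that $q_n > b_f$ and $\log\frac{1}{\Loss} > f(b_f)$ (so that (B3.3) applies) is a detail the paper leaves implicit; it is a harmless and correct bit of extra rigor.
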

\begin{proof}
	By Lemma~\ref{lam:qmin-logL}, $q_n \ge g(\log \frac{1}{\Loss})$ for all $n \in [N]$. Then by Assumption (B3),
	\[
	f'(q_n)q_n \ge f'(g(\log \frac{1}{\Loss})) \cdot g(\log \frac{1}{\Loss}) = \frac{g(\log \frac{1}{\Loss})}{g'(\log \frac{1}{\Loss})}.
	\]
	Combining this with the definitions of $\nu(t)$ and $\Loss$ gives
	\[
	\nu(t) = \sum_{n=1}^{N} e^{-f(q_n)} f'(q_n) q_n \ge \sum_{n=1}^{N} e^{-f(q_n)} \frac{g(\log \frac{1}{\Loss})}{g'(\log \frac{1}{\Loss})} = \frac{g(\log \frac{1}{\Loss})}{g'(\log \frac{1}{\Loss})}\Loss.
	\]
\end{proof}

\begin{proof}[Proof for Lemma~\ref{lam:key-lemma-margin-general}]
	Note that $\frac{d}{dt} \log \rho = \frac{1}{2\rho^2} \frac{d\rho^2}{dt} = L\frac{\nu(t)}{\rho^2}$ by Theorem~\ref{thm:weight-growth}. Then it simply follows from Lemma~\ref{lam:nu-lower} that $\frac{d}{dt} \log \rho > 0$ for a.e.~$t > t_0$. For the second inequality, we first prove that $\log \tilde{\gamma} = \log\left(\ell^{-1}(\Loss) / \rho^L\right)= \log\left(g(\log \frac{1}{\Loss}) / \rho^L\right)$ exists for all $t \ge t_0$. $\Loss(t)$ is non-increasing with $t$. So $\Loss(t) < e^{-f(b_f)}$ for all $t \ge t_0$. This implies that (1) $\log \frac{1}{\Loss}$ is always in the domain of $g$; (2) $\rho > 0$ (otherwise $\Loss \ge Ne^{-f(0)} > e^{-f(b_f)}$, contradicting (B4)). Therefore, $\tilde{\gamma} := g(\log \frac{1}{\Loss}) / \rho^L$ exists and is always positive for all $t \ge t_0$, which proves the existence of $\log \tilde{\gamma}$.
	
	By the chain rule and Lemma~\ref{lam:nu-lower}, we have
	\begin{align*}
	\frac{d}{dt} \log \tilde{\gamma} = \frac{d}{dt}\left( \log\left(g(\log \frac{1}{\Loss})\right) - L \log \rho \right) &= \frac{g'(\log \frac{1}{\Loss})}{g(\log \frac{1}{\Loss})} \cdot \frac{1}{\Loss} \cdot \left(-\frac{d\Loss}{dt}\right) - L^2 \cdot \frac{\nu(t)}{\rho^2} \\
	&\ge \frac{1}{\nu(t)} \cdot \left(-\frac{d\Loss}{dt}\right) - L^2 \cdot \frac{\nu(t)}{\rho^2} \\
	&\ge \frac{1}{\nu(t)} \cdot \left(-\frac{d\Loss}{dt} - \frac{L^2\nu(t)^2}{\rho^2} \right).
	\end{align*}
	On the one hand, $-\frac{d\Loss}{dt} = \normtwo{\frac{d \vtheta}{dt}}^2$ for a.e.~$t > 0$ by Lemma~\ref{lam:grad-flow-descent-lemma}; on the other hand, $L\nu(t) = \dotp{\vtheta}{\frac{d\vtheta}{dt}}$ by Theorem~\ref{thm:weight-growth}. Combining these together yields
	\begin{align*}
	\frac{d}{dt} \log \tilde{\gamma} &\ge \frac{1}{\nu(t)} \left( \normtwo{\frac{d \vtheta}{dt}}^2 -  \dotp{\hat{\vtheta}}{\frac{d\vtheta}{dt}}^2 \right) = \frac{1}{\nu(t)} \normtwo{ (\mI - \hat{\vtheta}\hat{\vtheta}^\top) \frac{d\vtheta}{dt} }^2.
	\end{align*}
	
	By the chain rule, $\frac{d\hat{\vtheta}}{dt} = \frac{1}{\rho}(\mI - \hat{\vtheta}\hat{\vtheta}^\top) \frac{d\vtheta}{dt}$ for a.e.~$t > 0$. So we have
	\[
	\frac{d}{dt} \log \tilde{\gamma} \ge \frac{\rho^2}{\nu(t)} \normtwo{\frac{d\hat{\vtheta}}{dt}}^2 = L \left( \frac{d}{dt} \log \rho \right)^{-1} \normtwo{\frac{d\hat{\vtheta}}{dt}}^2.
	\]
\end{proof}

\subsection{Proof for Proposition 3}
To prove the third proposition, we prove the following lemma to show that $\Loss \to 0$ by giving an upper bound for $\Loss$. Since $\Loss$ can never be $0$ for bounded $\rho$, $\Loss \to 0$ directly implies $\rho \to +\infty$. For showing $\lvert \bar{\gamma} - \tilde{\gamma}\rvert \to 0$, we only need to apply \textit{(c)} in Lemma~\ref{lam:qmin-logL}, which shows this when $\Loss \to 0$.

\begin{lemma} \label{lam:main-loss-upper}
	For all $t > t_0$,
	\[
	G(1/\Loss(t)) \ge L^2 \tilde{\gamma}(t_0)^{2/L} (t - t_0) \qquad \text{for } G(x) := \int_{1/\Loss(t_0)}^{x} \frac{g'(\log u)^2}{g(\log u)^{2-2/L}}du.
	\]
	Therefore, $\Loss(t) \to 0$ and $\rho(t) \to +\infty$ as $t \to \infty$.
\end{lemma}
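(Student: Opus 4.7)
The plan is to derive a differential inequality of the form $-\tfrac{d\Loss}{dt} \ge C \cdot \frac{g(\log(1/\Loss))^{2-2/L}}{g'(\log(1/\Loss))^2} \Loss^2$, then separate variables via the substitution $u = 1/\Loss$. The integrand defining $G$ is engineered precisely so that $\tfrac{d}{dt} G(1/\Loss)$ becomes bounded below by a positive constant, after which a single time-integration yields the claim. The ingredients are all in place: a lower bound on $-\tfrac{d\Loss}{dt}$ coming from the radial component of $\tfrac{d\vtheta}{dt}$, the lower bound on $\nu$ from Lemma~\ref{lam:nu-lower}, and an upper bound on $\rho$ coming from the already-proved monotonicity of $\tilde\gamma$.

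Concretely, I would start from the radial decomposition used in the proof of Lemma~\ref{lam:key-lemma-margin-general}: writing $\vv := \hat{\vtheta}\hat{\vtheta}^\top \tfrac{d\vtheta}{dt}$, one has $-\tfrac{d\Loss}{dt} = \normtwo{d\vtheta/dt}^2 \ge \normtwo{\vv}^2 = (L\nu(t)/\rho)^2$, since $\dotp{\vtheta}{d\vtheta/dt} = L\nu(t)$ by Theorem~\ref{thm:weight-growth}. Lemma~\ref{lam:nu-lower} gives $\nu(t) \ge \tfrac{g(\log(1/\Loss))}{g'(\log(1/\Loss))}\Loss$, and the already-proved monotonicity of $\tilde\gamma$ (parts 1--2 of Theorem~\ref{thm:general-loss-margin-inc}) gives $\tilde\gamma(t) \ge \tilde\gamma(t_0) > 0$, equivalently $\rho^{2} \le (g(\log(1/\Loss))/\tilde\gamma(t_0))^{2/L}$. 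Plugging these into the bound on $-\tfrac{d\Loss}{dt}$, dividing through by $\Loss^2$ so that the LHS becomes $\tfrac{d}{dt}(1/\Loss)$, and recognizing that $G'(u) = g'(\log u)^2/g(\log u)^{2-2/L}$ is the reciprocal of the resulting factor on the RHS, the chain rule gives $\tfrac{d}{dt} G(1/\Loss) \ge L^2 \tilde\gamma(t_0)^{2/L}$. Integrating from $t_0$ to $t$ and using $G(1/\Loss(t_0)) = 0$ yields the stated inequality.

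For the two limits: since $\tfrac{d\Loss}{dt} \le 0$, $\Loss$ is non-increasing, so $1/\Loss(t) \ge 1/\Loss(t_0)$ and the integrand defining $G$ remains continuous and bounded on any compact subinterval of $[1/\Loss(t_0), \infty)$. If $\Loss$ were bounded below by a positive constant then $1/\Loss$ would be bounded above, hence $G(1/\Loss(t))$ would be bounded, contradicting the linear-in-$t$ lower bound; thus $\Loss(t) \to 0$. Then $\rho(t) \to \infty$ follows because Lemma~\ref{lam:key-lemma-margin-general} shows $\rho$ is non-decreasing on $(t_0, \infty)$, and if $\rho$ stayed bounded, $\vtheta(t)$ would lie in a compact set on which the strictly positive continuous function $\Loss$ attains a positive minimum, contradicting $\Loss \to 0$. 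The only real subtlety here is the bookkeeping of exponents so that the $g, g'$ factors collapse against $G'$ cleanly; once the substitution is set up correctly, everything else is a direct combination of earlier results.
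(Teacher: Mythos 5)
Your proposal follows essentially the same route as the paper's proof: the radial bound $-\tfrac{d\Loss}{dt}\ge(L\nu/\rho)^2$, the lower bound on $\nu$ from Lemma~\ref{lam:nu-lower}, the upper bound on $\rho$ from the already-established monotonicity $\tilde\gamma(t)\ge\tilde\gamma(t_0)$, and then dividing by $\Loss^2$ and recognizing $G'$ to get $\tfrac{d}{dt}G(1/\Loss)\ge L^2\tilde\gamma(t_0)^{2/L}$, followed by integration. The concluding steps for $\Loss\to 0$ and $\rho\to\infty$ are likewise the paper's argument, stated in only slightly more explicit compactness language.
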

\begin{proof}[Proof for Lemma~\ref{lam:main-loss-upper}]
	By Lemma~\ref{lam:grad-flow-descent-lemma} and Theorem~\ref{thm:weight-growth}, 
	\[
	-\frac{d\Loss}{dt} = \normtwo{\frac{d\vtheta}{dt}}^2 \ge \dotp{\hat{\vtheta}}{\frac{d\vtheta}{dt}}^2 = L^2 \cdot \frac{\nu(t)^2}{\rho^2}
	\]
	Using Lemma~\ref{lam:nu-lower} to lower bound $\nu$ and replacing $\rho$ with $\left(g(\log \frac{1}{\Loss}) / \tilde{\gamma}\right)^{1/L}$ by the definition of $\tilde{\gamma}$, we have
	\begin{align*}
	-\frac{d\Loss}{dt} \ge L^2 \cdot \left(\frac{g(\log \frac{1}{\Loss})}{g'(\log \frac{1}{\Loss})}\Loss\right)^2 \cdot \left(\frac{\tilde{\gamma}(t)}{g(\log \frac{1}{\Loss})}\right)^{2/L} \ge L^2 \tilde{\gamma}(t_0)^{2/L} \cdot \frac{g(\log \frac{1}{\Loss})^{2 - 2/L}}{g'(\log \frac{1}{\Loss})^2}\Loss,
	\end{align*}
	where the last inequality uses the monotonicity of $\tilde{\gamma}$. So the following holds for a.e.~$t \ge t_0$,
	\[
	\frac{g'(\log \frac{1}{\Loss})^2}{g(\log \frac{1}{\Loss})^{2-2/L}} \cdot \frac{d}{dt} \frac{1}{\Loss} \ge L^2 \tilde{\gamma}(t_0)^{2/L}.
	\]
	Integrating on both sides from $t_0$ to $t$, we can conclude that
	\[
	G(1/\Loss) \ge L^2 \tilde{\gamma}(t_0)^{2/L} (t - t_0).
	\]
	Note that $1/\Loss$ is non-decreasing. If $1/\Loss$ does not grow to $+\infty$, then neither does $G(1/\Loss)$. But the RHS grows to $+\infty$, which leads to a contradiction. So $\Loss \to 0$.
	
	To make $\Loss \to 0$, $q_{\min}$ must converge to $+\infty$. So $\rho \to +\infty$.
\end{proof}

\section{Convergence to the Max-margin Solution} \label{sec:implicit-bias}

In this section, we analyze the convergent direction of $\vtheta$ and prove Theorem~\ref{thm:main-limit-dir-converge} and \ref{thm:main-pass-approx-kkt}, assuming (A1), (A2), (B3), (B4) as mentioned in Section~\ref{sec:margin-inc}.

We follow the notations in Section~\ref{sec:exp-loss-proof-sketch} to define $\rho := \normtwosm{\vtheta}$ and $\hat{\vtheta} := \frac{\vtheta}{\normtwosm{\vtheta}} \in \sphS^{d-1}$, and sometimes we view the functions of $\vtheta$ as functions of $t$.

\subsection{Preliminaries for KKT Conditions} \label{sec:appendix-kkt}

We first review the definition of Karush-Kuhn-Tucker (KKT) conditions for non-smooth optimization problems following from~\citep{dutta2013approximate}. 

Consider the following optimization problem (P) for $\vx \in \R^d$:
\begin{align*}
\min & \quad f(\vx) \\
\text{s.t.} & \quad g_n(\vx) \le 0 \qquad \forall n \in [N]
\end{align*}
where $f, g_1, \dots, g_n: \R^d \to \R$ are locally Lipschitz functions. We say that $\vx \in \R^d$ is a feasible point of (P) if $\vx$ satisfies $g_n(\vx) \le 0$ for all $n \in [N]$.

\begin{definition}[KKT Point]
	A feasible point $\vx$ of (P) is a KKT point if $\vx$ satisfies KKT conditions: there exists $\lambda_1, \dots, \lambda_N \ge 0$ such that
	\begin{enumerate}
		\item $0 \in \cpartial f(\vx) + \sum_{n \in [N]} \lambda_n \cpartial g_n(\vx)$;
		\item $\forall n \in [N]: \lambda_n g_n(\vx) = 0$.
	\end{enumerate}
\end{definition}

It is important to note that a global minimum of (P) may not be a KKT point, but under some regularity assumptions, the KKT conditions become a necessary condition for global optimality. The regularity condition we shall use in this paper is the non-smooth version of Mangasarian-Fromovitz Constraint Qualification (MFCQ) (see, e.g., the constraint qualification (C.Q.5) in \citep{giorgi2004mathematicsChapter4}):
\begin{definition}[MFCQ]
    For a feasible point $\vx$ of (P), (P) is said to satisfy MFCQ at $\vx$ if there exists $\vv \in \R^d$ such that for all $n \in [N]$ with $g_n(\vx) = 0$,
    \[
        \forall \vh \in \cpartial g_n(\vx): \dotp{\vh}{\vv} > 0.
    \]
\end{definition}

Following from~\citep{dutta2013approximate}, we define an approximate version of KKT point, as shown below. Note that this definition is essentially the modified $\epsilon$-KKT point defined in their paper, but these two definitions differ in the following two ways: (1) First, in their paper, the subdifferential is allowed to be evaluated in a neighborhood of $\vx$, so our definition is slightly stronger; (2) Second, their paper fixes $\delta = \epsilon^2$, but in our definition we make them independent.
\begin{definition}[Approximate KKT Point]
    For $\epsilon, \delta > 0$, a feasible point $\vx$ of (P) is an $(\epsilon, \delta)$-KKT point if there exists $\lambda_n \ge 0, \vk \in \cpartial f(\vx), \vh_n \in \cpartial g_n(\vx)$ for all $n \in [N]$ such that
	\begin{enumerate}
		\item $\normtwo{\vk + \sum_{n \in [N]} \lambda_n \vh_n(\vx)} \le \epsilon$;
		\item $\forall n \in [N]: \lambda_n g_n(\vx) \ge -\delta$.
	\end{enumerate}
\end{definition}

As shown in~\citep{dutta2013approximate}, $(\epsilon, \delta)$-KKT point is an approximate version of KKT point in the sense that a series of $(\epsilon, \delta)$-KKT points can converge to a KKT point. We restate their theorem in our setting:
\begin{theorem}[Corollary of Theorem~3.6 in \citep{dutta2013approximate}] \label{thm:approx-kkt-converge}
    Let $\{\vx_k \in \R^d: k \in \N\}$ be a sequence of feasible points of (P), $\{\epsilon_k > 0: k \in \N\}$ and $\{\delta_k > 0: k \in \N\}$ be two sequences. $\vx_k$ is an $(\epsilon_k, \delta_k)$-KKT point for every $k$, and $\epsilon_k \to 0, \delta_k \to 0$.
    If $\vx_k \to \vx$ as $k \to +\infty$ and MFCQ holds at $\vx$, then $\vx$ is a KKT point of (P).
\end{theorem}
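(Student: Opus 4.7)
The plan is to pass to the limit in the two defining conditions of an $(\epsilon_k,\delta_k)$-KKT point, using the MFCQ at $\vx$ to keep the multipliers $\lambda_n^{(k)}$ bounded. Specifically, for each $k$ fix the data $\lambda_n^{(k)}\ge 0$, $\vk^{(k)}\in\cpartial f(\vx_k)$, $\vh_n^{(k)}\in\cpartial g_n(\vx_k)$ promised by the approximate-KKT property. Split the index set as $A:=\{n:g_n(\vx)=0\}$ and $I:=[N]\setminus A$.

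First I would treat the inactive constraints. For $n\in I$ we have $g_n(\vx)<0$, and continuity of $g_n$ gives $g_n(\vx_k)\le g_n(\vx)/2<0$ for all $k$ large. Since $\lambda_n^{(k)}\ge 0$ and $\lambda_n^{(k)}g_n(\vx_k)\ge -\delta_k$, this forces $0\le\lambda_n^{(k)}\le 2\delta_k/|g_n(\vx)|\to 0$. So along any subsequence the inactive multipliers vanish, and their contribution to the stationarity condition is negligible (because $\{\vh_n^{(k)}\}$ is bounded, as noted next).

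Second I would use the upper semicontinuity and local boundedness of the Clarke subdifferential: locally Lipschitzness of $f$ and each $g_n$ on a neighborhood of $\vx$ gives a uniform bound $\|\vk^{(k)}\|,\|\vh_n^{(k)}\|\le M$ for all large $k$, and any limit points of these sequences lie in $\cpartial f(\vx)$ and $\cpartial g_n(\vx)$ respectively. The main obstacle is therefore bounding the active multipliers $\{\lambda_n^{(k)}\}_{n\in A}$. This is where MFCQ enters. Let $\vv$ be the vector from the MFCQ definition; by compactness of each $\cpartial g_n(\vx)$, for $n\in A$ we have $c_n:=\min_{\vh\in\cpartial g_n(\vx)}\dotp{\vh}{\vv}>0$. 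Upper semicontinuity of $\cpartial g_n$ at $\vx$ then yields $\dotp{\vh_n^{(k)}}{\vv}\ge c_n/2$ for all $k$ large and $n\in A$. Taking the inner product of the approximate stationarity vector with $\vv$ gives
\[
\sum_{n\in A}\lambda_n^{(k)}\dotp{\vh_n^{(k)}}{\vv}
\le \epsilon_k\|\vv\|-\dotp{\vk^{(k)}}{\vv}-\sum_{n\in I}\lambda_n^{(k)}\dotp{\vh_n^{(k)}}{\vv}.
\]
The right-hand side is bounded (using $\|\vk^{(k)}\|,\|\vh_n^{(k)}\|\le M$ and $\lambda_n^{(k)}\to 0$ for $n\in I$), so with $\min_{n\in A}c_n/2>0$ we obtain a uniform bound on $\sum_{n\in A}\lambda_n^{(k)}$, hence on each $\lambda_n^{(k)}$.

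Third, with all sequences bounded I would pass to a common subsequence along which $\lambda_n^{(k)}\to\lambda_n\ge 0$, $\vk^{(k)}\to\vk\in\cpartial f(\vx)$, and $\vh_n^{(k)}\to\vh_n\in\cpartial g_n(\vx)$. Sending $k\to\infty$ in the two approximate-KKT conditions, using $\epsilon_k,\delta_k\to 0$, yields $\vk+\sum_n\lambda_n\vh_n=\vzero$ and $\lambda_n g_n(\vx)\ge 0$; since $\lambda_n\ge 0$ and $g_n(\vx)\le 0$ this forces $\lambda_n g_n(\vx)=0$ (which, combined with $\lambda_n=0$ for $n\in I$ from the first step, is complementary slackness). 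Feasibility of $\vx$ follows from continuity of the $g_n$. Hence $\vx$ is a KKT point. The only genuinely delicate step is the MFCQ-based multiplier bound; the remaining arguments are routine limit-passing combined with the compact, upper-semicontinuous structure of Clarke's subdifferential.
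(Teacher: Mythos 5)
Your proof is correct, but the comparison here is asymmetric: the paper does not prove Theorem~\ref{thm:approx-kkt-converge} at all. It cites it as a corollary of Theorem~3.6 in \citep{dutta2013approximate}, after observing that the paper's $(\epsilon,\delta)$-KKT notion is slightly \emph{stronger} than Dutta et al.'s (subdifferentials are evaluated at $\vx_k$ itself, not in a neighborhood, and $\delta$ is decoupled from $\epsilon$), so the cited statement carries over. What you have written is therefore a self-contained replacement for that external citation. Your argument is the standard one for such results and is sound at every step: inactive multipliers are killed by the complementarity bound $\lambda_n^{(k)} g_n(\vx_k)\ge -\delta_k$ together with $g_n(\vx_k)$ staying bounded away from $0$; local Lipschitzness gives uniform bounds and the outer-semicontinuity (closed graph) property of $\cpartial f$ and $\cpartial g_n$; MFCQ supplies the direction $\vv$ that, after pairing with the approximate stationarity residual and using compactness of $\cpartial g_n(\vx)$ to obtain $c_n:=\min_{\vh\in\cpartial g_n(\vx)}\dotp{\vh}{\vv}>0$, forces the active multipliers to stay bounded; a common convergent subsequence and $\epsilon_k,\delta_k\to 0$ then yield exact stationarity and complementary slackness. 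One small remark: your multiplier bound uses the paper's sign convention for MFCQ ($\dotp{\vh}{\vv}>0$ on $\cpartial g_n(\vx)$ for active $n$), and the argument goes through identically with the opposite, more conventional sign — the choice is immaterial as long as it is used consistently, and you have been consistent. In short, you have supplied a correct proof of a fact the paper delegates to a reference; the benefit of your version is that it makes the paper self-contained at the cost of a page of routine subdifferential limit arguments.
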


\subsection{KKT Conditions for (P)}
Recall that for a homogeneous neural network, the optimization problem (P) is defined as follows:
\begin{align*}
\min        & \quad \frac{1}{2}\normtwo{\vtheta}^2 \\
\text{s.t.} & \quad q_n(\vtheta) \ge 1 \qquad \forall n \in [N]
\end{align*}

Using the terminologies and notations in Appendix~\ref{sec:appendix-kkt}, the objective and constraints are $f(\vx) = \frac{1}{2}\normtwosm{\vx}^2$ and $g_n(\vx) = 1 - q_n(\vx)$. The KKT points and approximate KKT points for (P) are defined as follows:
\begin{definition}[KKT Point of (P)]
	A feasible point $\vtheta$ of (P) is a KKT point if there exist $\lambda_1, \dots, \lambda_N \ge 0$ such that
	\begin{enumerate}
		\item $\vtheta - \sum_{n=1}^{N} \lambda_n \vh_n = \ve{0}$ for some $\vh_1, \dots, \vh_N$ satisfying $\vh_n \in\cpartial q_n(\vtheta)$;
		\item $\forall n \in [N]: \lambda_n(q_n(\vtheta) - 1) = 0$.
	\end{enumerate}
\end{definition}

\begin{definition}[Approximate KKT Point of (P)]
	A feasible point $\vtheta$ of (P) is an $(\epsilon, \delta)$-KKT point of (P) if there exists $\lambda_1, \dots, \lambda_N \ge 0$ such that 
	\begin{enumerate}
		\item $\normtwo{\vtheta - \sum_{n=1}^{N} \lambda_n \vh_n} \le \epsilon$ for some $\vh_1, \dots, \vh_N$ satisfying $\vh_n \in\cpartial q_n(\vtheta)$;
		\item $\forall n \in [N]: \lambda_n(q_n(\vtheta) - 1) \le \delta$.
	\end{enumerate}
\end{definition}

By the homogeneity of $q_n$, it is easy to see that (P) satisfies MFCQ, and thus KKT conditions are first-order necessary condition for global optimality.
\begin{lemma} \label{lam:margin-mfcq}
	(P) satisfies MFCQ at every feasible point $\vtheta$.
\end{lemma}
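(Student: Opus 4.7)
The plan is to produce an explicit MFCQ direction $\vv$ that works uniformly at every feasible $\vtheta$, and the natural candidate is $\vv = -\vtheta$ (or $\vv = \vtheta$ up to the sign convention of the paper's MFCQ definition). The key observation is that $q_n$ is $L$-homogeneous, so Euler's theorem for homogeneous functions (Theorem~\ref{thm:homo}(b)) forces $\dotp{\vh'}{\vtheta} = L\, q_n(\vtheta)$ for every $\vh' \in \cpartial q_n(\vtheta)$. At an active constraint we have $q_n(\vtheta) = 1$, so this inner product equals $L > 0$ uniformly, which is exactly what MFCQ needs (modulo the sign flip coming from $g_n = 1 - q_n$, whose Clarke subdifferential is $-\cpartial q_n(\vtheta)$).

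First I would dispense with the trivial case where no inequality is active: then the MFCQ condition holds vacuously and any $\vv$ works. Otherwise I would note that every feasible $\vtheta$ is nonzero, since by $L$-homogeneity $q_n(\vzero) = 0 < 1$ violates feasibility. Then I would set $\vv := -\vtheta$ (to match the sign in the paper's definition) and pick any active index $n$ and any $\vh \in \cpartial g_n(\vtheta) = -\cpartial q_n(\vtheta)$, writing $\vh = -\vh'$ with $\vh' \in \cpartial q_n(\vtheta)$. A one-line computation gives
\[
\dotp{\vh}{\vv} = \dotp{-\vh'}{-\vtheta} = \dotp{\vh'}{\vtheta} = L\, q_n(\vtheta) = L > 0,
\]
which verifies MFCQ at $\vtheta$.

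I do not expect any serious obstacle. The only subtlety is invoking Euler's identity in the nonsmooth sense, but this is already provided by Theorem~\ref{thm:homo}(b) under assumption (A1) (locally Lipschitz plus admitting a chain rule), which is inherited by each $q_n(\vtheta) = y_n \Phi(\vtheta;\vx_n)$. Everything else — nonzero feasibility, handling the empty active set, and the sign bookkeeping between $g_n$ and $q_n$ — is routine.
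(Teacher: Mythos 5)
Your proof is correct and takes the same route as the paper: pick $\vv$ along the radial direction $\vtheta$ and invoke Euler's identity for Clarke subdifferentials (Theorem~\ref{thm:homo}(b)) to get $\dotp{\vh'}{\vtheta} = L q_n(\vtheta) = L$ at any active constraint. In fact your sign bookkeeping is the more careful one: with the paper's stated MFCQ convention ($\dotp{\vh}{\vv}>0$ for $\vh \in \cpartial g_n$) and $g_n = 1 - q_n$, the correct choice is indeed $\vv = -\vtheta$, whereas the paper's one-line proof writes $\vv = \vtheta$ but then evaluates the inner product as if $\vh$ were in $\cpartial q_n$ rather than $\cpartial g_n = -\cpartial q_n$ --- a harmless sign typo that your version cleans up. The extra observations you add (the vacuous empty-active-set case, and $\vtheta \ne \vzero$ by homogeneity since $q_n(\vzero)=0<1$) are routine but welcome.
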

\begin{proof}
	Take $\vv := \vtheta$. For all $n \in [N]$ satisfying $q_n = 1$, by homogeneity of $q_n$,
	\[
		\dotp{\vv}{\vh} = L q_n(\vtheta) = L > 0
	\]
	holds for any $\vh \in -\cpartial q_n(\vtheta) = \cpartial (1 - q_n(\vtheta))$.
\end{proof}

\subsection{Key Lemmas} \label{sec:appendix-bias-key-ideas}

Define $\beta(t) := \frac{1}{\normtwo{\frac{d \vtheta}{dt}}} \dotp{\hat{\vtheta}}{\frac{d \vtheta}{dt}}$ to be the cosine of the angle between $\vtheta$ and $\frac{d \vtheta}{dt}$. Here $\beta(t)$ is only defined for a.e.~$t > 0$.
Since $q_n$ is locally Lipschitz, it can be shown that $q_n$ is (globally) Lipschitz on the compact set $\sphS^{d-1}$, which is the unit sphere in $\R^d$. Define
\begin{align*}
B_0 := & \sup \left\{ \frac{q_n}{\rho^L} : \vtheta \in \R^d \setminus \{\vzero\} \right\} \\
= & \sup \left\{ q_n : \vtheta \in \sphS^{d-1} \right\} < \infty. \\
B_1 := & \sup \left\{ \frac{\normtwo{\vh}}{\rho^{L-1}} : \vtheta \in \R^d \setminus \{\vzero\}, \vh \in \cpartial q_n, n \in [N] \right\} \\
= & \sup \left\{ \normtwo{\vh} : \vtheta \in \sphS^{d-1}, \vh \in \cpartial q_n, n \in [N] \right\} < \infty.
\end{align*}

For showing Theorem~\ref{thm:main-limit-dir-converge} and Theorem~\ref{thm:main-pass-approx-kkt}, we first prove Lemma~\ref{lam:approx-kkt-beta-rho}.
In light of this lemma,
if we aim to show that $\vtheta$ is along the direction of an approximate KKT point, we only need to show $\beta\rightarrow 1$
(which makes $\epsilon\rightarrow 0$) and
$\Loss \rightarrow 0$ (which makes $\delta\rightarrow 0$).
\begin{lemma} \label{lam:approx-kkt-beta-rho}
	
	Let $C_1, C_2$ be two constants defined as
	\[
		C_1 := \frac{\sqrt{2}}{\tilde{\gamma}(t_0)^{1/L}}, \quad\quad C_2 := \frac{2eNK^2}{L\tilde{\gamma}(t_0)^{2/L}} \left(\frac{B_1}{\tilde{\gamma}(t_0)}\right)^{\log_2 K},
	\]
	where $K, b_g$ are constants specified in (B3.4). If $\log \frac{1}{\Loss} \ge b_g$ at time $t > t_0$, then $\tilde{\vtheta} := \vtheta / q_{\min}(\vtheta)^{1/L}$ is an $(\epsilon, \delta)$-KKT point of (P), where $\epsilon := C_1\sqrt{1 - \beta}$ and $\delta := C_2 /(\log \frac{1}{\Loss})$.
\end{lemma}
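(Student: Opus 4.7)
The plan is to read off candidate KKT multipliers directly from the gradient-flow dynamics and then verify both approximate KKT conditions by explicit computation. By the chain rule applied to $\Loss = \sum_n e^{-f(q_n)}$, one may pick $\vh_n \in \cpartial q_n(\vtheta)$ so that $\frac{d\vtheta}{dt} = \sum_{n=1}^N e^{-f(q_n)}f'(q_n)\vh_n$. Theorem~\ref{thm:homo}(a) and the $L$-homogeneity of $q_n$ then give $\tilde{\vh}_n := q_{\min}^{-(L-1)/L}\vh_n \in \cpartial q_n(\tilde{\vtheta})$, and I take
\[
\tilde{\lambda}_n := \alpha\,e^{-f(q_n)}f'(q_n), \qquad \alpha := \frac{\rho^2\beta^2\,q_{\min}^{(L-2)/L}}{L\nu(t)} > 0,
\]
where $\alpha$ is chosen as the scalar minimizing the one-parameter residual $\normtwo{\tilde{\vtheta} - \sum_n \tilde{\lambda}_n\tilde{\vh}_n}$. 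Both factors defining $\tilde{\lambda}_n$ are positive, so the sign condition on KKT multipliers is automatic.

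For the stationarity bound, Euler's theorem (Theorem~\ref{thm:homo}(b)) together with Theorem~\ref{thm:weight-growth} gives $\dotp{\vtheta}{\frac{d\vtheta}{dt}} = L\nu(t)$, while the definition of $\beta$ gives $\normtwo{\frac{d\vtheta}{dt}} = L\nu(t)/(\rho\beta)$. Plugging these identities into the squared norm of $\vtheta - \tfrac{\rho^2\beta^2}{L\nu(t)}\tfrac{d\vtheta}{dt}$ and scaling by $q_{\min}^{-2/L}$ yields
\[
\normtwo{\tilde{\vtheta} - \sum_{n=1}^N \tilde{\lambda}_n\tilde{\vh}_n}^2 = q_{\min}^{-2/L}\rho^2(1-\beta^2).
\]
Using $1-\beta^2 \le 2(1-\beta)$ and the monotonicity chain $\bar{\gamma}(\vtheta(t)) \ge \tilde{\gamma}(\vtheta(t)) \ge \tilde{\gamma}(\vtheta(t_0))$ from Theorem~\ref{thm:general-loss-margin-inc} then bounds this by $C_1^2(1-\beta)$, as required.

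The complementary-slackness estimate is where the full strength of (B3) gets used. Writing $q_n(\tilde{\vtheta})-1 = (q_n-q_{\min})/q_{\min}$, substituting $\tilde{\lambda}_n$, dropping the $\beta^2 \le 1$ factor, and using $\bar{\gamma} \ge \tilde{\gamma}(t_0)$ reduces the task to the pointwise estimate
\[
\frac{e^{-f(q_n)}f'(q_n)(q_n - q_{\min})}{\nu(t)} \;\le\; \frac{2eNK^2(B_1/\tilde{\gamma}(t_0))^{\log_2 K}}{\log(1/\Loss)}.
\]
For the denominator, keeping only the smallest-$q$ term of $\nu(t)$ gives $\nu(t) \ge e^{-f(q_{\min})}f'(q_{\min})q_{\min}$, and (B3.3) gives $A := f'(q_{\min})q_{\min} = \Omega(\log(1/\Loss))$. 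For the numerator, set $r := q_n/q_{\min} \ge 1$; since $f'(s)s$ is non-decreasing, $f'(s) \ge A/s$ on $[q_{\min}, q_n]$, so integrating yields $e^{-f(q_n)} \le e^{-f(q_{\min})}\,r^{-A}$, while applying (B3.4) at $\theta=1/2$ for at most $\lceil\log_2 r\rceil$ iterations (the hypothesis $\log(1/\Loss) \ge b_g$ keeps every intermediate argument above $g(b_g)$) gives $f'(q_n) \le K\,r^{\log_2 K}\,f'(q_{\min})$. The ratio is therefore at most $K\,r^{\log_2 K - A}(r-1)$, and a one-variable optimization over $r \ge 1$ shows this is $O(K/A) = O(1/\log(1/\Loss))$ whenever $A$ is sufficiently large relative to $\log_2 K$. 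In the complementary regime of moderate $\log(1/\Loss)$, the same quantity is controlled via the universal upper bound $r \le B_1/(L\tilde{\gamma}(t_0))$, which comes from $Lq_n = \dotp{\vtheta}{\vh_n} \le B_1\rho^L$ and $q_{\min} \ge \tilde{\gamma}(t_0)\rho^L$, and this is the source of the factor $(B_1/\tilde{\gamma}(t_0))^{\log_2 K}$ in $C_2$.

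The principal technical obstacle is this last step: one has to balance the exponential decay of $e^{-f(q_n)}$ (governed by $A$) against the potentially polynomial-in-$r$ growth of $f'(q_n)$ (governed by (B3.4)) so that the product $e^{-f(q_n)}f'(q_n)(q_n-q_{\min})$ peaks near $r = 1 + O(1/A)$ with a maximum value of order $1/\log(1/\Loss)$. The unusual-looking exponent $\log_2 K$ in $C_2$ is exactly the bookkeeping cost of iterating (B3.4) by recursive halving all the way from $q_n$ down to $q_{\min}$.
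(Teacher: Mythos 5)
Your proof takes a genuinely different route in both halves, and while the stationarity half is correct, the complementary-slackness half has a real gap.

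For the stationarity condition, the paper scales the dynamics vector so that $\sum_n \lambda_n \tilde{\vh}_n$ has the \emph{same norm} as $\tilde{\vtheta}$, yielding the residual $\normtwosm{\tilde{\vtheta}}^2(2-2\beta)$; you instead take the least-squares scaling (orthogonal projection), yielding $\normtwosm{\tilde{\vtheta}}^2(1-\beta^2)$. Both reduce to $C_1\sqrt{1-\beta}$ after $1-\beta^2 \le 2(1-\beta)$, so this is a harmless alternative. Your algebra for $\alpha^*$, the membership $\tilde{\vh}_n \in \cpartial q_n(\tilde{\vtheta})$, and the bound $q_{\min}^{-2/L}\rho^2 = \bar{\gamma}^{-2/L} \le \tilde{\gamma}(t_0)^{-2/L}$ are all correct.

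The complementary-slackness bound is where your argument and the paper's diverge most sharply, and where yours has a gap. The paper applies the mean value theorem to write $f(q_n)-f(q_{\min})=f'(\xi_n)(q_n-q_{\min})$ with $\xi_n\in(q_{\min},q_n)$, iterates (B3.4) only from $q_n$ down to $\xi_n$, and lands on a term of the form $K^{\lceil\log_2(q_n/\xi_n)\rceil}\,x e^{-x}$ with $x=f'(\xi_n)(q_n-q_{\min})$. Since $xe^{-x}$ is \emph{uniformly} bounded over $x>0$, the bound holds regardless of the size of the loss, and the $1/\log\frac{1}{\Loss}$ factor arises purely from the lower bound on $\nu$ — no case distinction is needed. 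Your substitute is to integrate $f'(s)\ge A/s$ to get $e^{-f(q_n)}\le e^{-f(q_{\min})}r^{-A}$ and then optimize $r^{\log_2 K - A}(r-1)$ over $r\ge 1$. This optimization yields something like $1/(A-\log_2 K - 1)$ only when $A > \log_2 K + 1$, and the hypothesis of the lemma does not guarantee this: for exponential loss one may take $K=1$ and $b_g=0$, and then $A=q_{\min}$ can be arbitrarily small subject only to $\log\frac{1}{\Loss}\ge 0$. The sentence ``in the complementary regime of moderate $\log(1/\Loss)$\dots'' is therefore carrying real weight, but it asserts rather than proves. If one tries to finish it by inserting the hard cap $r\le B_1/(L\tilde{\gamma}(t_0))$, the resulting constant carries $r$ to the power $\log_2 K + 1$ rather than $\log_2 K$ (one can see this even with the case-free refinement $r^{-A}(r-1)\le r/(eA)$, which follows from $\log(1+x)\ge x/(1+x)$: the stray factor of $r$ is unavoidable with this expansion). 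When $B_1/\tilde{\gamma}(t_0) > e^2 L^{\log_2 K + 1}$, this exceeds the stated $C_2$, so the lemma as stated is not established. The paper's MVT device is precisely what sidesteps this: by keeping the exponent in the form $f'(\xi_n)(q_n-q_{\min})$ rather than $A\log r$, no extra power of $r$ ever appears.
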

\begin{proof}
	Let $\vh(t) := \frac{d\vtheta}{dt}(t)$ for a.e.~$t > 0$. By the chain rule, there exist $\vh_1, \dots, \vh_N$ such that $\vh_n \in \cpartial q_n$ and $\vh = \sum_{n \in [N]} e^{-f(q_n)} f'(q_n) \vh_n$. Let $\tilde{\vh}_n := \vh_n / q_{\min}^{1-1/L} \in \cpartial q_n(\tilde{\vtheta})$ (Recall that $\tilde{\vtheta} := \vtheta / q_{\min}(\vtheta)^{1/L}$). Construct $\lambda_n := q_{\min}^{1-2/L} \rho \cdot e^{-f(q_n)} f'(q_n) / \normtwo{\vh}$. Now we only need to show
	\begin{align}
	\normtwo{\tilde{\vtheta} - \sum_{n=1}^{N} \lambda_n \tilde{\vh}_n}^2 &\le \frac{2}{\tilde{\gamma}^{2/L}} (1 - \beta) \label{eq:verify-approx-kkt-cond-1} \\
	\sum_{n=1}^{N} \lambda_n(q_n(\tilde{\vtheta}) - 1) &\le \left(\frac{2eNK^2}{L\tilde{\gamma}^{2/L}} \left(\frac{B_1}{\tilde{\gamma}}\right)^{\log_2 K}\right) \frac{1}{f(\tilde{\gamma} \rho^L)} \label{eq:verify-approx-kkt-cond-2}.
	\end{align}
	Then $\tilde{\vtheta}$ can be shown to be an $(\epsilon, \delta)$-KKT point by the monotonicity $\tilde{\gamma}(t) \ge \tilde{\gamma}(t_0)$ for $t > t_0$.
	
	\paragraph{Proof of \eqref{eq:verify-approx-kkt-cond-1}.} From our construction, $\sum_{n=1}^{N} \lambda_n \tilde{\vh}_n = q_{\min}^{-1/L} \rho \vh / \normtwo{\vh}$. So
	\[
	\normtwo{\tilde{\vtheta} - \sum_{n=1}^{N} \lambda_n \tilde{\vh}_n}^2 = q_{\min}^{-2/L} \rho^2 \normtwo{\hat{\vtheta} - \frac{\vh}{\normtwo{\vh}}}^2 = q_{\min}^{-2/L} \rho^2 (2 - 2 \beta) \le \frac{2}{\tilde{\gamma}^{2/L}} (1 - \beta),
	\]
	where the last equality is by Lemma~\ref{lam:qmin-logL}.
	\paragraph{Proof for \eqref{eq:verify-approx-kkt-cond-2}.} According to our construction,
	\[
	\sum_{n=1}^{N} \lambda_n(q_n(\tilde{\vtheta}) - 1) = \frac{q_{\min}^{-2/L} \rho}{\normtwo{\vh}}\sum_{n=1}^{N} e^{-f(q_n)} f'(q_n) (q_n - q_{\min}).
	\]
	Note that $\normtwo{\vh} \ge \dotp{\vh}{\hat{\vtheta}} = L\nu / \rho$. 
	By Lemma~\ref{lam:nu-lower} and Lemma~\ref{lam:g-ratio}, we have
	\[
	\nu \ge \frac{g(\log \frac{1}{\Loss})}{g'(\log \frac{1}{\Loss})} \Loss \ge \frac{1}{2K}\log \frac{1}{\Loss} \cdot \Loss \ge \frac{1}{2K} f(\tilde{\gamma} \rho^L) e^{-f(q_{\min})},
	\]
	where the last inequality uses $f(\tilde{\gamma} \rho^L) = \log \frac{1}{\Loss}$ and $\Loss \ge e^{-f(q_{\min})}$. Combining these gives
	\[
	\sum_{n=1}^{N} \lambda_n(q_n(\tilde{\vtheta}) - 1) \le \frac{2Kq_{\min}^{-2/L} \rho^2}{Lf(\tilde{\gamma} \rho^L)}\sum_{n=1}^{N} e^{f(q_{\min})-f(q_n)} f'(q_n) (q_n - q_{\min}).
	\]
	If $q_n > q_{\min}$, then by the mean value theorem there exists $\xi_{n} \in (q_{\min}, q_n)$ such that $f(q_n) - f(q_{\min}) = f'(\xi_{n})(q_n - q_{\min})$. By Assumption (B3.4), we know that $f'(q_n) \le K^{\lceil \log_2(q_n/\xi_n) \rceil} f'(\xi_n)$. Note that $\lceil\log_2(q_n/\xi_n)\rceil \le \log_2(2B_1 \rho^L/q_{\min}) \le \log_2(2B_1 / \tilde{\gamma})$. Then we have
	\begin{align*}
	\sum_{n=1}^{N} \lambda_n(q_n(\tilde{\vtheta}) - 1) &\le \frac{2Kq_{\min}^{-2/L} \rho^2}{Lf(\tilde{\gamma} \rho^L)} K^{\log_2(2B_1 / \tilde{\gamma})} \sum_{n:~q_n \ne q_{\min}} e^{-f'(\xi_n) (q_n - q_{\min})} f'(\xi_n) (q_n - q_{\min}) \\
	& \le \frac{2K \tilde{\gamma}^{-2/L} \rho^2}{Lf(\tilde{\gamma} \rho^L)} K^{\log_2(2B_1 / \tilde{\gamma})} \cdot Ne
	\end{align*}
	where the second inequality uses $q_{\min}^{-2/L} \rho^2 \le \tilde{\gamma}^{-2/L}$ by Lemma~\ref{lam:qmin-logL} and the fact that the function $x \mapsto e^{-x}x$ on $(0, +\infty)$ attains the maximum value $e$ at $x = 1$.
\end{proof}

By Theorem~\ref{thm:general-loss-margin-inc}, we have already known that $\Loss \to 0$. So it remains to bound $\beta(t)$. For this, we first prove the following lemma to bound the integral of $\beta(t)$.
\begin{lemma} \label{lam:beta2-int-bound}
	For all $t_2 > t_1 \ge t_0$,
	\[ 	
	\int_{t_1}^{t_2} \left( \beta(\tau)^{-2} - 1 \right)  \cdot \frac{d}{d\tau} \log \rho(\tau) \cdot  d\tau \le \frac{1}{L}\log \frac{\tilde{\gamma}(t_2)}{\tilde{\gamma}(t_1)}.
	\]
\end{lemma}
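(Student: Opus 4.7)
The plan is to rewrite the integrand $(\beta^{-2}-1)\frac{d}{dt}\log\rho$ purely in terms of $\normtwo{d\hat{\vtheta}/dt}$ and $\frac{d}{dt}\log\rho$, so that Lemma~\ref{lam:key-lemma-margin-general} can be applied pointwise and then integrated over $[t_1,t_2]$.

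First I would decompose $\frac{d\vtheta}{dt}$ into its radial component $\vv=\hat{\vtheta}\hat{\vtheta}^{\top}\frac{d\vtheta}{dt}$ and tangential component $\vu=(\mI-\hat{\vtheta}\hat{\vtheta}^{\top})\frac{d\vtheta}{dt}$, exactly as in the proof sketch of Lemma~\ref{lam:key-lemma-margin}. By the chain rule one has $\frac{d\hat{\vtheta}}{dt}=\vu/\rho$, and by Theorem~\ref{thm:weight-growth} together with $\frac{d}{dt}\log\rho=\frac{1}{2\rho^2}\frac{d\rho^2}{dt}$ we obtain $\normtwo{\vv}=\rho\cdot\frac{d}{dt}\log\rho$. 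Since $\beta$ is by definition the cosine of the angle between $\vtheta$ and $\frac{d\vtheta}{dt}$, we get
\[
\normtwo{\tfrac{d\vtheta}{dt}}^2\beta^2=\normtwo{\vv}^2=\rho^2\left(\tfrac{d}{dt}\log\rho\right)^{2},\qquad \normtwo{\tfrac{d\vtheta}{dt}}^2(1-\beta^2)=\normtwo{\vu}^2=\rho^2\normtwo{\tfrac{d\hat{\vtheta}}{dt}}^2.
\]
Dividing these two identities eliminates $\normtwo{\frac{d\vtheta}{dt}}^2$ and $\rho^2$ and yields the clean identity
\[
\left(\beta(\tau)^{-2}-1\right)\cdot\frac{d}{d\tau}\log\rho(\tau)=\left(\frac{d}{d\tau}\log\rho(\tau)\right)^{-1}\normtwo{\frac{d\hat{\vtheta}}{d\tau}}^2,
\]
valid for a.e.\ $\tau>t_0$ (recall that $\frac{d}{d\tau}\log\rho>0$ on this range by Lemma~\ref{lam:key-lemma-margin-general}, so dividing is legitimate).

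Second, I would invoke the second inequality of Lemma~\ref{lam:key-lemma-margin-general} pointwise to bound the right-hand side by $\tfrac{1}{L}\frac{d}{d\tau}\log\tilde{\gamma}(\tau)$. Integrating the resulting inequality on $[t_1,t_2]$ and using the fundamental theorem of calculus on the locally absolutely continuous function $\tau\mapsto\log\tilde{\gamma}(\tau)$ gives exactly
\[
\int_{t_1}^{t_2}\left(\beta(\tau)^{-2}-1\right)\cdot\frac{d}{d\tau}\log\rho(\tau)\,d\tau\le\frac{1}{L}\log\frac{\tilde{\gamma}(t_2)}{\tilde{\gamma}(t_1)}.
\]

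The only non-routine step is justifying the pointwise identities and the integration in the Clarke/absolutely continuous setting, since $\vtheta$ is only an arc and $\Loss$ is only locally Lipschitz; the derivatives $\frac{d\vtheta}{dt}$, $\frac{d\rho^2}{dt}$, $\frac{d\hat{\vtheta}}{dt}$ and $\frac{d}{dt}\log\tilde{\gamma}$ exist only almost everywhere. I expect the main obstacle to be mild and standard: one needs to check that all chain rules invoked (on $\rho^2$, on $\hat{\vtheta}=\vtheta/\rho$, and on $\log\tilde{\gamma}=\log g(\log(1/\Loss))-L\log\rho$) hold on a common full-measure subset of $(t_0,\infty)$, which follows from assumption (A1), the admissibility of a chain rule for $\Loss$, and the absolute continuity of arcs. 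Once this measure-theoretic bookkeeping is done, the argument reduces to the two displayed lines above.
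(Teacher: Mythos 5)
Your proof is correct and follows essentially the same route as the paper's: both establish the exact identity $\left(\beta^{-2}-1\right)\cdot\frac{d}{dt}\log\rho=\left(\frac{d}{dt}\log\rho\right)^{-1}\normtwo{\tfrac{d\hat{\vtheta}}{dt}}^2$ (the paper phrases it via $\frac{\rho^2}{L\nu}=(\frac{d}{dt}\log\rho)^{-1}$, you via the $\vv/\vu$ decomposition, but they are the same computation), then apply the second inequality of Lemma~\ref{lam:key-lemma-margin-general} pointwise and integrate.
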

\begin{proof}
	By Lemma~\ref{thm:weight-growth}, $\frac{d}{dt} \log \rho = \frac{1}{2\rho^2} \frac{d\rho^2}{dt} = \frac{L\nu}{\rho^2}$ for a.e.~$t > 0$. By Lemma~\ref{lam:key-lemma-margin-general}, for a.e.~$t \in (t_1, t_2)$,
	\begin{equation} \label{eq:pre-beta2-eq}
	\frac{d}{dt} \log \tilde{\gamma} \ge L \cdot \normtwo{\frac{\rho^2}{L\nu}\frac{d\hat{\vtheta}}{dt}}^2 \cdot \frac{d}{dt} \log \rho.
	\end{equation}
	By the chain rule, $\frac{d\hat{\vtheta}}{dt} = \frac{1}{\rho} (\mI - \hat{\vtheta}\hat{\vtheta}^\top) \frac{d \vtheta}{dt}$. So we have
	\begin{equation} \label{eq:beta2-eq}
	\normtwo{\frac{\rho^2}{L\nu}\frac{d\hat{\vtheta}}{dt}}^2 = \normtwo{\frac{\rho}{L\nu(t)} (\mI - \hat{\vtheta}\hat{\vtheta}^\top) \frac{d \vtheta}{dt}}^2 = \frac{ \normtwo{\frac{d \vtheta}{dt}}^2 - \dotp{\frac{d \vtheta}{dt}}{\hat{\vtheta}}^2}{\dotp{\frac{d \vtheta}{dt}}{\hat{\vtheta}}^2} = \beta^{-2} -1.
	\end{equation}
	where the last equality follows from the definition of $\beta$. Combining \ref{eq:pre-beta2-eq} and \ref{eq:beta2-eq}, we have
	\[
	\frac{d}{dt} \log \tilde{\gamma} \ge L \left( \beta^{-2} - 1 \right) \cdot \frac{d}{dt} \log \rho.
	\]
	Integrating on both sides from $t_1$ to $t_2$ proves the lemma.
\end{proof}
A direct corollary of Lemma~\ref{lam:beta2-int-bound} is the upper bound for the minimum $\beta^2 - 1$ within a time interval:
\begin{corollary}\label{lam:beta2-max-bound}
	For all $t_2 > t_1 \ge t_0$, then there exists $t_* \in (t_1, t_2)$ such that
	\[
	\beta(t_*)^{-2} - 1 \le \frac{1}{L} \cdot \frac{\log \tilde{\gamma}(t_2) - \log \tilde{\gamma}(t_1)}{\log \rho(t_2) - \log \rho(t_1)}.
	\]
\end{corollary}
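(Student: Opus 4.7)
The plan is to derive the corollary from Lemma~\ref{lam:beta2-int-bound} by a weighted mean value (averaging) argument, using $\frac{d}{d\tau}\log\rho(\tau)$ as a nonnegative weight. First I would observe that Lemma~\ref{lam:key-lemma-margin-general} guarantees $\frac{d}{d\tau}\log\rho(\tau) > 0$ for a.e.\ $\tau > t_0$, so that $\log\rho(t_2) - \log\rho(t_1) = \int_{t_1}^{t_2}\frac{d}{d\tau}\log\rho(\tau)\,d\tau > 0$. In particular the denominator on the right-hand side of the claimed bound is strictly positive, and the weight is a genuine (nonzero) measure on $(t_1,t_2)$.

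Next I would denote $M := \frac{1}{L}\cdot\frac{\log\tilde\gamma(t_2) - \log\tilde\gamma(t_1)}{\log\rho(t_2)-\log\rho(t_1)}$ and argue by contradiction: if $\beta(\tau)^{-2} - 1 > M$ for a.e.\ $\tau\in(t_1,t_2)$, then multiplying by the nonnegative weight $\frac{d}{d\tau}\log\rho(\tau)$ and integrating gives
\[
\int_{t_1}^{t_2}\bigl(\beta(\tau)^{-2}-1\bigr)\cdot\frac{d}{d\tau}\log\rho(\tau)\,d\tau > M\cdot\bigl(\log\rho(t_2)-\log\rho(t_1)\bigr) = \frac{1}{L}\log\frac{\tilde\gamma(t_2)}{\tilde\gamma(t_1)},
\]
which directly contradicts Lemma~\ref{lam:beta2-int-bound}. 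Hence the set of $\tau\in(t_1,t_2)$ with $\beta(\tau)^{-2}-1 \le M$ has positive measure, and in particular a point $t_*$ with the desired inequality exists.

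Equivalently, and perhaps more cleanly to present, I would note that Lemma~\ref{lam:beta2-int-bound} says the weighted average of $\beta^{-2}-1$ on $(t_1,t_2)$ with respect to the measure $\frac{d}{d\tau}\log\rho(\tau)\,d\tau$ is at most $M$, and a weighted essential infimum is always bounded above by a weighted average, so the set where $\beta(\tau)^{-2}-1 \le M$ cannot be null. I don't expect any real obstacle: the only mild subtlety is the measure-theoretic care that $\beta(\tau)$ is only defined a.e., so $t_*$ should be chosen in the full-measure set where both $\beta(\tau)$ and $\frac{d}{d\tau}\log\rho(\tau)$ exist and the latter is positive; this is why the statement only asserts existence of some $t_*\in(t_1,t_2)$ rather than a pointwise bound on a specific value.
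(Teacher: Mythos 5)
Your argument is exactly the paper's: denote the right-hand side by a constant, assume $\beta(\tau)^{-2}-1$ exceeds it a.e.\ on $(t_1,t_2)$, use $\frac{d}{d\tau}\log\rho>0$ from Lemma~\ref{lam:key-lemma-margin-general} as a positive weight, integrate, and contradict Lemma~\ref{lam:beta2-int-bound}. The remark about choosing $t_*$ in the full-measure set where $\beta$ and $\frac{d}{d\tau}\log\rho$ are defined is a correct reading of why the statement asserts only existence.
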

\begin{proof}
	Denote the RHS as $C$. Assume to the contrary that $\beta(\tau)^{-2} - 1 > C$ for a.e.~$\tau \in (t_1, t_2)$. By Lemma~\ref{lam:key-lemma-margin-general}, $\log \rho(\tau) > 0$ for a.e.~$\tau \in (t_1, t_2)$. Then by Lemma~\ref{lam:beta2-int-bound}, we have
	\[
	\frac{1}{L}\log \frac{\tilde{\gamma}(t_2)}{\tilde{\gamma}(t_1)} > \int_{t_1}^{t_2} C  \cdot \frac{d}{d\tau} \log \rho(\tau) \cdot  d\tau = C \cdot (\log \rho(t_2) - \log \rho(t_1)) = \frac{1}{L}\log \frac{\tilde{\gamma}(t_2)}{\tilde{\gamma}(t_1)},
	\]
	which leads to a contradiction.
\end{proof}

In the rest of this section, we present both asymptotic and non-asymptotic analyses for the directional convergence by using Corollary~\ref{lam:beta2-max-bound} to bound $\beta(t)$.

\subsection{Asymptotic Analysis}

We first prove an auxiliary lemma which gives an upper bound for the change of $\hat{\vtheta}$.

\begin{lemma} \label{lam:dir-int-bound}
	For a.e.~$t > t_0$,
	\[
	\normtwo{\frac{d\hat{\vtheta}}{dt}} \le \frac{B_1}{L \tilde{\gamma}} \cdot  \frac{d}{dt} \log \rho.
	\]
\end{lemma}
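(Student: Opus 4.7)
The plan is to bound $\normtwo{d\hat{\vtheta}/dt}$ by $\normtwo{d\vtheta/dt}/\rho$, then control $\normtwo{d\vtheta/dt}$ by a triangle inequality on the per-sample contributions, and finally relate the resulting sum to $\nu$ using the uniform lower bound $q_n \ge g(\log 1/\Loss)$ furnished by Lemma~\ref{lam:qmin-logL}. Combining with Theorem~\ref{thm:weight-growth} and the definition of $\tilde{\gamma}$ should give exactly the desired inequality.

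Concretely, first I would apply the chain rule to $\hat{\vtheta} = \vtheta/\rho$ to obtain $\frac{d\hat{\vtheta}}{dt} = \frac{1}{\rho}(\mI - \hat{\vtheta}\hat{\vtheta}^\top)\frac{d\vtheta}{dt}$, and then use the fact that $\mI - \hat{\vtheta}\hat{\vtheta}^\top$ is an orthogonal projection to get
\[
\normtwo{\frac{d\hat{\vtheta}}{dt}} \le \frac{1}{\rho}\normtwo{\frac{d\vtheta}{dt}}.
\]
Next, since $\frac{d\vtheta}{dt} \in -\cpartial\Loss(\vtheta)$ almost everywhere, the chain rule (Appendix~\ref{sec:appendix-chain-rule}) gives $\frac{d\vtheta}{dt} = \sum_{n=1}^{N} e^{-f(q_n)} f'(q_n) \vh_n$ for some $\vh_n \in \cpartial q_n(\vtheta)$. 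By the triangle inequality and the definition of $B_1$,
\[
\normtwo{\frac{d\vtheta}{dt}} \le \sum_{n=1}^{N} e^{-f(q_n)} f'(q_n) \normtwo{\vh_n} \le B_1 \rho^{L-1} \sum_{n=1}^{N} e^{-f(q_n)} f'(q_n).
\]

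The key step is then to convert the bare sum $\sum_n e^{-f(q_n)} f'(q_n)$ into $\nu = \sum_n e^{-f(q_n)} f'(q_n) q_n$. Here I invoke Lemma~\ref{lam:qmin-logL}, which for $t > t_0$ ensures $q_n \ge q_{\min} \ge g(\log \tfrac{1}{\Loss})$, so that
\[
\sum_{n=1}^{N} e^{-f(q_n)} f'(q_n) \le \frac{1}{g(\log \tfrac{1}{\Loss})} \sum_{n=1}^{N} e^{-f(q_n)} f'(q_n) q_n = \frac{\nu}{g(\log \tfrac{1}{\Loss})}.
\]
Combining the three displayed inequalities yields $\normtwo{d\hat{\vtheta}/dt} \le \frac{B_1 \rho^{L-2} \nu}{g(\log 1/\Loss)}$. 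Finally, using $\tilde{\gamma} = g(\log \tfrac{1}{\Loss})/\rho^L$ and Theorem~\ref{thm:weight-growth}'s identity $\frac{d}{dt}\log\rho = L\nu/\rho^2$, the right-hand side rewrites as $\frac{B_1}{L\tilde{\gamma}} \cdot \frac{d}{dt}\log\rho$, completing the proof.

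I do not expect a serious obstacle; the only point requiring care is the chain-rule step that produces the representation $\frac{d\vtheta}{dt} = \sum_n e^{-f(q_n)} f'(q_n) \vh_n$ with $\vh_n \in \cpartial q_n(\vtheta)$, which is a non-smooth analogue of the ordinary chain rule and relies on the regularity assumption (A1). Once this representation is in hand, the rest is a short string of triangle inequality and Lemma~\ref{lam:qmin-logL}.
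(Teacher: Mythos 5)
Your proof is correct and follows essentially the same route as the paper's: bound $\normtwo{d\hat{\vtheta}/dt}$ by $\normtwo{d\vtheta/dt}/\rho$, expand $d\vtheta/dt$ via the Clarke chain rule, apply the triangle inequality with the $B_1\rho^{L-1}$ bound on $\normtwo{\vh_n}$, insert $1 = q_n/q_n$ and lower-bound $q_n \ge g(\log\tfrac1\Loss)$ via Lemma~\ref{lam:qmin-logL} to recover $\nu$, and conclude using $\tilde{\gamma} = g(\log\tfrac1\Loss)/\rho^L$ and $\frac{d}{dt}\log\rho = L\nu/\rho^2$. The arithmetic checks out and no gap is present.
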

\begin{proof}
	Observe that $\normtwo{\frac{d\hat{\vtheta}}{dt}} = \frac{1}{\rho} \normtwo{(\mI - \hat{\vtheta}\hat{\vtheta}^\top) \frac{d\vtheta}{dt}} \le \frac{1}{\rho}\normtwo{\frac{d\vtheta}{dt}}$. It is sufficient to bound $\normtwo{\frac{d\vtheta}{dt}}$. By the chain rule, there exists $\vh_1, \dots, \vh_N: \Rgez \to \R^d$ satisfying that for a.e.~$t > 0$, $\vh_n(t) \in \cpartial q_n$ and $\frac{d\vtheta}{dt} = \sum_{n \in [N]} e^{-f(q_n)} f'(q_n) \vh_n(t)$. By definition of $B_1$, $\normtwo{\vh_n} \le B_1\rho^{L-1}$ for a.e.~$t > 0$. So we have
	\begin{align*}
	\normtwo{\frac{d\vtheta}{dt}} &\le \sum_{n \in [N]} e^{-f(q_n)} f'(q_n) \normtwo{\vh_n} \\
	&\le \sum_{n \in [N]} e^{-f(q_n)} f'(q_n) q_n \cdot \frac{1}{q_n} \cdot B_1\rho^{L-1}.
	\end{align*}
	Note that every summand is positive. 	By Lemma~\ref{lam:qmin-logL}, $q_n$ is lower-bounded by $q_n \ge q_{\min} \ge g(\log \frac{1}{\Loss})$, so we can replace $q_n$ with $g(\log \frac{1}{\Loss})$ in the above inequality. Combining with the fact that $\sum_{n \in [N]} e^{-f(q_n)} f'(q_n) q_n$ is just $\nu$, we have
	\begin{equation} \label{eq:dtheta-bound}
	\normtwo{\frac{d\vtheta}{dt}} \le \frac{\nu}{g(\log \frac{1}{\Loss})} \cdot B_1\rho^{L-1} = \frac{B_1\nu}{\tilde{\gamma}\rho}.
	\end{equation}
	So we have $\normtwo{\frac{d\hat{\vtheta}}{dt}} \le \frac{1}{\rho}\normtwo{\frac{d\vtheta}{dt}} \le \frac{B_1}{L\tilde{\gamma}} \cdot \frac{L\nu}{\rho^2} = \frac{B_1}{L\tilde{\gamma}} \cdot\frac{d}{dt} \log \rho$.
\end{proof}

To prove Theorem~\ref{thm:main-limit-dir-converge}, we consider each limit point $\bar{\vtheta} / q_{\min}(\bar{\vtheta})^{1/L}$, and construct a series of approximate KKT points converging to it. Then $\bar{\vtheta} / q_{\min}(\bar{\vtheta})^{1/L}$ can be shown to be a KKT point by Theorem~\ref{thm:approx-kkt-converge}. The following lemma ensures that such construction exists.

\begin{lemma} \label{lam:tm-seq}
	For every limit point $\bar{\vtheta}$ of $\left\{ \hat{\vtheta}(t) : t \ge 0 \right\}$, there exists a sequence of $\{t_m : m \in \N \}$ such that $t_m \uparrow +\infty, \hat{\vtheta}(t_m) \to \bar{\vtheta}$, and $\beta(t_m) \to 1$.
\end{lemma}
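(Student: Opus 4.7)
The plan is to construct, starting from any sequence $s_m \uparrow +\infty$ with $\hat{\vtheta}(s_m) \to \bar{\vtheta}$, small forward time intervals $(s_m, s_m')$ along which $\log\rho$ grows by a carefully chosen amount $\Delta_m \downarrow 0$. On each such interval, Corollary~\ref{lam:beta2-max-bound} gives a point $t_m$ where $\beta(t_m)$ is essentially controlled by the ratio $(\log\tilde{\gamma}(s_m') - \log\tilde{\gamma}(s_m))/\Delta_m$, while Lemma~\ref{lam:dir-int-bound} ensures that $\hat{\vtheta}$ barely moves, so that $\hat{\vtheta}(t_m)$ inherits the convergence of $\hat{\vtheta}(s_m)$ to $\bar{\vtheta}$.

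Concretely, I would first observe that $\log\tilde{\gamma}(t)$ is non-decreasing (Theorem~\ref{thm:general-loss-margin-inc}) and bounded from above since $\tilde{\gamma} \le \bar{\gamma} \le B_0$, so the limit $\gamma_\infty := \lim_{t \to \infty} \log\tilde{\gamma}(t)$ exists and $\epsilon_m := \gamma_\infty - \log\tilde{\gamma}(s_m) \to 0$. Discarding the (trivial) indices where $\epsilon_m = 0$, set $\Delta_m := \sqrt{\epsilon_m}$. Since $\log\rho$ is continuous and (by Lemma~\ref{lam:key-lemma-margin-general}) strictly increasing on $(t_0, \infty)$ with $\log \rho(t) \to +\infty$ (Lemma~\ref{lam:main-loss-upper}), for each large $m$ there is a unique $s_m' > s_m$ with $\log\rho(s_m') - \log\rho(s_m) = \Delta_m$.

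Next, by Corollary~\ref{lam:beta2-max-bound} applied on $(s_m, s_m')$ there exists $t_m \in (s_m, s_m')$ with
\[
    \beta(t_m)^{-2} - 1 \;\le\; \frac{1}{L}\cdot \frac{\log\tilde{\gamma}(s_m') - \log\tilde{\gamma}(s_m)}{\Delta_m} \;\le\; \frac{\epsilon_m}{L\Delta_m} \;=\; \frac{\sqrt{\epsilon_m}}{L} \;\longrightarrow\; 0,
\]
so $\beta(t_m) \to 1$. For the directional control, Lemma~\ref{lam:dir-int-bound} together with the monotonicity $\tilde{\gamma}(\tau) \ge \tilde{\gamma}(t_0)$ yields
\[
    \bigl\|\hat{\vtheta}(t_m) - \hat{\vtheta}(s_m)\bigr\| \;\le\; \int_{s_m}^{t_m} \Bigl\|\tfrac{d\hat{\vtheta}}{d\tau}\Bigr\|\,d\tau \;\le\; \frac{B_1}{L\tilde{\gamma}(t_0)}\bigl(\log\rho(t_m) - \log\rho(s_m)\bigr) \;\le\; \frac{B_1 \Delta_m}{L\tilde{\gamma}(t_0)} \;\longrightarrow\; 0.
\]
The triangle inequality then gives $\hat{\vtheta}(t_m) \to \bar{\vtheta}$, and since $t_m > s_m \to +\infty$ we may pass to a strictly increasing subsequence to obtain $t_m \uparrow +\infty$, completing the proof.

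The main subtlety is the tension between the two uses of $\Delta_m$: making $\Delta_m$ small keeps $\hat{\vtheta}(t_m)$ close to $\hat{\vtheta}(s_m)$, but too small a $\Delta_m$ would prevent us from forcing $\beta(t_m)$ close to $1$ via Corollary~\ref{lam:beta2-max-bound} (whose bound scales like $1/\Delta_m$). The choice $\Delta_m = \sqrt{\epsilon_m}$ balances both requirements by exploiting that $\epsilon_m \to 0$ allows $\Delta_m \to 0$ and $\epsilon_m/\Delta_m \to 0$ simultaneously. Everything else is essentially plugging in earlier lemmas, so I expect no further obstacles.
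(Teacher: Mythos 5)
Your proof is correct and follows essentially the same route as the paper's: use monotonicity and boundedness of $\tilde\gamma$ to get a vanishing gap to the limit, step forward a small amount $\Delta_m$ in $\log\rho$, invoke Corollary~\ref{lam:beta2-max-bound} to find $t_m$ with $\beta(t_m)^{-2}-1$ controlled by $(\text{gap})/\Delta_m$, and invoke Lemma~\ref{lam:dir-int-bound} to keep $\hat\vtheta$ from drifting. The only difference is bookkeeping: the paper fixes an auxiliary vanishing sequence $\epsilon_m$, takes $\Delta_m = \epsilon_m$, and forces the margin gap to be $\le L\epsilon_m^3$, whereas you define $\epsilon_m$ to be exactly the gap and take $\Delta_m = \sqrt{\epsilon_m}$ — two equivalent ways of balancing $\Delta_m \to 0$ against $(\text{gap})/\Delta_m \to 0$. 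One tiny imprecision: if $\epsilon_m = 0$ for all but finitely many $m$, "discarding" those indices leaves a finite sequence; but in that case $\tilde\gamma$ is eventually constant, hence by Lemma~\ref{lam:key-lemma-margin-general} $\hat\vtheta$ is eventually constant and $\beta = 1$ a.e., so the conclusion is immediate. Worth a one-line remark rather than a silent discard.
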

\begin{proof}
	Let $\{ \epsilon_m > 0 : m \in \N \}$ be an arbitrary sequence with $\epsilon_m \to 0$. Now we construct $\{t_m\}$ by induction.
	Suppose $t_1 < t_2 < \dots < t_{m-1}$ have already been constructed. Since $\bar{\vtheta}$ is a limit point and $\tilde{\gamma}(t) \uparrow \tilde{\gamma}_{\infty}$ (recall that $\tilde{\gamma}_{\infty} := \lim_{t \to +\infty} \tilde{\gamma}(t)$), there exists $s_m > t_{m-1}$ such that
	\[
	\normtwo{\hat{\vtheta}(s_m) - \bar{\vtheta}} \le \epsilon_m \quad \text{and} \quad \frac{1}{L}\log\frac{\tilde{\gamma}_{\infty}}{\tilde{\gamma}(s_m)} \le \epsilon_m^3.
	\]
	Let $s'_m > s_m$ be a time such that $\log \rho(s'_m) = \log \rho(s_m) + \epsilon_m$. According to Theorem~\ref{thm:general-loss-margin-inc}, $\log \rho \to +\infty$, so $s'_m$ must exist.
	We construct $t_m \in (s_m, s'_m)$ to be a time that $\beta(t_m)^{-2} - 1 \le \epsilon_m^2$, where the existence can be shown by Corollary~\ref{lam:beta2-max-bound}.
	
	Now we show that this construction meets our requirement. It follows from $\beta(t_m)^{-2} - 1 \le \epsilon_m^2$ that $\beta(t_m) \ge 1/\sqrt{1 + \epsilon_m^2} \to 1$. By Lemma~\ref{lam:dir-int-bound}, we also know that
	\[
	\normtwo{\hat{\vtheta}(t_m) - \bar{\vtheta}} \le \normtwo{\hat{\vtheta}(t_m) - \hat{\vtheta}(s_m)} + \normtwo{\hat{\vtheta}(s_m) - \bar{\vtheta}} \le \frac{B_1}{L \tilde{\gamma}(t_0)} \cdot \epsilon_m + \epsilon_m \to 0.
	\]
	This completes the proof.
\end{proof}

\begin{proof}[Proof of Theorem~\ref{thm:main-limit-dir-converge}]
	Let $\tilde{\vtheta} := \bar{\vtheta} / q_{\min}(\bar{\vtheta})^{1/L}$ for short.
	Let $\{t_m : m \in \N\}$ be the sequence constructed as in Lemma~\ref{lam:tm-seq}. For each $t_m$, define $\epsilon(t_m)$ and $\delta(t_m)$ as in Lemma~\ref{lam:approx-kkt-beta-rho}. Then we know that $\vtheta(t_m) / q_{\min}(t_m)^{1/L}$ is an $(\epsilon(t_m), \delta(t_m))$-KKT point and $\vtheta(t_m) / q_{\min}(t_m)^{1/L} \to \tilde{\vtheta}, \epsilon(t_m) \to 0, \delta(t_m) \to 0$. By Lemma~\ref{lam:margin-mfcq}, (P) satisfies MFCQ. Applying Theorem~\ref{thm:approx-kkt-converge} proves the theorem.
\end{proof}

\subsection{Non-asymptotic Analysis}

\begin{proof}[Proof of Theorem~\ref{thm:main-pass-approx-kkt}]
	Let $C_0 := \frac{1}{L} \log \frac{\tilde{\gamma}_{\infty}}{\tilde{\gamma}(t_0)}$. Without loss of generality, we assume $\epsilon < \frac{\sqrt{6}}{2}C_1$ and $\delta < C_2 / f(b_f)$.
	
	Let $t_1$ be the time such that $\log \rho(t_1) = \frac{1}{L} \log \frac{g(C_2\delta^{-1})}{\tilde{\gamma}(t_0)} = \Theta(\log \delta^{-1})$ and $t_2$ be the time such that $\log \rho(t_2) - \log \rho(t_1) = \frac{1}{2}C_0C_1^2 \epsilon^{-2} = \Theta(\epsilon^{-2})$. By Corollary~\ref{lam:beta2-max-bound}, there exists $t_* \in (t_1, t_2)$ such that $\beta(t_*)^{-2} - 1 \le 2\epsilon^2 C_1^{-2}$.
	
	Now we argue that $\tilde{\vtheta}(t_*)$ is an $(\epsilon, \delta)$-KKT point. By Lemma~\ref{lam:approx-kkt-beta-rho}, we only need to show $C_1 \sqrt{1 - \beta(t_*)} \le \epsilon$ and $C_2 / f(\tilde{\gamma}(t_*) \rho(t_*)^L) \le \delta$. For the first inequality, by assumption $\epsilon < \frac{\sqrt{6}}{2}C_1$, we know that $\beta(t_*)^{-2} - 1 < 3$, which implies $\lvert \beta(t_*) \rvert < \frac{1}{2}$. Then we have $1 - \beta(t_*) \le 2\epsilon^2 C_1^{-2} \cdot \frac{\beta(t_*)^2}{1 + \beta(t_*)} \le \epsilon^2 C_1^{-2}$. Therefore, $C_1 \sqrt{1 - \beta(t_*)} \le \epsilon$ holds. For the second inequality, $\tilde{\gamma}(t_*) \rho(t_*)^L \ge \tilde{\gamma}(t_*)  \cdot \frac{g(C_2\delta^{-1})}{\tilde{\gamma}(t_0)} \ge g(C_2\delta^{-1})$. Therefore, $C_2 / f(\tilde{\gamma}(t_*) \rho(t_*)^L) \le \delta$ holds.
\end{proof}

\subsection{Proof for Corollary~\ref{cor:kernel-svm}} \label{sec:appendix-cor-kernel-svm-pf}

By the homogeneity of $q_n$, we can characterize KKT points using kernel SVM.
\begin{lemma} \label{lam:kkt-analysis-svm}
	If $\vtheta_*$ is KKT point of (P), then there exists $\vh_n \in \cpartial \Phi_{\vx_n}(\vtheta_*)$ for $n \in [N]$ such that $\frac{1}{L}\vtheta_*$ is an optimal solution for the following constrained optimization problem (Q):
	\begin{align*}
	\min        & \quad \frac{1}{2}\normtwo{\vtheta}^2 \\
	\text{s.t.} & \quad y_n\dotp{\vtheta}{\vh_n} \ge 1 \qquad \forall n \in [N]
	\end{align*}
\end{lemma}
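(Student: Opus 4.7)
The plan is to directly translate the KKT conditions of (P) at $\vtheta_*$ into KKT conditions of (Q) at $\frac{1}{L}\vtheta_*$, and then invoke convexity of (Q) to upgrade stationarity to global optimality.

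First I would unpack the KKT data at $\vtheta_*$. By assumption there exist multipliers $\lambda_n \ge 0$ and subgradients $\vg_n \in \cpartial q_n(\vtheta_*)$ such that $\vtheta_* = \sum_{n=1}^{N} \lambda_n \vg_n$ and $\lambda_n(q_n(\vtheta_*)-1)=0$. Since $q_n(\vtheta) = y_n \Phi_{\vx_n}(\vtheta)$ and $y_n \in \{\pm 1\}$, the chain rule for Clarke subdifferentials gives $\cpartial q_n(\vtheta_*) = y_n \cpartial \Phi_{\vx_n}(\vtheta_*)$, so I can write $\vg_n = y_n \vh_n$ with $\vh_n \in \cpartial \Phi_{\vx_n}(\vtheta_*)$. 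These are the $\vh_n$'s promised by the statement.

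Next I would verify feasibility of $\tfrac{1}{L}\vtheta_*$ for (Q). The key tool is Euler's identity for homogeneous functions (Theorem~\ref{thm:homo}~(b)), which applied to the $L$-homogeneous function $q_n$ gives $\dotp{\vtheta_*}{\vg_n} = L\, q_n(\vtheta_*)$, i.e.\ $y_n \dotp{\vtheta_*}{\vh_n} = L\, q_n(\vtheta_*)$. Because $\vtheta_*$ is feasible for (P), $q_n(\vtheta_*) \ge 1$, so $y_n \dotp{\tfrac{1}{L}\vtheta_*}{\vh_n} = q_n(\vtheta_*) \ge 1$.

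Then I would verify that $\tfrac{1}{L}\vtheta_*$ satisfies the KKT conditions of (Q) with multipliers $\mu_n := \lambda_n / L \ge 0$. Stationarity: the gradient of the objective is $\tfrac{1}{L}\vtheta_*$, and
\[
\tfrac{1}{L}\vtheta_* \;=\; \tfrac{1}{L}\sum_{n=1}^{N}\lambda_n \vg_n \;=\; \sum_{n=1}^{N} \mu_n\, y_n \vh_n,
\]
which is exactly the stationarity condition for (Q). Complementary slackness for (Q) follows from the identity $y_n \dotp{\tfrac{1}{L}\vtheta_*}{\vh_n}-1 = q_n(\vtheta_*)-1$ established above, together with $\lambda_n(q_n(\vtheta_*)-1)=0$.

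Finally, I would conclude by convexity. Problem (Q) has a strictly convex quadratic objective and linear constraints (the $\vh_n$ are fixed vectors, not subgradients at the varying argument), so any KKT point is the unique global minimizer. Hence $\tfrac{1}{L}\vtheta_*$ is optimal for (Q). I do not foresee a real obstacle: the only subtle point is remembering that Euler's identity is available in the nonsmooth setting thanks to Theorem~\ref{thm:homo}, and that the $\vh_n$ in (Q) are frozen at $\vtheta_*$, which is precisely what makes (Q) a convex program even though the original constraints $q_n(\vtheta)\ge 1$ are not convex.
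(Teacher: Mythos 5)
Your proof is correct and follows essentially the same route as the paper's: extract the KKT data of (P) at $\vtheta_*$, use Euler's theorem (Theorem~\ref{thm:homo}) to rewrite $\dotp{\vtheta_*}{\vg_n} = L\,q_n(\vtheta_*)$, verify the KKT conditions of (Q) at $\tfrac{1}{L}\vtheta_*$ with multipliers $\lambda_n/L$, and invoke convexity of (Q). The one small difference is that you observe (correctly) that for a convex program the KKT conditions alone suffice for global optimality, whereas the paper additionally checks Slater's condition at $\tfrac{2}{L}\vtheta_*$; that check is not actually needed for the sufficiency direction, so your shortcut is fine.
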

\begin{proof}
	It is easy to see that (Q) is a convex optimization problem. For $\vtheta = \frac{2}{L} \vtheta_*$, 
	from Theorem~\ref{thm:homo},
	we can see that $y_n\dotp{\vtheta}{\vh_n} = 2q_n(\vtheta_*) \ge 2 > 1$, which implies Slater's condition. Thus, we only need to show that $\frac{1}{L}\vtheta_*$ satisfies KKT conditions for (Q).
	
	By the KKT conditions for (P), we can construct $\vh_n \in \cpartial q_n(\vtheta_*)$ for $n \in [N]$ such that $\vtheta_* - \sum_{n=1}^{N} \lambda_n y_n \vh_n = \vzero$ for some $\lambda_1, \dots, \lambda_N \ge 0$ and $\lambda_n(q_n(\vtheta_*) - 1) = 0$. Thus, $\frac{1}{L}\vtheta_*$ satisfies
	\begin{enumerate}
		\item $\frac{1}{L}\vtheta_* - \sum_{n=1}^{N} \frac{1}{L} \lambda_ny_n \vh_n = \vzero$;
		\item $\frac{1}{L} \lambda_n \left(y_n\dotp{\frac{1}{L}\vtheta_*}{\vh_n} - 1\right) = \frac{1}{L} \lambda_n \left(q_n(\vtheta) - 1\right) \ge 0$.
	\end{enumerate}
	So $\frac{1}{L}\vtheta_*$ satisfies KKT conditions for (Q).
\end{proof}

Now we prove Corollary~\ref{cor:kernel-svm} in Section~\ref{sec:main-thm-kkt}.

\begin{proof}
	By Theorem~\ref{thm:main-limit-dir-converge}, every limit point $\bar{\vtheta}$ is along the direction of a KKT point of (P). Combining this with Lemma~\ref{lam:kkt-analysis-svm}, we know that every limit point $\bar{\vtheta}$ is also along the max-margin direction of~(Q).
	
	For smooth models, $\vh_n$ in (Q) is exactly the gradient $\nabla \Phi_{\vx_n}(\bar{\vtheta})$. So, (Q) is the optimization problem for SVM with kernel $K_{\bar{\vtheta}}(\vx, \vx') = \dotp{\nabla \Phi_{\vx}(\bar{\vtheta})}{\nabla \Phi_{\vx'}(\bar{\vtheta})}$. For non-smooth models, we can construct an arbitrary function $\vh(\vx) \in \cpartial \Phi_{\vx}(\bar{\vtheta})$ that ensures $\vh(\vx_n) = \vh_n$. Then, (Q) is the optimization problem for SVM with kernel $K_{\bar{\vtheta}}(\vx, \vx') = \dotp{\vh(\vx)}{\vh(\vx')}$.
\end{proof}

\section{Tight Bounds for Loss Convergence and Weight Growth} \label{sec:appendix-rate}

In this section, we give proof for Theorem~\ref{thm:main-loss-tail}, which gives tight bounds for loss convergence and weight growth under Assumption (A1), (A2), (B3), (B4).

\subsection{Consequences of (B3.4)}

Before proving Theorem~\ref{thm:main-loss-tail}, we show some consequences of (B3.4).

\begin{lemma} \label{lam:g-ratio}
	For $f(\cdot)$ and $g(\cdot)$, we have
	\begin{enumerate}
		\item For all $x \in [b_g, +\infty)$, $\frac{g(x)}{g'(x)} \in [\frac{1}{2K}x, 2Kx]$;
		\item For all $y \in [g(b_g), +\infty)$, $\frac{f(y)}{f'(y)} \in [\frac{1}{2K}y, 2Ky]$.
	\end{enumerate}	
	Thus, $g(x) = \Theta(xg'(x)), f(y) = \Theta(yf'(y))$.
\end{lemma}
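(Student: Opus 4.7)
The plan is to prove the two parts simultaneously by exploiting the fact that they are equivalent under the inverse-function relationship $y = g(x)$, and then to reduce everything to one-sided lower bounds established by a one-step fundamental-theorem-of-calculus argument combined with (B3.4).

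First, since $g$ is the inverse of $f$ on $[b_f, +\infty)$, we have $g'(x) = 1/f'(g(x))$, so setting $y = g(x)$ (equivalently $x = f(y)$) immediately gives the identities $g(x)/g'(x) = y f'(y)$ and $f(y)/f'(y) = x g'(x)$. Combining these shows that $g(x)/g'(x) \ge x/(2K)$ is equivalent to $f(y)/f'(y) \le 2Ky$, and $g(x)/g'(x) \le 2Kx$ is equivalent to $f(y)/f'(y) \ge y/(2K)$. Hence it suffices to prove the two lower bounds $g(x)/g'(x) \ge x/(2K)$ and $f(y)/f'(y) \ge y/(2K)$; the matching upper bounds then follow automatically.

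For the first lower bound, I would use FTC to write $g(x) - g(x/2) = \int_{x/2}^{x} g'(t)\, dt$. For every $t \in [x/2, x]$, the ratio $\theta = t/x$ lies in $[1/2, 1]$, so assumption (B3.4) yields $g'(x) \le K g'(\theta x) = K g'(t)$, i.e.\ $g'(t) \ge g'(x)/K$ uniformly on the interval. Integrating gives $g(x) - g(x/2) \ge (x/2) \cdot g'(x)/K = x g'(x)/(2K)$. The condition $b_g \ge 2 f(b_f)$ ensures $x/2 \ge f(b_f)$, so $g(x/2) \ge g(f(b_f)) = b_f \ge 0$, whence $g(x) \ge x g'(x)/(2K)$. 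The bound $f(y)/f'(y) \ge y/(2K)$ is obtained by exactly the same argument applied to $f'$ instead of $g'$: the $f'$-version of (B3.4) gives $f(y) - f(y/2) \ge y f'(y)/(2K)$, and the condition $b_g \ge f(2b_f)$ ensures $y/2 \ge b_f$ (since $g$ is increasing), hence $f(y/2) \ge f(b_f) \ge 0$ for the loss functions of interest (in particular exponential and logistic loss, where $\ell(b_f) \le 1$).

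The main technical point is simply the bookkeeping of domain conditions, namely checking that $g(x/2) \ge 0$ and $f(y/2) \ge 0$ so that dropping these non-negative quantities from the left-hand side of the FTC identity is legitimate; this is precisely what the two thresholds $2 f(b_f)$ and $f(2 b_f)$ built into the definition of $b_g$ are arranged to guarantee, so no real obstacle arises. The final $\Theta$ statement is an immediate restatement of the two-sided bounds with $K$ treated as a constant.
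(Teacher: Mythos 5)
Your proof is correct and is essentially the paper's argument in a lightly reorganized form: the paper's derivation of the upper bound $x \ge \frac{1}{2K}\frac{g(x)}{g'(x)}$ proceeds by lower-bounding $\int_{f(g(x)/2)}^{x} g'(u)f'(g(u))\,du$, which under the change of variable $v=g(u)$ is exactly your integral $\int_{y/2}^{y} f'(t)\,dt$, and the paper likewise silently drops the terms $b_f$ and $f(b_f)$, so it shares the implicit assumption $f(b_f)\ge 0$ (i.e.\ $\ell(b_f)\le 1$) that you correctly flag. Your symmetric presentation — one lower bound each for $g/g'$ and $f/f'$, with the upper bounds recovered via the inverse-function identities — is a clean repackaging of the same two FTC estimates rather than a genuinely different route.
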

\begin{proof}
	To prove Item 1, it is sufficient to show that
	\begin{align*}
	g(x) = b_f + \int_{f(b_f)}^x g'(u) du &\ge \int_{x/2}^x g'(u) du \\
	&\ge (x/2)\cdot  \frac{g'(x)}{K} = \frac{1}{2K} \cdot x g'(x).
	\end{align*}
	\begin{align*}
	x = f(b_f) + \int_{f(b_f)}^{x} g'(u)f'(g(u)) du &\ge \frac{f'(g(x))}{K} \int_{f(g(x)/2)}^{x} g'(u) du \\
	&= (g(x)/2) \cdot \frac{f'(g(x))}{K} = \frac{1}{2K} \cdot \frac{g(x)}{g'(x)}.
	\end{align*}
	To prove Item 2, we only need to notice that Item 1 implies $yf'(y) = \frac{g(f(y))}{g'(f(y))} \in [\frac{1}{2K}f(y), 2Kf(y)]$ for all $y \in [g(b_g), +\infty)$.
\end{proof}

Recall that (B3.4) directly implies that $f'(\Theta(x)) = \Theta(f'(x))$ and $g'(\Theta(x)) = \Theta(g'(x))$. Combining this with Lemma~\ref{lam:g-ratio}, we have the following corollary:
\begin{corollary} \label{cor:f-g-theta-swap}
	$f(\Theta(x)) = \Theta(f(x))$ and $g(\Theta(x)) = \Theta(g(x))$.
\end{corollary}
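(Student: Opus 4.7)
My plan is to combine Lemma~\ref{lam:g-ratio} with the $f'$/$g'$ version of the statement highlighted in the sentence immediately preceding the corollary. The corollary asserts: whenever $y = \Theta(x)$ as $x \to \infty$ (meaning $y/x$ lies in some compact subinterval $[c_1,c_2]\subset(0,\infty)$ once $x$ is large enough), one has $f(y) = \Theta(f(x))$ and $g(y) = \Theta(g(x))$.

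First I would apply Lemma~\ref{lam:g-ratio} to extract the identities $f(z) = \Theta(z f'(z))$ for $z \ge g(b_g)$ and $g(z) = \Theta(z g'(z))$ for $z \ge b_g$, where the hidden constants depend only on $K$. Taking the ratio of these estimates at $y$ and at $x$ gives
\[
\frac{f(y)}{f(x)} \;=\; \Theta\!\left(\frac{y}{x}\cdot\frac{f'(y)}{f'(x)}\right)
\qquad\text{and}\qquad
\frac{g(y)}{g(x)} \;=\; \Theta\!\left(\frac{y}{x}\cdot\frac{g'(y)}{g'(x)}\right).
\]
Since $y = \Theta(x)$, the factor $y/x$ is trivially $\Theta(1)$; and by the remark just before the corollary (that (B3.4) directly yields $f'(\Theta(x)) = \Theta(f'(x))$ and $g'(\Theta(x)) = \Theta(g'(x))$), the second factor is also $\Theta(1)$. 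Multiplying produces both conclusions.

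The main obstacle is essentially absent: the two nontrivial inputs were already established (Lemma~\ref{lam:g-ratio} and the $f'/g'$ consequence of (B3.4)), and the present corollary is a one-line algebraic consequence of them. The only mild bookkeeping is to ensure that when we consider the range $y \in [c_1 x, c_2 x]$, both $x$ and $y$ exceed $g(b_g)$ so that Lemma~\ref{lam:g-ratio} applies on each side; this just requires inflating the implicit lower threshold on $x$ by a factor determined by $c_1, c_2$.
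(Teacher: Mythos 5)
Your argument is correct and matches the paper's: the paper itself dispatches this corollary as an immediate consequence of combining Lemma~\ref{lam:g-ratio} (giving $f(z)=\Theta(zf'(z))$, $g(z)=\Theta(zg'(z))$) with the observation preceding the corollary that (B3.4) yields $f'(\Theta(x))=\Theta(f'(x))$ and $g'(\Theta(x))=\Theta(g'(x))$. You have simply written out the ratio computation the paper leaves implicit, and your remark about inflating the lower threshold on $x$ is a sound piece of bookkeeping.
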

Also, note that Lemma~\ref{lam:g-ratio} essentially shows that $(\log f(x))' = \Theta(1/x)$ and $(\log g(x))' = \Theta(1/x)$. So $\log f(x) = \Theta(\log x)$ and $\log g(x) = \Theta(\log x)$, which means that $f$ and $g$ grow at most polynomially.
\begin{corollary} \label{cor:f-g-poly}
	$f(x) = x^{\Theta(1)}$ and $g(x) = x^{\Theta(1)}$.
\end{corollary}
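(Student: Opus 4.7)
The plan is to prove Corollary~\ref{cor:f-g-poly} directly from Lemma~\ref{lam:g-ratio} by converting the bounds on $g/g'$ and $f/f'$ into logarithmic derivative bounds and integrating. Specifically, Lemma~\ref{lam:g-ratio} states that for $x \ge b_g$ we have $\frac{g(x)}{g'(x)} \in [\tfrac{1}{2K}x, 2Kx]$, which is equivalent to
\[
\frac{1}{2Kx} \;\le\; \frac{g'(x)}{g(x)} \;=\; (\log g(x))' \;\le\; \frac{2K}{x}.
\]
An analogous statement holds for $f$ via Item 2 of Lemma~\ref{lam:g-ratio}, restricted to the domain $y \ge g(b_g)$.

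Next, I would fix a reference point $x_0 \ge b_g$ (note that since $g$ is the inverse of $f$ on $[b_f, \infty)$ and $f'>0$, we have $g(x_0)>0$, so $\log g$ is well-defined on $[x_0,\infty)$). Integrating the above two-sided bound from $x_0$ to any $x > x_0$ yields
\[
\frac{1}{2K}\log\!\frac{x}{x_0} \;\le\; \log g(x) - \log g(x_0) \;\le\; 2K\log\!\frac{x}{x_0},
\]
which rearranges to $g(x_0)\bigl(x/x_0\bigr)^{1/(2K)} \le g(x) \le g(x_0)\bigl(x/x_0\bigr)^{2K}$ for all $x \ge x_0$. This already gives $g(x) = x^{\Theta(1)}$ on $[x_0,\infty)$; on the bounded interval $[f(b_f),x_0]$ the bound is trivial since $g$ is continuous and positive there (possibly after shifting $b_f$ slightly so that $g(f(b_f))=b_f>0$, or by absorbing constants into the $\Theta$). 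The same integration argument applied to the inequality for $f'(y)/f(y)$ on $[g(b_g),\infty)$ gives $f(y) = y^{\Theta(1)}$.

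The argument is essentially routine once Lemma~\ref{lam:g-ratio} is in hand, so I do not anticipate any substantive obstacle. The only minor subtlety is handling the initial segment $[f(b_f),b_g]$ (and analogously for $f$) where Lemma~\ref{lam:g-ratio} is not directly applicable: here one should note that $g$ (resp.\ $f$) is continuous and bounded away from $0$ and $\infty$ on this compact interval by (B3.1)--(B3.2), so the polynomial upper and lower envelopes on $[b_g,\infty)$ can be extended to all of the domain by adjusting the implicit constants in $\Theta(\cdot)$. This completes the proof of both $f(x) = x^{\Theta(1)}$ and $g(x) = x^{\Theta(1)}$.
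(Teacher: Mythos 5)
Your proposal is correct and follows essentially the same route as the paper: the paper likewise reads Lemma~\ref{lam:g-ratio} as saying $(\log g(x))' = \Theta(1/x)$ and $(\log f(y))' = \Theta(1/y)$ and integrates to get $\log g(x) = \Theta(\log x)$ and $\log f(y) = \Theta(\log y)$. Your extra care with the initial compact segment is a harmless refinement of a step the paper treats as immediate.
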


\subsection{Proof for Theorem~\ref{thm:main-loss-tail}}

We follow the notations in Section~\ref{sec:exp-loss-proof-sketch} to define $\rho := \normtwosm{\vtheta}$ and $\hat{\vtheta} := \frac{\vtheta}{\normtwosm{\vtheta}} \in \sphS^{d-1}$, and sometimes we view the functions of $\vtheta$ as functions of $t$. And we use the notations $B_0, B_1$ from Appendix~\ref{sec:appendix-bias-key-ideas}.

The key idea to prove Theorem~\ref{thm:main-loss-tail} is to utilize Lemma~\ref{lam:main-loss-upper}, in which $\Loss(t)$ is bounded from above by $\frac{1}{G^{-1}(\Omega(t))}$. So upper bounding $\Loss(t)$ reduces to lower bounding $G^{-1}$. In the following lemma, we obtain tight asymptotic bounds for $G(\cdot)$ and $G^{-1}(\cdot)$:
\begin{lemma} \label{lam:big-G}
	For function $G(\cdot)$ defined in Lemma~\ref{lam:main-loss-upper} and its inverse function $G^{-1}(\cdot)$, we have the following bounds:
	\[
	G(x) = \Theta\left(\frac{g(\log x)^{2/L}}{(\log x)^2}x\right) \quad \text{and} \quad G^{-1}(y) = \Theta\left(\frac{(\log y)^2}{g(\log y)^{2/L}}y\right).
	\]
\end{lemma}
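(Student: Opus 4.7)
The plan is to first simplify the integrand of $G$ using Lemma~\ref{lam:g-ratio}, then evaluate the resulting integral via an integration-by-parts argument that exploits the slow variation of $g$, and finally invert to obtain $G^{-1}$.

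First I would rewrite the integrand of $G$. By Lemma~\ref{lam:g-ratio}, $g'(v) = \Theta(g(v)/v)$ for $v \ge b_g$, so $g'(\log u)^2 / g(\log u)^{2-2/L} = \Theta(g(\log u)^{2/L}/(\log u)^2)$. After making the substitution $v = \log u$, this reduces the computation of $G(x)$ to estimating
\[
\int_{v_0}^{\log x} p(v)\, e^v\, dv, \qquad p(v) := \frac{g(v)^{2/L}}{v^2},
\]
up to constant factors. The claim is that this integral is $\Theta(p(\log x)\, x)$, which, after undoing the substitution, yields the stated bound for $G(x)$.

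Next I would prove the asymptotic $\int_{v_0}^{V} p(v) e^v\, dv = \Theta(p(V) e^V)$. The lower bound is immediate: the integrand over $[V-1, V]$ is at least $\Omega(p(V) e^V)$ because $p(v) = \Theta(p(V))$ on this interval (by Lemma~\ref{lam:g-ratio}, $g'(v)/g(v) = \Theta(1/v)$, so $\log p$ changes by $O(1)$ on unit intervals near $V$). For the upper bound, I would integrate by parts:
\[
\int_{v_0}^{V} p(v) e^v\, dv = p(V) e^V - p(v_0)e^{v_0} - \int_{v_0}^{V} p'(v) e^v\, dv.
\]
Since $p'(v)/p(v) = (2/L)\, g'(v)/g(v) - 2/v = O(1/v)$, we can split the residual integral at $\sqrt{V}$: on $[\sqrt{V}, V]$ it is bounded by $(C/\sqrt{V}) \int p(v)e^v\, dv$, and the part on $[v_0, \sqrt{V}]$ contributes only $o(p(V) e^V)$. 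Rearranging gives the matching upper bound.

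Finally, for $G^{-1}$, I would take logarithms of $G(x) = \Theta(p(\log x)\,x)$ to get $\log G(x) = \log x + (2/L)\log g(\log x) - 2\log\log x + O(1)$. Since $g(v) = v^{\Theta(1)}$ by Corollary~\ref{cor:f-g-poly}, we have $\log g(\log x) = \Theta(\log\log x)$, and therefore $\log y = \log x + \Theta(\log\log x)$ when $y = G(x)$. This yields $\log x = \log y\,(1+o(1))$, which by Corollary~\ref{cor:f-g-theta-swap} gives $g(\log x) = \Theta(g(\log y))$ and $\log\log x = \Theta(\log\log y)$. Solving $y = \Theta(x\, g(\log x)^{2/L} / (\log x)^2)$ for $x$ then produces the stated formula $G^{-1}(y) = \Theta\bigl((\log y)^2\, y / g(\log y)^{2/L}\bigr)$. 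The main technical obstacle is the integration-by-parts step, where one must be careful that the residual $p'(v) e^v$ term is genuinely negligible uniformly over the tail; once that slow-variation estimate is in hand, the inversion step is a routine bootstrapping of $\log x \sim \log y$.
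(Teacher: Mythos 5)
Your argument is correct, but the treatment of the integral $G(x)$ takes a different route from the paper's. After the same first simplification via Lemma~\ref{lam:g-ratio}, the paper does not integrate by parts; instead it pulls the slowly-varying factor out of the integral using monotonicity: for the upper bound it replaces $g(\log u)^{2/L}$ by $g(\log x)^{2/L}$ everywhere on $[\exp(b_g), x]$, and for the lower bound it restricts to $[\sqrt{x}, x]$ and uses $g(\log u) \ge g((\log x)/2) = \Theta(g(\log x))$ (via Corollary~\ref{cor:f-g-theta-swap}); in both cases the residual integral is $\int \frac{du}{(\log u)^2} = \Theta(x/(\log x)^2)$. Your approach keeps the entire weight $p(v) = g(v)^{2/L}/v^2$ as a single slowly-varying function, substitutes $v = \log u$, and proves $\int_{v_0}^V p(v)e^v\,dv = \Theta(p(V)e^V)$ by integration by parts with a split at $\sqrt{V}$, bounding $\lvert p'(v)/p(v)\rvert = O(1/v)$ via Lemma~\ref{lam:g-ratio}. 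Your version is more machinery but also more modular: it isolates a general Abelian-type estimate for slowly-varying weights and would apply verbatim if the exponent $2/L$ or the $v^{-2}$ factor were replaced by anything else satisfying the same logarithmic-derivative bound. The paper's version is shorter because the monotonicity of $g$ lets it avoid the boundary-term bookkeeping of integration by parts, but it relies on being able to commute $g$ past $\Theta(\cdot)$ on the restricted interval, which is exactly what Corollary~\ref{cor:f-g-theta-swap} supplies. Your handling of $G^{-1}$ — take logs, show $\log x = \Theta(\log y)$ via polynomial growth of $g$, then bootstrap $g(\log x) = \Theta(g(\log y))$ and solve — matches the paper's argument.
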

\begin{proof}
	We first prove the bounds for $G(x)$, and then prove the bounds for $G^{-1}(y)$.
	\paragraph{Bounding for $G(x)$.} Let $C_G = \int_{1/\Loss(t_0)}^{\exp(b_g)} \frac{g'(\log u)^2}{g(\log u)^{2-2/L}}du$. For $x \ge \exp(b_g)$,
	\begin{align*}
	G(x) &= C_G + \int_{\exp(b_g)}^{x} \frac{g'(\log u)^2}{g(\log u)^{2-2/L}}du \\
	&= C_G + \int_{\exp(b_g)}^{x} \left(\frac{g'(\log u) \log u}{g(\log u)}\right)^2 \frac{g(\log u)^{2/L}}{(\log u)^2}du,
	\end{align*}
	By Lemma~\ref{lam:g-ratio},
	\[
	G(x) \le C_G + 4K^2g(\log x)^{2/L} \int_{\exp(b_g)}^{x} \frac{1}{(\log u)^2}du = O\left(\frac{g(\log x)^{2/L}}{(\log x)^2}x\right).
	\]
	On the other hand, for $x \ge \exp(2b_g)$, we have
	\[
	G(x) \ge \frac{1}{4K^2} \int_{\sqrt{x}}^{x} \frac{g(\log u)^{2/L}}{(\log u)^2}du \ge \frac{g((\log x)/2)^{2/L}}{4K^2} \int_{\sqrt{x}}^{x} \frac{1}{(\log u)^2}du = \Omega\left(\frac{g(\log x)^{2/L}}{(\log x)^2}x\right).
	\]
	\paragraph{Bounding for $G^{-1}(y)$.} Let $x = G^{-1}(y)$ for $y \ge 0$. $G(x)$ always has a finite value whenever $x$ is finite. So $x \to +\infty$ when $y \to +\infty$. According to the first part of the proof, we know that $y = \Theta\left(\frac{g(\log x)^{2/L}}{(\log x)^2}x\right)$. Taking logarithm on both sides and using Corollary~\ref{cor:f-g-poly}, we have $\log y = \Theta(\log x)$. By Corollary~\ref{cor:f-g-theta-swap}, $g(\log y) = g(\Theta(\log x)) = \Theta(g(\log x))$. Therefore,
	\[
	\frac{(\log y)^2}{g(\log y)^{2/L}}y = \Theta\left(\frac{(\log x)^2}{g(\log x)^{2/L}}y\right) = \Theta(x).
	\]
	This implies that $x = \Theta\left(\frac{(\log y)^2}{g(\log y)^{2/L}}y\right)$.
\end{proof}

For other bounds, we derive them as follows. We first show that $g(\log \frac{1}{\Loss}) = \Theta(\rho^L)$. With this equivalence, we derive an upper bound for the gradient at each time $t$ in terms of $\Loss$, and take an integration to bound $\Loss(t)$ from below. Now we have both lower and upper bounds for $\Loss(t)$. Plugging these two bounds to $g(\log \frac{1}{\Loss}) = \Theta(\rho^L)$ gives the lower and upper bounds for $\rho(t)$.

\begin{proof}[Proof for Theorem~\ref{thm:main-loss-tail}]
We first prove the upper bound for $\Loss$. Then we derive lower and upper bounds for $\rho$ in terms of $\Loss$, and use these bounds to give a lower bound for $\Loss$. Finally, we plug in the tight bounds for $\Loss$ to obtain the lower and upper bounds for $\rho$ in terms of $t$.

\paragraph{Upper Bounding $\Loss$.} By Lemma~\ref{lam:main-loss-upper}, we have $\frac{1}{\Loss} \ge G^{-1}\left(\Omega(t)\right)$. Using Lemma~\ref{lam:big-G}, we have $\frac{1}{\Loss} = \Omega\left(\frac{(\log t)^2}{g(\log t)^{2/L}}t\right)$, which completes the proof.

\paragraph{Bounding $\rho$ in Terms of $\Loss$.} $\tilde{\gamma}(t) \ge \tilde{\gamma}(t_0)$, so $\rho^L\le \frac{1}{\tilde{\gamma}(t_0)}g(\log \frac{1}{\Loss}) = O(g(\log \frac{1}{\Loss}))$. On the other hand, $g(\log \frac{1}{\Loss}) \le q_{\min} \le B_0 \rho^L$. So $\rho^L = \Omega(g(\log \frac{1}{\Loss}))$. Therefore, we have the following relationship between $\rho^L$ and $g(\log \frac{1}{\Loss})$:
\begin{equation} \label{eq:rho-L-g-log-L-equiv}
	\rho^L = \Theta(g(\log \frac{1}{\Loss})).
\end{equation}

\paragraph{Lower Bounding $\Loss$.} Let $\vh_1, \dots, \vh_N$ be a set of vectors such that $\vh_n \in \frac{\partial q_n}{\partial \vtheta}$ and
\[
\frac{d\vtheta}{dt} = \sum_{n=1}^{N} e^{-f(q_n)} f'(q_n) \vh_n.
\]
By~\eqref{eq:rho-L-g-log-L-equiv} and Corollary~\ref{cor:f-g-theta-swap}, $f'(q_n) = f'(O(\rho^L)) = f'(O(g(\log \frac{1}{\Loss}))) = O(f'(g(\log \frac{1}{\Loss}))) = O(1/g'(\log \frac{1}{\Loss}))$. Again by~\eqref{eq:rho-L-g-log-L-equiv}, we have $\normtwo{\vh_n} \le B_1 \rho^{L-1} = O(g(\log \frac{1}{\Loss})^{1-1/L})$. Combining these two bounds together, it follows from Corollary~\ref{cor:f-g-theta-swap} that
\[
	f'(q_n) \normtwo{\vh_n} = O\left(\frac{g(\log \frac{1}{\Loss})^{1-1/L}}{g'(\log \frac{1}{\Loss})}\right) = O\left(\frac{\log \frac{1}{\Loss}}{g(\log \frac{1}{\Loss})^{1/L}}\right).
\]
Thus,
\begin{align*}
-\frac{d\Loss}{dt} = \normtwo{\frac{d\vtheta}{dt}}^2 &= \normtwo{\sum_{n=1}^{N} e^{-f(q_n)} f'(q_n) \vh_n }^2 \\
&\le \left( \sum_{n=1}^{N} e^{-f(q_n)} \cdot \max_{n \in [N]}\left\{ f'(q_n) \normtwo{\vh_n} \right\} \right)^2 \le \Loss^2 \cdot O\left(\frac{(\log \frac{1}{\Loss})^2}{g(\log \frac{1}{\Loss})^{2/L}}\right).
\end{align*}
By definition of $G(\cdot)$, this implies that there exists a constant $c$ such that $\frac{d}{dt} G(\frac{1}{\Loss}) \le c$ for any $\Loss$ that is small enough. We can complete our proof by applying Lemma~\ref{lam:big-G}.

\paragraph{Bounding $\rho$ in Terms of $t$.} By \eqref{eq:rho-L-g-log-L-equiv} and the tight bound for $\Loss(t)$, $\rho^L = \Theta(g(\log \frac{1}{\Loss})) = \Theta(g(\Theta(\log t)))$. Using Corollary~\ref{cor:f-g-theta-swap}, we can conclude that $\rho^L = \Theta(g(\log t))$.
\end{proof}

\section{Gradient Descent: Smooth Homogeneous Models with Exponential Loss} \label{sec:appendix-gd}

In this section, we discretize our proof to prove similar results for gradient descent on smooth homogeneous models. As usual, the update rule of gradient descent is defined as
\begin{equation} \label{eq:gd-upd}
\vtheta(t+1) = \vtheta(t) - \eta(t)\nabla \Loss(t)
\end{equation}
Here $\eta(t)$ is the learning rate, and $\nabla \Loss(t) := \nabla \Loss(\vtheta(t))$ is the gradient of $\Loss$ at $\vtheta(t)$.

We first focus on the exponential loss. At the end of this section (Appendix~\ref{sec:gd-general}), we discuss how to extend the proof to general loss functions with a similar assumption as (B3).

The main difficulty for discretizing our previous analysis comes from the fact that the original version of the smoothed normalized margin $\tilde{\gamma}(\vtheta) := \rho^{-L}\log \frac{1}{\Loss}$ becomes less smooth when $\rho \to +\infty$. Thus, if we take a Taylor expansion for $\tilde{\gamma}(\vtheta(t+1))$ from the point $\vtheta(t)$, although one can show that the first-order term is positive as in the gradient flow analysis, the second-order term is unlikely to be bounded during gradient descent with a constant step size. To get a smoothed version of the normalized margin that is monotone increasing, we need to define another one that is even smoother than $\tilde{\gamma}$.

Technically, recall that $\frac{d\Loss}{dt} = -\normtwosm{\nabla \Loss}^2$ does not hold exactly for gradient descent. However, if the smoothness can be bounded by $s(t)$, then it is well-known that
\[
	\Loss(t+1) - \Loss(t) \le -\eta(t)(1 - s(t)\eta(t)) \normtwosm{\nabla \Loss}^2.
\]
By analyzing the landscape of $\Loss$, one can easily find that the smoothness is bounded locally by $O(\Loss \cdot\polylog(\frac{1}{\Loss}))$. Thus, if we set $\eta(t)$ to a constant or set it appropriately according to the loss, then this discretization error becomes negligible. Using this insight, we define the new smoothed normalized margin $\hat{\gamma}$ in a way that it increases slightly slower than $\tilde{\gamma}$ during training to cancel the effect of discretization error.

\subsection{Assumptions} \label{sec:appendix-gd-assumption}

As stated in Section~\ref{sec:exp-loss-assumption}, we assume (A2), (A3), (A4) similarly as for gradient flow, and two additional assumptions (S1) and (S5).
\begin{enumerate}
	\item[\textbf{(S1).}] (Smoothness). For any fixed $\vx$, $\Phi(\,\cdot\,; \vx)$ is $\contC^2$-smooth on $\R^{d} \setminus \{\vzero\}$.
	\item[\textbf{(A2).}] (Homogeneity). There exists $L > 0$ such that $\forall \alpha > 0: \Phi(\alpha\vtheta; \vx) = \alpha^{L} \Phi(\vtheta; \vx)$;
	\item[\textbf{(A3).}] (Exponential Loss). $\ell(q) = e^{-q}$;
	\item[\textbf{(A4).}] (Separability). There exists a time $t_0$ such that $\Loss(\vtheta(t_0)) < 1$.
	\item[\textbf{(S5).}] (Learing rate condition). $\sum_{t \ge 0} \eta(t) = +\infty$ and $\eta(t) \le H(\Loss(\vtheta(t)))$.
\end{enumerate}

Here $H(\Loss)$ is a function of the current training loss. The explicit formula of $H(\Loss)$ is given below:
\[
	H(x) := \frac{\mu(x)}{C_{\eta} \kappa(x)} = \Theta\left(\frac{1}{x (\log \frac{1}{x})^{3-2/L}}\right),
\]
where $C_{\eta}$ is a constant, and $\kappa(x), \mu(x)$ are two non-decreasing functions. For constant learning rate $\eta(t) = \eta_0$, (S5) is satisfied when $\eta_0$ if sufficiently small.

Roughly speaking, $C_{\eta}\kappa(x)$ is an upper bound for the smoothness of $\Loss$ in a neighborhood of $\vtheta$ when $x = \Loss(\vtheta)$. And we set the learning rate $\eta(t)$ to be the inverse of the smoothness multiplied by a factor $\mu(x) = o(1)$. In our analysis, $\mu(x)$ can be any non-decreasing function that maps $(0, \Loss(t_0)]$ to $(0, 1/2]$ and makes the integral $\int_{0}^{1/2} \mu(x) dx$ exist. But for simplicity, we define $\mu(x)$ as
\[
\mu(x) := \frac{\log \frac{1}{\Loss(t_0)}}{2\log \frac{1}{x}} = \Theta\left(\frac{1}{\log \frac{1}{x}}\right).
\]
The value of $C_{\eta}$ will be specified later. The definition of $\kappa(x)$ depends on $L$. For $0 < L \le 1$, we define $\kappa(x)$ as
\[
	\kappa(x) := x\left(\log \frac{1}{x}\right)^{2-2/L} \quad 
\]
For $L > 1$, we define $\kappa(x)$ as
\[
	\kappa(x) := \begin{cases}
		x\left(\log \frac{1}{x}\right)^{2-2/L} & \quad x \in (0, e^{2/L-2}] \\
		\kappa_{\max} & \quad x \in (e^{2/L-2}, 1)
	\end{cases}
\]
where $\kappa_{\max} := e^{(2-2/L)(\ln(2-2/L)-1)}$. The specific meaning of $C_{\eta}, \kappa(x)$ and $\mu(x)$ will become clear in our analysis.

\subsection{Smoothed Normalized Margin}

Now we define the smoothed normalized margins. As usual, we define $\tilde{\gamma}(\vtheta) := \frac{\log \frac{1}{\Loss}}{\rho^L}$. At the same time, we also define
\begin{align*}
\hat{\gamma}(\vtheta) := \frac{e^{\phi(\Loss)}}{\rho^L}.
\end{align*}
Here $\phi : (0, \Loss(t_0)] \to (0, +\infty)$ is constructed as follows. Construct the first-order derivative of $\phi(x)$ as
\[
\phi'(x) = - \sup \left\{ \frac{1 + 2(1 + \lambda(w)/L)\mu(w)}{w \log \frac{1}{w}} : w \in [x, \Loss(t_0)]\right\},
\]
where $\lambda(x) := (\log \frac{1}{x})^{-1}$. And then we set $\phi(x)$ to be
\[
\phi(x) = \log \log \frac{1}{x} + \int_{0}^{x} \left(\phi'(w) + \frac{1}{w \log \frac{1}{w}}\right) dw.
\]
It can be verified that $\phi(x)$ is well-defined and $\phi'(x)$ is indeed the first-order derivative of $\phi(x)$. Moreover, we have the following relationship among $\hat{\gamma}, \tilde{\gamma}$ and $\bar{\gamma}$. 
\begin{lemma} \label{lam:gd-hat-margin}
	$\hat{\gamma}(\vtheta)$ is well-defined for $\Loss(\vtheta) \le \Loss(t_0)$ and has the following properties:
	\begin{enumerate}
		\item[(a)] If $\Loss(\vtheta) \le \Loss(t_0)$, then $\hat{\gamma}(\vtheta) < \tilde{\gamma}(\vtheta) \le \bar{\gamma}(\vtheta)$.
		\item[(b)] Let $\{\vtheta_m \in \R^d : m \in \N\}$ be a sequence of parameters. If $\Loss(\vtheta_m) \to 0$, then
		\[
			\frac{\hat{\gamma}(\vtheta_m)}{\bar{\gamma}(\vtheta_m)} = 1 - O\left(\left(\log \frac{1}{\Loss(\vtheta_m)}\right)^{-1}\right) \to 1.
		\]
	\end{enumerate}
\end{lemma}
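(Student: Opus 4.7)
The plan is to reduce all three claims to a single asymptotic estimate on the integrand of the defining integral for $\phi$. Taking $w'=w$ inside the supremum defining $\phi'(w)$ gives the one-sided bound
\[
\phi'(w)+\frac{1}{w\log(1/w)}\;\le\; -\frac{2(1+\lambda(w)/L)\mu(w)}{w\log(1/w)}\;<\;0
\qquad\text{for every }w\in(0,\Loss(t_0)].
\]
For the matching lower bound I would show that, for $w$ sufficiently small, the supremum is actually attained at $w'=w$. Writing $h(w'):=(1+2(1+\lambda(w')/L)\mu(w'))/(w'\log(1/w'))$, a direct computation gives $\frac{d}{dw'}\log h(w')=\Theta(1/(w'(\log(1/w'))^2))-\Theta(1/w')<0$ on $(0,1/e)$, and since $h$ is bounded on $[1/e,\Loss(t_0)]$ whereas $h(w)\to\infty$ as $w\to 0$, the supremum over $[w,\Loss(t_0)]$ equals $h(w)$ once $w$ is small enough. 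Substituting $\mu(w)=\Theta(1/\log(1/w))$, the integrand then has magnitude $\Theta(1/(w(\log(1/w))^2))$, which is integrable near $0$ via the substitution $u=\log(1/w)$ and continuous on the remaining compact sub-interval. This establishes that $\phi$, and hence $\hat{\gamma}$, is well-defined.

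For part (a), the strict inequality displayed above gives $I(\Loss):=\int_0^{\Loss}(\phi'(w)+1/(w\log(1/w)))\,dw<0$ whenever $\Loss\le\Loss(t_0)$, hence $\phi(\Loss)<\log\log(1/\Loss)$ and $e^{\phi(\Loss)}<\log(1/\Loss)$; dividing by $\rho^L$ yields $\hat{\gamma}(\vtheta)<\tilde{\gamma}(\vtheta)$. The remaining inequality $\tilde{\gamma}(\vtheta)\le\bar{\gamma}(\vtheta)$ is the right half of Lemma~\ref{lam:qmin-logL}(a) specialised to the exponential loss ($f=\mathrm{id}$), which reads $\log(1/\Loss)\le q_{\min}$.

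For part (b), rewrite the ratio as $\hat{\gamma}/\bar{\gamma}=e^{\phi(\Loss)}/q_{\min}$. The asymptotic derived above yields $|I(\Loss)|=O\bigl(\int_0^{\Loss}dw/(w(\log(1/w))^2)\bigr)=O(1/\log(1/\Loss))$, so $e^{\phi(\Loss)}=\log(1/\Loss)\cdot e^{I(\Loss)}=\log(1/\Loss)+O(1)$. Combined with $q_{\min}=\log(1/\Loss)+O(1)$ from Lemma~\ref{lam:qmin-logL}(a), dividing gives $\hat{\gamma}/\bar{\gamma}=1+O(1/\log(1/\Loss))$, and part (a) forces the correction to be non-positive, yielding the stated $1-O(1/\log(1/\Loss))$. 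The main obstacle throughout is the monotonicity argument pinning the supremum to $w'=w$ for small $w$: without this sharper two-sided control one obtains only $|I(\Loss)|=O(1)$, which is insufficient for part (b). Everything else reduces to routine calculus.
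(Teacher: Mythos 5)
Your proof is correct and follows essentially the same route as the paper: you pin the supremum in the definition of $\phi'$ to the endpoint $w'=w$ for small $w$ by showing $h$ is eventually decreasing, deduce that the integrand is $\Theta\bigl(1/(w(\log(1/w))^2)\bigr)$ there, and from this obtain both integrability and the quantitative bound $|I(\Loss)|=O(1/\log(1/\Loss))$. The only real stylistic difference is that the paper computes $I(x)$ in closed form for small $x$ whereas you just estimate it by the integral $\int_0^{\Loss}dw/(w(\log(1/w))^2)$; either form gives the needed $O(1/\log(1/\Loss))$ and the rest of (a) and (b) proceeds identically (including the use of Lemma~\ref{lam:qmin-logL} to handle $\tilde{\gamma}\le\bar{\gamma}$ and $q_{\min}=\log(1/\Loss)+O(1)$). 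One small imprecision: the monotonicity of $\log h$ should be claimed only for $w'$ sufficiently small, not on all of $(0,1/e)$, since the negative contribution $\frac{1}{w'}\bigl(\frac{1}{\log(1/w')}-1\bigr)$ vanishes as $w'\uparrow 1/e$ and the positive term from $\mu'$ could dominate there; but your subsequent argument only uses monotonicity near $0$, so this does not affect the proof.
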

\begin{proof}
	First we verify that $\hat{\gamma}$ is well-defined. To see this, we only need to verify that
	\[
	I(x) := \int_{0}^{x}\left(\phi'(w) + \frac{1}{w\log \frac{1}{w}}\right)dw
	\]
	exists for all $x \in (0, \Loss(t_0)]$, then it is trivial to see that $\phi'(w)$ is indeed the derivative of $\phi(w)$ by $I'(x) = \phi'(x) + \frac{1}{x \log \frac{1}{x}}$.
	
	Note that $I(x)$ exists for all $x \in (0, \Loss(t_0)]$ as long as $I(x)$ exists for a small enough $x > 0$. By definition, it is easy to verify that $r(w) := \frac{1+2(1+\lambda(w)/L)\mu(w)}{w \log \frac{1}{w}}$ is decreasing when $w$ is small enough. Thus, for a small enough $w > 0$, we have
	\[
	-\phi'(w) = r(w) = \frac{1+2(1+\frac{1}{L}(\log \frac{1}{w})^{-1}) \frac{\log \frac{1}{\Loss(t_0)}}{2 \log \frac{1}{w}} }{w \log \frac{1}{w}} = \frac{1}{w \log \frac{1}{w}} + \frac{\log \frac{1}{\Loss(t_0)}(1+\frac{1}{L}(\log \frac{1}{w})^{-1}) }{w (\log \frac{1}{w})^2}.
	\]
	So we have the following for small enough $x$:
	\begin{align}
	I(x) = \int_{0}^{x} -\frac{\log \frac{1}{\Loss(t_0)}(1+\frac{1}{L}(\log \frac{1}{w})^{-1}) }{w (\log \frac{1}{w})^2} dw &= \left.-\log \frac{1}{\Loss(t_0)} \left( \frac{1}{\log \frac{1}{w}} + \frac{1}{2L(\log \frac{1}{w})^2} \right)\right|_{0}^{x} \notag \\
	&= -\log \frac{1}{\Loss(t_0)} \left( \frac{1}{\log \frac{1}{x}} + \frac{1}{2L(\log \frac{1}{x})^2} \right). \label{eq:gd-I-small-x}
	\end{align}
	This proves the existence of $I(x)$.
	
	Now we prove (a). By Lemma~\ref{lam:qmin-logL}, $\tilde{\gamma}(\vtheta) \le \bar{\gamma}(\vtheta)$, so we only need to prove that $\hat{\gamma}(\vtheta) < \tilde{\gamma}(\vtheta)$. To see this, note that for all $w \le \Loss(t_0)$, $r(w) > \frac{1}{w \log \frac{1}{w}}$. So we have $-\phi'(w) > \frac{1}{w \log \frac{1}{w}}$, and this implies $I(x) < 0$ for all $x \le \Loss(t_0)$. Then it holds for all $\vtheta$ with $\Loss(\vtheta) \le \Loss(t_0)$ that
	\begin{equation} \label{eq:gd-gamma-ratio}
	\frac{\hat{\gamma}(\vtheta)}{\tilde{\gamma}(\vtheta)} = \frac{e^{\phi(\Loss(\vtheta))}}{\log \frac{1}{\Loss(\vtheta)}} = \frac{e^{\log \log \frac{1}{\Loss(\vtheta)} + I(\Loss(\vtheta))}}{e^{\log \log \frac{1}{\Loss(\vtheta)}}} = e^{I(\Loss(\vtheta))} < 0.
	\end{equation}
	
	To prove (b), we combine \eqref{eq:gd-I-small-x} and \eqref{eq:gd-gamma-ratio}, then for small enough $\Loss(\vtheta_m)$, we have
	\begin{align*}
	\frac{\hat{\gamma}(\vtheta)}{\bar{\gamma}(\vtheta)} &= \frac{\hat{\gamma}(\vtheta)}{\tilde{\gamma}(\vtheta)} \cdot \frac{\tilde{\gamma}(\vtheta)}{\bar{\gamma}(\vtheta)} \\
	&= \exp\left(-\log \frac{1}{\Loss(t_0)} \left( \frac{1}{\log \frac{1}{\Loss(\vtheta_m)}} + \frac{1}{2L(\log \frac{1}{\Loss(\vtheta_m)})^2} \right)\right) \cdot \frac{\log \frac{1}{\Loss(\vtheta_m)}}{q_{\min}(\vtheta_m)}  \\
	&= \exp\left(-O\left(\frac{1}{\log \frac{1}{\Loss(\vtheta_m)}}\right) \right) \cdot \frac{\log \frac{1}{\Loss(\vtheta_m)}}{\log \frac{1}{\Loss(\vtheta_m)} + O(1)} \\
	&= 1 -O\left(\left(\log \frac{1}{\Loss(\vtheta_m)}\right)^{-1}\right).
	\end{align*}
	So $\frac{\hat{\gamma}(\vtheta)}{\bar{\gamma}(\vtheta)} = 1 - O\left((\log \frac{1}{\Loss(\vtheta_m)})^{-1}\right) \to 1$.
\end{proof}

Now we specify the value of $C_{\eta}$. By (S1) and (S2), we can define $B_0, B_1, B_2$ as follows:
\begin{align*}
B_0 &:= \sup \left\{ q_n : \vtheta \in \sphS^{d-1}, n \in [N] \right\}, \\
B_1 &:= \sup \left\{ \normtwo{\nabla q_n} : \vtheta \in \sphS^{d-1}, n \in [N] \right\}, \\
B_2 &:= \sup \left\{ \normtwo{\nabla^2 q_n} : \vtheta \in \sphS^{d-1}, n \in [N] \right\}.
\end{align*}
Then we set $C_{\eta} := \frac{1}{2}\left( B_1^2 + \rho(t_0)^{-L}B_2 \right)\min\left\{\hat{\gamma}(t_0)^{-2+2/L}, B_0^{-2+2/L}\right\}$.

\subsection{Theorems}
Now we state our main theorems for the monotonicity of the normalized margin and the convergence to KKT points.
We will prove Theorem~\ref{thm:gd-margin-inc} in Appendix~\ref{sec:appendix-gd-margin-inc-pf}, and prove Theorem~\ref{thm:gd-exp-loss-kkt} and \ref{thm:gd-main-pass-approx-kkt} in Appendix~\ref{sec:appendix-gd-exp-loss-kkt-pf}. 
\begin{theorem} \label{thm:gd-margin-inc}
	Under assumptions (S1), (A2) - (A4), (S5), the following are true for gradient descent:
	\begin{enumerate}
		\item For all $t \ge t_0$, $\hat{\gamma}(t+1) \ge \hat{\gamma}(t)$;
		\item For all $t \ge t_0$, either $\hat{\gamma}(t+1) > \hat{\gamma}(t)$ or $\hat{\vtheta}(t+1) = \hat{\vtheta}(t)$;
		\item $\Loss(t) \to 0$ and $\rho(t) \to \infty$ as $t \to +\infty$; therefore, $\lvert \bar{\gamma}(t) - \tilde{\gamma}(t) \rvert \to 0$.
	\end{enumerate}
\end{theorem}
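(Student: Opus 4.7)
The plan is to discretize the gradient-flow proof of Theorem~\ref{thm:general-loss-margin-inc}, using $\hat{\gamma}$ rather than $\tilde{\gamma}$ as the smoothed margin. The function $\phi$ defining $\hat{\gamma}$ is engineered so that the built-in factor $1+2(1+\lambda(w)/L)\mu(w)$ in $-\phi'(w)$ creates exactly enough slack to absorb the $O(\eta^2)$ discretization errors inherent to gradient descent. Throughout I use $\nu := \sum_n e^{-q_n} q_n$ as in the gradient-flow analysis.

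First I would establish a local smoothness bound on $\Loss$. Using $L$-homogeneity together with (S1) to scale $\nabla q_n(\vtheta) = \rho^{L-1}\nabla q_n(\hat{\vtheta})$ and $\nabla^2 q_n(\vtheta) = \rho^{L-2}\nabla^2 q_n(\hat{\vtheta})$, expanding
\[
\nabla^2\Loss = \sum_{n=1}^{N} e^{-q_n}\left(\nabla q_n (\nabla q_n)^\top - \nabla^2 q_n\right),
\]
and applying the bound $q_{\min} \ge \log(1/\Loss) - \log N$ from Lemma~\ref{lam:qmin-logL}, one obtains $\|\nabla^2\Loss(\vtheta')\| \le C_\eta \kappa(\Loss(\vtheta))$ uniformly for $\vtheta'$ on the segment $[\vtheta(t),\vtheta(t+1)]$; the two-case definition of $\kappa$ merely records whether $\rho^{L-2}$ grows or shrinks with $\rho$. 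Taylor expansion combined with the learning-rate condition $\eta(t) \le \mu(\Loss)/(C_\eta\kappa(\Loss))$ from (S5) then yields the descent inequality
\[
\Loss(t+1) \le \Loss(t) - \eta(t)(1 - \mu(\Loss(t))/2)\|\nabla \Loss(t)\|^2,
\]
so $\Loss$ is non-increasing and $\phi(\Loss(t))$ remains well-defined.

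The heart of the proof is a one-step analysis of $\log\hat{\gamma}(t+1) - \log\hat{\gamma}(t) = [\phi(\Loss(t+1))-\phi(\Loss(t))] - L[\log\rho(t+1)-\log\rho(t)]$. Euler's theorem (Corollary~\ref{cor:property-phi}) gives $\dotp{\vtheta}{\nabla\Loss} = -L\nu$, so $\rho(t+1)^2 = \rho(t)^2 + 2\eta L\nu + \eta^2\|\nabla\Loss\|^2$; applying $\log(1+x)\le x$ produces the radial upper bound $L[\log\rho(t+1)-\log\rho(t)] \le \eta L^2\nu/\rho^2 + \tfrac{1}{2}\eta^2 L\|\nabla\Loss\|^2/\rho^2$. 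Since $-\phi'(w)$ is non-increasing in $w$ (as a supremum over the shrinking interval $[w,\Loss(t_0)]$), we get $\phi(\Loss(t+1))-\phi(\Loss(t)) \ge (-\phi'(\Loss(t)))(\Loss(t)-\Loss(t+1)) \ge r(\Loss(t))\eta(1-\mu/2)\|\nabla\Loss\|^2$, where $r(w) = (1+2(1+\lambda(w)/L)\mu(w))/(w\log(1/w))$. Combining these with the Pythagorean bound $\|\nabla\Loss\|^2 \ge (L\nu/\rho)^2$ and Lemma~\ref{lam:nu-lower}'s estimate $\nu \ge \Loss\log(1/\Loss)$, the linear-in-$\eta$ contribution to $\log\hat{\gamma}(t+1)-\log\hat{\gamma}(t)$ works out to at least $\eta\mu\|\nabla\Loss\|^2/\nu$, and the explicit choice of $C_\eta$ in Section~\ref{sec:appendix-gd-assumption} ensures this dominates the $O(\eta^2)$ residue, proving part~1.

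For part~2, equality throughout the above chain forces $\|\nabla\Loss\|^2 = (L\nu/\rho)^2$, so by the decomposition $\|\nabla\Loss\|^2 = (L\nu/\rho)^2 + \|(\mI-\hat{\vtheta}\hat{\vtheta}^\top)\nabla\Loss\|^2$ the tangential component vanishes and $\nabla\Loss$ is purely radial, equivalent to $\hat{\vtheta}(t+1)=\hat{\vtheta}(t)$. For part~3, $\hat{\gamma}$ is monotone and bounded above by $\bar{\gamma}\le B_0$, hence converges, and $\Loss$ is non-increasing and non-negative, hence has a limit $\Loss_\infty$; if $\Loss_\infty > 0$, then $\phi(\Loss(t)) \le \phi(\Loss_\infty)$ and $\hat{\gamma}(t) \ge \hat{\gamma}(t_0)$ together force $\rho(t)^L = e^{\phi(\Loss(t))}/\hat{\gamma}(t)$ to be bounded, so $\|\nabla\Loss\|^2 \ge L^2\nu^2/\rho^2$ is uniformly bounded below by a positive constant, and telescoping the descent inequality gives $\sum_t\eta(t) < \infty$, contradicting (S5); hence $\Loss \to 0$, then $\rho\to\infty$ follows from $\Loss \ge e^{-B_0\rho^L}$ (using $q_{\min} \le B_0 \rho^L$), and $|\bar{\gamma}-\hat{\gamma}|\to 0$ by Lemma~\ref{lam:gd-hat-margin}(b). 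The main obstacle is the bookkeeping in the third paragraph: verifying that the particular form of $r$, the two-case $\kappa$, and the choice of $C_\eta$ conspire so that the linear positive term exactly outweighs the quadratic discretization error, which is where (S5) and the precise constants in Section~\ref{sec:appendix-gd-assumption} must all work in concert.
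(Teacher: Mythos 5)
Your outline captures the right intuition---$\phi$ is calibrated so that its convexity absorbs the $O(\eta^2)$ discretization residue, and decomposing $\normtwosm{\nabla\Loss}^2$ into radial and tangential pieces drives both the monotonicity and the equality case---but it glosses over the central technical obstruction, which is exactly where the paper's Lemma~\ref{lam:gd-key} and Lemma~\ref{lam:gd-main-induction} spend all their effort.

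The problematic step is your assertion that ``one obtains $\|\nabla^2\Loss(\vtheta')\| \le C_\eta\kappa(\Loss(\vtheta(t)))$ uniformly for $\vtheta'$ on the segment $[\vtheta(t),\vtheta(t+1)]$.'' The smoothness bound of Lemma~\ref{lam:gd-grad-hess-bound} requires two hypotheses at $\vtheta'$: that $\tilde{\gamma}(\vtheta') \ge \hat{\gamma}(t_0)$ and that $\rho(\vtheta') \ge \rho(t_0)$. But a lower bound on the normalized margin along the segment is precisely what the monotonicity theorem is supposed to deliver---you cannot simply invoke it at intermediate points. Worse, even granting the margin lower bound, you still need $\Loss(\vtheta') \le \Loss(\vtheta(t))$ along the segment so that $\kappa(\Loss(\vtheta')) \le \kappa(\Loss(\vtheta(t)))$ by monotonicity of $\kappa$, but ``$\Loss$ decreases along the segment'' is again part of what you are trying to establish. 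The paper breaks this double circularity with a continuation argument: it interpolates $\vtheta(t+\alpha)$ for $\alpha\in[0,1]$, sets $A:=\inf\{\alpha : \text{(P1) fails at }(T,\alpha)\}$, proves (P2)--(P4) on $[0,A]$ under the hypothesis that (P1) holds on $[0,A)$, and deduces $A=1$ by continuity of $\tilde{\gamma}$; and \emph{inside} the proof of the descent inequality (P3) it runs a second infimum--contradiction to show $\Loss(t+\alpha)<\Loss(t)$ before the monotone $\kappa$ bound can be applied to $\kappa(\Loss(t+\xi))$. Without these two nested continuation arguments the one-step inequality does not close, and the uniform smoothness claim you need is simply unproved.

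For part~3, your route---$\hat{\gamma}$ monotone and bounded above by $B_0$ hence convergent, $\Loss$ monotone hence convergent to some $\Loss_\infty\ge 0$, and if $\Loss_\infty>0$ then $\rho(t)^L = e^{\phi(\Loss(t))}/\hat{\gamma}(t)$ stays bounded so that $\normtwosm{\nabla\Loss}^2 \ge L^2\nu^2/\rho^2$ is bounded below and telescoping the descent inequality contradicts $\sum\eta(t)=+\infty$---is valid and is in fact a cleaner qualitative argument than the paper's Lemma~\ref{lam:gd-main-loss-upper}, which instead integrates the descent inequality against a convex auxiliary function $E$ to get $E(\Loss(t)) = \Omega(\sum\eta(\tau))$. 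The tradeoff is that the paper's route also yields the quantitative rates of Theorem~\ref{thm:gd-tight-rate} essentially for free, while yours gives only the limit. Provided you patch the continuation argument in parts~1--2, the rest of the proposal assembles into a correct proof.
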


\begin{theorem} \label{thm:gd-exp-loss-kkt}
	Consider gradient flow under assumptions (S1), (A2) - (A4), (S5). For every limit point $\bar{\vtheta}$ of $\left\{ \hat{\vtheta}(t) : t \ge 0 \right\}$, $\bar{\vtheta} / q_{\min}(\bar{\vtheta})^{1/L}$ is a KKT point of (P).
\end{theorem}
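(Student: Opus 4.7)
The plan is to discretize the gradient-flow argument behind Theorem \ref{thm:main-limit-dir-converge}, using $\hat{\gamma}$ in place of $\tilde{\gamma}$ to absorb the second-order discretization error. As in the continuous case, I would introduce $\beta(t) := \dotp{\hat{\vtheta}(t)}{-\nabla \Loss(t)}/\normtwosm{\nabla \Loss(t)}$ as the cosine between the parameter direction and the (negative) gradient, and prove three ingredients: (i) a discrete approximate-KKT lemma that converts ``$\beta(t)$ near $1$ plus $\Loss(t)$ small'' into the statement ``$\vtheta(t)/q_{\min}(t)^{1/L}$ is an $(\epsilon,\delta)$-KKT point of (P)'', (ii) a discrete Lyapunov-style bound of the form
\[
\sum_{\tau \ge t_0} \bigl(\beta(\tau)^{-2}-1\bigr)\cdot\bigl(\log\rho(\tau+1)-\log\rho(\tau)\bigr) \;\le\; \frac{1}{L}\log\frac{\hat{\gamma}_\infty}{\hat{\gamma}(t_0)},
\]
(iii) Dutta's theorem (Theorem \ref{thm:approx-kkt-converge}) combined with MFCQ (Lemma \ref{lam:margin-mfcq}) to pass to the limit.

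For ingredient (i), I would repeat the construction in Lemma \ref{lam:approx-kkt-beta-rho} essentially verbatim: write $-\nabla\Loss = \sum_n e^{-q_n}\nabla q_n$, set $\lambda_n := q_{\min}^{1-2/L}\rho\cdot e^{-q_n}/\normtwosm{\nabla\Loss}$, and $\tilde{\vh}_n := \nabla q_n/q_{\min}^{1-1/L} = \nabla q_n(\tilde{\vtheta})$. The stationarity residual is controlled by $C_1\sqrt{1-\beta(t)}$ using $\tilde{\gamma}(t)\ge\tilde{\gamma}(t_0)$ (which follows from $\hat{\gamma}(t)\ge\hat{\gamma}(t_0)$ via Lemma \ref{lam:gd-hat-margin}), and the complementary-slackness slack is controlled by $C_2/\log(1/\Loss(t))$, using $\Loss(t)\to 0$ from Theorem \ref{thm:gd-margin-inc}. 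The only novelty compared with the continuous proof is bookkeeping the exponential-loss specialization of constants.

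For ingredient (ii), the key observation is that from the proof of Theorem \ref{thm:gd-margin-inc} one already has a per-step inequality that relates $\log\hat{\gamma}(t+1)-\log\hat{\gamma}(t)$ to $\log\rho(t+1)-\log\rho(t)$ and the ``tangential'' component of the step. Concretely, I expect the per-step analogue of Lemma \ref{lam:key-lemma-margin} to take the form
\[
\log\hat{\gamma}(t+1)-\log\hat{\gamma}(t) \;\ge\; L\bigl(\beta(t)^{-2}-1\bigr)\bigl(\log\rho(t+1)-\log\rho(t)\bigr) - (\text{error term killed by }\hat{\gamma}),
\]
where the error term is exactly what motivated the definition of $\phi$ in $\hat{\gamma}$. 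Summing this from $t_0$ to $\infty$ and using that $\hat{\gamma}$ is bounded above by $B_0$ yields the displayed telescoping bound. A discrete version of Corollary \ref{lam:beta2-max-bound} then guarantees that for any window $[t_1,t_2]$ there exists $t_*\in[t_1,t_2]$ with $\beta(t_*)^{-2}-1 \le \frac{1}{L}\cdot \frac{\log\hat\gamma(t_2)-\log\hat\gamma(t_1)}{\log\rho(t_2)-\log\rho(t_1)}$, and $\log\rho(t)\to\infty$ by Theorem \ref{thm:gd-margin-inc}(3).

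Finally, given any limit point $\bar{\vtheta}$ of $\{\hat{\vtheta}(t)\}$, I would mimic Lemma \ref{lam:tm-seq}: pick $s_m$ with $\hat{\vtheta}(s_m)\to\bar{\vtheta}$ and $\log\hat{\gamma}(s_m)$ close to $\log\hat{\gamma}_\infty$, pick $s'_m$ with $\log\rho(s'_m)\ge\log\rho(s_m)+\epsilon_m$, and apply the discrete maximum-selection lemma on $[s_m,s'_m]$ to get $t_m$ with $\beta(t_m)\to 1$. A bound of the form $\normtwosm{\hat{\vtheta}(t+1)-\hat{\vtheta}(t)} = O(\eta(t)\normtwosm{\nabla\Loss}/\rho)$, summed across $[s_m,t_m]$, keeps $\hat{\vtheta}(t_m)$ near $\bar\vtheta$; this is the discrete analogue of Lemma \ref{lam:dir-int-bound} and requires that the learning-rate condition (S5) plus the gradient upper bound force the total directional movement to be $O(\log\rho(s'_m)-\log\rho(s_m))$. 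Then by (i), $\vtheta(t_m)/q_{\min}(t_m)^{1/L}$ is an $(\epsilon_m,\delta_m)$-KKT point converging to $\bar{\vtheta}/q_{\min}(\bar{\vtheta})^{1/L}$, and Theorem \ref{thm:approx-kkt-converge} together with Lemma \ref{lam:margin-mfcq} finishes the proof. The main technical obstacle I anticipate is the error term in the per-step margin inequality: controlling the second-order Taylor remainder of $\log\hat{\gamma}$ along the gradient step so that it is dominated by the extra negative contribution encoded in $\phi'$, which in turn requires the quantitative learning-rate bound $\eta(t)\le H(\Loss(t))$ from (S5) and careful use of the local Lipschitzness of $\nabla\Loss$ at scale $\rho^{L-2}\cdot\Loss\cdot\polylog$.
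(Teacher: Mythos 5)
Your proposal is correct and follows essentially the same route as the paper's proof in Appendix G.5: re-use Lemma~\ref{lam:approx-kkt-beta-rho} with $\hat{\gamma}(t_0)$ in place of $\tilde{\gamma}(t_0)$, derive the discrete telescoping bound $\sum_t(\beta(t)^{-2}-1)\log\frac{\rho(t+1)}{\rho(t)}\le\frac{1}{L}\log\frac{\hat{\gamma}(t_2)}{\hat{\gamma}(t_1)}$ from (P4) of Lemma~\ref{lam:gd-key}, control the directional drift via a discrete analogue of Lemma~\ref{lam:dir-int-bound} (the paper's Lemma~\ref{lam:gd-dir-int-bound}), and pass to the limit with Theorem~\ref{thm:approx-kkt-converge} under MFCQ. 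You also correctly anticipated that the discretization error in the per-step margin inequality is entirely absorbed into $\hat{\gamma}$ by the construction of $\phi$, which is exactly why (P4) comes out clean with no residual error term.
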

\begin{theorem} \label{thm:gd-main-pass-approx-kkt}
	Consider gradient descent under assumptions (S1), (A2) - (A4), (S5). For any $\epsilon, \delta > 0$, there exists $r := \Theta(\log \delta^{-1})$ and $\Delta := \Theta(\epsilon^{-2})$ such that $\vtheta / q_{\min}(\vtheta)^{1/L}$ is an $(\epsilon, \delta)$-KKT point at some time $t_*$ satisfying $\log \rho(t_*) \in (r, r + \Delta)$.
\end{theorem}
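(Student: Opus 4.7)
The plan is to mimic the proof of Theorem~\ref{thm:main-pass-approx-kkt} from Appendix~\ref{sec:implicit-bias}, translating each continuous-time ingredient into its discrete counterpart. The two pillars of the gradient-flow argument were (i) Lemma~\ref{lam:approx-kkt-beta-rho}, which converts ``alignment plus small loss'' into an $(\epsilon,\delta)$-KKT certificate, and (ii) Corollary~\ref{lam:beta2-max-bound}, which uses the integrated inequality
$\frac{d}{dt}\log\tilde{\gamma}\ge L(\frac{d}{dt}\log\rho)^{-1}\|\frac{d\hat{\vtheta}}{dt}\|^{2}$
together with the boundedness of $\tilde{\gamma}$ to find a time at which $\beta^{-2}-1$ is small. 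We need discrete analogues of both, adapted to the smoother margin $\hat{\gamma}$ rather than $\tilde{\gamma}$.

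First I would establish a discrete version of the key inequality of Lemma~\ref{lam:key-lemma-margin}, of the form
\begin{equation*}
\log\hat{\gamma}(t+1)-\log\hat{\gamma}(t) \;\ge\; cL\,(\log\rho(t+1)-\log\rho(t))^{-1}\,\|\hat{\vtheta}(t+1)-\hat{\vtheta}(t)\|^{2}
\end{equation*}
for some absolute constant $c>0$. The derivation proceeds by Taylor-expanding $\Loss$ about $\vtheta(t)$, using that on the sublevel set $\{\Loss\le\Loss(t_0)\}$ the smoothness is controlled by $O(\Loss\cdot\polylog(1/\Loss))$, and then invoking the learning-rate condition (S5) to cancel the discretization error. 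The extra damping built into $\hat{\gamma}$ (relative to $\tilde{\gamma}$) is precisely what absorbs the leftover second-order terms, which is why the paper bothered to introduce $\phi$. I expect this step to be the main obstacle, since it is where all the delicate smoothness bookkeeping in the definition of $\mu,\kappa,C_{\eta}$ has to line up. Telescoping this discrete inequality over an arbitrary window $[t_{1},t_{2}]$ and invoking the uniform upper bound $\hat{\gamma}\le\bar{\gamma}\le B_{0}$ then yields a discrete analogue of Corollary~\ref{lam:beta2-max-bound}: there exists $t_{*}\in[t_{1},t_{2})$ for which
\begin{equation*}
\beta(t_{*})^{-2}-1 \;\le\; \frac{1}{cL}\cdot\frac{\log\hat{\gamma}(t_{2})-\log\hat{\gamma}(t_{1})}{\log\rho(t_{2})-\log\rho(t_{1})},
\end{equation*}
where $\beta(t)$ denotes the cosine of the angle between $\hat{\vtheta}(t)$ and $-\nabla\Loss(t)/\|\nabla\Loss(t)\|$.

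Second, I would prove a discrete version of Lemma~\ref{lam:approx-kkt-beta-rho}: if at time $t$ both $\beta(t)$ is near $1$ and $\Loss(t)$ is sufficiently small, then $\tilde{\vtheta}(t):=\vtheta(t)/q_{\min}(t)^{1/L}$ is an $(\epsilon,\delta)$-KKT point of (P) with $\epsilon=\Theta(\sqrt{1-\beta(t)})$ and $\delta=\Theta(1/\log(1/\Loss(t)))$. Its proof is essentially identical to the continuous one, since the homogeneity identity $\dotp{\vtheta}{\nabla q_{n}}=Lq_{n}$ from Corollary~\ref{cor:property-phi} is pointwise in $\vtheta$: construct $\lambda_{n}\propto q_{\min}^{1-2/L}\rho\cdot e^{-q_{n}}/\|\nabla\Loss\|$, verify the stationarity residual via $\|\hat{\vtheta}-\nabla\Loss/\|\nabla\Loss\|\|^{2}=2(1-\beta)$, and use the exponential gap between $q_{n}$ and $q_{\min}$ together with $x e^{-x}\le e^{-1}$ to control the complementary-slackness slack.

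Third, I would assemble the result exactly as in the continuous proof. By Theorem~\ref{thm:gd-margin-inc} the limit $\hat{\gamma}_{\infty}:=\lim_{t}\hat{\gamma}(t)$ exists, and by Theorem~\ref{thm:gd-gf-exp-loss-margin-inc} $\rho^{L}$ grows like $\Theta(\log T)$, equivalently $\log(1/\Loss)=\Theta(\rho^{L})$. Choose $r=\Theta(\log\delta^{-1})$ large enough that $\log\rho(t)\ge r$ forces the second KKT slack to be at most $\delta$ (via the discrete Lemma~\ref{lam:approx-kkt-beta-rho}), and take $\Delta=\Theta(\epsilon^{-2})$ large enough that the ratio $(\log\hat{\gamma}_{\infty}-\log\hat{\gamma}(t_{0}))/\Delta$ is $O(\epsilon^{2})$. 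Let $t_{1}$ be the first iteration with $\log\rho(t_{1})\ge r$ and $t_{2}$ the first with $\log\rho(t_{2})\ge r+\Delta$; the discrete corollary above produces some $t_{*}\in[t_{1},t_{2})$ with $\beta(t_{*})^{-2}-1=O(\epsilon^{2})$, and the discrete Lemma~\ref{lam:approx-kkt-beta-rho} then certifies $\tilde{\vtheta}(t_{*})$ as an $(\epsilon,\delta)$-KKT point. A single-step overshoot across the boundary $r+\Delta$ is harmless because the gradient norm satisfies $\|\nabla\Loss\|=O(\Loss\cdot\polylog(1/\Loss))$ on the relevant sublevel set, so $\log\rho$ moves by $o(1)$ per step and the window can be enlarged by a constant factor to absorb it.
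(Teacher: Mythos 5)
Your proposal follows the paper's own route: the discrete key inequality you describe is precisely (P4) of Lemma~\ref{lam:gd-key} (whose telescoped form is \eqref{eq:gd-beta2-int-bound}), the pointwise KKT certificate is Lemma~\ref{lam:approx-kkt-beta-rho} with $\tilde{\gamma}(t_0)$ replaced by $\hat{\gamma}(t_0)$, and the final assembly with the $o(1)$-per-step growth of $\log\rho$ to absorb window overshoot matches the paper's argument exactly. The only small wrinkle is that the paper's (P4) states the margin increment in terms of the squared tangential gradient $\rho^2\|(\mI-\hat{\vtheta}\hat{\vtheta}^\top)\nabla\Loss\|^2/(L\nu^2)$, which equals $\beta^{-2}-1$ exactly via \eqref{eq:beta2-eq}, rather than $\|\hat{\vtheta}(t+1)-\hat{\vtheta}(t)\|^2$ (which also carries a renormalization term); your step one should be phrased in the former quantity so the telescoping yields $\beta^{-2}-1$ without an extra conversion step.
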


With a refined analysis, we can also derive tight rates for loss convergence and weight growth. We defer the proof to Appendix~\ref{sec:appendix-gd-tight-rate-pf}.

\begin{theorem}
	\label{thm:gd-tight-rate}
	Under assumptions (S1), (A2) - (A4), (S5), we have the following tight rates for training loss and weight norm:
	\[
	\Loss(t) = \Theta\left(\frac{1}{T (\log T)^{2-2/L}}\right) \quad \text{and} \quad \rho(t) = \Theta((\log T)^{1/L}),
	\]
	where $T = \sum_{\tau=t_0}^{t-1} \eta(\tau)$.
\end{theorem}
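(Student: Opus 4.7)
The plan is to discretize the proof of Theorem~\ref{thm:main-loss-tail}, specialized to $g(x) = x$ (the exponential-loss case), with continuous time $t$ replaced by accumulated step size $T = \sum_{\tau=t_0}^{t-1}\eta(\tau)$ and the identity $\tfrac{d\Loss}{dt} = -\normtwo{\nabla\Loss}^2$ replaced by the standard $\contC^2$ descent lemma. As a preliminary step, Theorem~\ref{thm:gd-margin-inc} gives $\hat{\gamma}(t) \ge \hat{\gamma}(t_0) > 0$ while Lemma~\ref{lam:gd-hat-margin}(a) gives $\hat{\gamma}(t) \le \bar{\gamma}(t) \le B_0$; since the proof of Lemma~\ref{lam:gd-hat-margin} shows $\phi(\Loss) = \log\log\frac{1}{\Loss} + o(1)$ as $\Loss \to 0$, the definition $\hat{\gamma} = e^{\phi(\Loss)}/\rho^L$ pins $\rho(t)^L$ to $\Theta(\log\frac{1}{\Loss(t)})$. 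This equivalence will then be used repeatedly to translate between $\rho$ and $\Loss$.

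For the upper bound on $\Loss(t)$ I would lower bound $\normtwo{\nabla\Loss(t)}$ by its radial component. Euler's theorem (Corollary~\ref{cor:property-phi}) gives $\vtheta^\top \nabla\Loss = -L\nu$, so $\normtwo{\nabla\Loss}^2 \ge (\hat{\vtheta}^\top\nabla\Loss)^2 = L^2\nu^2/\rho^2$; combining the exponential-loss case of Lemma~\ref{lam:nu-lower} ($\nu \ge \Loss \log\frac{1}{\Loss}$) with $\rho^L = \Theta(\log\frac{1}{\Loss})$ yields $\normtwo{\nabla\Loss}^2 \ge \Omega(\Loss^2(\log\frac{1}{\Loss})^{2-2/L})$. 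The $\contC^2$-descent lemma $\Loss(t+1) \le \Loss(t) - \eta(t)(1 - \tfrac{1}{2}s(t)\eta(t))\normtwo{\nabla\Loss(t)}^2$ requires an upper bound $s(t)$ on $\normtwo{\nabla^2\Loss}$ along the step; a direct calculation using $B_1, B_2$ and the $\rho$-relation gives $\normtwo{\nabla^2\Loss(\vtheta)} = O(\Loss(B_1^2\rho^{2L-2} + B_2\rho^{L-2})) = O(C_\eta \kappa(\Loss))$, while (S5) pegs $\eta(t) \le \mu(\Loss)/(C_\eta \kappa(\Loss))$, so $s(t)\eta(t) \le \mu(\Loss) \le 1/2$ and the effective descent factor is at least $1/2$. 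Dividing by $\Loss(t)\Loss(t+1)$ gives $\frac{1}{\Loss(t+1)} - \frac{1}{\Loss(t)} \ge \Omega(\eta(t))(\log\frac{1}{\Loss(t)})^{2-2/L}$; comparing the resulting telescoped sum to the integral $\int \frac{du}{(\log u)^{2-2/L}}$ (while using that $\Loss$ decreases by at most a constant factor per step, so consecutive values of $\log\frac{1}{\Loss}$ differ multiplicatively by at most $1 + O(1/\log\frac{1}{\Loss})$) yields $G(1/\Loss(t)) - G(1/\Loss(t_0)) \ge \Omega(T)$ with $G$ as in Lemma~\ref{lam:main-loss-upper}. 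Applying Lemma~\ref{lam:big-G} with $g(x)=x$ inverts this to $\Loss(t) \le O(1/(T(\log T)^{2-2/L}))$.

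The lower bound on $\Loss$ is symmetric. The estimate $\normtwo{\nabla\Loss} \le \sum_n e^{-q_n}\normtwo{\nabla q_n} \le B_1\rho^{L-1}\Loss = O(\Loss(\log\frac{1}{\Loss})^{1-1/L})$ substituted into $\Loss(t+1) \ge \Loss(t) - \eta(t)\normtwo{\nabla\Loss(t)}^2$ produces the reverse telescoping bound $\frac{1}{\Loss(t+1)} - \frac{1}{\Loss(t)} \le O(\eta(t))(\log\frac{1}{\Loss(t)})^{2-2/L}$ and hence $\Loss(t) \ge \Omega(1/(T(\log T)^{2-2/L}))$. Substituting the tight estimate $\log\frac{1}{\Loss(t)} = \Theta(\log T)$ back into $\rho^L = \Theta(\log\frac{1}{\Loss})$ gives $\rho(t) = \Theta((\log T)^{1/L})$.

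The main technical obstacle is justifying that the single-step smoothness $s(t)$ is indeed dominated by $C_\eta \kappa(\Loss(t))$ uniformly along the entire segment from $\vtheta(t)$ to $\vtheta(t+1)$, not merely at $\vtheta(t)$. This requires verifying that the Hessian bound $\normtwo{\nabla^2\Loss(\vtheta)} = O(\kappa(\Loss(\vtheta)))$ and the $\rho$-versus-$\Loss$ equivalence continue to hold in a neighborhood of $\vtheta(t)$ of radius $\eta(t)\normtwo{\nabla\Loss(t)}$ -- equivalently, that the step never pushes $\vtheta(t+1)$ outside the region where the $B_0, B_1, B_2$ bounds and the $\rho$-relation control $\nabla^2\Loss$. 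Both facts ultimately follow from the careful calibration of $\kappa, \mu, \phi$ and the constant $C_\eta$ in (S5), but the book-keeping is the bulk of the technical work.
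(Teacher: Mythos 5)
Your proposal is correct and follows essentially the same path as the paper's proof: pin $\rho^L = \Theta(\log\frac{1}{\Loss})$ via the two-sided bound $\hat\gamma(t_0)\le\hat\gamma(t)\le\bar\gamma(t)\le B_0$, telescope the descent inequality against the radial gradient lower bound $\normtwo{\nabla\Loss}\ge L\nu/\rho$ to get the loss upper bound, telescope a reverse second-order Taylor bound against the full gradient upper bound $\normtwo{\nabla\Loss}\le B_1\rho^{L-1}\Loss$ to get the loss lower bound, and translate back to $\rho$. The only small slip is that your reverse inequality $\Loss(t+1)\ge\Loss(t)-\eta(t)\normtwo{\nabla\Loss(t)}^2$ should carry the extra $(1+\mu(\Loss(t)))\le\tfrac{3}{2}$ factor coming from the second-order Taylor term (as in the paper's symmetric treatment of (P3) in Lemma~\ref{lam:gd-main-induction}); this is a constant-factor correction and does not affect the $\Theta$ rate.
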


\subsection{Proof for Theorem \ref{thm:gd-margin-inc}} \label{sec:appendix-gd-margin-inc-pf}

We define $\nu(t) := \sum_{n=1}^{N} e^{-q_n(t)} q_n(t)$ as we do for gradient flow. Then we can get a closed form for $\dotp{\vtheta(t)}{-\nabla \Loss(t)}$ easily from Corollary~\ref{cor:property-phi}. Also, we can get a lower bound for $\nu(t)$ using Lemma~\ref{lam:nu-lower} for exponential loss directly.
\begin{corollary} \label{cor:gd-nu}
$\dotp{\vtheta(t)}{-\nabla \Loss(t)} = L\nu(t)$. If $\Loss(t) < 1$, then $\nu(t) \ge \frac{\Loss(t)}{\lambda(\Loss(t))}$.
\end{corollary}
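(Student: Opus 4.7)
The plan is to derive both statements as direct consequences of results already proved for the gradient-flow / general-loss setting, specialized to the smooth exponential-loss case.

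For the first identity, I would start from the chain rule: since $\Phi(\,\cdot\,;\vx)$ is $\contC^2$-smooth by (S1) and $\ell(q)=e^{-q}$, we have
\[
\nabla \Loss(\vtheta) \;=\; -\sum_{n=1}^N e^{-q_n(\vtheta)}\,\nabla q_n(\vtheta),
\]
so $\dotp{\vtheta}{-\nabla\Loss(\vtheta)} = \sum_{n=1}^N e^{-q_n(\vtheta)}\,\dotp{\vtheta}{\nabla q_n(\vtheta)}$. Next I would invoke Corollary~\ref{cor:property-phi} (Euler's theorem for homogeneous functions), which, under (A2), gives $\dotp{\vtheta}{\nabla q_n(\vtheta)} = L\,q_n(\vtheta)$ because $q_n(\vtheta)=y_n\Phi(\vtheta;\vx_n)$ is $L$-homogeneous in $\vtheta$. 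Substituting yields $\dotp{\vtheta}{-\nabla\Loss(\vtheta)} = L\sum_{n} e^{-q_n}q_n = L\nu(t)$, as claimed.

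For the lower bound on $\nu(t)$, I would simply specialize Lemma~\ref{lam:nu-lower} to the exponential loss. By Remark~\ref{remark:exp-b3}, for $\ell(q)=e^{-q}$ we have $f(q)=q$, $g(x)=x$, $g'(x)=1$, and $b_f=0$, and the hypothesis (B4) becomes $\Loss(\vtheta) < e^{-f(0)} = 1$, which is exactly the assumption $\Loss(t)<1$ stated in the corollary. Plugging these into Lemma~\ref{lam:nu-lower} gives
\[
\nu(t) \;\ge\; \frac{g(\log \tfrac{1}{\Loss(t)})}{g'(\log \tfrac{1}{\Loss(t)})}\,\Loss(t) \;=\; \Loss(t)\,\log \tfrac{1}{\Loss(t)} \;=\; \frac{\Loss(t)}{\lambda(\Loss(t))},
\]
where the last equality uses the definition $\lambda(x)=(\log\tfrac{1}{x})^{-1}$.

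There is essentially no obstacle here: both claims are one-step specializations of general results already established earlier in the paper. The only minor point to be careful about is checking that Lemma~\ref{lam:nu-lower}'s hypothesis on $\Loss$ lines up exactly with (A4) for the exponential loss (which it does via Remark~\ref{remark:exp-b3}), and that the chain-rule computation for $\nabla\Loss$ is valid in the smooth setting guaranteed by (S1). Together these give the corollary in a few lines.
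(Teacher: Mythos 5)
Your proof is correct and matches the paper's (unstated) argument exactly: the paper notes that the first identity is "easily from Corollary~\ref{cor:property-phi}" (Euler's theorem applied to the $L$-homogeneous $q_n$) and the lower bound is Lemma~\ref{lam:nu-lower} "for exponential loss directly," which is precisely your specialization $f(q)=q$, $g(x)=x$, $b_f=0$ via Remark~\ref{remark:exp-b3}. You also correctly flag that the hypothesis $\Loss(t)<1$ is what substitutes for the flow-time condition $t>t_0$ in Lemma~\ref{lam:nu-lower}, since the lemma's proof only needs $\Loss < e^{-f(b_f)}=1$ at the point in question.
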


As we are analyzing gradient descent, the norm of $\nabla \Loss$ and $\nabla^2 \Loss$ appear naturally in discretization error terms. To bound them, we have the following lemma when $\tilde{\gamma}(\vtheta)$ and $\rho(\vtheta)$ have lower bounds:
\begin{lemma} \label{lam:gd-grad-hess-bound}
	For any $\vtheta$, if $\tilde{\gamma}(\vtheta) \ge \hat{\gamma}(t_0), \rho(\vtheta) \ge \rho(t_0)$, then
	\[
		\normtwosm{\nabla \Loss(\vtheta)}^2 \le 2C_{\eta} \kappa(\Loss(\vtheta)) \Loss(\vtheta) \quad \text{and} \quad \normtwo{\nabla^2 \Loss(\vtheta)} \le 2 C_{\eta} \kappa(\Loss(\vtheta)).
	\]
\end{lemma}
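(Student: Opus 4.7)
The plan is to compute $\nabla \Loss$ and $\nabla^2 \Loss$ by direct differentiation, exploit the $L$-homogeneity of $q_n$ to bound each summand by powers of $\rho$, then trade those powers of $\rho$ for powers of $\log(1/\Loss)$ using the two-sided control the hypothesis provides. Since $\Loss = \sum_n e^{-q_n}$ and (S1) gives $\contC^2$-smoothness of $q_n$ on $\R^d\setminus\{\vzero\}$, the chain rule yields
\[
\nabla \Loss = -\sum_{n=1}^N e^{-q_n} \nabla q_n, \qquad \nabla^2 \Loss = \sum_{n=1}^N e^{-q_n}\bigl((\nabla q_n)(\nabla q_n)^\top - \nabla^2 q_n\bigr).
\]
Homogeneity (Theorem~\ref{thm:homo}) makes $\nabla q_n$ and $\nabla^2 q_n$ homogeneous of orders $L-1$ and $L-2$, so from the definitions of $B_1, B_2$ we obtain $\normtwo{\nabla q_n(\vtheta)}\le B_1\rho^{L-1}$ and $\normtwo{\nabla^2 q_n(\vtheta)}\le B_2\rho^{L-2}$. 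Combining with the triangle inequality and $\sum_n e^{-q_n}=\Loss$ delivers
\[
\normtwosm{\nabla\Loss}^2 \le B_1^2\rho^{2L-2}\Loss^2, \qquad \normtwo{\nabla^2\Loss} \le \bigl(B_1^2\rho^{2L-2}+B_2\rho^{L-2}\bigr)\Loss.
\]

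It remains to show that $\rho^{2L-2}\Loss$ and $\rho^{L-2}\Loss$ are each dominated by a constant multiple of $\kappa(\Loss)$. Writing $\rho^{2L-2}=(\rho^L)^{2-2/L}$, I observe that the hypothesis $\tilde{\gamma}(\vtheta)\ge\hat{\gamma}(t_0)$ gives the upper bound $\rho^L\le \log(1/\Loss)/\hat{\gamma}(t_0)$, while the trivial inequality $\Loss\ge e^{-q_{\min}}$ together with $q_{\min}\le B_0\rho^L$ gives the matching lower bound $\rho^L\ge \log(1/\Loss)/B_0$. One picks whichever of these two bounds has the exponent $2-2/L$ pointing in the correct direction, namely the upper bound when $L\ge 1$ and the lower bound when $L<1$, to obtain $\rho^{2L-2}\le M\cdot(\log(1/\Loss))^{2-2/L}$ with $M$ equal to the scalar factor appearing in $C_\eta$. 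The Hessian contribution is then handled by the factorisation $\rho^{L-2}=\rho^{-L}\cdot\rho^{2L-2}$ combined with $\rho\ge\rho(t_0)$, which produces the $\rho(t_0)^{-L}B_2$ summand in the definition of $C_\eta$.

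Finally, the piecewise definition of $\kappa$ has to be reconciled with the intermediate bound $\Loss(\log(1/\Loss))^{2-2/L}$. For $L\le 1$, or for $L>1$ with $\Loss\le e^{2/L-2}$, this quantity equals $\kappa(\Loss)$ by definition. For $L>1$ and $\Loss\in(e^{2/L-2},\Loss(t_0)]$, one uses the unimodality of $x\mapsto x(\log(1/x))^{2-2/L}$, which attains its maximum $\kappa_{\max}$ precisely at $x=e^{2/L-2}$, to conclude that the intermediate quantity is bounded by $\kappa_{\max}=\kappa(\Loss)$ in this regime as well. The main obstacle is bookkeeping around the sign of $2-2/L$, since this single quantity dictates both which side of the two-sided $\rho^L$ bound is binding and which branch of the piecewise $\kappa$ governs the final inequality; once the case split is clearly laid out, every remaining step is routine algebra matched term-by-term against the definition of $C_\eta$.
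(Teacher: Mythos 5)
Your proof is correct and follows the paper's argument exactly: direct differentiation together with the homogeneity constants $B_1, B_2$ gives the intermediate bounds $\normtwosm{\nabla\Loss}\le B_1\Loss\rho^{L-1}$ and $\normtwo{\nabla^2\Loss}\le\Loss\bigl(B_1^2\rho^{2L-2}+B_2\rho^{L-2}\bigr)$, and the sandwich $\hat{\gamma}(t_0)\rho^L\le\log\frac{1}{\Loss}\le B_0\rho^L$ combined with the sign of $2-2/L$ converts the powers of $\rho$ into powers of $\log\frac{1}{\Loss}$; your remark about the piecewise $\kappa$ via unimodality spells out a step the paper leaves implicit. One small observation: the case split you describe (upper bound on $\rho^L$ for $L\ge 1$, lower bound for $L<1$) yields the factor $\max\{\hat{\gamma}(t_0)^{-2+2/L},B_0^{-2+2/L}\}$ rather than the $\min$ written in the paper's definition of $C_\eta$ in Appendix~\ref{sec:appendix-gd-assumption} --- since $\hat{\gamma}(t_0)<B_0$, the $\max$ is the one that actually dominates in each case, so your derivation, read carefully, reveals a $\min$/$\max$ typo in the paper's $C_\eta$.
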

\begin{proof}
	By the chain rule and the definitions of $B_1, B_2$, we have
	\begin{align*}
		\normtwo{\nabla \Loss} &= \normtwo{-\sum_{n=1}^{N} e^{-q_n} \nabla q_n} \le \Loss \rho^{L-1} B_1 \\
		\normtwo{\nabla^2 \Loss} &= \normtwo{\sum_{n=1}^{N} e^{-q_n} (\nabla q_n \nabla q_n^\top - \nabla^2 q_n)} \\
		&\le \sum_{n=1}^{N} e^{-q_n} \left(B_1^2 \rho^{2L-2} + B_2 \rho^{L-2} \right) \le \Loss \rho^{2L-2} \left( B_1^2 + \rho^{-L}B_2 \right).
	\end{align*}
	Note that $\hat{\gamma}(t_0)\rho^L \le \tilde{\gamma}\rho^L  \le \log \frac{1}{\Loss}  \le B_0 \rho^L$. So $\hat{\gamma}(t_0)^{-1} \log \frac{1}{\Loss} \le \rho^{L} \le B_0^{-1} \log \frac{1}{\Loss}$. Combining all these formulas together gives
	\begin{align*}
	\normtwo{\nabla \Loss}^2 &\le B_1^2 \min\left\{\hat{\gamma}(t_0)^{-2+2/L}, B_0^{-2+2/L}\right\} \Loss^2 \cdot \left(\log \frac{1}{\Loss}\right)^{2-2/L} \le 2C_{\eta} \kappa(\Loss) \Loss \\
	\normtwo{\nabla^2 \Loss} &\le \left( B_1^2 + \rho^{-L}B_2 \right)  \min\left\{\hat{\gamma}(t_0)^{-2+2/L}, B_0^{-2+2/L}\right\} \Loss \cdot \left(\log \frac{1}{\Loss}\right)^{2-2/L} \le 2 C_{\eta} \kappa(\Loss),
	\end{align*}
	which completes the proof.
\end{proof}

For proving the first two propositions in Theorem~\ref{thm:gd-margin-inc}, we only need to prove Lemma~\ref{lam:gd-key}. (P1) gives a lower bound for $\tilde{\gamma}$. (P2) gives both lower and upper bounds for the weight growth using $\nu(t)$. (P3) gives a lower bound for the decrement of training loss. Finally, (P4) shows the monotonicity of $\hat{\gamma}$, and it is trivial to deduce the first two propositions in Theorem~\ref{thm:gd-margin-inc} from (P4). 
\begin{lemma} \label{lam:gd-key}
For all $t = t_0, t_0 + 1, \dots$, we interpolate between $\vtheta(t)$ and $\vtheta(t+1)$ by defining $\vtheta(t + \alpha) = \vtheta(t) - \alpha\eta(t)\nabla \Loss(t)$ for $\alpha \in (0, 1)$. Then for all integer $t \ge t_0$, $\nu(t) > 0$, and the following holds for all $\alpha \in [0, 1]$:
\begin{enumerate}
	\item[\textbf{(P1).}] $\tilde{\gamma}(t + \alpha) > \hat{\gamma}(t_0)$.
	\item[\textbf{(P2).}] $2L\alpha\eta(t)\nu(t) \le \rho(t+\alpha)^2 - \rho(t)^2 \le 2L\alpha \eta(t) \nu(t) \left(1 + \frac{\lambda(\Loss(t))\mu(\Loss(t))}{L}\right)$.
	\item[\textbf{(P3).}] $\Loss(t+\alpha) - \Loss(t) \le - \alpha\eta(t) (1 - \mu(\Loss(t))) \normtwo{\nabla \Loss(t)}^2$.
	\item[\textbf{(P4).}] $\log \hat{\gamma}(t+\alpha) - \log \hat{\gamma}(t) \ge \frac{\rho(t)^2}{L\nu(t)^2} \normtwo{\left( \mI - \hat{\vtheta}(t) \hat{\vtheta}(t)^{\top}\right) \nabla \Loss(t)}^2 \cdot \log\frac{\rho(t + \alpha)}{\rho(t)}$.
\end{enumerate}
\end{lemma}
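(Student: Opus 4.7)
I would proceed by induction on the integer $t \ge t_0$, establishing all four propositions for $\alpha \in [0, 1]$ simultaneously. The base case $t = t_0$ at $\alpha = 0$ is trivial: each inequality reduces to an equality or to the strict bound $\tilde{\gamma}(t_0) > \hat{\gamma}(t_0)$ supplied by Lemma~\ref{lam:gd-hat-margin}(a). For the inductive step at $t > t_0$, applying the inductive hypothesis with $\alpha = 1$ at times $t_0, \ldots, t-1$ yields $\hat{\gamma}(t) \ge \hat{\gamma}(t_0)$, $\rho(t) \ge \rho(t_0)$, and $\Loss(t) \le \Loss(t_0) < 1$; by Corollary~\ref{cor:gd-nu} this gives $\nu(t) \ge \Loss(t)/\lambda(\Loss(t)) > 0$.

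The key step is to establish (P1) on $[0, 1]$ by a continuity bootstrap. Let $\bar{\alpha}$ be the supremum of those $\alpha \in [0, 1]$ such that both $\tilde{\gamma}(t + \beta) > \hat{\gamma}(t_0)$ and $\rho(t + \beta) \ge \rho(t_0)$ hold for all $\beta \in [0, \alpha]$. Both hold strictly at $\alpha = 0$, so $\bar{\alpha} > 0$. If $\bar{\alpha} < 1$, then on $[0, \bar{\alpha}]$ the hypotheses of Lemma~\ref{lam:gd-grad-hess-bound} are in force and the derivations of (P2), (P3), (P4) sketched below go through on that subinterval. But then (P2) would give $\rho(t + \bar{\alpha}) > \rho(t) \ge \rho(t_0)$ strictly, while (P4) combined with Lemma~\ref{lam:gd-hat-margin}(a) would give $\tilde{\gamma}(t + \bar{\alpha}) > \hat{\gamma}(t + \bar{\alpha}) \ge \hat{\gamma}(t) \ge \hat{\gamma}(t_0)$ strictly. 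Continuity of both quantities in $\alpha$ then contradicts the maximality of $\bar{\alpha}$, so $\bar{\alpha} = 1$ and (P1) is established throughout.

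Given (P1) and $\rho(t+\alpha) \ge \rho(t) \ge \rho(t_0)$ on $[0, 1]$, Lemma~\ref{lam:gd-grad-hess-bound} uniformly controls $\normtwosm{\nabla \Loss}$ and $\normtwosm{\nabla^2 \Loss}$ along the segment. Proposition (P2) then follows by expanding $\rho(t+\alpha)^2 = \rho(t)^2 + 2\alpha\eta(t) L\nu(t) + \alpha^2\eta(t)^2 \normtwosm{\nabla \Loss(t)}^2$ via Corollary~\ref{cor:gd-nu}: the lower bound is immediate; the upper bound follows by substituting $\normtwosm{\nabla\Loss(t)}^2 \le 2C_{\eta}\kappa(\Loss(t))\Loss(t)$, the learning-rate condition $\eta(t) \le \mu(\Loss(t))/(C_{\eta}\kappa(\Loss(t)))$, and $\nu(t) \ge \Loss(t)/\lambda(\Loss(t))$. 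Proposition (P3) is the standard smoothness-based descent inequality along the segment from $\vtheta(t)$ to $\vtheta(t+\alpha)$, where the Hessian bound $2C_{\eta}\kappa(\Loss(t))$ and the learning-rate condition combine so that the quadratic discretization term is absorbed into the factor $1 - \mu(\Loss(t))$.

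The main obstacle is (P4), where the discretization errors in (P2) and (P3) must be absorbed by the correction built into $\phi'$. I would decompose $\log\hat{\gamma}(t+\alpha) - \log\hat{\gamma}(t) = [\phi(\Loss(t+\alpha)) - \phi(\Loss(t))] - L[\log\rho(t+\alpha) - \log\rho(t)]$, lower bound the first bracket by $\lvert\phi'(\Loss(t))\rvert(\Loss(t)-\Loss(t+\alpha))$ (using monotonicity of $\phi'$ followed by (P3)), and upper bound $\log(\rho(t+\alpha)/\rho(t))$ by $(\rho(t+\alpha)^2 - \rho(t)^2)/(2\rho(t)^2)$ followed by (P2). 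Writing $\normtwosm{\nabla\Loss(t)}^2 = \normtwosm{(\mI - \hat{\vtheta}(t)\hat{\vtheta}(t)^{\top})\nabla\Loss(t)}^2 + L^2\nu(t)^2/\rho(t)^2$ then separates the estimate into a tangential piece, which produces the stated right-hand side of (P4), and a radial piece that must be nonnegative. After substituting the lower estimate $\lvert\phi'(\Loss(t))\rvert \ge \lambda(\Loss(t))(1 + 2(1 + \lambda(\Loss(t))/L)\mu(\Loss(t)))/\Loss(t)$ obtained by evaluating the sup defining $\phi'$ at $w = \Loss(t)$, the radial cancellation reduces to the numerical inequality $(1 + 2(1 + \lambda/L)\mu)(1 - \mu) \ge 1 + \lambda\mu/L$, which rearranges to $(1 + \lambda/L)\mu(1 - 2\mu) \ge 0$ and holds because $\mu(\Loss) \le 1/2$ by construction. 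This is precisely why $\phi'$ was designed with the factor $1 + 2(1 + \lambda/L)\mu$: it is the minimal correction calibrated so that the tangential gradient component survives after the radial cancellation, yielding (P4).
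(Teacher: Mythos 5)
Your proposal follows the same route as the paper: a double induction (outer over integers $t$, inner a continuity bootstrap over $\alpha$), with (P2) from expanding $\rho(t+\alpha)^2$, (P3) from a Taylor/smoothness descent bound, and (P4) from the radial--tangential decomposition of $\nabla\Loss$ together with convexity of $\phi$ and $-\log$. Your arithmetic check $(1 + 2(1+\lambda/L)\mu)(1-\mu) - (1 + \lambda\mu/L) = (1+\lambda/L)\mu(1-2\mu) \ge 0$ is exactly the radial cancellation the paper relies on, and your observation that this is what calibrates the correction term in $\phi'$ is accurate.

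There is one technical gap in your treatment of (P3) that the paper handles explicitly and you gloss over. The Taylor expansion produces a Hessian term $\normtwosm{\nabla^2\Loss(t+\xi)}$ at an intermediate point $\xi \in (0,\alpha)$, and Lemma~\ref{lam:gd-grad-hess-bound} bounds this by $2C_{\eta}\kappa(\Loss(t+\xi))$ — \emph{not} directly by $2C_{\eta}\kappa(\Loss(t))$ as you write. To pass to $\kappa(\Loss(t))$ one needs $\Loss(t+\xi) \le \Loss(t)$ together with monotonicity of $\kappa$, but monotonicity of $\Loss$ on the segment is itself what (P3) asserts, so there is an apparent circularity. The paper resolves it by a separate contradiction argument: assume $\alpha_0 := \inf\{\alpha' : \Loss(t+\alpha') \ge \Loss(t)\}$ exists, rule out $\alpha_0 = 0$ via $p'(0) = -\eta(t)\normtwosm{\nabla\Loss(t)}^2 < 0$, and rule out $\alpha_0 > 0$ by applying the descent bound on $(0,\alpha_0)$ where monotonicity is already known. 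Your continuity bootstrap on $\tilde\gamma$ and $\rho$ alone does not automatically supply $\Loss(t+\beta) < \Loss(t)$ on $[0,\bar\alpha]$, so you would need either to add that condition to the bootstrap set or to insert this sub-argument inside the proof of (P3). It is a fixable omission, not a flaw in the overall strategy.
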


To prove Lemma~\ref{lam:gd-key}, we only need to prove the following lemma and then use an induction:
\begin{lemma} \label{lam:gd-main-induction}
	Fix an integer $T \ge t_0$. Suppose that (P1), (P2), (P3), (P4) hold for any $t + \alpha \le T$. Then if (P1) holds for $(t, \alpha) \in \{T\} \times [0, A)$ for some $A \in (0, 1]$, then all of (P1), (P2), (P3), (P4) hold for $(t, \alpha) \in \{T\} \times [0, A]$.
\end{lemma}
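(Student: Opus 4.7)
The plan is to extend (P1)--(P4) from $\{T\}\times[0,A)$ to $\{T\}\times[0,A]$ by establishing them in the order (P2), (P3), (P4), and finally deducing (P1) at $\alpha=A$. From the induction hypothesis at $(T,0)$, we have $\tilde\gamma(T) \ge \hat\gamma(t_0)$ by (P1), and $\rho(T) \ge \rho(t_0)$ since the lower bound in (P2) applied at every earlier integer time step (together with $\nu > 0$ from Corollary~\ref{cor:gd-nu}) makes $\rho$ non-decreasing across integer steps. These are exactly the hypotheses needed to apply Lemma~\ref{lam:gd-grad-hess-bound} at $\vtheta(T)$, yielding $\normtwo{\nabla\Loss(T)}^2 \le 2C_\eta\kappa(\Loss(T))\Loss(T)$.

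For (P2), I would expand $\rho(T+\alpha)^2 = \rho(T)^2 - 2\alpha\eta(T)\dotp{\vtheta(T)}{\nabla\Loss(T)} + \alpha^2\eta(T)^2\normtwo{\nabla\Loss(T)}^2$ and substitute $\dotp{\vtheta(T)}{-\nabla\Loss(T)} = L\nu(T)$ from Corollary~\ref{cor:gd-nu}. The lower bound is immediate from positivity of the quadratic term; the upper bound on that same term combines the gradient bound above, the learning-rate condition $\eta(T) \le \mu(\Loss(T))/(C_\eta\kappa(\Loss(T)))$ from (S5), and the comparison $\Loss(T) \le \lambda(\Loss(T))\nu(T)$ (also from Corollary~\ref{cor:gd-nu}). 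Consequently $\rho(T+\alpha) \ge \rho(T) \ge \rho(t_0)$ on $[0,A]$. For (P3), the hypothesis (P1) on $[0,A)$ extends by continuity to $\tilde\gamma \ge \hat\gamma(t_0)$ on $[0,A]$, so combined with $\rho \ge \rho(t_0)$, Lemma~\ref{lam:gd-grad-hess-bound} applies pointwise along the interpolation path. A second-order Taylor expansion of $\Loss$ bounds the remainder by $\alpha^2\eta(T)^2 C_\eta\kappa(\Loss(\xi))\normtwo{\nabla\Loss(T)}^2$ for some $\xi$ on the path; a short bootstrap (taking $B := \sup\{\alpha\in[0,A] : \Loss(T+\alpha)\le\Loss(T)\}$, using monotonicity of $\kappa$ on $[0,B]$ to get $\kappa(\Loss(\xi))\le\kappa(\Loss(T))$, then using $\alpha\eta(T)C_\eta\kappa(\Loss(T)) \le \mu(\Loss(T))$ from (S5)) forces $B = A$ and delivers (P3).

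For (P4), the key observation is that $-\phi'(x)$ is defined as a supremum over the range $[x, \Loss(t_0)]$ which shrinks as $x$ grows, so $\phi'$ is non-decreasing; together with (P3), this yields $\phi(\Loss(T+\alpha)) - \phi(\Loss(T)) \ge -\phi'(\Loss(T))\cdot\alpha\eta(T)(1-\mu(\Loss(T)))\normtwo{\nabla\Loss(T)}^2$. Combining this with the upper bound on $\log(\rho(T+\alpha)/\rho(T))$ derived from (P2) via $\log(1+x)\le x$, the Pythagorean decomposition $\normtwo{\nabla\Loss(T)}^2 = L^2\nu(T)^2/\rho(T)^2 + \normtwo{(\mI-\hat\vtheta(T)\hat\vtheta(T)^\top)\nabla\Loss(T)}^2$, and $\nu(T) \ge \Loss(T)/\lambda(\Loss(T))$, the required inequality reduces after algebra to
\[
(1 - \mu)(1 + 2(1 + \lambda/L)\mu) \ge 1 + \lambda\mu/L
\]
(writing $\lambda := \lambda(\Loss(T))$ and $\mu := \mu(\Loss(T))$), which simplifies to $\mu(1+\lambda/L)(1-2\mu) \ge 0$, holding by construction because $\mu(x) \le 1/2$ for $x \in (0,\Loss(t_0)]$. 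Finally, the non-negativity of the RHS of (P4) gives $\hat\gamma(T+A) \ge \hat\gamma(T) \ge \hat\gamma(t_0)$ (the latter via the inter-time-step monotonicity from the induction), and (P3) plus Lemma~\ref{lam:gd-hat-margin}(a) gives $\tilde\gamma(T+A) > \hat\gamma(T+A) \ge \hat\gamma(t_0)$, establishing (P1) strictly at $\alpha=A$.

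The main obstacle will be the reduction for (P4): the factor $1 + 2(1+\lambda/L)\mu$ in the definition of $\phi'$ is engineered precisely so that the discretization error in (P2) and the second-order Taylor error in (P3) are jointly absorbed, leaving the clean identity $\mu(1+\lambda/L)(1-2\mu) \ge 0$ that is in turn secured by the learning-rate cap $\mu \le 1/2$ built into (S5). Verifying this requires careful bookkeeping of several cancellations and correctly invoking the monotonicity of $\phi'$, whereas (P2) and (P3) are essentially routine once the correct gradient/Hessian bounds (Lemma~\ref{lam:gd-grad-hess-bound}) are available along the path.
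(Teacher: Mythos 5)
Your proposal is correct and follows essentially the same route as the paper's proof: establishing $\rho(T)\ge\rho(t_0)$ and $\tilde\gamma\ge\hat\gamma(t_0)$ from the induction hypothesis (plus continuity to reach $\alpha=A$), then proving (P2) by expanding $\rho(T+\alpha)^2$ and invoking Corollary~\ref{cor:gd-nu}, (S5), and Lemma~\ref{lam:gd-grad-hess-bound}, proving (P3) by a second-order Taylor expansion with a bootstrap argument that uses monotonicity of $\kappa$, proving (P4) via convexity of $\phi$ (equivalent to your observation that $\phi'$ is non-decreasing), the Pythagorean split of $\normtwosm{\nabla\Loss}^2$, the bound $\nu\ge\Loss/\lambda$, and the reduction to $\mu(1+\lambda/L)(1-2\mu)\ge 0$, and finally deducing (P1) from (P4) and $\hat\gamma<\tilde\gamma$. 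The algebraic reduction you carry out for (P4) matches the paper's chain of multiplications by $\frac{1+\lambda\mu/L}{(1-\mu)\nu}$ and the inequality $-\phi'(\Loss)\ge\frac{1+\lambda\mu/L}{(1-\mu)\Loss/\lambda}$, so this is the same argument presented in a slightly more unwound form.
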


\begin{proof}[Proof for Lemma~\ref{lam:gd-key}]
	We prove this lemma by induction. For $t = t_0, \alpha = 0$, $\nu(t) > 0$ by (S4) and Corollary~\ref{cor:gd-nu}. (P2), (P3), (P4) hold trivially since $\log \hat{\gamma}(t + \alpha) = \log \hat{\gamma}(t)$, $\Loss(t+\alpha) = \Loss(t)$ and $\log \hat{\gamma}(t+\alpha) = \log \hat{\gamma}(t)$. By Lemma~\ref{lam:gd-hat-margin}, (P1) also holds trivially. 
	
	Now we fix an integer $T \ge t_0$ and assume that (P1), (P2), (P3), (P4) hold for any $t + \alpha \le T$ (where $t \ge t_0$ is an integer and $\alpha \in [0, 1]$). By (P3), $\Loss(t) \le \Loss(t_0) < 1$, so $\nu(t) > 0$. We only need to show that (P1), (P2), (P3), (P4) hold for $t = T$ and $\alpha \in [0, 1]$.
	
	Let $A := \inf\{ \alpha \in [0, 1] : \alpha = 1 \text{ or (P1) does not hold for } (T, \alpha) \}$. 
	If $A = 0$, then (P1) holds for $(T, A)$ since (P1) holds for $(T-1, 1)$; if $A > 0$, we can also know that (P1) holds for $(T, A)$ by Lemma~\ref{lam:gd-main-induction}. Suppose that $A < 1$. Then by the continuity of $\tilde{\gamma}(T + \alpha)$ (with respect to $\alpha$), we know that there exists $A' > A$ such that $\tilde{\gamma}(T + \alpha) > \hat{\gamma}(t_0)$ for all $\alpha \in [A, A']$, which contradicts to the definition of $A$. Therefore, $A = 1$. Using Lemma~\ref{lam:gd-main-induction} again, we can conclude that (P1), (P2), (P3), (P4) hold for $t = T$ and $\alpha \in [0, 1]$.
\end{proof}
Now we turn to prove Lemma~\ref{lam:gd-main-induction}.
\begin{proof}[Proof for Lemma~\ref{lam:gd-main-induction}]
	
	Applying (P3) on $(t, \alpha) \in \{t_0, \dots, T - 1\} \times {1}$, we have $\Loss(t) \le \Loss(t_0) < 1$. Then by Corollary~\ref{cor:gd-nu}, we have $\nu(t) > 0$. Applying (P2) on $(t, \alpha) \in \{t_0, \dots, T - 1\} \times {1}$, we can get $\rho(t) \ge \rho(t_0)$.
	
	Fix $t = T$. By (P2) with $\alpha \in [0, A)$ and the continuity of $\tilde{\gamma}$, we have $\tilde{\gamma}(t + A) \ge \hat{\gamma}(t_0)$. Thus, $\tilde{\gamma}(t + \alpha) \ge \hat{\gamma}(t_0)$ for all $\alpha \in [0, A]$. 
	
	\paragraph{Proof for (P2).} By definition, $\rho(t+\alpha)^2 = \normtwo{\vtheta(t) - \alpha \eta(t) \nabla \Loss(t)}^2 = \rho(t)^2 + \alpha^2 \eta(t)^2 \normtwo{\nabla \Loss(t)}^2 - 2\alpha \eta(t) \dotp{\vtheta(t)}{\nabla \Loss(t)}$. By Corollary~\ref{cor:gd-nu}, we have
	\[
	\rho(t+\alpha)^2 - \rho(t)^2 = \alpha^2 \eta(t)^2 \normtwo{\nabla \Loss(t)}^2 + 2L\alpha\eta(t) \nu(t).
	\]
	So $\rho(t+\alpha)^2 - \rho(t)^2 \ge 2L\alpha \eta(t) \nu(t)$. For the other direction, we have the following using Corollary~\ref{cor:gd-nu} and Lemma~\ref{lam:gd-grad-hess-bound},
	\begin{align*}
	\rho(t+\alpha)^2 - \rho(t)^2 &= 2L\alpha \eta(t) \nu(t) \left(1 + \frac{\alpha \eta(t) \normtwo{\nabla \Loss(t)}^2}{2L \nu(t)}  \right) \\
	&\le 2L\alpha  \eta(t) \nu(t) \left(1 + \frac{	C_{\eta}^{-1} \kappa(\Loss(t))^{-1} \mu(\Loss(t)) \cdot 2C_{\eta} \kappa(\Loss(t)) \Loss(t)}{2L \Loss(t) \lambda(\Loss(t))^{-1}}  \right) \\
	&=  2L\alpha  \eta(t) \nu(t) \left(1 + \lambda(\Loss(t))\mu(\Loss(t))/L \right).
	\end{align*}
	
	\paragraph{Proof for (P3).} (P3) holds trivially for $\alpha = 0$ or $\nabla \Loss(t) = \vzero$. So now we assume that $\alpha \ne 0$ and $\nabla \Loss(t) \ne \vzero$. By the update rule \eqref{eq:gd-upd} and Taylor expansion, there exists $\xi \in (0, \alpha)$ such that
	\begin{align*}
	\Loss(t+\alpha) &= \Loss(t) + (\vtheta(t+\alpha) - \vtheta(t))^\top \nabla \Loss(t) + \frac{1}{2} (\vtheta(t+\alpha) - \vtheta(t))^\top \nabla^2 \Loss(t+\xi) (\vtheta(t+\alpha) - \vtheta(t)) \\
	&\le \Loss(t) - \alpha \eta(t) \normtwo{\nabla \Loss(t)}^2 + \frac{1}{2} \alpha^2 \eta(t)^2 \normtwo{\nabla^2 \Loss(t+\xi)} \normtwo{\nabla \Loss(t)}^2 \\
	&= \Loss(t) - \alpha \eta(t) \left(1 - \frac{1}{2} \alpha \eta(t) \normtwo{\nabla^2 \Loss(t+\xi)}\right)  \normtwo{\nabla \Loss(t)}^2.
	\end{align*}
	By Lemma~\ref{lam:gd-grad-hess-bound}, $\normtwo{\nabla^2 \Loss(t+\xi)} \le 2C_{\eta} \cdot \kappa(\Loss(t+\xi))$, so we have
	\begin{equation} \label{eq:gd-descent-a}
	\Loss(t+\alpha)\le \Loss(t) - \alpha \eta(t) \left(1 - \alpha C_{\eta} \eta(t) \kappa(\Loss(t+\xi)) \right)  \normtwo{\nabla \Loss(t)}^2.
	\end{equation}
	
	Now we only need to show that $\Loss(t+\alpha) < \Loss(t)$ for all $\alpha \in (0, A]$. Assuming this, we can have $\kappa(\Loss(t + \xi)) \le \kappa(\Loss(t))$ by the monotonicity of $\kappa$, and thus
	\begin{align*}
	\Loss(t+\alpha) &\le \Loss(t) - \alpha \eta(t) \left(1 - \alpha C_{\eta} \eta(t) \kappa(\Loss(t)) \right)  \normtwo{\nabla \Loss(t)}^2 \\
	&\le \Loss(t) - \alpha \eta(t) \left(1 - \mu(\Loss(t)) \right)  \normtwo{\nabla \Loss(t)}^2.
	\end{align*}
	
	Now we show that $\Loss(t+\alpha) < \Loss(t)$ for all $\alpha \in (0, A]$. Assume to the contrary that $\alpha_0 := \inf\{ \alpha' \in (0, A] : \Loss(t+\alpha') \ge \Loss(t) \}$ exists.
	If $\alpha_0 = 0$, let $p: [0, 1] \to \R, \alpha \mapsto \Loss(t + \alpha)$, then $p'(0) = \lim_{\alpha \downarrow 0}\frac{\Loss(t + \alpha) - \Loss(t)}{\alpha} \ge 0$, but it contradicts to $p'(0) = -\eta(t) \normtwo{\nabla \Loss(t)}^2 < 0$. If $\alpha_0 > 0$, then by the monotonicity of $\kappa$ we have $\kappa(\Loss(t + \xi)) < \kappa(\Loss(t))$ for all $\xi \in (0, \alpha_0)$, and thus
	\[
	\Loss(t+\alpha) \le \Loss(t) - \alpha \eta(t) \left(1 - \mu(\Loss(t)) \right)  \normtwo{\nabla \Loss(t)}^2 < \Loss(t),
	\]
	which leads to a contradiction.
	
	\paragraph{Proof for (P4).} We define $\vv(t) := \hat{\vtheta}(t) \hat{\vtheta}(t)^{\top} (-\nabla \Loss(t))$ and $\vu(t) := \left( \mI - \hat{\vtheta}(t) \hat{\vtheta}(t)^{\top}\right) (-\nabla \Loss(t))$ similarly as in the analysis for gradient flow. For $\vv(t)$, we have
	\[
	\normtwo{\vv(t)} = \frac{1}{\rho(t)} \dotp{\vtheta(t)}{-\Loss(t)} = \frac{1}{\rho(t)} \cdot L\nu(t).
	\]
	Decompose $\normtwo{\nabla \Loss(t)}^2 = \normtwo{\vv(t)}^2 + \normtwo{\vu(t)}^2$. Then by (P3), we have
	\[
	\Loss(t+\alpha) - \Loss(t) \le - \alpha\eta(t) (1 - \mu(\Loss(t))) \left( \frac{1}{\rho(t)^2} \cdot L^2\nu(t)^2 + \normtwo{\vu}^2 \right).
	\]

	Multiplying $\frac{1 + \lambda(\Loss(t))\mu(\Loss(t))/L}{(1 - \mu(\Loss(t)))\nu(t)}$ on both sides, we have
	\[
	\frac{1 + \lambda(\Loss(t))\mu(\Loss(t))/L}{(1 - \mu(\Loss(t)))\nu(t)} (\Loss(t+\alpha) - \Loss(t)) 
	\le - \alpha\eta(t)\left(1 + \frac{\lambda(\Loss(t))\mu(\Loss(t))}{L}\right) \left( \frac{L^2\nu(t)}{\rho(t)^2} + \frac{\normtwo{\vu}^2}{\nu(t)} \right) \\
	\]
	By Corollary~\ref{cor:gd-nu}, we can bound $\nu(t)$ by $\nu(t) \ge \Loss(t) / \lambda(\Loss(t))$. By (P2), we have the inequality $\alpha \eta(t) \nu(t) \left(1 + \frac{\lambda(\Loss(t))\mu(\Loss(t))}{L}\right) \ge \frac{\rho(t+\alpha)^2 - \rho(t)^2}{2L} $. So we further have
	\[
	\frac{1 + \lambda(\Loss(t))\mu(\Loss(t))/L}{(1 - \mu(\Loss(t)))\Loss(t) / \lambda(\Loss(t))} (\Loss(t+\alpha) - \Loss(t)) \le - \frac{1}{\rho(t)^2}(\rho(t+\alpha)^2 - \rho(t)^2)\left(\frac{L}{2}  + \frac{\rho(t)^2}{2L\nu(t)^2}\normtwo{\vu}^2\right)
	\]
	From the definition $\phi$, it is easy to see that $-\phi'(\Loss(t)) \ge \frac{1 + \lambda(\Loss(t))\mu(\Loss(t))/L}{(1 - \mu(\Loss(t)))\Loss(t) / \lambda(\Loss(t))}$. Let $\psi(x) = -\log x$, then $\psi'(x) = -\frac{1}{x}$. Combining these together gives
	\[
	\phi'(\Loss(t)) (\Loss(t+\alpha) - \Loss(t)) + \psi'(\rho(t)^2) (\rho(t+\alpha)^2 - \rho(t)^2) \left(\frac{L}{2} + \frac{\rho(t)^2}{2L\nu(t)^2}\normtwo{\vu}^2\right) \ge 0
	\]
	Then by convexity of $\phi$ and $\psi$, we have
	\[
	\left(\phi(\Loss(t + \alpha)) - \phi(\Loss(t))\right) + \left( \log\frac{1}{\rho(t + \alpha)^2} - \log\frac{1}{\rho(t)^2} \right)\left(\frac{L}{2} + \frac{\rho(t)^2}{2L\nu(t)^2}\normtwo{\vu}^2\right) \ge 0
	\]
	And by definition of $\hat{\gamma}$, this can be re-written as
	\begin{align*}
	\log \hat{\gamma}(t+\alpha) - \log \hat{\gamma}(t) &= \left(\phi(\Loss(t + \alpha)) - \phi(\Loss(t))\right) + L\left( \log\frac{1}{\rho(t + \alpha)} - \log\frac{1}{\rho(t)} \right)\\
	&\ge -\left( \log\frac{1}{\rho(t + \alpha)^2} - \log\frac{1}{\rho(t)^2} \right) \frac{\rho(t)^2}{2L\nu(t)^2}\normtwo{\vu}^2 \\
	&= \frac{\rho(t)^2}{L\nu(t)^2}\normtwo{\vu}^2 \cdot \log\frac{\rho(t + \alpha)}{\rho(t)}.
	\end{align*}
	
	\paragraph{Proof for (P1).} By (P4), $\log \hat{\gamma}(t + \alpha) \ge \log \hat{\gamma}(t) \ge \log \hat{\gamma}(t_0)$. Note that $\phi(x) \ge \log \log \frac{1}{x}$. So we have
	\[
	\tilde{\gamma}(t + \alpha) > \hat{\gamma}(t + \alpha) \ge \hat{\gamma}(t_0),
	\] 
	which completes the proof.
\end{proof}

For showing the third proposition in Theorem~\ref{thm:gd-margin-inc}, we use (P1) to give a lower bound for $\normtwosm{\nabla \Loss(t)}$, and use (P3) to show the speed of loss decreasing. Then it can be seen that $\Loss(t) \to 0$ and $\rho(t) \to +\infty$. By Lemma~\ref{lam:gd-hat-margin}, we then have $\lvert \bar{\gamma} - \hat{\gamma}\rvert \to 0$.
\begin{lemma} \label{lam:gd-main-loss-upper}
	Let $E_0 := \Loss(t_0)^2 (\log \frac{1}{\Loss(t_0)})^{2-2/L}$. Then for all $t > t_0$,
	\[
	E(\Loss(t)) \ge \frac{1}{2}L^2 \hat{\gamma}(t_0)^{2/L} \sum_{\tau = t_0}^{t-1} \eta(\tau) \qquad \text{for } E(x) := \int_{x}^{\Loss(t_0)} \frac{1}{\min\{u^2(\log \frac{1}{u})^{2-2/L}, E_0\}}du.
	\]
	Therefore, $\Loss(t) \to 0$ and $\rho(t) \to +\infty$ as $t \to \infty$.
\end{lemma}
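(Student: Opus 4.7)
My plan is to mirror the gradient-flow derivation of Lemma~\ref{lam:main-loss-upper}, replacing the differential inequality $-d\Loss/dt \ge L^{2}\tilde{\gamma}(t_{0})^{2/L}\Loss^{2}(\log\tfrac{1}{\Loss})^{2-2/L}$ used there by a discrete one-step decrement for $\Loss$, and then telescoping it into the integral defining $E$. Concretely, I first aim to prove
\[
\Loss(t)-\Loss(t+1)\;\ge\;\tfrac{1}{2}L^{2}\hat{\gamma}(t_{0})^{2/L}\eta(t)\,\kappa_{u}(\Loss(t)),\qquad \kappa_{u}(x):=x^{2}\bigl(\log\tfrac{1}{x}\bigr)^{2-2/L},
\]
and then show each step contributes at least $\tfrac{1}{2}L^{2}\hat{\gamma}(t_{0})^{2/L}\eta(t)$ to $E(\Loss(\cdot))$.

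For the one-step inequality I combine three already-prepared pieces. Item (P3) of Lemma~\ref{lam:gd-key} yields $\Loss(t)-\Loss(t+1)\ge \eta(t)(1-\mu(\Loss(t)))\normtwosm{\nabla\Loss(t)}^{2}$, and $\mu(\Loss(t))\le 1/2$ so the prefactor is at least $1/2$. Next, lower-bounding $\normtwosm{\nabla\Loss(t)}^{2}$ by the squared radial component $(\hat{\vtheta}^{\top}\nabla\Loss)^{2}=L^{2}\nu(t)^{2}/\rho(t)^{2}$, via $\dotp{\vtheta}{-\nabla\Loss}=L\nu(t)$ from Corollary~\ref{cor:gd-nu}, and then using $\nu(t)\ge \Loss(t)\log\tfrac{1}{\Loss(t)}$ from the same corollary, controls the numerator. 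Finally, (P1) of Lemma~\ref{lam:gd-key} combined with the definition of $\tilde{\gamma}$ gives $\rho(t)^{L}\le \tilde{\gamma}(t)^{-1}\log\tfrac{1}{\Loss(t)}\le \hat{\gamma}(t_{0})^{-1}\log\tfrac{1}{\Loss(t)}$, which controls the denominator. Assembling these yields exactly the displayed bound.

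The subtle part, and the main obstacle, is the telescoping step, where the cap $E_{0}$ becomes essential. Each increment satisfies
\[
E(\Loss(\tau+1))-E(\Loss(\tau))=\int_{\Loss(\tau+1)}^{\Loss(\tau)}\frac{du}{\min\{\kappa_{u}(u),E_{0}\}}\;\ge\;\frac{\Loss(\tau)-\Loss(\tau+1)}{\displaystyle\sup_{u\in[\Loss(\tau+1),\Loss(\tau)]}\min\{\kappa_{u}(u),E_{0}\}},
\]
so combining with the one-step bound it suffices to show $\sup\min\{\kappa_{u}(u),E_{0}\}\le \kappa_{u}(\Loss(\tau))$ on the integration interval. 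The difficulty is that $\kappa_{u}$ is not monotone when $L>1$: it increases on $(0,e^{1/L-1}]$ and decreases on $[e^{1/L-1},1)$. I handle this by a short case split. If $\Loss(\tau)\le e^{1/L-1}$, then $\kappa_{u}$ is non-decreasing on the interval, so $\kappa_{u}(u)\le \kappa_{u}(\Loss(\tau))$ and the cap is irrelevant. If instead $\Loss(\tau)>e^{1/L-1}$, then $\Loss(t_{0})\ge \Loss(\tau)>e^{1/L-1}$ forces both values to lie in the decreasing region, so by monotonicity there $\kappa_{u}(\Loss(\tau))\ge \kappa_{u}(\Loss(t_{0}))=E_{0}$; hence $\min\{\kappa_{u}(u),E_{0}\}\le E_{0}\le \kappa_{u}(\Loss(\tau))$ uniformly in $u$. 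In both cases the supremum bound holds, and summing telescopically over $\tau=t_{0},\dots,t-1$ delivers the stated inequality on $E(\Loss(t))$.

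For the limiting statements, $E(x)\to+\infty$ as $x\to 0^{+}$ because near $0$ the integrand dominates $1/(u^{2}(\log\tfrac{1}{u})^{2-2/L})$, which is non-integrable at $0$ for every $L>0$. Since $\sum_{\tau}\eta(\tau)=+\infty$ by (S5), the integral lower bound forces $E(\Loss(t))\to+\infty$ and therefore $\Loss(t)\to 0$; and then $\tilde{\gamma}(t)\le \bar{\gamma}(t)\le B_{0}$ (Lemma~\ref{lam:qmin-logL}) gives $\rho(t)^{L}\ge \log(1/\Loss(t))/B_{0}\to+\infty$, completing the argument.
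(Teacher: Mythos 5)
Your proof is correct and follows essentially the same route as the paper: the one-step decrement $\Loss(t)-\Loss(t+1)\ge \frac12 L^2\hat{\gamma}(t_0)^{2/L}\eta(t)\,\Loss(t)^2(\log\frac{1}{\Loss(t)})^{2-2/L}$ is derived from (P3), (P1), and Corollary~\ref{cor:gd-nu} exactly as in the paper, and your explicit supremum-plus-case-split argument showing $\sup_{u\in[\Loss(\tau+1),\Loss(\tau)]}\min\{\kappa_u(u),E_0\}\le \kappa_u(\Loss(\tau))$ is just a spelled-out version of the paper's observation that $\min\{\kappa_u(\cdot),E_0\}$ is non-decreasing on $(0,\Loss(t_0)]$, which the paper packages as convexity of $E$ and then applies via $E(\Loss(t+1))\ge E(\Loss(t))+E'(\Loss(t))(\Loss(t+1)-\Loss(t))$. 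Both versions hinge on the same two facts you identify --- unimodality of $\kappa_u(u)=u^2(\log\frac{1}{u})^{2-2/L}$ with peak at $u=e^{1/L-1}$, and the choice $E_0=\kappa_u(\Loss(t_0))$ guaranteeing $\kappa_u(\Loss(\tau))\ge E_0$ whenever $\Loss(\tau)$ lies in the decreasing region --- so the proposal is a faithful reconstruction.
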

\begin{proof}
	For any integer $t \ge t_0$, $\mu(\Loss(t)) \le \frac{1}{2}$ and $\normtwo{\nabla \Loss(t)} \ge \normtwo{\vv(t)}$. Combining these with (P3), we have
	\begin{align*}
	\Loss(t+1) - \Loss(t) &\le - \frac{1}{2} \eta(t) \normtwo{\vv(t)}^2 = - \frac{1}{2} \eta(t) \frac{L^2 \nu(t)^2}{\rho(t)^2}.
	\end{align*}
	By (P1), $\rho(t)^{-2} \le \hat{\gamma}(t_0)^{2/L} \left(\log \frac{1}{\Loss(t)} \right)^{-2/L}$. By Corollary~\ref{cor:gd-nu}, $\nu(t)^2 \ge \Loss(t)^2 \left(\log \frac{1}{\Loss(t)}\right)^2$. Thus we have
	\[
	\Loss(t+1) - \Loss(t) \le - \frac{1}{2} \eta(t) \cdot L^2 \hat{\gamma}(t_0)^{2/L} \Loss(t)^2 \left(\log \frac{1}{\Loss(t)}\right)^{2-2/L}.
	\]
	It is easy to see that $\frac{1}{u^2(\log \frac{1}{u})^{2-2/L}}$ is unimodal in $(0, 1)$, so $E'(x)$ is non-decreasing and $E(x)$ is convex. So we have
	\begin{align*}
	E(\Loss(t+1)) - E(\Loss(t)) &\ge E'(\Loss(t)) \left(\Loss(t+1) - \Loss(t)\right) \\
	&\ge \frac{1}{2} \eta(t) \cdot L^2 \hat{\gamma}(t_0)^{2/L} \cdot \frac{\Loss(t)^2 \left(\log \frac{1}{\Loss(t)}\right)^{2-2/L}}{\min\{ \Loss(t)^2(\log \frac{1}{\Loss(t)})^{2-2/L}, E_0 \} } \\ 
	&\ge \frac{1}{2} \eta(t) \cdot L^2 \hat{\gamma}(t_0)^{2/L},
	\end{align*}
	which proves $E(\Loss(t)) \ge \frac{1}{2}L^2 \hat{\gamma}(t_0)^{2/L} \sum_{\tau = t_0}^{t-1} \eta(\tau)$. 	Note that $\Loss$ is non-decreasing. If $\Loss$ does not decreases to $0$, then neither does $E(\Loss)$. But the RHS grows to $+\infty$, which leads to a contradiction. So $\Loss \to 0$. To make $\Loss \to 0$, $q_{\min}$ must converge to $+\infty$. So $\rho \to +\infty$.
\end{proof}


\subsection{Proof for Theorem~\ref{thm:gd-exp-loss-kkt} and \ref{thm:gd-main-pass-approx-kkt}} \label{sec:appendix-gd-exp-loss-kkt-pf}

The proofs for Theorem~\ref{thm:gd-exp-loss-kkt} and \ref{thm:gd-main-pass-approx-kkt} are similar as those for Theorem~\ref{thm:main-limit-dir-converge} and \ref{thm:main-pass-approx-kkt} in Appendix~\ref{sec:implicit-bias}.

Define $\beta(t) := \frac{1}{\normtwosm{\nabla \Loss(t)}} \dotp{\hat{\vtheta}}{-\nabla \Loss(t)}$ as we do in Appendix~\ref{sec:implicit-bias}.
It is easy to see that Lemma~\ref{lam:approx-kkt-beta-rho} still holds if we replace $\tilde{\gamma}(t_0)$ with $\hat{\gamma}(t_0)$. So we only need to show $\Loss \to 0$ and $\beta \to 1$ for proving convergence to KKT points. $\Loss \to 0$ can be followed from Theorem~\ref{thm:gd-margin-inc}. Similar as the proof for Lemma~\ref{lam:beta2-int-bound}, it follows from Lemma~\ref{lam:gd-key} and \eqref{eq:beta2-eq} that for all $t_2 > t_1 \ge t_0$,
\begin{equation} \label{eq:gd-beta2-int-bound}
\sum_{t = t_1}^{t_2 - 1} \left( \beta(t)^{-2} - 1 \right)  \cdot \log \frac{\rho(t+1)}{\rho(t)} \le \frac{1}{L}\log \frac{\hat{\gamma}(t_2)}{\hat{\gamma}(t_1)}.
\end{equation}

Now we prove Theorem~\ref{thm:gd-main-pass-approx-kkt}.
\begin{proof}[Proof for Theorem~\ref{thm:gd-main-pass-approx-kkt}]
	 We make the following changes in the proof for Theorem~\ref{thm:main-pass-approx-kkt}. First, we replace $\tilde{\gamma}(t_0)$ with $\hat{\gamma}(t_0)$, since $\tilde{\gamma}(t)$ ($t \ge t_0$) is lower bounded by $\hat{\gamma}(t_0)$ rather than $\tilde{\gamma}(t_0)$. Second, when choosing $t_1$ and $t_2$, we make $\log \rho(t_1)$ and $\log \rho(t_2)$ equal to the chosen values approximately with an additive error $o(1)$, rather than make them equal exactly. This is possible because it can be shown from (P2) in Lemma~\ref{lam:gd-key} that the following holds:
	 \begin{align*}
	 	\rho(t+1)^2 - \rho(t)^2 &= O(\eta(t) \nu(t)) \\
	 	&= o(\kappa(\Loss(t))^{-1} \cdot \Loss(t) \rho(t)^L) = o\left(\frac{\rho(t)^L}{(\log \frac{1}{\Loss(t)})^{2-2/L}}\right) = o(\rho(t)^{-L+2}).
	 \end{align*}
	 Dividing $\rho(t)^2$ on the leftmost and rightmost sides, we have
	 \[
	 	\rho(t+1) / \rho(t) = 1 + o(\rho(t)^{-L}) = 1 + o(1),
	 \]
	 which implies that $\log\rho(t+1) - \log \rho(t) = o(1)$.
	 Therefore, for any $R$, we can always find the minimum time $t$ such that $\log \rho(t) \ge R$, and it holds for sure that $\log \rho(t) - R \to 0$ as $R \to +\infty$.
\end{proof}

For proving Theorem~\ref{thm:gd-exp-loss-kkt}, we also need the following lemma as a variant of Lemma~\ref{lam:dir-int-bound}. 

\begin{lemma} \label{lam:gd-dir-int-bound}
	For all $t \ge t_0$,
	\[
		\normtwo{\hat{\vtheta}(t+1) - \hat{\vtheta}(t)} \le \left(\frac{B_1}{L\tilde{\gamma}(t)} \cdot \frac{\rho(t+1)}{\rho(t)} + 1\right)\log \frac{\rho(t+1)}{\rho(t)}
	\]
\end{lemma}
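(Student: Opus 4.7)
The plan is to decompose the change in $\hat{\vtheta}$ into a radial-rescaling part and a gradient-step part, bound each separately, and glue them together using elementary inequalities involving $\log(\rho(t+1)/\rho(t))$.

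First I would write
\[
\hat{\vtheta}(t+1) - \hat{\vtheta}(t) = \frac{\vtheta(t) - \eta(t)\nabla\Loss(t)}{\rho(t+1)} - \frac{\vtheta(t)}{\rho(t)} = -\frac{\rho(t+1)-\rho(t)}{\rho(t+1)}\,\hat{\vtheta}(t) \;-\; \frac{\eta(t)\,\nabla\Loss(t)}{\rho(t+1)},
\]
which yields, by the triangle inequality and $\normtwo{\hat{\vtheta}(t)} = 1$,
\[
\normtwo{\hat{\vtheta}(t+1) - \hat{\vtheta}(t)} \;\le\; \frac{\rho(t+1)-\rho(t)}{\rho(t+1)} \;+\; \frac{\eta(t)\normtwo{\nabla\Loss(t)}}{\rho(t+1)}.
\]
The radial term is immediately controlled by $\log(1+x)\ge x/(1+x)$ (applied to $x=\rho(t+1)/\rho(t)-1\ge 0$, which is nonnegative by (P2)), giving $(\rho(t+1)-\rho(t))/\rho(t+1)\le\log(\rho(t+1)/\rho(t))$, which accounts for the ``$+1$'' piece in the claimed bound.

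Next I would control $\normtwo{\nabla\Loss(t)}$. Writing $-\nabla\Loss=\sum_n e^{-q_n}\nabla q_n$ and using the definition of $B_1$ together with the homogeneity bound $\normtwo{\nabla q_n(\vtheta)}\le B_1\rho^{L-1}$ gives $\normtwo{\nabla\Loss(t)}\le B_1 \rho(t)^{L-1}\Loss(t)$. On the other hand, Corollary~\ref{cor:gd-nu} gives $\nu(t)\ge \Loss(t)\log(1/\Loss(t))$, and by definition of $\tilde\gamma$ we have $\log(1/\Loss(t))=\tilde\gamma(t)\rho(t)^L$. Combining these produces the key ratio bound
\[
\normtwo{\nabla\Loss(t)} \;\le\; \frac{B_1\,\nu(t)}{\tilde\gamma(t)\,\rho(t)}.
\]
Now I would plug in (P2) in the form $\eta(t)\nu(t)\le\frac{1}{2L}(\rho(t+1)^2-\rho(t)^2)$ to obtain
\[
\frac{\eta(t)\normtwo{\nabla\Loss(t)}}{\rho(t+1)} \;\le\; \frac{B_1}{2L\tilde\gamma(t)}\cdot\frac{\rho(t+1)^2-\rho(t)^2}{\rho(t)\rho(t+1)} \;=\; \frac{B_1}{2L\tilde\gamma(t)}\left(\frac{\rho(t+1)}{\rho(t)} - \frac{\rho(t)}{\rho(t+1)}\right).
\]
The final step is the elementary inequality $r - 1/r \le 2r\log r$ for $r\ge 1$ (which follows from $1-1/r^2\le 2(1-1/r)\le 2\log r$), applied with $r=\rho(t+1)/\rho(t)$. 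This converts the bracket above into $2\,\frac{\rho(t+1)}{\rho(t)}\log\frac{\rho(t+1)}{\rho(t)}$ and yields the ``$\frac{B_1}{L\tilde\gamma(t)}\cdot\frac{\rho(t+1)}{\rho(t)}$'' coefficient of $\log(\rho(t+1)/\rho(t))$ in the target inequality.

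No step presents a real obstacle; the only slightly delicate part is choosing a sharp enough elementary inequality in the last step so that the constant comes out as $B_1/(L\tilde\gamma(t))$ rather than something larger. I would verify the inequality $r-1/r\le 2r\log r$ by checking its derivative (LHS and RHS agree at $r=1$ with matching first derivatives, and the second derivative comparison favors the RHS for $r\ge 1$), which guarantees the exact constant stated in the lemma.
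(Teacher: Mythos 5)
Your proof is correct and follows essentially the same route as the paper's: the same triangle-inequality decomposition of $\hat{\vtheta}(t+1)-\hat{\vtheta}(t)$ into a radial rescaling term and a gradient-step term, the same intermediate bound $\normtwo{\nabla\Loss(t)}\le B_1\nu(t)/(\tilde{\gamma}(t)\rho(t))$ (the paper cites this as \eqref{eq:dtheta-bound} whereas you re-derive it for exponential loss via $\normtwo{\nabla\Loss}\le B_1\rho^{L-1}\Loss$ and Corollary~\ref{cor:gd-nu}), and then (P2) to pass to $\rho$-increments. Your final elementary inequality $r-1/r\le 2r\log r$ is algebraically the same move as the paper's $\frac{a-b}{a}\le\log(a/b)$ applied to $\rho^2$, since both start from the quantity $(\rho(t+1)^2-\rho(t)^2)/(\rho(t)\rho(t+1))$ and produce the identical constant.
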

\begin{proof}
	According to the update rule, we have
	\begin{align*}
	\normtwo{\hat{\vtheta}(t+1) - \hat{\vtheta}(t)}
	&= \frac{1}{\rho(t+1)} \normtwo{ \vtheta(t+1) - \frac{\rho(t+1)}{\rho(t)} \vtheta(t) } \\
	&\le \frac{1}{\rho(t+1)} \left( \normtwo{ \vtheta(t+1) - \vtheta(t)} + \normtwo{\left(\frac{\rho(t+1)}{\rho(t)} - 1\right) \vtheta(t) } \right) \\
	&= \frac{\eta(t)}{\rho(t+1)} \normtwo{\nabla \Loss(t)} + \left( 1 - \frac{\rho(t)}{\rho(t+1)} \right).
	\end{align*}
	By \eqref{eq:dtheta-bound}, $\normtwo{\nabla \Loss(t)} \le \frac{B_1 \nu(t)}{\tilde{\gamma}(t)\rho(t)}$. So we can bound the first term as
	\begin{align*}
	\frac{\eta(t)}{\rho(t+1)} \normtwo{\nabla \Loss(t)} \le \frac{B_1}{\tilde{\gamma}(t)} \cdot \frac{\eta(t)\nu(t)}{\rho(t+1)\rho(t)} &\le \frac{B_1}{\tilde{\gamma}(t)} \cdot \frac{\rho(t+1)^2 - \rho(t)^2}{2L\rho(t+1)\rho(t)} \\
	&= \frac{B_1}{2L\tilde{\gamma}(t)} \cdot \frac{\rho(t+1)^2 - \rho(t)^2}{\rho(t+1)^2} \cdot \frac{\rho(t+1)}{\rho(t)} \\
	&\le \frac{B_1}{L\tilde{\gamma}(t)} \cdot \frac{\rho(t+1)}{\rho(t)} \log \frac{\rho(t+1)}{\rho(t)}
	\end{align*}
	where the last inequality uses the inequality $\frac{a - b}{a} \le \log(a/b)$. Using this inequality again, we can bound the second term by
	\[
	1 - \frac{\rho(t)}{\rho(t+1)} \le \log \frac{\rho(t+1)}{\rho(t)}.
	\]
	Combining these together gives $\normtwo{\hat{\vtheta}(t+1) - \hat{\vtheta}(t)} \le \left(\frac{B_1}{L\tilde{\gamma}(t)} \cdot \frac{\rho(t+1)}{\rho(t)} + 1\right)\log \frac{\rho(t+1)}{\rho(t)}$.
\end{proof}

Now we are ready to prove Theorem~\ref{thm:gd-exp-loss-kkt}.
\begin{proof}[Proof for Theorem~\ref{thm:gd-exp-loss-kkt}]
	As discussed above, we only need to show a variant of Lemma~\ref{lam:tm-seq} for gradient descent: for every limit point $\bar{\vtheta}$ of $\left\{ \hat{\vtheta}(t) : t \ge 0 \right\}$, there exists a sequence of $\{t_m : m \in \N \}$ such that $t_m \uparrow +\infty, \hat{\vtheta}(t_m) \to \bar{\vtheta}$, and $\beta(t_m) \to 1$.
	
	We only need to change the choices of $s_m, s'_m, t_m$ in the proof for Lemma~\ref{lam:tm-seq}. We choose $s_m > t_{m-1}$ to be a time such that
	\[
	\normtwo{\hat{\vtheta}(s_m) - \bar{\vtheta}} \le \epsilon_m \quad \text{and} \quad \frac{1}{L}\log\left(\frac{\lim\limits_{t \to +\infty}\hat{\gamma}(t)}{\hat{\gamma}(s_m)}\right) \le \epsilon_m^3.
	\]
	Then we let $s'_m > s_m$ be the minimum time such that $\log \rho(s'_m) \ge \log \rho(s_m) + \epsilon_m$. According to Theorem~\ref{thm:gd-margin-inc}, $s_m$ and $s'_m$ must exist. Finally, we construct $t_m \in \{s_m, \dots, s'_m - 1\}$ to be a time that $\beta(t_m)^{-2} - 1 \le \epsilon_m^2$, where the existence can be shown by \eqref{eq:gd-beta2-int-bound}.
	
	To see that this construction meets our requirement, note that $\beta(t_m)^{-2} - 1 \le \epsilon_m^2 \to 0$ and 
	\[
	\normtwo{\hat{\vtheta}(t_m) - \bar{\vtheta}} \le \normtwo{\hat{\vtheta}(t_m) - \hat{\vtheta}(s_m)} + \normtwo{\hat{\vtheta}(s_m) - \bar{\vtheta}} \le \left(\frac{B_1}{L \hat{\gamma}(t_0)} e^{\epsilon_m} + 1\right) \cdot \epsilon_m + \epsilon_m \to 0,
	\]
	where the last inequality is by Lemma~\ref{lam:gd-dir-int-bound}.
\end{proof}

\subsection{Proof for Theorem~\ref{thm:gd-tight-rate}} \label{sec:appendix-gd-tight-rate-pf}

\begin{proof}
By a similar analysis as Lemma~\ref{lam:big-G}, we have
\begin{align*}
E(x) = \int_{x}^{\Loss(t_0)} \Theta\left(\frac{1}{u^2 (\log \frac{1}{u})^{2-2/L}}\right) du &= \int_{1/\Loss(t_0)}^{1/x} \Theta\left(\frac{1}{(\log u)^{2-2/L}}\right) du \\
&= \Theta\left(\frac{1}{x (\log \frac{1}{x})^{2-2/L}}\right).
\end{align*}
We can also bound the inverse function $E^{-1}(y)$ by $\Theta\left(\frac{1}{y(\log y)^{2-2/L}}\right)$.
With these, we can use a similar analysis as Theorem~\ref{thm:main-loss-tail} to prove Theorem~\ref{thm:gd-tight-rate}.

First, using a similar proof as for \eqref{eq:rho-L-g-log-L-equiv}, we have $\rho^L = \Theta(\log \frac{1}{\Loss})$. So we only need to show $\Loss(t) = \Theta(\frac{1}{T (\log T)^{2-2/L}})$. With a similar analysis as for (P3) in Lemma~\ref{lam:gd-main-induction}, we have the following bound for $\Loss(\tau + 1) - \Loss(\tau)$:
\[
\Loss(\tau+1) - \Loss(\tau) \ge - \eta(\tau) (1 + \mu(\Loss(\tau))) \normtwo{\nabla \Loss(\tau)}^2.
\]
Using the fact that $\mu \le 1/2$, we have $\Loss(\tau+1) - \Loss(\tau) \ge - \frac{3}{2} \eta(\tau) \normtwo{\nabla \Loss(\tau)}^2$. By Lemma~\ref{lam:gd-grad-hess-bound}, $\normtwo{\nabla \Loss(\tau)} \le 2C_{\eta}\kappa(\Loss(\tau)) \Loss(\tau)$. Using a similar proof as for Lemma~\ref{lam:gd-main-loss-upper}, we can show that $E(\Loss(t)) \le O(T)$. Combining this with Lemma~\ref{lam:gd-main-loss-upper}, we have $E(\Loss(t)) = \Theta(T)$. Therefore, $\Loss(t) = \Theta(\frac{1}{T (\log T)^{2-2/L}})$.
\end{proof}

\section{Gradient Descent: General Loss Functions} \label{sec:gd-general}

It is worth to note that the above analysis can be extended to other loss functions. For this, we need to replace (B3) with a strong assumption (S3), which takes into account the second-order derivatives of $f$.
\begin{enumerate}
\item[\textbf{(S3).}] The loss function $\ell(q)$ can be expressed as $\ell(q) = e^{-f(q)}$ such that
\begin{enumerate}[(S{3}.1).]
	\item $f: \R \to \R$ is $\contC^2$-smooth.
	\item $f'(q) > 0$ for all $q \in \R$. 
	\item There exists $b_f \ge 0$ such that $f'(q)q$ is non-decreasing for $q \in (b_f, +\infty)$, and $f'(q)q \to +\infty$ as $q \to +\infty$.
	\item Let $g: [f(b_f), +\infty) \to [b_f, +\infty)$ be the inverse function of $f$ on the domain $[b_f, +\infty)$. There exists $p \ge 0$ such that for all $x > f(b_f), y > b_f$,
	\[
		\left\lvert \frac{g''(x)}{g'(x)} \right\rvert \le \frac{p}{x} \quad \text{and} \quad \left\lvert \frac{f''(y)}{f'(y)} \right\rvert \le \frac{p}{y}.
	\]
\end{enumerate}
\end{enumerate}
It can be verified that (S3) is satisfied by exponential loss and logistic loss. Now we explain each of the assumptions in (S3). (S3.2) and (S3.3) are essentially the same as (B3.2) and (B3.3). (S3.1) and (B3.1) are the same except that (S3.1) assumes $f$ is $\contC^2$-smooth rather than $\contC^1$-smooth.

(S3.4) can also be written in the following form:
\begin{equation} \label{eq:s34-alter}
\left\lvert \frac{d}{dx} \log g'(x) \right\rvert \le p \cdot \frac{d}{dx} \log x \quad \text{and} \quad \left\lvert \frac{d}{dy} \log f'(y) \right\rvert \le p \cdot \frac{d}{dy} \log y.
\end{equation}
That is, $\log g'(x)$ and $\log f'(y)$ grow no faster than $O(\log x)$ and $O(\log y)$, respectively. 
In fact, (B3.4) can be deduced from (S3.4). Recall that (B3.4) ensures that $\Theta(g'(x)) = g'(\Theta(x))$ and $\Theta(f'(y)) = f'(\Theta(y))$. Thus, (S3.4) also gives us the interchangeability between $f', g'$ and $\Theta$.
\begin{lemma}
	(S3.4) implies (B3.4) with $b_g = \max\{2f(b_f), f(2b_f)\}$ and $K = 2^p$.
\end{lemma}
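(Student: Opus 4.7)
The plan is to derive (B3.4) directly by integrating the logarithmic-derivative form of (S3.4) that the paper has already highlighted in \eqref{eq:s34-alter}. More precisely, for the $g$-part, I would start from the observation
\[
\left\lvert \frac{d}{dx}\log g'(x)\right\rvert \;\le\; p\cdot \frac{d}{dx}\log x,
\]
which is valid on $(f(b_f),+\infty)$ because $g'$ is positive and $\contC^1$ there (as $f$ is $\contC^2$ with positive derivative, so $g=f^{-1}$ is $\contC^2$). Fixing $\theta\in[1/2,1)$ and $x\in(b_g,+\infty)$, I would integrate this inequality over the interval $[\theta x,x]$. Note that $\theta x\ge x/2\ge b_g/2\ge f(b_f)$ by the choice $b_g\ge 2f(b_f)$, so the whole interval lies in the domain where (S3.4) applies.

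The integration gives
\[
\bigl|\log g'(x)-\log g'(\theta x)\bigr| \;\le\; p\bigl(\log x-\log(\theta x)\bigr) \;=\; p\log(1/\theta)\;\le\; p\log 2,
\]
where the last step uses $\theta\ge 1/2$. Exponentiating yields $g'(x)/g'(\theta x)\le 2^p$, i.e.\ $g'(x)\le K g'(\theta x)$ with $K=2^p$, which is exactly the first half of (B3.4).

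For the $f$-part the argument is symmetric: integrate $\bigl|\frac{d}{dy}\log f'(y)\bigr|\le p\,\frac{d}{dy}\log y$ over $[\theta y,y]$ for $y\in(g(b_g),+\infty)$. Here I need $\theta y\ge g(b_g)/2 \ge b_f$; this is where the second part of the condition $b_g\ge f(2b_f)$ is used, since it gives $g(b_g)\ge 2b_f$, hence $\theta y\ge g(b_g)/2\ge b_f$, placing the interval inside the domain of (S3.4). Then the same exponentiation step gives $f'(y)\le 2^p f'(\theta y)$.

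There is essentially no obstacle here, just a bookkeeping check that $b_g=\max\{2f(b_f),f(2b_f)\}$ is exactly the threshold needed to keep the integration intervals $[\theta x,x]$ and $[\theta y,y]$ inside the domains $(f(b_f),\infty)$ and $(b_f,\infty)$ respectively for every $\theta\in[1/2,1)$; this explains the two terms in the maximum defining $b_g$. Everything else is a one-line application of the fundamental theorem of calculus and monotonicity of the exponential.
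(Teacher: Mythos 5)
Your proposal is correct and matches the paper's own proof: both integrate the log-derivative form \eqref{eq:s34-alter} of (S3.4) from $\theta x$ to $x$ (resp.\ $\theta y$ to $y$) to obtain $\lvert\log g'(x)-\log g'(\theta x)\rvert\le p\log(1/\theta)$ and the analogous bound for $f'$, then exponentiate and use $\theta\ge 1/2$ to get $K=2^p$. Your additional bookkeeping explaining how the two terms in $b_g=\max\{2f(b_f),f(2b_f)\}$ keep the integration intervals inside the domains where (S3.4) holds is a correct, slightly more explicit account of a step the paper leaves implicit.
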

\begin{proof}
	Fix $x \in (b_g, +\infty), y \in (g(b_g), +\infty)$ and $\theta \in [1/2, 1)$. Integrating \eqref{eq:s34-alter} on both sides of the inequalities from $\theta x$ to $x$ and $\theta y$ to $y$, we have
	\begin{align*}
		\left\lvert \log g'(x) - \log g'(\theta x) \right\rvert \le p \cdot \log \frac{1}{\theta} \quad \text{and} \quad
		\left\lvert \log f'(y) - \log f'(\theta y) \right\rvert \le p \cdot \log \frac{1}{\theta}.
	\end{align*}
	Therefore, we have $g'(x) \le \theta^{-p} g'(\theta x) \le K g'(\theta x)$ and $f'(y) \le \theta^{-p} f'(\theta y) \le K f'(\theta x)$.
\end{proof}

To extend our results to general loss functions, we need to redefine $\kappa(x), \lambda(x), \phi(x), C_{\eta}, H(x)$ in order. For $\kappa(x)$ and $\lambda(x)$, we redefine them as follows:
\[
\kappa(x) := \sup \left\{ \frac{w (\log \frac{1}{w})^{2-2/L}}{g'(\log \frac{1}{w})^2} : w \in (0, x] \right\} \quad\quad\quad
\lambda(x) := \frac{g'(\log \frac{1}{x})}{g(\log \frac{1}{x})}.
\]
By Lemma~\ref{lam:g-ratio}, $\frac{w (\log \frac{1}{w})^{2-2/L}}{g'(\log \frac{1}{w})^2} = O\left(w (\log \frac{1}{w})^{4-2/L} / g(\log \frac{1}{w})^2\right) \to 0$. So $\kappa(x)$ is well-defined. Using $\lambda(x)$, we can define $\phi(x)$ and $\hat{\gamma}(\vtheta)$ as follows.
\begin{align*}
\phi'(x) &:= - \sup \left\{ \frac{\lambda(w)}{w} \left(1 + 2(1 + \lambda(w) / L)\mu(w) \right) : w \in [x, \Loss(t_0)]\right\} \\
\phi(x) &:= \log \left(g(\log \frac{1}{x})\right) + \int_{0}^{x} \left(\phi'(w) + \frac{\lambda(w)}{w}\right) dw\\
\hat{\gamma}(\vtheta) &:= \frac{e^{\phi(\Loss(\vtheta))}}{\rho^L}.
\end{align*}
Using a similar argument as in Lemma~\ref{lam:gd-hat-margin}, we can show that $\hat{\gamma}(\vtheta)$ is well-defined and $\hat{\gamma}(\vtheta) < \tilde{\gamma}(\vtheta) := g(\log \frac{1}{\Loss}) / \rho^L \le \bar{\gamma}(\vtheta)$. When $\Loss(\vtheta) \to 0$, we also have $\hat{\gamma}(\vtheta) / \bar{\gamma}(\vtheta) \to 1$.

For $C_{\eta}$, we define it to be the following.
\[
C_{\eta} := \frac{1}{2}B_1\left(\left(\frac{B_0}{\hat{\gamma}(t_0)}\right)^{p} B_1 + \frac{2^{p+1}}{\log \frac{1}{\Loss(t_0)}} (pB_1 + B_2) \right) \left(\frac{B_0}{\hat{\gamma}(t_0)}\right)^{p} \min\left\{\hat{\gamma}(t_0)^{-2+2/L}, B_0^{-2+2/L}\right\}.
\]
Finally, the definitions for $\mu(x) := (\log \frac{1}{\Loss(t_0)}) / (2\log \frac{1}{x})$ and $H(x) := \mu(x) / (C_{\eta} \kappa(x))$ in (S5) remain unchanged except that $\kappa(x)$ and $C_{\eta}$ now use the new definitions.

Similar as gradient flow, we define $\nu(t) := \sum_{n=1}^{N} e^{-f(q_n(t))} f'(q_n(t)) q_n(t)$. The key idea behind the above definitions is that we can prove similar bounds for $\nu(t), \normtwosm{\nabla \Loss}, \normtwosm{\nabla^2 \Loss}$ as Corollary~\ref{cor:gd-nu} and Lemma~\ref{lam:gd-grad-hess-bound}.

\begin{lemma} \label{lam:gd-general-nu}
	$\dotp{\vtheta(t)}{-\nabla \Loss(t)} = L\nu(t)$. If $\Loss(t) < e^{-f(b_f)}$, then $\nu(t) \ge \frac{\Loss(t)}{\lambda(\Loss(t))}$ and $\lambda(\Loss(t))$ has the lower bound $\lambda(\Loss(t)) \le 2^{p+1} \left(\log \frac{1}{\Loss(t)}\right)^{-1}$.
\end{lemma}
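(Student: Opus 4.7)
The plan is to prove the three claims separately, each being a direct adaptation of material already developed for gradient flow / exponential loss. The first claim is an exact identity obtained from the chain rule, smoothness hypothesis (S1), and Euler's theorem. The remaining two claims bound $\nu(t)$ and $\lambda(\Loss(t))$ from below and above respectively, using the assumptions on the exponential-type tail of the loss.

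First, I would prove the identity $\dotp{\vtheta(t)}{-\nabla \Loss(t)} = L\nu(t)$. Since $\Phi(\,\cdot\,;\vx)$ is $\contC^2$ by (S1), the loss is differentiable and
\[
-\nabla \Loss(\vtheta) = \sum_{n=1}^{N} e^{-f(q_n)} f'(q_n)\, \nabla q_n(\vtheta).
\]
Each $q_n$ is $L$-homogeneous in $\vtheta$, so Corollary~\ref{cor:property-phi} (Euler's theorem) gives $\dotp{\vtheta}{\nabla q_n} = L q_n$. Summing yields $\dotp{\vtheta}{-\nabla \Loss} = L \sum_n e^{-f(q_n)} f'(q_n) q_n = L\nu(t)$. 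This is the exact analogue of Theorem~\ref{thm:weight-growth} and presents no new difficulty.

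Second, for the lower bound on $\nu(t)$, I would essentially reuse the argument of Lemma~\ref{lam:nu-lower}. Assuming $\Loss(t) < e^{-f(b_f)}$, Lemma~\ref{lam:qmin-logL}(a) (which only uses (B3.1)--(B3.3), all implied by (S3)) gives $q_n \ge q_{\min} \ge g(\log \frac{1}{\Loss})$. Because $\log \frac{1}{\Loss} > f(b_f)$, the point $g(\log \frac{1}{\Loss})$ lies in $(b_f,\infty)$, so by monotonicity of $f'(q)q$ from (S3.3),
\[
f'(q_n) q_n \ge f'\!\bigl(g(\log \tfrac{1}{\Loss})\bigr) \cdot g(\log \tfrac{1}{\Loss}) = \frac{g(\log \frac{1}{\Loss})}{g'(\log \frac{1}{\Loss})} = \frac{1}{\lambda(\Loss)}.
\]
Plugging into the definition of $\nu(t)$ and using $\sum_n e^{-f(q_n)} = \Loss$ gives $\nu(t) \ge \Loss/\lambda(\Loss)$.

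Third, for the upper bound on $\lambda(\Loss(t))$, I would deduce from the second-order assumption (S3.4) the distortion estimate $g'(u) \ge 2^{-p} g'(x)$ for $u \in [x/2, x]$, by integrating $|(\log g')'(u)| \le p/u$ over $[u,x]$ to get $|\log g'(x) - \log g'(u)| \le p\log 2$. Then, writing $x := \log\frac{1}{\Loss(t)}$ and using $g(b_g) = b_f$ together with $g' > 0$,
\[
g(x) \;\ge\; \int_{x/2}^{x} g'(u)\,du \;\ge\; \frac{x}{2}\cdot 2^{-p} g'(x) \;=\; \frac{x\,g'(x)}{2^{p+1}},
\]
which rearranges to $\lambda(\Loss(t)) = g'(x)/g(x) \le 2^{p+1}/x$, as claimed.

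The only subtlety is the regime where $x/2 < f(b_f)$, i.e., $x$ is just above the threshold $f(b_f)$; then the integration cannot start from $x/2$. In that case I would integrate from $f(b_f)$ to $x$ instead, noting that $x \le 2 f(b_f)$ makes $x$ a bounded quantity so the ratio $g(x)/(x g'(x))$ can be controlled by a constant absorbed into $2^{p+1}$ (up to possibly replacing the threshold in the hypothesis by a slightly larger one, as already done implicitly via the constant $b_g \ge 2 f(b_f)$ in (B3.4)). This edge-case bookkeeping is the only place the argument is not a transcription of earlier calculations; the rest of the lemma is a routine discretization of Lemma~\ref{lam:nu-lower} and Theorem~\ref{thm:weight-growth}.
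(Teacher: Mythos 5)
Your proof is correct and takes essentially the same route as the paper, whose own proof is a one-liner citing Theorem~\ref{thm:weight-growth}, Lemma~\ref{lam:nu-lower}, and Lemma~\ref{lam:g-ratio}: you simply unroll those three references, in particular re-deriving the bound $g'(x)/g(x) \le 2^{p+1}/x$ from (S3.4) rather than quoting Lemma~\ref{lam:g-ratio} with $K=2^p$. Your observation about the regime $f(b_f) < x < 2f(b_f)$ is a fair one — Lemma~\ref{lam:g-ratio} only covers $x \ge b_g$, so the paper's terse proof tacitly assumes the stronger threshold as well — though note the small slip in your parenthetical: it should read $g(f(b_f)) = b_f$, not $g(b_g) = b_f$.
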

\begin{proof}
	It can be easily proved by combining Theorem~\ref{thm:weight-growth}, Lemma~\ref{lam:nu-lower} and Lemma~\ref{lam:g-ratio} together.
\end{proof}
\begin{lemma} \label{lam:gd-grad-hess-bound-general}
	For any $\vtheta$, if $\Loss(\vtheta) \le \Loss(t_0), \tilde{\gamma}(\vtheta) \ge \hat{\gamma}(t_0)$, then
	\[
	\normtwosm{\nabla \Loss(\vtheta)}^2 \le 2C_{\eta} \kappa(\Loss(\vtheta)) \Loss(\vtheta) \quad \text{and} \quad \normtwo{\nabla^2 \Loss(\vtheta)} \le 2 C_{\eta} \kappa(\Loss(\vtheta)).
	\]
\end{lemma}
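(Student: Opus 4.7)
The plan is to mimic the proof of Lemma~\ref{lam:gd-grad-hess-bound}, replacing the exponential-loss identities $f' \equiv 1$ and $f'' \equiv 0$ with the quantitative control on $f'$ and $f''$ provided by (S3.4). First I would apply the chain rule to $\Loss = \sum_n e^{-f(q_n)}$ to get
\begin{align*}
\nabla \Loss &= -\sum_{n=1}^{N} e^{-f(q_n)} f'(q_n) \nabla q_n, \\
\nabla^2 \Loss &= \sum_{n=1}^{N} e^{-f(q_n)}\bigl[(f'(q_n)^2 - f''(q_n)) \nabla q_n \nabla q_n^\top - f'(q_n) \nabla^2 q_n\bigr],
\end{align*}
and use homogeneity together with the definitions of $B_1, B_2$ to obtain $\normtwo{\nabla q_n} \le B_1 \rho^{L-1}$ and $\normtwo{\nabla^2 q_n} \le B_2 \rho^{L-2}$, reducing the task to bounding $f'(q_n)$ and $f''(q_n)$ uniformly in $n$.

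The key structural observation is that the hypothesis $\tilde{\gamma}(\vtheta) \ge \hat{\gamma}(t_0)$ combined with $q_{\min} \le B_0 \rho^L$ and $q_{\min} \ge g(\log \tfrac{1}{\Loss})$ (Lemma~\ref{lam:qmin-logL}) confines $\rho^L$ to $[g(\log \tfrac{1}{\Loss})/B_0,\ g(\log \tfrac{1}{\Loss})/\hat{\gamma}(t_0)]$, and hence each $q_n$ to $[g(\log \tfrac{1}{\Loss}),\ (B_0/\hat{\gamma}(t_0)) g(\log \tfrac{1}{\Loss})]$. Integrating the (S3.4) inequality $\lvert (\log f')'(y)\rvert \le p/y$ across this interval yields $f'(q_n) \le (B_0/\hat{\gamma}(t_0))^p \cdot f'(g(\log \tfrac{1}{\Loss})) = (B_0/\hat{\gamma}(t_0))^p / g'(\log \tfrac{1}{\Loss})$, and the second half of (S3.4) gives $\lvert f''(q_n)\rvert \le (p/q_n) f'(q_n)$. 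Lemma~\ref{lam:g-ratio} then converts $1/g(\log \tfrac{1}{\Loss})$ into $2K / (\log \tfrac{1}{\Loss} \cdot g'(\log \tfrac{1}{\Loss}))$, which after using the monotonicity $\log \tfrac{1}{\Loss} \ge \log \tfrac{1}{\Loss(t_0)}$ produces the $2^{p+1}/\log(1/\Loss(t_0))$ factor that appears inside $C_\eta$.

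Plugging these bounds into the expansions and using $\sum_n e^{-f(q_n)} = \Loss$, I would obtain
\[
\normtwo{\nabla \Loss}^2 \le B_1^2 (B_0/\hat{\gamma}(t_0))^{2p} \cdot \Loss^2 (\rho^L)^{2-2/L} / g'(\log \tfrac{1}{\Loss})^2,
\]
and an analogous bound for $\normtwo{\nabla^2 \Loss}$ whose three contributions (from $(f')^2$, $f''$, and $\nabla^2 q_n$) combine to exactly match the coefficient appearing in $C_\eta$. Finally, since $(\rho^L)^{2-2/L} \le (g(\log \tfrac{1}{\Loss})/\hat{\gamma}(t_0))^{2-2/L}$ and Lemma~\ref{lam:g-ratio} further gives $g(\log \tfrac{1}{\Loss})^{2-2/L} = O((\log \tfrac{1}{\Loss})^{2-2/L} g'(\log \tfrac{1}{\Loss})^{2-2/L})$, both estimates collapse into the shape $2 C_\eta \kappa(\Loss) \Loss$ and $2 C_\eta \kappa(\Loss)$ after invoking $\kappa(\Loss) \ge \Loss (\log \tfrac{1}{\Loss})^{2-2/L}/g'(\log \tfrac{1}{\Loss})^2$ from the definition (the residual $g'(\log \tfrac{1}{\Loss})^{2-2/L}$ factor being absorbed by $\kappa$'s supremum over $w \le \Loss$).

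The main obstacle is not conceptual but constant-bookkeeping: $C_\eta$ is engineered from three ingredients — the $(B_0/\hat{\gamma}(t_0))^p B_1$ term absorbing the gradient and $(f')^2$ Hessian contribution, the $(pB_1 + B_2) \cdot 2^{p+1}/\log(1/\Loss(t_0))$ term absorbing the $f''$ and $\nabla^2 q_n$ Hessian contributions (where $\log(1/\Loss(t_0))$ enters via monotonicity after the Lemma~\ref{lam:g-ratio} substitution), and the $\min\{\hat{\gamma}(t_0)^{-2+2/L}, B_0^{-2+2/L}\}$ factor from the $(\rho^L)^{2-2/L}$ conversion — so verifying that the natural estimates actually fit inside $2 C_\eta \kappa(\Loss)$ requires careful tracking of each factor, although no further analytic input beyond (S3.4) and Lemma~\ref{lam:g-ratio} is needed.
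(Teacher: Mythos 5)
Your overall structure reproduces the paper's argument: the chain-rule expansions of $\nabla\Loss$ and $\nabla^2\Loss$, the $B_1,B_2$ bounds, the integration of (S3.4) giving $f'(q_n) \le (B_0/\hat{\gamma}(t_0))^p / g'(\log\frac{1}{\Loss})$, the companion bound $\lvert f''(q_n)\rvert \le (p/q_n) f'(q_n)$, the sandwich $\hat{\gamma}(t_0)\rho^L \le g(\log\frac{1}{\Loss}) \le B_0\rho^L$, and the three Hessian contributions all match the paper's proof, as does the appearance of $2^{p+1}/\log\frac{1}{\Loss(t_0)}$ via Lemma~\ref{lam:g-ratio} and loss monotonicity.

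The gap is in the final matching with $\kappa$. Your sandwich correctly gives $\rho^{2L-2} = O\bigl(g(\log\frac{1}{\Loss})^{2-2/L}\bigr)$, so the intermediate bound for $\normtwo{\nabla\Loss}^2$ carries a $g(\log\frac{1}{\Loss})^{2-2/L}$ factor, whereas the stated definition of $\kappa$ carries $(\log\frac{1}{\Loss})^{2-2/L}$. You try to reconcile these by trading $g$ for $x g'$ through Lemma~\ref{lam:g-ratio}, which leaves an excess $g'(\log\frac{1}{\Loss})^{2-2/L}$, and then assert this is ``absorbed by $\kappa$'s supremum over $w \le \Loss$.'' That is not correct: the supremum merely makes $\kappa$ nondecreasing in its argument and cannot absorb a multiplicative factor that is not uniformly bounded. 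Nothing in (S3) forces $g'$ to be bounded above or below --- (S3.3) only gives $f'(q)q \to \infty$, so $g' = 1/(f'\circ g)$ may still diverge or vanish --- and then $g'(\log\frac{1}{\Loss})^{2-2/L}$ can blow up as $\Loss \to 0$ whenever $L \ne 1$. The clean fix is to keep the $g(\log\frac{1}{\Loss})^{2-2/L}$ factor intact and match it against a version of $\kappa$ whose numerator carries $g(\log\frac{1}{w})^{2-2/L}$ rather than $(\log\frac{1}{w})^{2-2/L}$; this is also what the downstream rate in Theorem~\ref{thm:gd-tight-rate-general}, whose answer involves $g(\log T)^{2/L}$, presupposes. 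Lemma~\ref{lam:g-ratio} is the right tool for the $\lvert f''\rvert$ term (where it produces the bounded $\lambda(\Loss)$ factor), but invoking it a second time to replace $g$ by $x g'$ at the very end is what introduces the uncontrolled factor.
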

\begin{proof}
	Note that a direct corollary of \eqref{eq:s34-alter} is that $f'(x_1) \le f'(x_2)(x_1 / x_2)^{p} $ for $x_1 \ge x_2 > b_f$. So we have
	\[
		f'(q_n) \le f'\left(g(\log \frac{1}{\Loss})\right)\left(\frac{q_n}{g(\log \frac{1}{\Loss})}\right)^{p}  = \frac{1}{g'(\log \frac{1}{\Loss})} \left(\frac{q_n / \rho^L}{\tilde{\gamma}}\right)^{p} \le \frac{R}{g'(\log \frac{1}{\Loss})},
	\]
	where $R := (B_0/\hat{\gamma}(t_0))^{p}$. Applying (S3.4), we can also deduce that
	\[
		\lvert f''(q_n) \rvert \le \frac{p}{q_n} \cdot f'(q_n) \le \frac{pR}{g(\log \frac{1}{\Loss})g'(\log \frac{1}{\Loss})}.
	\]	
	Now we bound $\normtwo{\nabla \Loss}$ and $\normtwo{\nabla^2 \Loss}$. By the chain rule, we have
	\begin{align*}
	\normtwo{\nabla \Loss} &= \normtwo{-\sum_{n=1}^{N} e^{-f(q_n)} f'(q_n) \nabla q_n} \le B_1R  \cdot \frac{\Loss\rho^{L-1}}{g'(\log \frac{1}{\Loss})}  \\
	\normtwo{\nabla^2 \Loss} &= \normtwo{\sum_{n=1}^{N} e^{-f(q_n)} \left((f'(q_n)^2 - f''(q_n)) \nabla q_n \nabla q_n^\top - f'(q_n)\nabla^2 q_n\right)} \\
	&\le \sum_{n=1}^{N} e^{-f(q_n)} \left(\left( \frac{R^2}{g'(\log \frac{1}{\Loss})^2} + \frac{pR}{g(\log \frac{1}{\Loss})g'(\log \frac{1}{\Loss})} \right) B_1^2 \rho^{2L-2} + \frac{R}{g'(\log \frac{1}{\Loss})} B_2 \rho^{L-2} \right) \\
	&\le \frac{\Loss \rho^{2L-2}}{g'(\log \frac{1}{\Loss})^2} \left( \left(R + p \lambda(\Loss)\right) B_1^2 + \frac{g'(\log \frac{1}{\Loss})}{\rho^L} B_2 \right)R.
	\end{align*}
	For $\rho$ we have $\hat{\gamma}(t_0)\rho^L \le \log \frac{1}{\Loss}  \le B_0 \rho^L$, so we can bound $\rho^{2L-2}$ by $\rho^{2L-2} \le M(\log \frac{1}{\Loss})^{2L-2}$, where $M := \min\left\{\hat{\gamma}(t_0)^{-2+2/L}, B_0^{-2+2/L}\right\}$. By Lemma~\ref{lam:gd-general-nu}, we have $\lambda(\Loss) \le 2^{p+1} \frac{1}{\log \frac{1}{\Loss}} \le \frac{2^{p+1}}{\log \frac{1}{\Loss(t_0)}}$, and also $\frac{g'(\log \frac{1}{\Loss})}{\rho^L} = \lambda(\Loss) \cdot \tilde{\gamma} \le \frac{2^{p+1}B_1}{\log \frac{1}{\Loss(t_0)}}$. Combining all these formulas together gives
	\begin{align*}
	\normtwo{\nabla \Loss}^2 &\le B_1^2 R^2 M \frac{\Loss^2}{g'(\log \frac{1}{\Loss})^2} \left(\log \frac{1}{\Loss}\right)^{2-2/L} \le 2C_{\eta} \kappa(\Loss) \Loss\\
	\normtwo{\nabla^2 \Loss} &\le \left(\left(R+\frac{p2^{p+1}}{\log \frac{1}{\Loss(t_0)}}\right)  B_1^2 + \frac{2^{p+1}B_1B_2}{\log \frac{1}{\Loss(t_0)}} \right)R M \frac{\Loss}{g'(\log \frac{1}{\Loss})^2} \left(\log \frac{1}{\Loss}\right)^{2-2/L} \le 2 C_{\eta} \kappa(\Loss),
	\end{align*}
	which completes the proof.
\end{proof}

With Corollary~\ref{cor:gd-nu} and Lemma~\ref{lam:gd-grad-hess-bound}, we can prove Lemma~\ref{lam:gd-key} with exactly the same argument. Then Theorem~\ref{thm:gd-margin-inc}, \ref{thm:gd-exp-loss-kkt}, \ref{thm:gd-main-pass-approx-kkt} can also be proved similarly. For Theorem~\ref{thm:gd-tight-rate}, we can follow the argument for gradient flow to show that it holds with slightly different tight bounds:
\begin{theorem}
	\label{thm:gd-tight-rate-general}
	Under assumptions (S1), (A2), (S3), (A4), (S5), we have the following tight rates for training loss and weight norm:
	\[
	\Loss(t) = \Theta\left(\frac{g(\log T)^{2/L}}{T (\log T)^{2}}\right) \quad \text{and} \quad \rho(t) = \Theta(g(\log T)^{1/L}),
	\]
	where $T = \sum_{\tau=t_0}^{t-1} \eta(\tau)$.
\end{theorem}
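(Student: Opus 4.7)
}
The plan is to mirror the structure of the gradient-flow proof (Theorem~\ref{thm:main-loss-tail}) while discretizing in the style of Theorem~\ref{thm:gd-tight-rate}. First I would verify that the key per-step bounds of Lemma~\ref{lam:gd-key} carry over to the general-loss setting: Lemma~\ref{lam:gd-grad-hess-bound-general} supplies the quadratic bound $\|\nabla\Loss\|^2 \le 2C_\eta\kappa(\Loss)\Loss$ and the Hessian bound $\|\nabla^2\Loss\| \le 2C_\eta\kappa(\Loss)$ needed for the descent lemma, and Lemma~\ref{lam:gd-general-nu} gives the lower bound $\nu(t) \ge \Loss(t)/\lambda(\Loss(t))$ used to control the tangential/radial split. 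Together with the monotonicity of $\hat\gamma$ (the generalized Theorem~\ref{thm:gd-margin-inc}), (P1) yields $\hat\gamma(t_0) \le \tilde\gamma(t) \le B_0$, hence the crucial equivalence $\rho(t)^L = \Theta(g(\log(1/\Loss(t))))$ that lets us transfer information between $\Loss$ and $\rho$.

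Next I would define the discrete potential
\[
E(x) := \int_{x}^{\Loss(t_0)} \frac{g'(\log\frac{1}{u})^2}{\min\bigl\{u^2\, g(\log\frac{1}{u})^{2-2/L},\, E_0\bigr\}}\, du,
\]
choosing the cap $E_0$ so that the integrand is unimodal and $E$ is convex (exactly as in Lemma~\ref{lam:gd-main-loss-upper}). For the upper bound on $\Loss$, I would combine (P3) with $\|\nabla\Loss\|^2 \ge \|\vv\|^2 = L^2\nu^2/\rho^2$, apply Lemma~\ref{lam:gd-general-nu} and the $\rho$-equivalence above to obtain
\[
\Loss(t+1)-\Loss(t) \;\le\; -\Omega(1)\cdot\eta(t)\,\Loss(t)^2\cdot \frac{g(\log\frac{1}{\Loss(t)})^{2-2/L}}{g'(\log\frac{1}{\Loss(t)})^2},
\]
and then use convexity of $E$ to conclude $E(\Loss(t+1))-E(\Loss(t)) \ge \Omega(\eta(t))$, hence $E(\Loss(t)) \ge \Omega(T)$. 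For the matching lower bound on $\Loss$, I would use $\Loss(t+1)-\Loss(t) \ge -\frac{3}{2}\eta(t)\|\nabla\Loss(t)\|^2$ together with the upper bound on $\|\nabla\Loss\|^2$ from Lemma~\ref{lam:gd-grad-hess-bound-general}, once more invoking convexity of $E$, to get $E(\Loss(t)) \le O(T)$. Thus $E(\Loss(t)) = \Theta(T)$.

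The final step is asymptotic inversion. Substituting $v=\log(1/u)$ turns $E(x)$ into an integral of the form $\int^{\log(1/x)} \frac{g'(v)^2 e^v}{g(v)^{2-2/L}}\,dv$, which is dominated by the upper endpoint because $e^v$ grows exponentially while $g, g'$ grow only polynomially by Corollary~\ref{cor:f-g-poly}. This gives $E(x) = \Theta\bigl(g'(\log\frac{1}{x})^2/(x\, g(\log\frac{1}{x})^{2-2/L})\bigr)$, and then Lemma~\ref{lam:g-ratio} ($g(y)/g'(y) = \Theta(y)$) simplifies this to
\[
E(x) \;=\; \Theta\!\left(\frac{g(\log\frac{1}{x})^{2/L}}{x\,(\log\frac{1}{x})^{2}}\right).
\]
Setting $x = \Loss(t)$ and using $E(\Loss(t)) = \Theta(T)$, I would then establish $\log\frac{1}{\Loss(t)} = \Theta(\log T)$ (this is where the sub-polynomial growth in Corollary~\ref{cor:f-g-poly} is essential) and apply Corollary~\ref{cor:f-g-theta-swap} to pass $\Theta$ through $g$, yielding $\Loss(t) = \Theta\bigl(g(\log T)^{2/L}/(T(\log T)^2)\bigr)$. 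The weight-norm bound then follows by plugging this into $\rho^L = \Theta(g(\log(1/\Loss)))$ and using Corollary~\ref{cor:f-g-theta-swap} once more.

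The main obstacle is the asymptotic inversion in the last step: one must carefully verify that $\log(1/\Loss) = \Theta(\log T)$ in both directions before replacing arguments of $g$, since a priori the relation $E(\Loss) = \Theta(T)$ only pins down $\Loss$ up to the transcendental factor $g(\log\frac{1}{\Loss})^{2/L}/(\log\frac{1}{\Loss})^2$; the bootstrap through Corollary~\ref{cor:f-g-poly} and Corollary~\ref{cor:f-g-theta-swap} is what closes the loop. A secondary technical point is choosing the cap $E_0$ in the definition of $E$ so that the integrand is monotone for $u$ near $\Loss(t_0)$, which requires separate case analysis based on whether $L \le 1$ or $L > 1$, analogous to the definition of $\kappa(x)$ in Appendix~\ref{sec:appendix-gd-assumption}.
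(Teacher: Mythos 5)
Your proposal is correct and follows essentially the same route the paper indicates: mirror the gradient-flow argument of Theorem~\ref{thm:main-loss-tail} (potential $G$, Lemma~\ref{lam:big-G}-style asymptotic inversion via Corollary~\ref{cor:f-g-poly}/Corollary~\ref{cor:f-g-theta-swap}, and the equivalence $\rho^L = \Theta(g(\log\frac{1}{\Loss}))$) while carrying over the discrete per-step bounds (P1)--(P4), Lemma~\ref{lam:gd-general-nu}, and Lemma~\ref{lam:gd-grad-hess-bound-general} to obtain $E(\Loss(t)) = \Theta(T)$ from both sides, exactly as in the exponential-loss Theorem~\ref{thm:gd-tight-rate}. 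One small caution: for $E$ to be convex you want the \emph{entire} integrand monotone, so the cap should in principle be taken over the full quantity $u^2 g(\log\frac{1}{u})^{2-2/L}/g'(\log\frac{1}{u})^2$ rather than leaving $g'(\log\frac{1}{u})^2$ outside the $\min$ --- this mirrors the paper's definition of $\kappa(x)$ as a running supremum of the ratio including $g'$, and it only affects the regime near $\Loss(t_0)$.
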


\section{Extension: Multi-class Classification} \label{sec:appendix-multiclass}

In this section, we generalize our results to multi-class classification with cross-entropy loss.
This part of analysis is inspired by Theorem 1 in~\citep{zhang2018residual}, which gives a lower bound for the gradient in terms of the loss $\Loss$.

Since now a neural network has multiple outputs, we need to redefine our notations. Let $C$ be the number of classes. The output of a neural network $\Phi$ is a vector $\vPhi(\vtheta; \vx) \in \R^C$. We use $\Phi_j(\vtheta; \vx) \in \R$ to denote the $j$-th output of $\Phi$ on the input $\vx \in \R^{\dimx}$. A dataset is denoted by $\Data = \{\vx_n, y_n\}_{n=1}^{N} = \{(\vx_n, y_n) : n \in [N]\}$, where $\vx_n \in \R^{\dimx}$ is a data input and $y_n \in [C]$ is the corresponding label. The loss function of $\Phi$ on the dataset $\Data$ is defined as
\[
	\Loss(\vtheta) := \sum_{n=1}^{N} -\log \frac{e^{\Phi_{y_n}(\vtheta; \vx_n)}}{\sum_{j=1}^{C} e^{\Phi_j(\vtheta; \vx_n)}}.
\]
The {\em margin} for a single data point $(\vx_n, y_n)$ is defined to be $q_n(\vtheta) := \Phi_{y_n}(\vtheta; \vx_n) - \max_{j \ne y_n}\{  \Phi_j(\vtheta; \vx_n) \}$, and the margin for the entire dataset is defined to be $q_{\min}(\vtheta) = \min_{n \in [N]} q_n(\vtheta)$.
We define the {\em normalized margin} to be $\bar{\gamma}(\vtheta) := q_{\min}(\hat{\vtheta}) = q_{\min}(\vtheta) / \rho^L$, where $\rho := \normtwo{\vtheta}$ and 
$\hat{\vtheta} := \vtheta/\rho \in \sphS^{d-1}$ as usual.

Let $\ell(q) := \log(1 + e^{-q})$ be the logistic loss. Recall that $\ell(q)$ satisfies (B3). Let $f(q) = -\log \ell(q) = -\log \log (1 + e^{-q})$. Let $g$ be the inverse function of $f$. So $g(q) = -\log(e^{e^{-q}} - 1)$.

The cross-entropy loss can be rewritten in other ways. Let $s_{nj} := \Phi_{y_n}(\vtheta; \vx_n) - \Phi_j(\vtheta; \vx_n)$. Let $\tilde{q}_n := -\LSE(\{-s_{nj} : j \ne y_n\}) = -\log\left( \sum_{j \ne y_n} e^{-s_{nj}} \right)$.
Then
\[
\Loss(\vtheta) := \sum_{n=1}^{N} \log\left(1 + \sum_{j \ne y_n} e^{-s_{nj}} \right) = \sum_{n=1}^{N} \log(1 + e^{-\tilde{q}_n}) = \sum_{n=1}^{N} \ell(\tilde{q}_n) = \sum_{n=1}^{N} e^{-f(\tilde{q}_n)}.
\]

\paragraph{Gradient Flow.} For gradient flow, we assume the following:
\begin{enumerate}
	\item[\textbf{(M1).}] (Regularity). For any fixed $\vx$ and $j \in [C]$, $\Phi_j(\,\cdot\,; \vx)$ is locally Lipschitz and admits a chain rule;
	\item[\textbf{(M2).}] (Homogeneity). There exists $L > 0$ such that $\forall j \in [C], \forall \alpha > 0:  \Phi_j(\alpha\vtheta; \vx) = \alpha^{L} \Phi_j(\vtheta; \vx)$;
	\item[\textbf{(M3).}] (Cross-entropy Loss). $\Loss(\vtheta)$ is defined as the cross-entropy loss on the training set;
	\item[\textbf{(M4).}] (Separability). There exists a time $t_0$ such that $\Loss(t_0) < \log 2$.
\end{enumerate}

If $\Loss < \log 2$, then $\sum_{j \ne y_n} e^{-s_{nj}} < 1$ for all $n \in [N]$, and thus $s_{nj} > 0$ for all $n \in [N], j \in [C]$. So (M4) ensures the separability of training data.

\begin{definition}~\label{def:margin-cross-entropy}
	For cross-entropy loss, the smoothed normalized margin $\tilde{\gamma}(\vtheta)$ of $\vtheta$ is defined as
	\[
	\tilde{\gamma}(\vtheta) := \frac{\ell^{-1}(\Loss)}{\rho^L} = \frac{-\log(e^{\Loss} - 1)}{\rho^L},
	\]
	where $\ell^{-1}(\cdot)$ is the inverse function of the logistic loss $\ell(\cdot)$.	
\end{definition}

Theorem \ref{thm:exp-loss-margin-inc} and \ref{thm:exp-loss-kkt} still hold. Here we redefine the optimization problem (P) to be
\begin{align*}
\min        & \quad \frac{1}{2}\normtwo{\vtheta}^2 \\
\text{s.t.} & \quad s_{nj}(\vtheta) \ge 1 \qquad \forall n \in [N], \forall j \in [C] \setminus \{y_n\}
\end{align*}
Most of our proofs are very similar as before. Here we only show the proof for the generalized version of Lemma~\ref{lam:key-lemma-margin}.

\begin{lemma} \label{lam:key-lemma-margin-cross-entropy}
	Lemma~\ref{lam:key-lemma-margin} is also true for the smoothed normalized margin $\tilde{\gamma}$ defined in Definition~\ref{def:margin-cross-entropy}.
\end{lemma}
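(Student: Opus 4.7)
The plan is to mimic the proof of Lemma~\ref{lam:key-lemma-margin-general} step by step, with the role of $q_n$ replaced by the smoothed per-sample margin $\tilde q_n := -\log\bigl(\sum_{j \ne y_n} e^{-s_{nj}}\bigr)$ so that $\Loss(\vtheta) = \sum_{n=1}^N e^{-f(\tilde q_n)}$ for $f(q)=-\log \ell(q)$ and $\ell$ the logistic loss. With this reformulation, $\tilde\gamma(\vtheta)=\ell^{-1}(\Loss)/\rho^L = g(\log (1/\Loss))/\rho^L$ has exactly the same functional form as in the binary case, so the final algebraic manipulation carried out at the end of Section~\ref{sec:appendix-margin-inc-general} can be reused verbatim. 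The only two ingredients that need to be re-established are the analogues of Theorem~\ref{thm:weight-growth} (a closed form / lower bound for $\dotp{\vtheta}{d\vtheta/dt}$) and Lemma~\ref{lam:nu-lower} (a lower bound on that quantity in terms of $\Loss$).

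\textbf{Step 1 (weight growth).} The chain rule for the cross-entropy loss yields
\[
-\nabla \Loss(\vtheta) \;=\; \sum_{n=1}^{N} \sum_{j \ne y_n} \alpha_{nj}\, \vh_{nj}, \qquad \vh_{nj} \in \cpartial s_{nj}(\vtheta),
\]
where $\alpha_{nj} = e^{-s_{nj}}/(1+\sum_{k\ne y_n} e^{-s_{nk}})$ are the (off-diagonal) softmax probabilities. Since each $s_{nj}$ is $L$-homogeneous, Theorem~\ref{thm:homo} gives $\dotp{\vtheta}{\vh_{nj}} = L\,s_{nj}$, hence
\[
\tfrac{1}{2}\tfrac{d\rho^2}{dt} \;=\; \dotp{\vtheta}{\tfrac{d\vtheta}{dt}} \;=\; L\,\nu(t), \qquad \nu(t):=\sum_{n=1}^{N}\sum_{j\ne y_n}\alpha_{nj}s_{nj}.
\]

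\textbf{Step 2 (lower bound on $\nu$).} The core step is to push the log-sum-exp inside: because $\tilde q_n \le s_{nj}$ for every $j \ne y_n$, we have $\sum_{j\ne y_n}\alpha_{nj}s_{nj} \ge \tilde q_n \cdot (1-\alpha_{ny_n})$. A short calculation using $f(q)=-\log\ell(q)$ shows $1-\alpha_{ny_n} = \tfrac{e^{-\tilde q_n}}{1+e^{-\tilde q_n}} = f'(\tilde q_n)\,e^{-f(\tilde q_n)}$, whence $\nu(t) \ge \sum_n \tilde q_n f'(\tilde q_n) e^{-f(\tilde q_n)}$. Now (B4) gives $\Loss(t) < \log 2 = e^{-f(b_f)}$, so $e^{-f(\tilde q_n)} \le \Loss$ implies $\tilde q_n \ge g(\log 1/\Loss)$; by (B3.3) the map $q \mapsto f'(q)q$ is non-decreasing, so $f'(\tilde q_n)\tilde q_n \ge g(\log 1/\Loss)/g'(\log 1/\Loss)$, and summing over $n$ yields the desired analogue of Lemma~\ref{lam:nu-lower}:
\[
\nu(t) \;\ge\; \frac{g(\log (1/\Loss))}{g'(\log (1/\Loss))}\,\Loss.
\]

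\textbf{Step 3 (conclude).} With Steps 1 and 2 in hand, the remainder of the proof of Lemma~\ref{lam:key-lemma-margin-general} transfers word for word: $\frac{d}{dt}\log\rho = L\nu/\rho^2 > 0$ for $t>t_0$, and combining $-\frac{d\Loss}{dt}=\|d\vtheta/dt\|^2$ with the decomposition $\|d\vtheta/dt\|^2 = \dotp{\hat\vtheta}{d\vtheta/dt}^2 + \|(\mathbf I-\hat\vtheta\hat\vtheta^\top)d\vtheta/dt\|^2$ and the identity $\rho\,d\hat\vtheta/dt = (\mathbf I-\hat\vtheta\hat\vtheta^\top)d\vtheta/dt$ gives the desired inequality $\frac{d}{dt}\log\tilde\gamma \ge L\bigl(\frac{d}{dt}\log\rho\bigr)^{-1}\|d\hat\vtheta/dt\|^2$. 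The only real obstacle is Step 2, where the multi-class structure forces us to replace the clean equality $\dotp{\vtheta}{d\vtheta/dt}=L\sum_n q_n f'(q_n)e^{-f(q_n)}$ of the binary case with an inequality whose validity hinges on the LSE bound $\tilde q_n \le s_{nj}$ and the explicit identification $1-\alpha_{ny_n}=f'(\tilde q_n)\ell(\tilde q_n)$; once these are verified, the rest is mechanical.
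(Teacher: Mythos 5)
Your proposal is correct and follows the same route as the paper's proof. Both start from the same $\nu(t) := \sum_{n}\sum_{j\ne y_n}\alpha_{nj}s_{nj}$ and the identity $\tfrac{1}{2}\tfrac{d\rho^2}{dt}=L\nu(t)$ via Euler's theorem on the $L$-homogeneous $s_{nj}$, and both obtain the reduction $\nu(t)\ge\sum_n e^{-f(\tilde q_n)}f'(\tilde q_n)\tilde q_n$ by pushing the softmax weights onto the lower bound $\tilde q_n\le s_{nj}$ (the paper writes the intermediate quantity as $\sum_n\frac{e^{-\tilde q_n}}{1+e^{-\tilde q_n}}\tilde q_n$ and then appeals to Lemma~\ref{lam:nu-lower} for the logistic loss, while you re-derive the same bound by explicitly identifying $1-\alpha_{ny_n}=f'(\tilde q_n)e^{-f(\tilde q_n)}$); the remaining algebra is transferred verbatim in both.
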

\begin{proof}
	Define $\nu(t)$ by the following formula:
	\[
	\nu(t) := \sum_{n=1}^{N} \frac{\sum_{j \ne y_n} e^{-s_{nj}}s_{nj}}{1 + \sum_{j \ne y_n} e^{-s_{nj}}}.
	\]
	Using a similar argument as in Theorem~\ref{thm:weight-growth}, it can be proved that $\frac{1}{2} \frac{d\rho^2}{dt} = L \nu(t)$ for a.e.~$t > 0$.
	
	It can be shown that Lemma~\ref{lam:nu-lower}, which asserts that $\nu(t) \ge \frac{g(\log \frac{1}{\Loss})}{g'(\log \frac{1}{\Loss})}\Loss$, still holds for this new definition of $\nu(t)$. By definition, $s_{nj} \ge q_n$ for $j \ne y_n$. Also note that $e^{-\tilde{q}_n} \ge e^{-q_n}$. So $s_{nj} \ge q_n \ge \tilde{q}_n$. Then we have
	\[
	\nu(t) \ge  \sum_{n=1}^{N} \frac{\sum_{j \ne y_n} e^{-s_{nj}}}{1 + \sum_{j \ne y_n} e^{-s_{nj}}} \cdot \tilde{q}_n = \sum_{n=1}^{N} \frac{e^{-\tilde{q}_n}}{1 + e^{-\tilde{q}_n}} \cdot \tilde{q}_n = \sum_{n=1}^{N} e^{-f(\tilde{q}_n)} f'(\tilde{q}_n)\tilde{q}_n.
	\]
	Note that $\Loss = \sum_{n=1}^{N} e^{-f(\tilde{q}_n)}$. Then using Lemma~\ref{lam:nu-lower} for logistic loss can conclude that $\nu(t) \ge \frac{g(\log \frac{1}{\Loss})}{g'(\log \frac{1}{\Loss})}\Loss$.
	
	The rest of the proof for this lemma is exactly the same as that for Lemma~\ref{lam:key-lemma-margin}.
\end{proof}

\paragraph{Gradient Descent.} For gradient descent, we only need to replace (M1) with (S1) and make the assumption (S5) on the learning rate.
\begin{enumerate}
	\item[\textbf{(S1).}] (Smoothness). For any fixed $\vx$ and $j \in [C]$, $\Phi_j(\,\cdot\,; \vx)$ is $\contC^2$-smooth;
	\item[\textbf{(S5).}] (Learing rate condition). $\sum_{t \ge 0} \eta(t) = +\infty$ and $\eta(t) \le H(\Loss(\vtheta(t)))$.
\end{enumerate}
Here $H(x) := \mu(x) / (C_{\eta}\kappa(x))$ is defined to be the same as in Appendix~\ref{sec:gd-general} (when $\ell(\cdot)$ is set to logistic loss) except that we use another $C_{\eta}$ which will be specified later.

We only need to show that Lemma~\ref{lam:gd-general-nu} and Lemma~\ref{lam:gd-grad-hess-bound-general} continue to hold. Using the same argument as we do for gradient flow, we can show that $\gamma(t)$ and $\lambda(x)$ do satisfy the propositions in Lemma~\ref{lam:gd-general-nu}. For Lemma~\ref{lam:gd-grad-hess-bound-general}, we first note the original definition of $C_{\eta}$ involves $B_0, B_1, B_2$, which are undefined in the multi-class setting. So now we redefine them as
\begin{align*}
	B_0 &:= \sup \left\{ s_{nj} : \vtheta \in \sphS^{d-1}, n \in [N], j \in [C] \right\}, \\
	B_1 &:= \sup \left\{ \normtwo{\nabla s_{nj}} : \vtheta \in \sphS^{d-1}, n \in [N], j \in [C] \right\}, \\
	B_2 &:= \sup \left\{ \normtwo{\nabla^2 s_{nj}} : \vtheta \in \sphS^{d-1}, n \in [N], j \in [C] \right\}.
\end{align*}
By the property of $\LSE$, we can use $B_0, B_1, B_2$ to bound $\tilde{q}_n, \nabla \tilde{q}_n, \nabla^2 \tilde{q}_n$.
\begin{align*}
\tilde{q}_n &\le q_n \le B_0\rho^L, \\
\normtwo{\nabla \tilde{q}_n} &= \normtwo{\sum_{j=1}^{C} \frac{e^{-s_{nj}}}{\sum_{k=1}^{C} e^{-s_{nk}} } \nabla s_{nj}} 
\le \sum_{j=1}^{C} \frac{e^{-s_{nj}}}{\sum_{k=1}^{C} e^{-s_{nk}} } \normtwo{\nabla s_{nj}} \le B_1 \rho^{L-1}, \\
\normtwo{\nabla^2 \tilde{q}_n} &= \normtwo{\sum_{j=1}^{C} \left( \frac{e^{-s_{nj}}}{\sum_{k=1}^{C} e^{-s_{nk}}} (\nabla^2 s_{nj} -\nabla s_{nj}\nabla s_{nj}^\top) + \frac{e^{-2s_{nj}} }{\left(\sum_{k=1}^{C} e^{-s_{nk}}\right)^2} \nabla s_{nj}\nabla s_{nj}^\top\right) } \\
&\le B_2 \rho^{L-2} + 2B_1^2 \rho^{2L-2}.
\end{align*}
Using these bounds, we can prove with the constant $C_{\eta}$ defined as follows:
\[
C_{\eta} := \frac{1}{2}\left(\left( \left(\frac{B_0}{\hat{\gamma}(t_0)}\right)^{p} + \frac{p2^{p+1}}{\log \frac{1}{\Loss(t_0)}}\right) B_1^2 + (2 \log 2) \left(\frac{B_2}{\rho(t_0)^{L}} + 2B_1^2\right) \right) \left(\frac{B_0}{\hat{\gamma}(t_0)}\right)^{p} M,
\]
where $M := \min\left\{\hat{\gamma}(t_0)^{-2+2/L}, B_0^{-2+2/L}\right\}$.

\section{Extension: Multi-homogeneous Models} \label{sec:appendix-multihomo}

In this section, we extend our results to multi-homogeneous models. For this, the main difference from the proof for homogeneous models is that now we have to separate the norm of each homogeneous parts of the parameter, rather than consider them as a whole. So only a small part to proof needs to be changed. We focus on gradient flow, but it is worth to note that following the same argument, it is not hard to extend the results to gradient descent.

Let $\Phi(\vw_1, \dots, \vw_m; \vx)$ be $(k_1, \dots, k_m)$-homogeneous. Let $\rho_i = \normtwo{\vw_i}$ and $\hat{\vw}_i = \frac{\vw_i}{\normtwo{\vw_i}}$. The smoothed normalized margin defined in \eqref{eq:multi-homo-smoothed-normalized-margin} can be rewritten as follows:
\begin{definition}
For a multi-homogeneous model with loss function $\ell(\cdot)$ satisfying (B3), the smoothed normalized margin $\tilde{\gamma}(\vtheta)$ of $\vtheta$ is defined as
\[
	\tilde{\gamma}(\vtheta) := \frac{g(\log \frac{1}{\Loss})}{\prod_{i=1}^{m} \rho_i^{k_i} } = \frac{\ell^{-1}(\Loss)}{\prod_{i=1}^{m} \rho_i^{k_i} }.
\]
\end{definition}

We only prove the generalized version of Lemma~\ref{lam:key-lemma-margin} here. The other proofs are almost the same.

\begin{lemma} \label{lam:key-lemma-margin-multi-homo}
	For all $t > t_0$,
	\begin{equation}
	\frac{d}{dt} \log \rho_i > 0 \quad \text{for all }i \in [m] \quad \text{and} \quad \frac{d}{dt} \log \tilde{\gamma} \ge \sum_{i=1}^{m} k_i \left( \frac{d}{dt} \log \rho_i \right)^{-1} \normtwo{\frac{d\hat{\vw}_i}{dt}}^2.
	\end{equation}
\end{lemma}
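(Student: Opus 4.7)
The plan is to replay the proof of Lemma~\ref{lam:key-lemma-margin-general}, but to apply Euler's identity separately to each block $\vw_i$. Define $\nu(t) := \sum_{n=1}^{N} e^{-f(q_n)} f'(q_n) q_n$ as before. Because $q_n$ is $k_i$-homogeneous in $\vw_i$ with the other blocks fixed, a block-wise version of Theorem~\ref{thm:homo} gives $\dotp{\vw_i}{\vh_{n,i}} = k_i q_n$ for every element $\vh_{n,i}$ in the $\vw_i$-slice of $\cpartial q_n$. Combined with the chain rule applied to $t \mapsto \rho_i^2$, this yields
\[
\frac{1}{2}\frac{d\rho_i^2}{dt} \;=\; -\dotp{\vw_i}{\partial \Loss/\partial \vw_i} \;=\; k_i \nu(t).
\]
Lemma~\ref{lam:nu-lower} gives $\nu(t) \ge \frac{g(\log \frac{1}{\Loss})}{g'(\log \frac{1}{\Loss})} \Loss > 0$ for $t > t_0$, and dividing by $\rho_i^2$ proves $\frac{d}{dt} \log \rho_i = k_i \nu/\rho_i^2 > 0$.

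For the second inequality, decompose $\frac{d\vw_i}{dt}$ into its radial and tangential components $\vv_i := \hat{\vw}_i\hat{\vw}_i^{\top} \frac{d\vw_i}{dt}$ and $\vu_i := (\mI - \hat{\vw}_i\hat{\vw}_i^{\top})\frac{d\vw_i}{dt}$. Direct calculation yields $\normtwo{\vv_i} = \frac{1}{\rho_i}\dotp{\vw_i}{\frac{d\vw_i}{dt}} = k_i\nu/\rho_i$ and $\frac{d\hat{\vw}_i}{dt} = \vu_i/\rho_i$, so that $\normtwo{\vu_i}^2 = \rho_i^2 \normtwo{\frac{d\hat{\vw}_i}{dt}}^2$. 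Now differentiate $\log\tilde{\gamma} = \log g(\log \frac{1}{\Loss}) - \sum_i k_i \log \rho_i$ using $\frac{g'(\log\frac{1}{\Loss})}{g(\log\frac{1}{\Loss})\Loss} \ge \frac{1}{\nu}$ (again Lemma~\ref{lam:nu-lower}) together with $-\frac{d\Loss}{dt} = \normtwo{\frac{d\vtheta}{dt}}^2 = \sum_i (\normtwo{\vv_i}^2 + \normtwo{\vu_i}^2)$. The $\sum_i k_i^2 \nu^2 / \rho_i^2$ contributed by the radial parts exactly cancels the $-\sum_i k_i^2 \nu/\rho_i^2$ coming from $-\sum_i k_i \log \rho_i$, leaving
\[
\frac{d}{dt}\log\tilde{\gamma} \;\ge\; \frac{1}{\nu}\sum_{i=1}^m \normtwo{\vu_i}^2 \;=\; \sum_{i=1}^m \frac{\rho_i^2}{\nu}\normtwo{\frac{d\hat{\vw}_i}{dt}}^2 \;=\; \sum_{i=1}^m k_i \left(\frac{d}{dt}\log\rho_i\right)^{-1} \normtwo{\frac{d\hat{\vw}_i}{dt}}^2,
\]
which is exactly the claimed bound.

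The main (mild) obstacle is lifting Theorem~\ref{thm:homo} to the multi-homogeneous, non-smooth setting: one needs that the $\vw_i$-component of any element of $\cpartial q_n(\vw_1,\dots,\vw_m)$ lies in the Clarke subdifferential of the restriction $\vw_i \mapsto q_n(\vw_1,\dots,\vw_m)$ with the other blocks frozen, so that Euler's identity can be applied block-wise. This is handled by the same limit-of-gradients argument used inside the proof of Theorem~\ref{thm:homo}, together with (A1) applied to each restriction. Once this identity is in hand, the rest of the argument is a clean block-wise reproduction of the single-homogeneous proof, with $(L,\rho,\hat{\vtheta})$ replaced throughout by $(k_i,\rho_i,\hat{\vw}_i)$ and the cancellation between radial growth of $\rho_i$ and the $-k_i\log\rho_i$ term in $\log\tilde{\gamma}$ happening coordinate-by-coordinate.
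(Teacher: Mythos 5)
Your proposal is correct and follows the paper's own proof essentially line by line: the same block-wise identity $\frac{1}{2}\frac{d\rho_i^2}{dt}=k_i\nu$, the same use of Lemma~\ref{lam:nu-lower} to lower-bound $\nu$ and absorb the $g'/(g\Loss)$ factor, the same radial/tangential split of $\frac{d\vw_i}{dt}$, and the same cancellation of the radial contribution $\sum_i k_i^2\nu/\rho_i^2$ against $-\sum_i k_i\frac{d}{dt}\log\rho_i$. The one point where you go beyond the paper is in explicitly noting and justifying the block-wise Euler identity in the Clarke-subdifferential setting; the paper leaves this implicit by simply citing Theorem~\ref{thm:weight-growth}, so your discussion is a correct clarification rather than a different route.
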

\begin{proof}
Note that $\frac{d}{dt} \log \rho_i = \frac{1}{2\rho_i^2} \frac{d\rho_i^2}{dt} = \frac{k_i\nu(t)}{\rho_i^2}$ by Theorem~\ref{thm:weight-growth}. It simply follows from Lemma~\ref{lam:nu-lower} that $\frac{d}{dt} \log \rho > 0$ for a.e.~$t > t_0$. And it is easy to see that $\log \tilde{\gamma} = \log\left(g(\log \frac{1}{\Loss}) / \rho^L\right)$ exists for all $t \ge t_0$. By the chain rule and Lemma~\ref{lam:nu-lower}, we have
	\begin{align*}
		\frac{d}{dt} \log \tilde{\gamma} = \frac{d}{dt}\left( \log\left(g(\log \frac{1}{\Loss})\right) - \sum_{i=1}^{m} k_i \log \rho_i  \right) &= \frac{g'(\log \frac{1}{\Loss})}{g(\log \frac{1}{\Loss})} \cdot \frac{1}{\Loss} \cdot \left(-\frac{d\Loss}{dt}\right) - \sum_{i=1}^{m} \frac{k_i^2 \nu(t)}{\rho_i^2} \\
		&\ge \frac{1}{\nu(t)} \cdot \left(-\frac{d\Loss}{dt}\right) - \sum_{i=1}^{m} \frac{k_i^2 \nu(t)}{\rho_i^2} \\
		&\ge \frac{1}{\nu(t)} \cdot \left(-\frac{d\Loss}{dt} - \sum_{i=1}^{m} \frac{k_i^2\nu(t)^2}{\rho_i^2} \right).
	\end{align*}
	On the one hand, $-\frac{d\Loss}{dt} = \sum_{i=1}^{m} \normtwo{\frac{d \vw_i}{dt}}^2$ for a.e.~$t > 0$ by Lemma~\ref{lam:grad-flow-descent-lemma}; on the other hand, $k_i\nu(t) = \dotp{\vw_i}{\frac{d\vw_i}{dt}}$ by Theorem~\ref{thm:weight-growth}. Combining these together yields
	\begin{align*}
	    	\frac{d}{dt} \log \tilde{\gamma} &\ge \frac{1}{\nu(t)} \sum_{i=1}^{m} \left( \normtwo{\frac{d \vw_i}{dt}}^2 -  \dotp{\hat{\vw}_i}{\frac{d\vw_i}{dt}}^2 \right) = \frac{1}{\nu(t)} \normtwo{ (\mI - \hat{\vw}_i\hat{\vw}_i^\top) \frac{d\vw_i}{dt} }^2.
	\end{align*}
	
	By the chain rule, $\frac{d\hat{\vw_i}}{dt} = \frac{1}{\rho_i}(\mI - \hat{\vw_i}\hat{\vw_i}^\top) \frac{d\vw_i}{dt}$ for a.e.~$t > 0$. So we have
	\[
		\frac{d}{dt} \log \tilde{\gamma} \ge \sum_{i=1}^{m}\frac{\rho_i^2}{\nu(t)} \normtwo{\frac{d\hat{\vw}_i}{dt}}^2 = \sum_{i=1}^{m} k_i \left( \frac{d}{dt} \log \rho_i \right)^{-1} \normtwo{\frac{d\hat{\vw}_i}{dt}}^2.
	\]
\end{proof}

For cross-entropy loss, we can combine the proofs in Appendix~\ref{sec:appendix-multiclass} to show that Lemma~\ref{lam:key-lemma-margin-multi-homo} holds if we use the following definition of the smoothed normalized margin:
\begin{definition}
	For a multi-homogeneous model with cross-entropy, the smoothed normalized margin $\tilde{\gamma}(\vtheta)$ of $\vtheta$ is defined as
	\[
	\tilde{\gamma} = \frac{\ell^{-1}(\Loss)}{\prod_{i=1}^{m} \rho_i^{k_i} } = \frac{-\log(e^{\Loss} - 1)}{\prod_{i=1}^{m} \rho_i^{k_i} }.
	\]
	where $\ell^{-1}(\cdot)$ is the inverse function of the logistic loss $\ell(\cdot)$.
\end{definition}
The only place we need to change in the proof for Lemma~\ref{lam:key-lemma-margin-multi-homo} is that instead of using Lemma~\ref{lam:nu-lower}, we need to prove $\nu(t) \ge \frac{g(\log \frac{1}{\Loss})}{g'(\log \frac{1}{\Loss})}\Loss$ in a similar way as in Lemma~\ref{lam:key-lemma-margin-cross-entropy}. The other parts of the proof are exactly the same as before.

\section{Chain Rules for Non-differentiable Functions} \label{sec:appendix-chain-rule}

In this section, we provide some background on the chain rule for non-differentiable functions. The ordinary chain rule for differentiable functions is a very useful formula for computing derivatives in calculus. However, for non-differentiable functions, it is difficult to find a natural definition of subdifferential so that the chain rule equation holds exactly. To solve this issue, Clarke proposed Clarke's subdifferential~\citep{clarke1975generalized,clarke1990optimization,clarke2008nonsmooth} for locally Lipschitz functions, for which the chain rule holds as an inclusion rather than an equation:

\begin{theorem}[Theorem~2.3.9 and 2.3.10 of~\citep{clarke1990optimization}] \label{thm:clarke-chain-rule}
    Let $z_1, \dots, z_n: \R^d \to \R$ and $f: \R^n \to \R$ be locally Lipschitz functions. Let $(f \circ \vz)(\vx) = f(z_1(\vx), \dots, z_n(\vx))$ be the composition of $f$ and $\vz$. Then,
    \[
        \cpartial (f \circ \vz)(\vx) \subseteq \conv \left\{ \sum_{i=1}^{n} \alpha_i \vh_i : \valpha \in \cpartial f(z_1(\vx), \dots, z_n(\vx)), \vh_i \in \cpartial z_i(\vx) \right\}.
    \]
\end{theorem}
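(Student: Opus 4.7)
The plan is to derive the inclusion by working with Clarke's generalized directional derivative
\[
F^{\circ}(\vx;\vv) := \limsup_{\vy\to\vx,\,t\downarrow 0}\frac{F(\vy+t\vv)-F(\vy)}{t}
\]
for $F := f\circ\vz$, together with the duality $\cpartial F(\vx) = \{\vg\in\R^d : \dotp{\vg}{\vv}\le F^{\circ}(\vx;\vv)\text{ for all }\vv\}$. Since $F$ is locally Lipschitz (composition of locally Lipschitz maps), both objects are finite, so it suffices to upper bound $F^{\circ}(\vx;\vv)$ by the support function of
\[
S := \conv\left\{\sum_{i=1}^n \alpha_i\vh_i : \valpha\in\cpartial f(\vz(\vx)),\ \vh_i\in\cpartial z_i(\vx)\right\}.
\]
Because $S$ is the convex hull of the continuous image of the compact product $\cpartial f(\vz(\vx))\times\cpartial z_1(\vx)\times\cdots\times\cpartial z_n(\vx)$, it is itself convex and compact, so the support-function characterization of closed convex sets then forces $\cpartial F(\vx)\subseteq S$.

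The main computation is to rewrite each difference quotient in a product form via Lebourg's mean value theorem. Applied to $f$ on the segment $[\vz(\vy),\vz(\vy+t\vv)]$, it produces $\valpha_t\in\cpartial f(\vw_t)$ for some $\vw_t$ in that segment with $f(\vz(\vy+t\vv))-f(\vz(\vy)) = \dotp{\valpha_t}{\vz(\vy+t\vv)-\vz(\vy)}$. Applied to each $z_i$ on $[\vy,\vy+t\vv]$, it yields $\vh_{i,t}\in\cpartial z_i(\vy_{i,t})$ with $z_i(\vy+t\vv)-z_i(\vy)=t\,\dotp{\vh_{i,t}}{\vv}$. Combining these two identities collapses the difference quotient to
\[
\frac{F(\vy+t\vv)-F(\vy)}{t} = \sum_{i=1}^n \alpha_{t,i}\,\dotp{\vh_{i,t}}{\vv}.
\]
Then I pass to the limit along any sequence $(\vy_k,t_k)\to(\vx,0)$ realising the $\limsup$. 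Local Lipschitzness uniformly bounds $\normtwo{\valpha_{t_k}}$ and $\normtwo{\vh_{i,t_k}}$, so after extracting a subsequence both converge; since $\vw_{t_k}\to\vz(\vx)$ by continuity of $\vz$ and $\vy_{i,t_k}\to\vx$, the closed-graph (upper-semicontinuity) property of the Clarke subdifferential places the limits inside $\cpartial f(\vz(\vx))$ and $\cpartial z_i(\vx)$ respectively. This gives $F^{\circ}(\vx;\vv)\le\sup\{\dotp{\sum_i\alpha_i\vh_i}{\vv} : \valpha\in\cpartial f(\vz(\vx)),\ \vh_i\in\cpartial z_i(\vx)\}$, which is exactly the support function of $S$.

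The main obstacle is that the argument leans on two nontrivial prerequisites from nonsmooth calculus: Lebourg's mean value theorem (for the product-form identity) and upper semicontinuity of $\cpartial$ (for the limit passage of the multipliers). A more elementary route that mimics the definition of $\cpartial$ as the convex hull of limits of gradients is tempting but gets stuck on the fact that $F$ may be differentiable at points where the classical chain rule fails: the preimage under the locally Lipschitz map $\vz$ of the Lebesgue-null set on which $f$ is non-differentiable can carry positive measure, so one cannot simply restrict to ``generic'' points. The directional-derivative route above sidesteps this by never invoking differentiability of $F$ pointwise.
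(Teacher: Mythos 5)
The paper does not prove this statement: it is stated as a citation to Theorem~2.3.9 and~2.3.10 of Clarke's monograph, and the appendix uses it as a black box to derive the corollary that functions admitting chain rules are closed under composition. So there is no internal proof to compare against, and the question reduces to whether your reconstruction is sound.

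Your argument is essentially the textbook route and it is correct. Passing to $F^{\circ}(\vx;\vv)$ and bounding it by the support function of $S$ is the right reduction (since $S$, being the convex hull of a continuous image of a compact product, is compact and convex, the support-function comparison does imply set inclusion). The double application of Lebourg's mean value theorem collapses the difference quotient cleanly, and upper semicontinuity of the Clarke subdifferential is exactly the tool needed to push the multipliers through the $\limsup$-realizing sequence. You are also right to point out why the naive ``restrict to points where everything is differentiable'' strategy fails: $\vz$ can pull back a null set of bad points for $f$ into a positive-measure set. One minor prerequisite you use but do not flag alongside the other two: the paper defines $\cpartial F(\vx)$ as the convex hull of limits of gradients at nearby differentiability points, whereas your proof uses the dual characterization $\cpartial F(\vx)=\{\vg:\dotp{\vg}{\vv}\le F^{\circ}(\vx;\vv)\ \forall\vv\}$. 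The equivalence of these two descriptions (Clarke, Theorem~2.5.1) is itself a nontrivial input, on the same footing as Lebourg's theorem and upper semicontinuity; a fully self-contained proof would need to invoke it explicitly. With that caveat noted, the proof is complete and correct.
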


For analyzing gradient flow, the chain rule is crucial. For a differentiable loss function $\Loss(\vtheta)$, we can see from the chain rule that the function value keeps decreasing along the gradient flow $\frac{d\vtheta(t)}{dt} = -\nabla \Loss(\vtheta(t))$ :
\begin{equation} \label{eq:smooth-grad-flow-descent}
    \frac{d\Loss(\vtheta(t))}{dt} =  \dotp{\nabla\Loss(\vtheta(t))}{\frac{d\vtheta(t)}{dt}} = -\normtwo{\frac{d\vtheta(t)}{dt}}^2.
\end{equation}
But for locally Lipschitz functions which could be non-differentiable, \eqref{eq:smooth-grad-flow-descent} 
may not hold in general since Theorem~\ref{thm:clarke-chain-rule} only holds for an inclusion.

Following~\citep{davis2018stochastic,drusvyatskiy2015curves}, we consider the functions that admit a chain rule for any arc.
\begin{definition}[Chain Rule] \label{def:admit-a-chain-rule}
A locally Lipschitz function $f: \R^d \to \R$ admits a chain rule if for any arc $\vz: \Rgez \to \R^d$,
\begin{equation} \label{eq:admit-chain-rule}
    \forall \vh \in \cpartial{f}(z(t)): (f \circ \vz)'(t) = \dotp{\vh}{\vz'(t)}
\end{equation}
holds for a.e. $t > 0$.
\end{definition}

It is shown in~\citep{davis2018stochastic,drusvyatskiy2015curves} that a generalized version of \eqref{eq:smooth-grad-flow-descent} holds for such functions:
\begin{lemma}[Lemma 5.2~\citep{davis2018stochastic}] \label{lam:grad-flow-descent-lemma}
    Let $\Loss: \R^d \to \R$ be a locally Lipschitz function that admits a chain rule. Let $\vtheta: \Rgez \to \R^d$ be the gradient flow on $\Loss$:
    \[
        \frac{d\vtheta(t)}{dt} \in -\cpartial \Loss(\vtheta(t)) \qquad \text{for a.e.~}t \ge 0,
    \]
    Then
    \[
        \frac{d\Loss(\vtheta(t))}{dt} = -\normtwo{\frac{d\vtheta(t)}{dt}}^2 = -\min\{ \normtwosm{\vh}^2 : {\vh \in \cpartial \Loss(\vtheta(t))} \}
    \]
    holds for a.e.~$t > 0$.
\end{lemma}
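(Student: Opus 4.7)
The plan is to apply the chain rule hypothesis directly to the arc $\vtheta(\cdot)$ itself, and then combine it with the differential inclusion defining the gradient flow. By Definition~\ref{def:admit-a-chain-rule}, there is a set $T_1 \subseteq \Rgez$ of full measure such that for every $t \in T_1$, $\frac{d\Loss(\vtheta(t))}{dt} = \dotp{\vh}{\vtheta'(t)}$ holds simultaneously for \emph{all} $\vh \in \cpartial \Loss(\vtheta(t))$. Likewise, there is a set $T_2 \subseteq \Rgez$ of full measure on which $-\vtheta'(t) \in \cpartial \Loss(\vtheta(t))$. On the intersection $T_1 \cap T_2$, which still has full measure, I would pick the particular subgradient $\vh := -\vtheta'(t)$ in the chain rule identity to obtain
\[
\frac{d\Loss(\vtheta(t))}{dt} = \dotp{-\vtheta'(t)}{\vtheta'(t)} = -\normtwo{\vtheta'(t)}^2,
\]
which gives the first equality.

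For the second equality, I would argue as follows. Fix $t \in T_1 \cap T_2$ and let $\vh$ be any element of $\cpartial \Loss(\vtheta(t))$. By the chain rule applied at this $\vh$ together with the first equality,
\[
\dotp{\vh}{\vtheta'(t)} = \frac{d\Loss(\vtheta(t))}{dt} = -\normtwo{\vtheta'(t)}^2.
\]
Cauchy--Schwarz then yields $\normtwo{\vtheta'(t)}^2 = -\dotp{\vh}{\vtheta'(t)} \le \normtwo{\vh}\cdot\normtwo{\vtheta'(t)}$, hence $\normtwo{\vtheta'(t)} \le \normtwo{\vh}$ for every $\vh \in \cpartial \Loss(\vtheta(t))$. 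Because $-\vtheta'(t)$ itself lies in $\cpartial \Loss(\vtheta(t))$ and has norm $\normtwo{\vtheta'(t)}$, this infimum is attained, so $\normtwo{\vtheta'(t)}^2 = \min\{\normtwo{\vh}^2 : \vh \in \cpartial \Loss(\vtheta(t))\}$.

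The only real subtlety, rather than an obstacle, is the measure-theoretic bookkeeping: the chain rule property is an ``a.e.'' statement in $t$, and the gradient-flow differential inclusion is also only required to hold a.e., so one must verify that both conditions hold simultaneously on a set of full measure (which is immediate since the intersection of two full-measure sets has full measure). Note that one does \emph{not} need convexity of $\cpartial \Loss(\vtheta(t))$ for the minimality argument beyond the fact that Clarke's subdifferential is convex and compact by construction, which guarantees that the minimum is actually attained. No additional regularity of $\Loss$ beyond local Lipschitzness plus the chain-rule property is required, matching the assumption in the statement.
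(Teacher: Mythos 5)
Your proof is correct. The paper itself does not reprove this result---it cites it directly as Lemma~5.2 of Davis et al.~(2018)---and your argument is the standard one: intersecting the two full-measure sets where the chain-rule identity and the differential inclusion hold, substituting $\vh = -\vtheta'(t)$ for the first equality, and using Cauchy--Schwarz together with the fact that $-\vtheta'(t) \in \cpartial\Loss(\vtheta(t))$ attains the bound for the second. One minor remark: you do not actually need compactness of $\cpartial\Loss(\vtheta(t))$ to ensure the minimum is attained, since your argument already exhibits $-\vtheta'(t)$ as a feasible element achieving the lower bound $\normtwo{\vtheta'(t)}^2$; the appeal to compactness is harmless but redundant.
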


We can see that $\contC^1$-smooth functions admit chain rules. As shown in~\citep{davis2018stochastic}, if a locally Lipschitz function is subdifferentiablly regular or Whitney $\contC^1$-stratifiable, then it admits a chain rule. The latter one includes a large family of functions, e.g., semi-algebraic functions, semi-analytic functions, and definable functions in an o-minimal structure~\citep{coste2002an,dries1996geometric}.

It is worth noting that the class of functions that admits chain rules is closed under composition. This is indeed a simple corollary of Theorem~\ref{thm:clarke-chain-rule}.

\begin{theorem}
    Let $z_1, \dots, z_n: \R^d \to \R$ and $f: \R^n \to \R$ be locally Lipschitz functions and assume all of them admit chain rules. Let $(f \circ \vz)(\vx) = f(z_1(\vx), \dots, z_n(\vx))$ be the composition of $f$ and $\vz$. Then $f \circ \vz$ also admits a chain rule.
\end{theorem}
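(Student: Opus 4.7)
The plan is to take an arbitrary arc $\vx: \Rgez \to \R^d$ and verify \eqref{eq:admit-chain-rule} for $f \circ \vz$ by chaining together the chain-rule assumptions of $f$ and each $z_i$, then reducing an arbitrary element of $\cpartial(f \circ \vz)(\vx(t))$ to this case via Clarke's outer chain rule (Theorem~\ref{thm:clarke-chain-rule}).

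First I would observe that $\vy := \vz \circ \vx : \Rgez \to \R^n$ is itself an arc: on any compact sub-interval $\vx$ takes values in a compact set on which each $z_i$ is (globally) Lipschitz, so the absolute continuity of $\vx$ carries over to $\vy$. This lets me apply the hypothesis on $f$ to the arc $\vy$, obtaining a full-measure set $E_f \subseteq \Rgez$ on which $(f \circ \vy)'(t) = \dotp{\valpha}{\vy'(t)}$ for every $\valpha \in \cpartial f(\vy(t))$. Similarly, for each $i \in [n]$, the hypothesis on $z_i$ applied to the arc $\vx$ gives a full-measure $E_i$ on which $(z_i \circ \vx)'(t) = \dotp{\vh_i}{\vx'(t)}$ for every $\vh_i \in \cpartial z_i(\vx(t))$. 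Let $E := E_f \cap \bigcap_i E_i \cap \{t : \vx'(t) \text{ and } \vy'(t) \text{ exist}\}$, which still has full measure; note also that on $E$ we have $y_i'(t) = (z_i \circ \vx)'(t)$ componentwise by definition of $\vy$.

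Next, fix $t \in E$ and $\vh \in \cpartial(f \circ \vz)(\vx(t))$. By Theorem~\ref{thm:clarke-chain-rule}, $\vh$ lies in the convex hull of vectors of the form $\sum_{i=1}^{n} \alpha_i \vh_i$ with $\valpha \in \cpartial f(\vy(t))$ and $\vh_i \in \cpartial z_i(\vx(t))$. For each such extreme vector we compute, using the two chain rules in sequence,
\[
\Big\langle \sum_{i=1}^{n} \alpha_i \vh_i,\, \vx'(t)\Big\rangle = \sum_{i=1}^{n} \alpha_i \dotp{\vh_i}{\vx'(t)} = \sum_{i=1}^{n} \alpha_i\, y_i'(t) = \dotp{\valpha}{\vy'(t)} = (f \circ \vy)'(t).
\]
So every extreme vector pairs with $\vx'(t)$ to the same scalar $(f \circ \vy)'(t)$; by linearity, every convex combination does too, and hence $\dotp{\vh}{\vx'(t)} = (f \circ \vy)'(t) = (f \circ \vz \circ \vx)'(t)$. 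Since $\vh$ was arbitrary and $E$ has full measure, $f \circ \vz$ admits a chain rule.

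The only subtle point, and the one I would double-check carefully, is Step 1 together with the componentwise identity $\vy'(t) = ((z_1 \circ \vx)'(t), \ldots, (z_n \circ \vx)'(t))$ on $E$; this is what lets the inner chain rules for the $z_i$'s slot cleanly into the outer chain rule for $f$. Beyond that, the proof is essentially a bookkeeping exercise: the assumptions on $f$ and $z_i$ give us a fixed scalar $(f \circ \vy)'(t)$ independent of the choice of subgradient representative, and Theorem~\ref{thm:clarke-chain-rule} reduces the case of a general $\vh \in \cpartial(f \circ \vz)(\vx(t))$ to convex combinations of extreme representatives, which all yield the same pairing with $\vx'(t)$.
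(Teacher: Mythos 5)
Your proof is correct and follows essentially the same route as the paper's: show $\vz \circ \vx$ is an arc, invoke the chain-rule hypotheses for $f$ on that arc and for each $z_i$ on $\vx$, and then use Theorem~\ref{thm:clarke-chain-rule} to reduce an arbitrary element of $\cpartial(f\circ\vz)(\vx(t))$ to convex combinations of $\sum_i \alpha_i \vh_i$, all of which pair with $\vx'(t)$ to the same scalar. Your explicit bookkeeping of the full-measure sets and the componentwise identity $\vy'(t) = ((z_i\circ\vx)'(t))_i$ is a slightly more careful rendering of what the paper leaves implicit, but the argument is the same.
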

\begin{proof}
We can see that $f \circ \vz$ is locally Lipschitz. Let $\vx: \Rgez \to \R^d, t \mapsto \vx(t)$ be an arc. First, we show that $\vz \circ \vx: \Rgez \to \R^d, t \mapsto \vz(\vx(t))$ is also an arc. For any closed sub-interval $I$, $\vz(\vx(I))$ must be contained in a compact set $U$. Then it can be shown that the locally Lipschitz continuous function $\vz$ is (globally) Lipschitz continuous on $U$. By the fact that the composition of a Lipschitz continuous and an absolutely continuous function is absolutely continuous, $\vz \circ \vx$ is absolutely continuous on $I$, and thus it is an arc.

Since $f$ and $\vz$ admit chain rules on arcs $\vz \circ \vx$ and $\vx$ respectively, the following holds for a.e.~$t > 0$,
\begin{align*}
    \forall \valpha \in \cpartial{f}(\vz(\vx(t))): & \quad (f \circ (\vz \circ \vx))'(t) = \dotp{\valpha}{(\vz \circ \vx)'(t)}, \\
    \forall \vh_i \in \cpartial{z_i}(\vx(t)): & \quad (z_i \circ \vx)'(t) = \dotp{\vh_i}{\vx'(t)}.
\end{align*}
Combining these we obtain that for a.e.~$t > 0$,
\[
(f \circ \vz \circ \vx)'(t) = \sum_{i=1}^{n} \alpha_i \dotp{\vh_i}{\vx'(t)},
\]
for all $\valpha \in \cpartial{f}(\vz(\vx(t)))$ and for all $\vh_i \in \cpartial{z_i}(\vx(t))$. The RHS can be rewritten as $\dotp{\sum_{i=1}^{n} \alpha_i\vh_i}{\vx'(t)}$. By Theorem~\ref{thm:clarke-chain-rule}, every $\vk \in \cpartial (f \circ \vz)(\vx(t))$ can be written as a convex combination of a finite set of points in the form of $\sum_{i=1}^{n} \alpha_i\vh_i$. So $(f \circ \vz \circ \vx)'(t) = \dotp{\vk}{\vx'(t)}$ holds for a.e.~$t > 0$.
\end{proof}

\section{Mexican Hat} \label{sec:appendix-hat}

In this section, we give an example to illustrate that gradient flow does not necessarily converge in direction, even for $C^{\infty}$-smooth homogeneous models.

It is known that gradient flow (or gradient descent) may not converge to any point even when optimizing an $C^{\infty}$ function~\citep{curry1944method,zoutendijk1976mathematical,palis2012geometric,absil2005convergence}. One famous counterexample is the ``Mexican Hat'' function described in~\citep{absil2005convergence}:
\[
	f(u, v) := f(r\cos \varphi, r \sin \varphi) := \begin{cases}
		e^{-\frac{1}{1 - r^2}} \left(1 - C(r) \sin\left(\varphi - \frac{1}{1 - r^2}\right) \right)      &\quad r < 1 \\
		0            &\quad r \ge 1
	\end{cases}
\]
where $C(r) = \frac{4r^4}{4r^4 + (1-r^2)^4} \in [0, 1]$. It can be shown that $f$ is $\contC^{\infty}$-smooth on $\R^2$ but not analytic. See Figure~\ref{fig:hat} for a plot for $f(u, v)$.

\begin{figure}[htbp]
	\centering
	\includegraphics[scale=0.25]{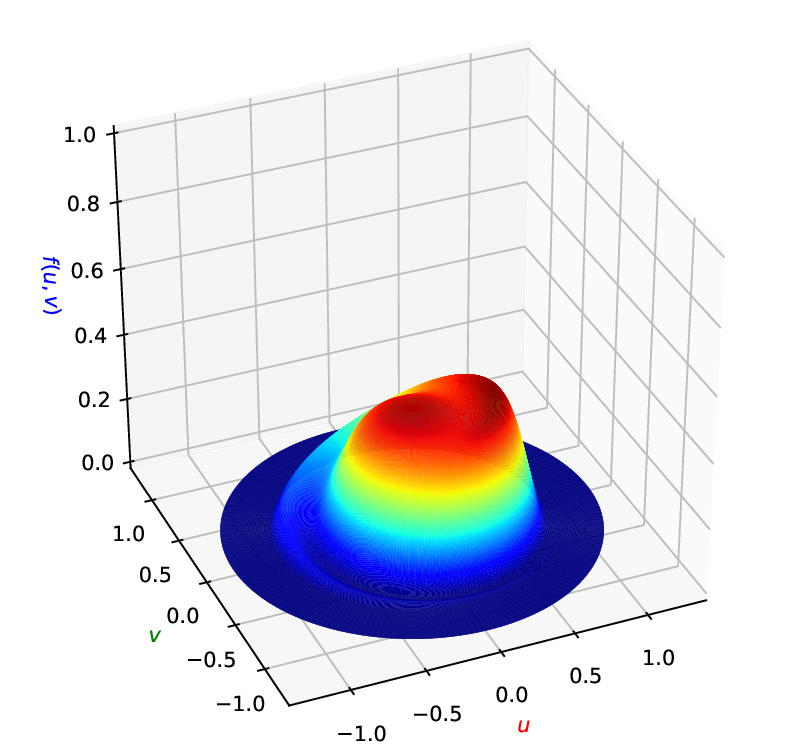}
	\caption{A plot for the Mexican Hat function $f(u,v)$.}
	\label{fig:hat}
\end{figure}

However, the Maxican Hat function is not homogeneous, and \citet{absil2005convergence} did not consider the directional convergence, either. To make it homogeneous, we introduce an extra variable $z$, and normalize the parameter before evaluate $f$. In particular, we fix $L > 0$ and define
\[
	h(\vtheta) = h(x, y, z) = \rho^L(1 - f(u, v)) \quad\quad \text{where} \quad u = x/\rho,\quad v = y/\rho,\quad \rho = \sqrt{x^2 + y^2 + z^2}.
\]
We can show the following theorem.
\begin{theorem}
	Consider gradient flow on $\Loss(\vtheta) = \sum_{n=1}^{N} e^{-q_n(\vtheta)}$, where $q_n(\vtheta) = h(\vtheta)$ for all $n \in [N]$. 
	Suppose the polar representation of $(u,v)$ is 
	$(r\cos\varphi, r\sin\varphi)$.
	If $0 < r < 1$ and $\varphi = \frac{1}{1 - r^2}$ holds at time $t = 0$, then $\frac{\vtheta(t)}{\normtwosm{\vtheta(t)}}$ does not converge to any point, and the limit points of $\{\frac{\vtheta(t)}{\normtwosm{\vtheta(t)}} : t > 0 \}$ form a circle $\{ (x, y, z) \in \sphS^2 : x^2 + y^2 = 1, z = 0 \}$.
\end{theorem}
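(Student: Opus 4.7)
The plan is to reduce gradient flow on $\Loss$ in $\R^3$ to a planar ODE in the polar coordinates of $(u,v)$ and then exploit the Absil-Mahony-Andrews spiral structure. Since $-\nabla\Loss = Ne^{-h}\nabla h$ with a strictly positive scalar factor, gradient flow on $\Loss$ is, up to positive time reparametrization, gradient ascent on $h$. Combining $L$-homogeneity with the radial/tangential decomposition from the proof of Lemma~\ref{lam:key-lemma-margin}, the induced evolution of $\hat{\vtheta}(t) := \vtheta(t)/\rho(t)$ on $\sphS^2$ is, after a further positive rescaling, gradient ascent on the zero-homogeneous function $\tilde{h}(\hat{\vtheta}) = 1 - f(u,v)$. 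Because $\tilde{h}$ does not depend on $w$, the induced ODE satisfies $\dot w = w\,(u\,\partial_u f + v\,\partial_v f)$, which is linear in $w$, so the sign of $w$ is preserved; without loss of generality $w(0)>0$. Writing $(u,v)$ in polar coordinates $(r,\varphi)$ and projecting the Euclidean gradient of $f$ (pulled back to $\R^3$) onto $T_{\hat{\vtheta}}\sphS^2$ yields the closed planar system
\begin{equation*}
\dot r = -(1-r^2)\,\partial_r f(r,\varphi), \qquad \dot\varphi = -r^{-2}\,\partial_\varphi f(r,\varphi),
\end{equation*}
together with $w(t) = \sqrt{1-r(t)^2}$.

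Next I would verify that the Absil spiral $\Gamma := \{\varphi = 1/(1-r^2) : 0 < r < 1\}$ is forward invariant under this planar flow. Setting $\alpha(r) := 1/(1-r^2)$, along $\Gamma$ we have $\sin(\varphi-\alpha)=0$ and $\cos(\varphi-\alpha)=1$, which gives the closed-form expressions $\partial_r f = -e^{-\alpha}\alpha'(1-C)$ and $\partial_\varphi f = -e^{-\alpha}C$. The coefficient $C(r)$ of the Mexican Hat is designed so that the tangent $(\dot r,\dot\varphi)$ is parallel to $(1,\alpha'(r))$ along $\Gamma$, which reduces to the algebraic identity $C(r)\bigl[4r^4 + (1-r^2)^k\bigr] = 4r^4$ (with $k=3$ for the sphere dynamics here, versus $k=4$ for the original Euclidean flow of~\citep{absil2005convergence}). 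On $\Gamma$, since $0 < C < 1$, both $\dot r > 0$ and $\dot\varphi > 0$; moreover $\dot r = 2r(1-C)e^{-\alpha}/(1-r^2)$ is bounded below by a positive constant on each sub-interval $[r_0,r^*]\subset(0,1)$. Consequently $r(t)$ is strictly increasing and cannot accumulate at any $r^* < 1$, else it would pass $r^*$ in bounded time; therefore $r(t) \to 1$ and $\varphi(t) = \alpha(r(t)) \to +\infty$.

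Finally, $w(t) = \sqrt{1-r(t)^2} \to 0$, so the third coordinate of $\hat{\vtheta}(t)$ vanishes, while $\varphi(t) \to +\infty$ monotonically and thus sweeps every angle modulo $2\pi$ along suitable subsequences $t_n \to \infty$ on which $r(t_n) \to 1$; this shows that every point of the equator $E := \{(x,y,z)\in\sphS^2 : x^2+y^2=1,\ z=0\}$ is a limit point, while conversely every limit point has vanishing $z$-coordinate and hence lies in $E$. So the $\omega$-limit set of $\hat{\vtheta}(t)$ is exactly $E$, and $\hat{\vtheta}(t)$ does not converge. The main obstacle I foresee is the invariance calculation for $\Gamma$ under the sphere-projected system: the factor $(1-r^2)$ in $\dot r$ coming from the tangent projection shifts one power in the Absil identity, so either one reads the paper's $C(r)$ as the natural sphere-analogue (with $(1-r^2)^3$ in place of $(1-r^2)^4$, still $\contC^\infty$-smooth and in $[0,1]$), or one treats the $O(e^{-1/(1-r^2)})$ off-spiral drift as a perturbation and argues the orbit asymptotically tracks $\Gamma$; either route preserves the monotone-spiralling picture and the equator limit set.
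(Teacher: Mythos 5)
You have caught a genuine error in the paper's proof, and this deserves to be flagged explicitly rather than buried as a "foreseen obstacle." Substituting $\frac{dr}{dt}=-Ne^{-h}\rho^{L-2}(1-r^2)\,\partial_r f$ into $\frac{d\psi}{dt}=\frac{d\varphi}{dt}-\frac{2r}{(1-r^2)^2}\frac{dr}{dt}$ produces the coefficient $\frac{2r}{1-r^2}$ on $\partial_r f$, not the $\frac{2r}{(1-r^2)^2}$ the paper writes on its second line; one factor of $(1-r^2)$ from the tangent projection onto $\sphS^2$ cancels. Carrying the \emph{correct} coefficient through at $\psi=0$ with the paper's $C(r)=\frac{4r^4}{4r^4+(1-r^2)^4}$ gives $\frac{d\psi}{dt}\big|_{\psi=0}=Ne^{-h}\rho^{L-2}e^{-1/(1-r^2)}\big(\frac{C}{r^2}-\frac{4r^2(1-C)}{(1-r^2)^3}\big)=Ne^{-h}\rho^{L-2}e^{-1/(1-r^2)}\,C(r)>0$, not zero. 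So the spiral is \emph{not} invariant under the sphere-projected flow with the stated $C$, and the subsequent "Proof for $r\to1$" step (which asserts $\bar\psi=0$ at any interior limit point) also breaks. Your diagnosis is exactly right: the invariance identity on $\sphS^2$ is $C(r)\bigl[4r^4+(1-r^2)^3\bigr]=4r^4$, so the sphere-analogue Mexican hat must use $C(r)=\frac{4r^4}{4r^4+(1-r^2)^3}$, which is still $\contC^\infty$ on $[0,1]$, maps into $[0,1]$, and does render $\Gamma$ invariant. With that modified $C$, the paper's two-step argument (spiral invariance, then $r\to1$) goes through and the theorem's conclusion is recovered.

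Two caveats on your writeup. First, your "$r\to1$ because $\dot r$ is bounded below" argument is stated after dropping the scalar $Ne^{-h}\rho^{L-2}$ as a "positive time reparametrization," but since this scalar tends to $0$, you still owe the claim that the reparametrized time $\int_0^\infty Ne^{-h}\rho^{L-2}\,dt$ diverges. This is true (the tight rates $\Loss=\Theta(1/(t(\log t)^{2-2/L}))$ and $\rho=\Theta((\log t)^{1/L})$ give $Ne^{-h}\rho^{L-2}=\Theta(1/(t\log t))$, whose integral diverges), but it is not free; the paper avoids this issue entirely by invoking its directional-convergence-to-KKT theorem in real time: any limit point $(\bar u,\bar v)$ must be a Euclidean critical point of $f$, and on the (now genuinely invariant) spiral with $0<\bar r<1$ one has $\partial_r f\ne 0$, while $\bar r=0$ contradicts margin monotonicity. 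Second, your fallback of "treating the off-spiral drift as a perturbation" does not work as sketched: the drift $\dot\psi|_{\psi=0}\propto e^{-1/(1-r^2)}C(r)$ is of the \emph{same} order $e^{-1/(1-r^2)}$ as $\dot r$ and $\dot\varphi$ themselves, so it is not a lower-order correction and the orbit need not asymptotically track $\Gamma$. The clean fix is the modified $C$, not a perturbation argument.
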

\begin{proof}
Define $\psi = \varphi - \frac{1}{1 - r^2}$. Our proof consists of two parts, following from the idea in~\citep{absil2005convergence}. First, we show that $\frac{d\psi}{dt} = 0$ as long as $\psi = 0$. Then we can infer that $\psi = 0$ for all $t \ge 0$. Next, we show that $r \to 1$ as $t \to +\infty$. Using $\psi = 0$, we know that the polar angle $\varphi \to +\infty$ as $t \to +\infty$. Therefore, $(u, v)$ circles around $\{ (u, v) : u^2 + v^2 = 1 \}$, and thus it does not converge.

\myparagraph{Proof for $\frac{d\psi}{dt} = 0$.}
For convenience, we use $w$ to denote $z/\rho$. By simple calculation, we have the following formulas for partial derivatives:
\begin{gather*}
\begin{cases}
\frac{\partial h}{\partial x} = \rho^{L-1} \left(Lu(1-f) - (1-u^2) \frac{\partial f}{\partial u} + uv \frac{\partial f}{\partial v}\right) \\
\frac{\partial h}{\partial y} = \rho^{L-1} \left(Lv(1-f) + uv \frac{\partial f}{\partial u} - (1-v^2) \frac{\partial f}{\partial v}\right) \\
\frac{\partial h}{\partial z} = \rho^{L-1} \left(Lw(1-f) + uw \frac{\partial f}{\partial u} + vw \frac{\partial f}{\partial v}\right)
\end{cases}
\qquad
\begin{cases}
\frac{\partial f}{\partial u} = \frac{u}{r} \cdot \frac{\partial f}{\partial r} - \frac{v}{r^2} \cdot \frac{\partial f}{\partial \varphi}  \\
\frac{\partial f}{\partial v} = \frac{v}{r} \cdot \frac{\partial f}{\partial r} + \frac{u}{r^2} \cdot \frac{\partial f}{\partial \varphi}
\end{cases}
\end{gather*}
For gradient flow, we have
\begin{align*}
	\frac{d\vtheta}{dt} &= Ne^{-h} \nabla h \\
	\frac{du}{dt} &=
	\frac{1}{\rho}\left((1 - u^2) \frac{dx}{dt} 
	- uv \frac{dy}{dt}
	- uw \frac{dz}{dt}\right) = Ne^{-h}\rho^{L-2}\left(-(1-u^2)\frac{\partial f}{\partial u} + uv \frac{\partial f}{\partial v} \right) \\
	\frac{dv}{dt} &= \frac{1}{\rho}\left((1-v^2) \frac{dy}{dt} -uv \frac{dx}{dt} - vw \frac{dz}{dt}\right) = Ne^{-h}\rho^{L-2}\left(-(1-v^2)\frac{\partial f}{\partial v} + uv \frac{\partial f}{\partial u} \right)
\end{align*}
By writing down the movement of $(u, v)$ in the polar coordinate system, we have
\begin{align*}
\frac{dr}{dt} &= \frac{u}{r} \cdot \frac{du}{dt} + \frac{v}{r} \cdot \frac{dv}{dt} \\
&= -\frac{Ne^{-h}\rho^{L-2}}{r}\left(u(1-r^2) \frac{\partial f}{\partial u} + v(1-r^2) \frac{\partial f}{\partial v} \right) \\
&= -Ne^{-h}\rho^{L-2} (1-r^2)\frac{\partial f}{\partial r} \\
\frac{d\varphi}{dt} &= -\frac{v}{r^2} \cdot \frac{du}{dt} + \frac{u}{r^2} \cdot \frac{dv}{dt} \\
&= \frac{Ne^{-h}\rho^{L-2}}{r^2}\left( v \frac{\partial f}{\partial u} - u\frac{\partial f}{\partial v} \right) \\
&= -Ne^{-h}\rho^{L-2} \frac{1}{r^2} \cdot \frac{\partial f}{\partial \varphi}
\end{align*}
For $\psi = 0$, the partial derivatives of $f$ with respect to $r$ and $\varphi$ can be evaluated as follows:
\begin{align*}
\frac{\partial f}{\partial r} &= \frac{d}{dr}\left(e^{-\frac{1}{1 - r^2}}\right) - e^{-\frac{1}{1 - r^2}} C'(r) \sin \psi - e^{-\frac{1}{1 - r^2}} C(r) \cos \psi \cdot \frac{\partial \psi}{\partial r} \\
&= \frac{d}{dr}\left(e^{-\frac{1}{1 - r^2}}\right) + e^{-\frac{1}{1 - r^2}} C(r) \frac{d}{dr}\left(\frac{1}{1 - r^2}\right) \\
&= -\frac{2 r}{(1-r^2)^2} (1 - C(r))e^{-\frac{1 }{1 - r^2}}. \\ 
\frac{\partial f}{\partial \varphi} &= -e^{-\frac{1}{1 - r^2}} C(r) \cos \psi \cdot \frac{\partial \psi}{\partial \varphi} \\
&= -e^{-\frac{1}{1 - r^2}} C(r).
\end{align*}
So if $\psi = 0$, then $\frac{d\psi}{dt} = 0$ by the direct calculation below:
\begin{align*}
\frac{d\psi}{dt} &= \frac{d\varphi}{dt} - \frac{d}{dr}\left( \frac{1}{1-r^2} \right) \cdot \frac{dr}{dt} \\
&= Ne^{-h}\rho^{L-2} \left(-\frac{1}{r^2}  \frac{\partial f}{\partial \varphi} + \frac{2r}{(1-r^2)^2} \frac{\partial f}{\partial r}  \right) \\
&= Ne^{-h}\rho^{L-2} e^{-\frac{1}{1-r^2}} \left( \left(\frac{1}{r^2} + \frac{4r^2}{(1-r^2)^4} \right)C(r) - \frac{4r^2}{(1-r^2)^4} \right) = 0.
\end{align*}

\myparagraph{Proof for $r \to 1$.} Let $(\bar{u}, \bar{v})$ be a convergent point of $\{(u, v) : t \ge 0\}$. Define $\bar{r} = \sqrt{u^2 + v^2}$. It is easy to see that $r \le 1$ from the normalization of $\vtheta$ in the definition. According to Theorem~\ref{thm:exp-loss-kkt}, we know that $(\bar{u}, \bar{v})$ is a stationary point of $\bar{\gamma}(u(t), v(t)) = 1 - f(u(t), v(t))$. If $\bar{r} = 0$, then $f(\bar{u}, \bar{v}) > f(u(0), v(0))$, which contradicts to the monotonicity of $\tilde{\gamma}(t) = \bar{\gamma}(t) = 1 - f(u(t), v(t))$. If $\bar{r} < 1$, then $\bar{\psi} = 0$ (defined as $\psi$ for $(\bar{u}, \bar{v})$). So $\frac{\partial f}{\partial r} = -\frac{2 r}{(1-r^2)^2} (1 - C(r))e^{-\frac{1 }{1 - r^2}} \ne 0$, which again leads to a contradiction. Therefore, $\bar{r} = 1$, and thus $r \to 1$.
\end{proof}

\section{Experiments} \label{sec:exper}

To validate our theoretical results, we conduct several experiments. We mainly focus on MNIST dataset. We trained two models with Tensorflow. The first one (called \textit{the CNN with bias}) is a standard 4-layer CNN with exactly the same architecture as that used in MNIST Adversarial Examples Challenge\footnote{\url{https://github.com/MadryLab/mnist_challenge}}. The layers of this model can be described as \verb|conv|-32 with filter size $5 \times 5$, \verb|max-pool|, \verb|conv|-64 with filter size $3 \times 3$, \verb|max-pool|, \texttt{fc}-1024, \verb|fc|-10 in order. Notice that this model has bias terms in each layer, and thus does not satisfy homogeneity. To make its outputs homogeneous to its parameters, we also trained this model after removing all the bias terms except those in the first layer (the modified model is called \textit{the CNN without bias}). Note that keeping the bias terms in the first layer prevents the model to be homogeneous in the input data while retains the homogeneity in parameters. We initialize all layer weights by He normal initializer~\citep{He_2015_ICCV} and all bias terms by zero. In training the models, we use SGD with batch size $100$ without momentum. We normalize all the images to $[0, 1]^{32 \times 32}$ by dividing $255$ for each pixel.

\subsection{Evaluation for Normalized Margin} \label{sec:exper-margin} \label{sec:appendix-exper-cifar} \label{sec:appendix-exper-additional}

In the first part of our experiments, we evaluate the normalized margin every few epochs to see how it changes over time. From now on, we view the bias term in the first layer as a part of the weight in the first layer for convenience. Observe that the CNN without bias is multi-homogeneous in layer weights (see \eqref{eq:multi-homo-normalized-margin} in Section \ref{sec:other-main-res}). So for the CNN without bias, we define the normalized margin $\bar{\gamma}$ as
the margin divided by the product of the $\normltwo$-norm of all layer weights. Here we compute the $\normltwo$-norm of a layer weight parameter after flattening it into a one-dimensional vector. For the CNN with bias, we still compute the smoothed normalized margin in this way. When computing the $\normltwo$-norm of every layer weight, we simply ignore the bias terms if they are not in the first layer.
For completeness, we include the plots for the normalized margin using the original definition~\eqref{eq:bar-gamma} in Figure~\ref{fig:constantrl-2} and \ref{fig:adaptiverl-2}.

\begin{figure}[htbp]
	\centering
	\includegraphics[scale=0.26]{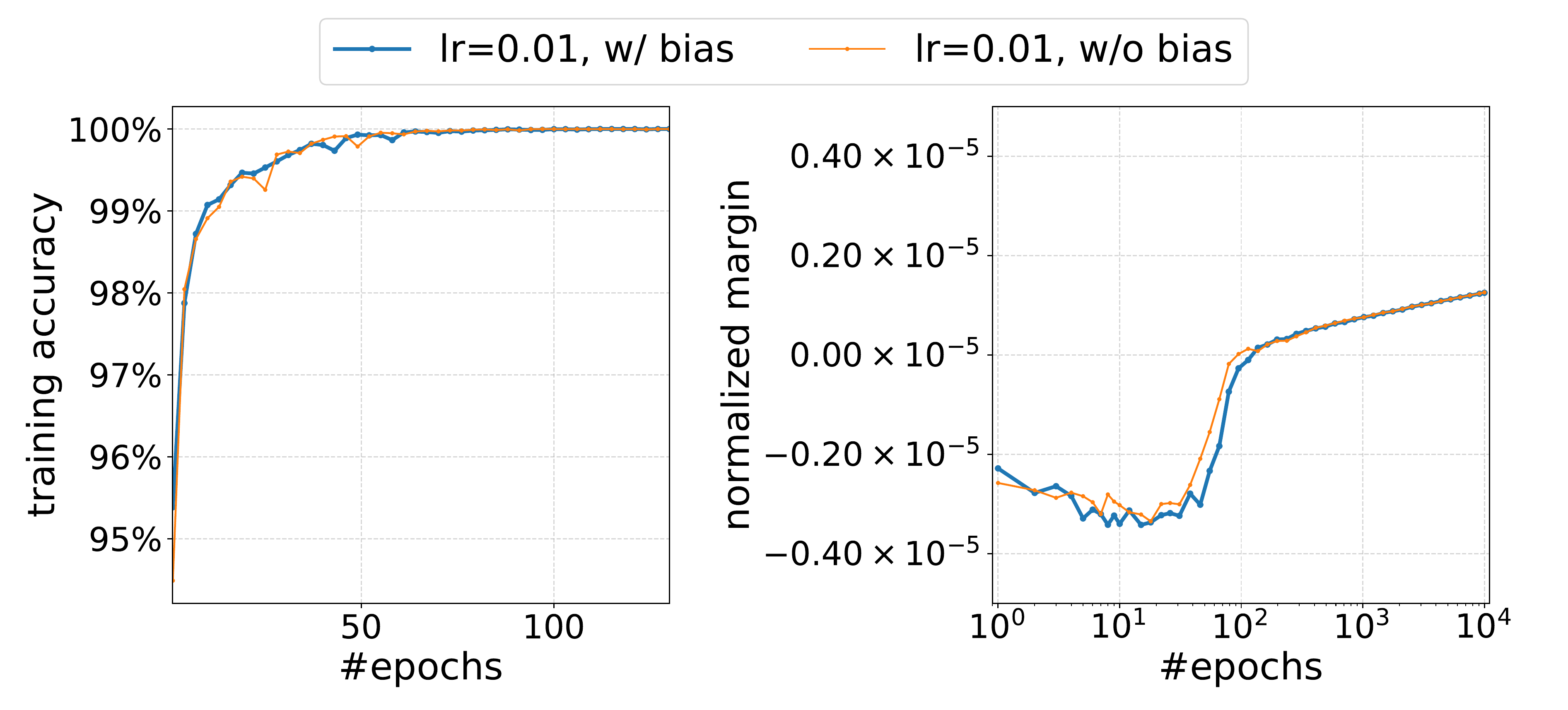}
	\caption{Training CNNs with and without bias on MNIST, using SGD with learning rate $0.01$. The training accuracy (left) increases to $100\%$ after about $100$ epochs, and the normalized margin with the original definition (right) keeps increasing after the model is fitted.}
	\label{fig:constantrl-2}
\end{figure}

\begin{figure}[htbp]
	\centering
	\includegraphics[width=\textwidth]{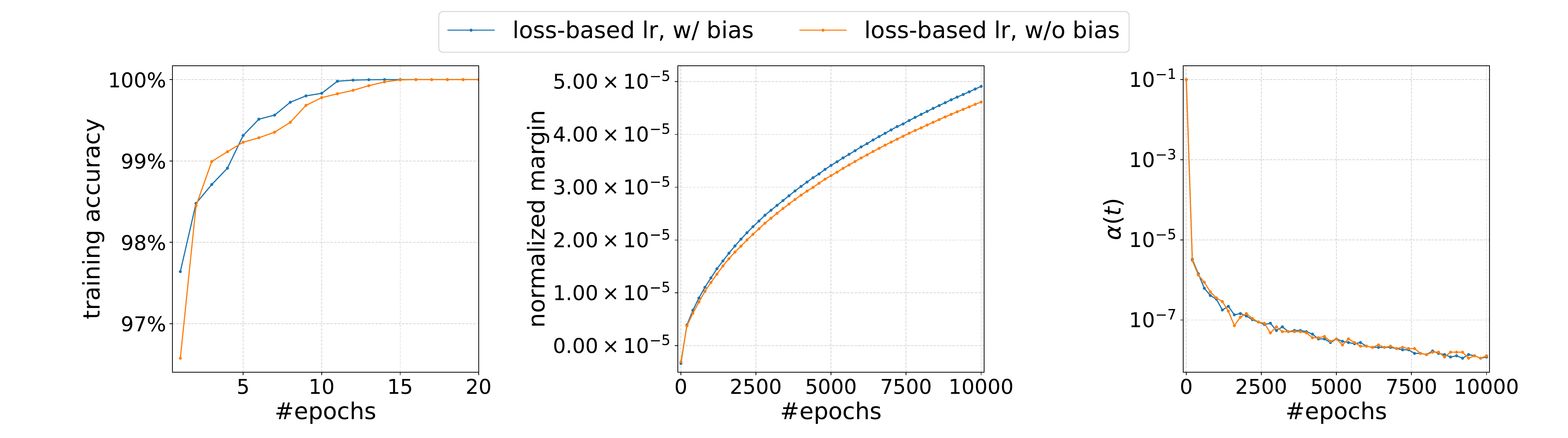}
	\caption{Training CNNs with and without bias on MNIST, using SGD with the loss-based learning rate scheduler. The training accuracy (left) increases to $100\%$ after about $20$ epochs, and the normalized margin with the original definition (middle) increases rapidly after the model is fitted. The right figure shows the change of the relative learning rate $\alpha(t)$ (see \eqref{eq:eta-param} for its definition) during training.}
	\label{fig:adaptiverl-2}
\end{figure}

\myparagraph{SGD with Constant Learning Rate.} We first train the CNNs using SGD with constant learning rate $0.01$. After about $100$ epochs, both CNNs have fitted the training set. After that, we can see that
the normalized margins of both CNNs increase.
However, the growth rate of the normalized margin is rather slow. The results are shown in Figure~\ref{fig:constantrl} in Section~\ref{sec:intro}. We also tried other learning rates other than $0.01$, and similar phenomena can be observed.

\myparagraph{SGD with Loss-based Learning Rate.} Indeed, we can speed up the training by using a proper scheduling of learning rates for SGD. We propose a heuristic learning rate scheduling method, called the {\em loss-based learning rate scheduling}. The basic idea is to find the maximum possible learning rate at each epoch based on the current training loss (in a similar way as the line search method). See Appendix~\ref{sec:appendix-exper-loss-based} for the details. As shown in Figure~\ref{fig:adaptiverl}, SGD with loss-based learning rate scheduling decreases the training loss exponentially faster than SGD with constant learning rate. Also, a rapid growth of normalized margin is observed for both CNNs. Note that with this scheduling the training loss can be as small as $10^{-800}$, which may lead to numerical issues. To address such issues, we applied some re-parameterization tricks and numerical tricks
in our implementation. 
See Appendix~\ref{sec:appendix-exper-numerics} for the details.

\myparagraph{Experiments on CIFAR-10.} To verify whether the normalized margin is increasing in practice, we also conduct experiments on CIFAR-10. We use a modified version of VGGNet-16. The layers of this model can be described as \verb|conv|-64 $\times 2$, \verb|max-pool|, \verb|conv|-128 $\times 2$, \verb|max-pool|, \verb|conv|-256 $\times 3$, \verb|max-pool|, \verb|conv|-512 $\times 3$, \verb|max-pool|, \verb|conv|-512 $\times 3$, \verb|max-pool|, \verb|fc|-10 in order, where each \verb|conv| has filter size $3 \times 3$. We train two networks: one is exactly the same as the VGGNet we described, and the other one is the VGGNet without any bias terms except those in the first layer (similar as in the experiments on MNIST). The experiment results are shown in Figure~\ref{fig:cifar-constantrl-2} and \ref{fig:cifar-adaptiverl-2}. We can see that the normalize margin is increasing over time.

\begin{figure}[htbp]
	\centering
	\includegraphics[scale=0.26]{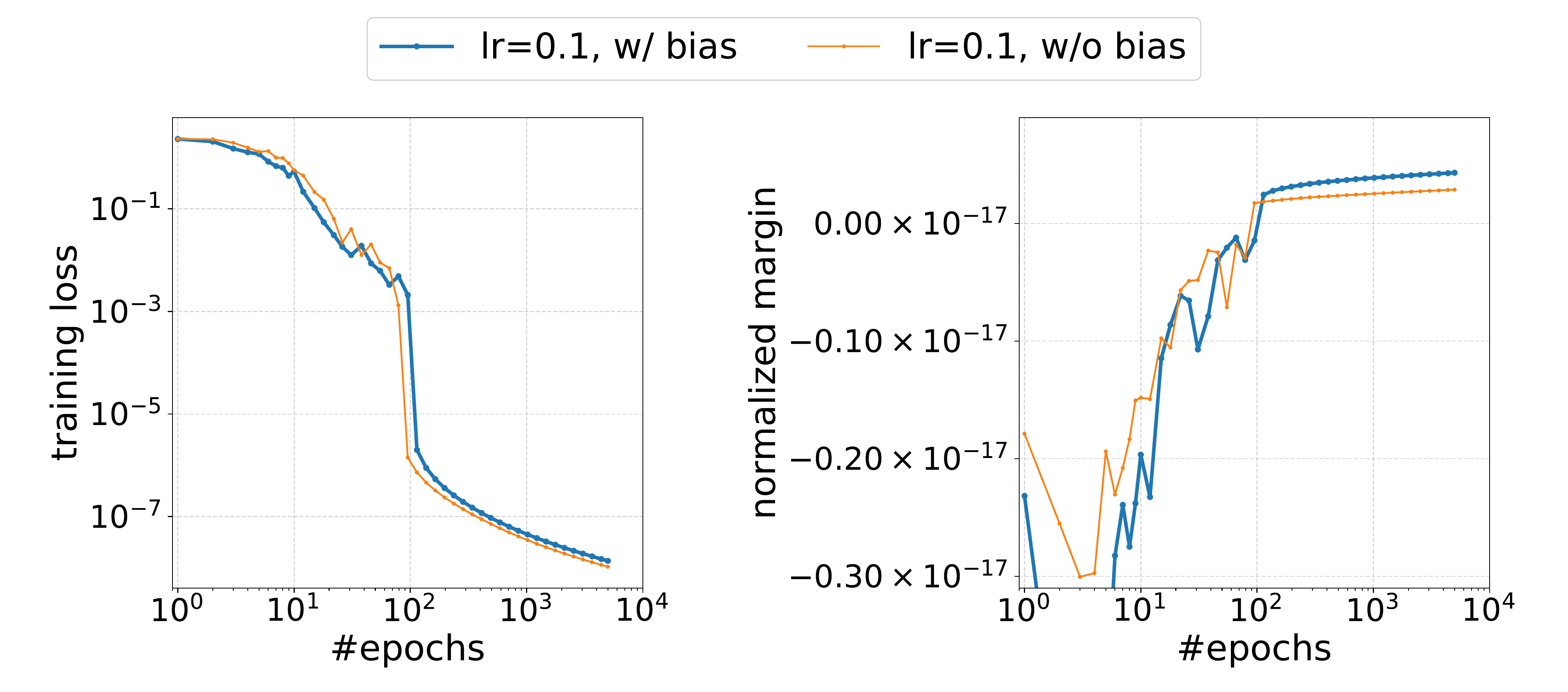}
	\caption{Training VGGNet with and without bias on CIFAR-10, using SGD with learning rate $0.1$.}
	\label{fig:cifar-constantrl-2}
\end{figure}

\begin{figure}[htbp]
	\centering
	\includegraphics[scale=0.26]{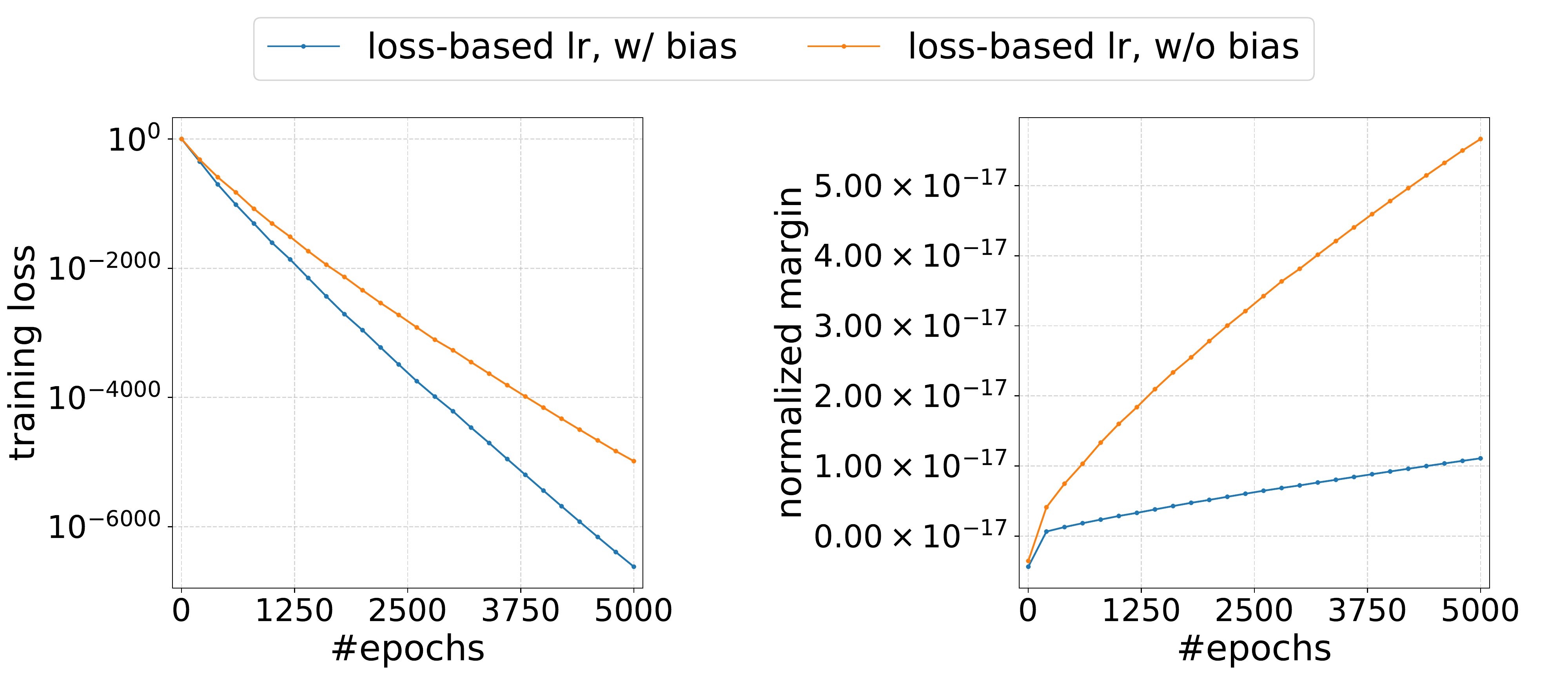}
	\caption{Training VGGNet with and without bias on CIFAR-10, using SGD with the loss-based learning rate scheduler.}
	\label{fig:cifar-adaptiverl-2}
\end{figure}

\myparagraph{Test Accuracy.} Previous works on margin-based generalization bounds~\citep{neyshabur2017norm,bartlett2017norm,golowich18a,li2018tighter,wei2018margin,banburski2019theory} usually suggest that a larger margin implies a better generalization bound. To see whether the generalization error also gets smaller in practice, we plot train and test accuracy for both MNIST and CIFAR-10. As shown in Figure~\ref{fig:test-acc}, the test accuracy changes only slightly after training with loss-based learning rate scheduling for $10000$ epochs, although the normalized margin does increase a lot. We leave it as a future work to study this interesting gap between margin-based generalization bound and generalization error. Concurrent to this work, \citet{wei2020improved} proposed a generalization bound based on a new notion of margin called all-layer margin, and showed via experiments that enlarging all-layer margin can indeed improve generalization. It would be an interesting research direction to study how different definitions of margin may lead to different generalization abilities.

\begin{figure}[htbp]
	\centering
	\includegraphics[scale=0.26]{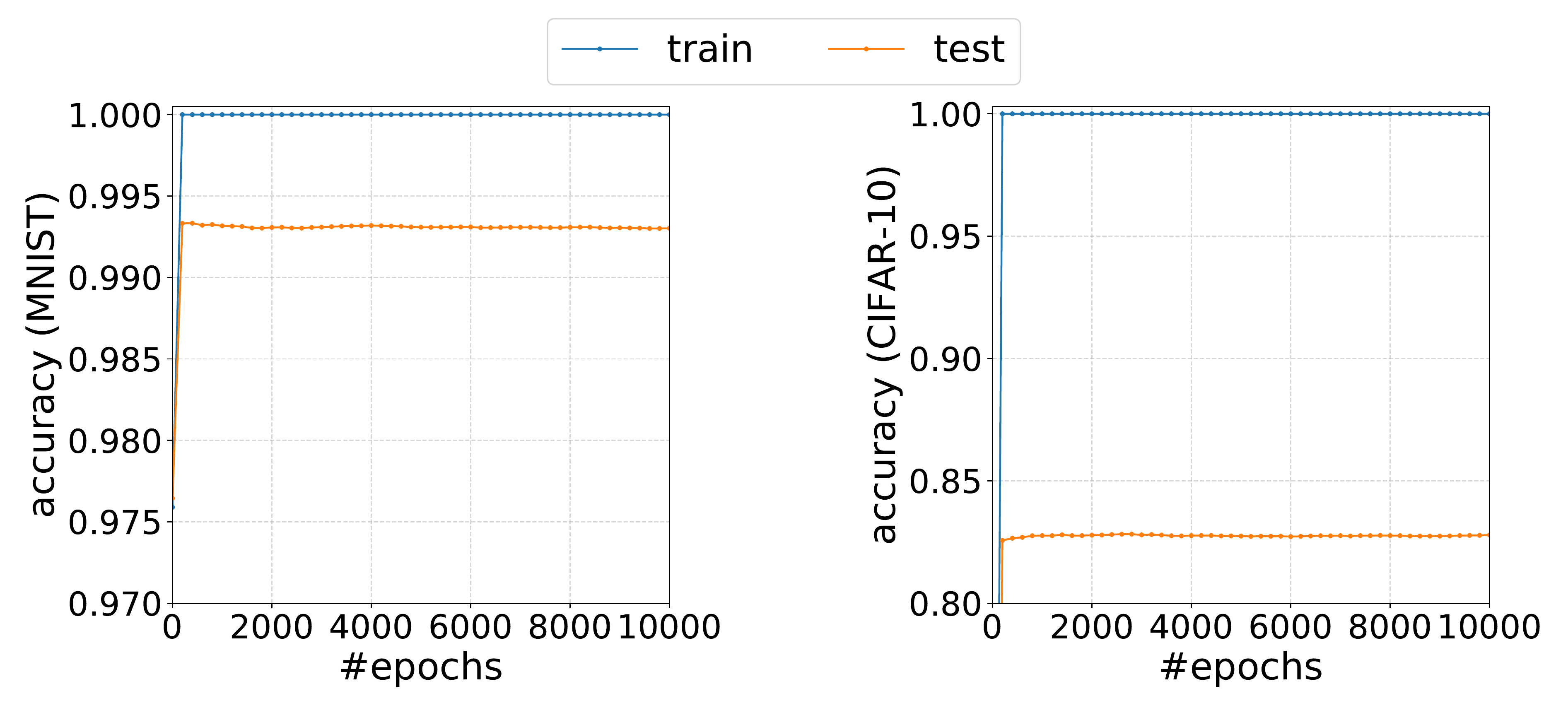}
	\caption{\textbf{(Left).} Training and test accuracy during training CNNs without bias on MNIST, using SGD with the loss-based learning rate scheduler. Every number is averaged over $10$ runs. \textbf{(Right).} 	Training and test accuracy during training VGGNet without bias on CIFAR-10, using SGD with the loss-based learning rate scheduler. Every number is averaged over $3$ runs.}
	\label{fig:test-acc}
\end{figure}

\subsection{Evaluation for Robustness} \label{sec:exper-robust}

Recently, robustness of deep learning has received considerable attention~\citep{szegedy2013intriguing,biggio2013evasion,athalye2018obfuscated}, since most state-of-the-arts deep
neural networks are found to be very vulnerable against
small but adversarial perturbations of the input points.
In our experiments, we found that enlarging the normalized margin can improve the robustness. In particular, 
by simply training the neural network 
for a longer time with our loss-based learning rate, we observe noticeable improvements of $\normltwo$-robustness on both the training set and test set.

We first elaborate the relationship between the normalized margin and the robustness from a theoretical perspective. For a data point $\vz = (\vx, y)$, we can define the robustness (with respect to some norm $\|\cdot\|$) of a neural network $\Phi$ for $\vz$ to be
\[
R_{\vtheta}(\vz) := \inf_{\vx' \in X} \left\{ \|\vx - \vx'\| : (\vx', y) \text{ is misclassified} \right\}.
\]
where $X$ is the data domain (which is $[0, 1]^{32 \times 32}$ for MNIST). It is well-known that the normalized margin is a lower bound of $\normltwo$-robustness for fully-connected networks (See, e.g., Theorem 4 in \citep{SokolicGSR17}). Indeed, a general relationship between those two quantities can be easily shown. Note that a data point $\vz$ is correctly classified iff the margin for $\vz$, denoted as $q_{\vtheta}(\vz)$, is larger than $0$. For homogeneous models, the margin $q_{\vtheta}(\vz)$ and the normalized margin $q_{\hat{\vtheta}}(\vz)$ for $\vx$ have the same sign. If $q_{\hat{\vtheta}}(\cdot): \R^{\dimx} \to \R$ is $\beta$-Lipschitz (with respect to some norm $\|\cdot\|$), then it is easy to see that $R_{\vtheta}(\vz) \ge q_{\hat{\vtheta}}(\vz) / \beta$. This suggests that improving the normalize margin on the training set can improve the robustness on the training set. Therefore, our theoretical analysis suggests that training longer can improve the robustness of the model on the training set.

\begin{figure}[t]
	\centering
	\includegraphics[scale=0.26]{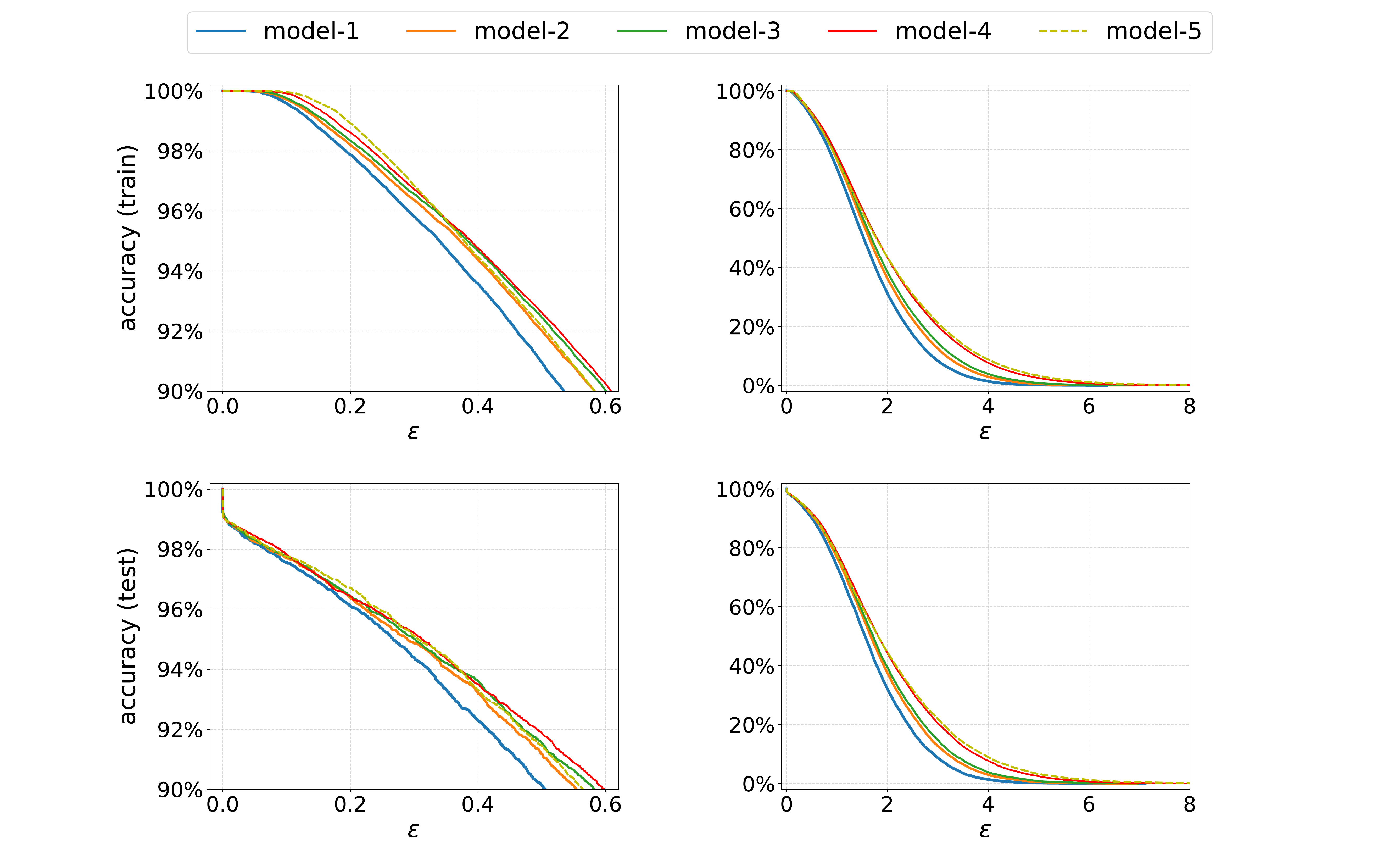}
	\caption{$\normltwo$-robustness of the models of CNNs without bias trained for different number of epochs (see Table~\ref{tab:stat-different-epoch} for the statistics of each model). Figures on the first row show the robust accuracy on the training set, and figures on the second row show that on the test set. On every row, the left figure and the right figure plot the same curves 
		but they are in different scales. From \texttt{model-1} to \texttt{model-4}, noticeable robust accuracy improvements can be observed. The improvement of \texttt{model-5} upon \texttt{model-4} is marginal or nonexistent for some $\epsilon$, but the improvement upon \texttt{model-1} is always significant.}
	\label{fig:l2-robust}
\end{figure}

\begin{table}[htbp]
	\centering
	\caption{Statistics of the CNN without bias after training for different number of epochs.}
	\begin{tabular}{c|ccccc}
		\toprule
		model name     & number of epochs & train loss & normalized margin & train acc & test acc \\
		\midrule
		\verb|model-1| & 38               & $10^{-10.04}$ &  $5.65 \times 10^{-5}$ & $100\%$ & $99.3\%$ \\
		
		\verb|model-2| & 75               & $10^{-15.12}$ &  $9.50 \times 10^{-5}$ & $100\%$ & $99.3\%$ \\
		
		\verb|model-3| & 107              & $10^{-20.07}$ &  $1.30 \times 10^{-4}$ & $100\%$ & $99.3\%$ \\
		
		\verb|model-4| & 935              & $10^{-120.01}$ &  $4.61 \times 10^{-4}$ & $100\%$ & $99.2\%$ \\
		
		\verb|model-5| & 10000            & $10^{-881.51}$ &  $1.18 \times 10^{-3}$ & $100\%$ & $99.1\%$ \\
		\bottomrule
	\end{tabular}
	\label{tab:stat-different-epoch}
\end{table}

This observation does match with our experiment results. In the experiments, we measure the $\normltwo$-robustness of the CNN without bias for the first time its loss decreases below $10^{-10}$, $10^{-15}$, $10^{-20}$, $10^{-120}$ (labelled as \verb|model-1| to \verb|model-4| respectively). We also measure the $\normltwo$-robustness for the final model after training for $10000$ epochs (labelled as \verb|model-5|), whose training loss is about $10^{-882}$. The normalized margin of each model is monotone increasing with respect to the number of epochs, as shown in Table~\ref{tab:stat-different-epoch}.

We use the standard method for evaluating $\normltwo$-robustness in~\citep{carlini2017towards} and the source code from the authors with default hyperparameters\footnote{\url{https://github.com/carlini/nn_robust_attacks}}. We plot the robust accuracy (the percentage of data with robustness $> \epsilon$) for the training set in the figures on the first row of Figure~\ref{fig:l2-robust}. It can be seen from the figures that for small $\epsilon$ (e.g., $\epsilon < 0.3$), the relative order of robust accuracy is just the order of \verb|model-1| to \verb|model-5|. For relatively large $\epsilon$ (e.g., $\epsilon > 0.3$), the improvement of \verb|model-5| upon \verb|model-2| to \verb|model-4| becomes marginal or nonexistent in certain intervals of $\epsilon$, but \verb|model-1| to \verb|model-4| still have an increasing order of robust accuracy and the improvement of \verb|model-5| upon \verb|model-1| is always significant. This shows that training longer can help to improve the $\normltwo$-robust accuracy on the training set.

We also evaluate the robustness on the test set, in which a misclassified test sample is considered to have robustness~$0$, and plot the robust accuracy in the figures on the second row of Figure~\ref{fig:l2-robust}. It can be seen from the figures that for small $\epsilon$ (e.g., $\epsilon < 0.2$), the curves of the robust accuracy of \verb|model-1| to \verb|model-5| are almost indistinguishable. However, for relatively large $\epsilon$ (e.g., $\epsilon > 0.2$), again, \verb|model-1| to \verb|model-4| have an increasing order of robust accuracy and the improvement of \verb|model-5| upon \verb|model-1| is always significant. This shows that training longer can also help to improve the $\normltwo$-robust accuracy on the test set.

We tried various different settings of hyperparameters for the evaluation method (including different learning rates, different binary search steps, etc.) and we observed that the shapes and relative positions of the curves in Figure~\ref{fig:l2-robust} are stable across different hyperparameter settings.

It is worth to note that the normalized margin and robustness do not grow in the same speed in our experiments, although the theory suggests $R_{\vtheta}(\vz) \ge q_{\hat{\vtheta}}(\vz) / \beta$. This may be because the Lipschitz constant $\beta$ (if defined locally) is also changing during training. Combining training longer with existing techniques for constraining Lipschitz number~\citep{anil2019sorting,precup2017parseval} could potentially alleviate this issue, and we leave it as a future work.

\section{Additional Experimental Details} \label{sec:appendix-exper}

In this section, we provide additional details 
of our experiments.

\subsection{Loss-based Learning Rate Scheduling} \label{sec:appendix-exper-loss-based}

The intuition of the loss-based learning rate scheduling is as follows. If the training loss is $\alpha$-smooth, then optimization theory suggests that we should set the learning rate to roughly $1/\alpha$. For a homogeneous model with cross-entropy loss, if the training accuracy is $100\%$ at $\vtheta$, then a simple calculation can show that the smoothness (the $\normltwo$-norm of the Hessian matrix) at $\vtheta$ is $O(\bar{\Loss} \cdot \poly(\rho))$, where $\bar{\Loss}$ is the average training loss and $\poly(\rho)$ is some polynomial. Motivated by this fact, we parameterize the learning rate $\eta(t)$ at epoch $t$ as
\begin{equation} \label{eq:eta-param}
	\eta(t) := \frac{\alpha(t)}{\bar{\Loss}(t-1)},
\end{equation}
where $\bar{\Loss}(t-1)$ is the average training loss at epoch $t-1$, and $\alpha(t)$ is a relative learning rate to be tuned (Similar parameterization has been considiered in~\citep{nacson2019convergence} for linear model). The loss-based learning rate scheduling is indeed a variant of line search. In particular, we initialize $\alpha(0)$ by some value, and do the following at each epoch $t$:

\begin{enumerate}[Step 1.]
    \item Initially $\alpha(t) \gets \alpha(t-1)$; Let $\bar{\Loss}(t - 1)$ be the training loss at the end of the last epoch;
    \item Run SGD through the whole training set with learning rate $\eta(t) := \alpha(t)/\Loss(t-1)$;
    \item Evaluate the training loss $\bar{\Loss}(t)$ on the whole training set;
    \item If $\bar{\Loss}(t) < \bar{\Loss}(t - 1)$, $\alpha(t) \gets \alpha(t) \cdot r_u$ and end this epoch; otherwise, $\alpha(t) \gets \alpha(t) / r_d$  and go to Step 2.
\end{enumerate}

In all our experiments, we set $\alpha(0) := 0.1, r_u := 2^{1/5} \approx 1.149, r_d := 2^{1/10} \approx 1.072$. This specific choice of those hyperparameters is not important; other choices can only affact the computational efficiency, but not the overall tendency of normalized margin.

\subsection{Addressing Numerical Issues} \label{sec:appendix-exper-numerics}

Since we are dealing with extremely small loss (as small as 
$10^{-800}$), the current Tensorflow implementation would
run into numerical issues.
To address the issues, we work as follows. Let $\bar{\Loss}_B(\vtheta)$ be the (average) training loss within a batch $B \subseteq [N]$. We use the notations $C, s_{nj}, \tilde{q}_n, q_n$ from Appendix~\ref{sec:appendix-multiclass}. We only need to show how to perform forward and backward passes for $\bar{\Loss}_B(\vtheta)$.

\myparagraph{Forward Pass.} Suppose we have a good estimate $\widetilde{\LossF}$ for $\log \bar{\Loss}_B(\vtheta)$ in the sense that
\begin{equation} \label{eq:R-def}
\mathcal{R}_B(\vtheta) := \bar{\Loss}_B(\vtheta) e^{-\widetilde{\LossF}} = \frac{1}{B}\sum_{n \in B} \log\left(1+\sum_{j \ne y_n}e^{-s_{nj}(\vtheta)} \right) e^{-\widetilde{\LossF}}
\end{equation}
is in the range of \verb|float64|. $\mathcal{R}_B(\vtheta)$ can be thought of a relative training loss with respect to $\widetilde{\LossF}$. Instead of evaluating the training loss $\bar{\Loss}_B(\vtheta)$ directly, we turn to evaluate this relative training loss in a numerically stable way:
\begin{enumerate}[Step 1.]
	\item Perform forward pass to compute the values of $s_{nj}$ with \verb|float32|, and convert them into \verb|float64|;
	\item Let $Q := 30$. If $q_n(\vtheta) > Q$ for all $n \in B$, then we compute
	\[
	\mathcal{R}_B(\vtheta) = \frac{1}{B}\sum_{n \in B} e^{-(\tilde{q}_n(\vtheta) + \widetilde{\LossF})},
	\]
	where $\tilde{q}_n(\vtheta) := -\LSE(\{-s_{nj} : j \ne y_n\}) = -\log\left( \sum_{j \ne y_n} e^{-s_{nj}} \right)$ is evaluated in a numerically stable way; otherwise, we compute
	\[
	\mathcal{R}_B(\vtheta) = \frac{1}{B}\sum_{n \in B} \mathrm{log1p}\left(\sum_{j \ne y_n}e^{-s_{nj}(\vtheta)} \right) e^{-\widetilde{\LossF}},
	\]
	where $\mathrm{log1p}(x)$ is a numerical stable implementation of $\log(1 + x)$.
\end{enumerate}

This algorithm can be explained as follows. Step 1 is numerically stable because we observe from the experiments that the layer weights and layer outputs grow slowly. Now we consider Step 2. If $q_n(\vtheta) \le Q$ for some $n \in [B]$, then $\bar{\Loss}_B(\vtheta) = \Omega(e^{-Q})$ is in the range of \verb|float64|, so we can compute $\mathcal{R}_B(\vtheta)$ by \eqref{eq:R-def} directly except that we need to use a numerical stable implementation of $\log(1 + x)$. For $q_n(\vtheta) > Q$, arithmetic underflow can occur. By Taylor expansion of $\log(1 + x)$, we know that when $x$ is small enough $\log(1 + x) \approx x$ in the sense that the relative error $\frac{\lvert \log(1 + x) - x\rvert}{\log(1 + x)} = O(x)$. Thus, we can do the following approximation
\begin{equation} \label{eq:approx-log1p-snj}
\log\left(1+\sum_{j \ne y_n}e^{-s_{nj}(\vtheta)} \right) e^{-\widetilde{\LossF}} \approx \sum_{j \ne y_n}e^{-s_{nj}(\vtheta)} \cdot e^{-\widetilde{\LossF}}
\end{equation}
for $q_n(\vtheta) > Q$, and only introduce a relative error of $O(Ce^{-Q})$ (recall that $C$ is the number of classes). Using a numerical stable implementation of $\LSE$, we can compute $\tilde{q}_n$ easily. Then the RHS of \eqref{eq:approx-log1p-snj} can be rewritten as $e^{-(\tilde{q}_n(\vtheta) + \widetilde{\LossF})}$. Note that computing $e^{-(\tilde{q}_n(\vtheta) + \widetilde{\LossF})}$ does not have underflow or overflow problems if $\widetilde{\LossF}$ is a good approximation for $\log \bar{\Loss}_B(\vtheta)$.

\myparagraph{Backward Pass.} To perform backward pass, we build a computation graph in Tensorflow for the above forward pass for the relative training loss and use the automatic differentiation. We parameterize the learning rate as $\eta = \hat{\eta} \cdot e^{\widetilde{\LossF}}$. Then it is easy to see that taking a step of gradient descent for $\Loss_B(\vtheta)$ with learning rate $\eta$ is equivalent to taking a step for $\mathcal{R}_B(\vtheta)$ with $\hat{\eta}$. Thus, as long as $\hat{\eta}$ can fit into \verb|float64|, we can perform gradient descent on $\mathcal{R}_B(\vtheta)$ to ensure numerical stability.

\myparagraph{The Choice of $\widetilde{\LossF}$.} The only question remains is how to choose $\widetilde{\LossF}$. In our experiments, we set $\widetilde{\LossF}(t) := \log \bar{\Loss}(t-1)$ to be the training loss at the end of the last epoch, since the training loss cannot change a lot within one single epoch. For this, we need to maintain $\log \bar{\Loss}(t)$ during training.
This can be done as follows: after evaluating the relative training loss $\mathcal{R}(t)$ on the whole training set, we can obtain $\log \bar{\Loss}(t)$ by adding $\widetilde{\LossF}(t)$ and $\log \mathcal{R}(t)$ together.

It is worth noting that with this choice of $\widetilde{\LossF}$, $\hat{\eta}(t) = \alpha(t)$ in the loss-based learning rate scheduling. As shown in the right figure of Figure~\ref{fig:adaptiverl-2}, $\alpha(t)$ is always between $10^{-9}$ and $10^{0}$, which ensures the numerical stability of backward pass.

\end{document}